
\documentclass[nohyperref]{article}

\usepackage{microtype}
\usepackage{graphicx}
\usepackage{subcaption}
\usepackage{booktabs} 

\usepackage{hyperref}
\usepackage{wrapfig}



\usepackage[accepted]{icml2023}

\usepackage{amsmath}
\usepackage{amssymb}
\usepackage{mathtools}
\usepackage{amsthm}
\usepackage{kbordermatrix}

\usepackage[capitalize,noabbrev,nameinlink]{cleveref}
\crefname{assumption}{assumption}{Assumptions}
\theoremstyle{plain}
\newtheorem{theorem}{Theorem}[section]
\newtheorem{proposition}[theorem]{Proposition}
\newtheorem{lemma}[theorem]{Lemma}
\newtheorem{corollary}[theorem]{Corollary}
\theoremstyle{definition}
\newtheorem{definition}[theorem]{Definition}
\newtheorem{assumption}[theorem]{Assumption}
\theoremstyle{remark}
\newtheorem{remark}[theorem]{Remark}


\usepackage[textsize=tiny]{todonotes}

\usepackage{symbol_maosheng}



\begin{document}

\twocolumn[
\icmltitle{Convolutional Learning on Simplicial Complexes }



\icmlsetsymbol{equal}{*}

\begin{icmlauthorlist}

\icmlauthor{Maosheng Yang}{yyy}
\icmlauthor{Elvin Isufi}{yyy}

\end{icmlauthorlist}

\icmlaffiliation{yyy}{Department of Intelligient Systems, Delft University of Technology, Delft, The Netherlands}

\icmlcorrespondingauthor{Maosheng Yang}{m.yang-2@tudelft.nl}
\icmlcorrespondingauthor{Elvin Isufi}{e.isufi-1@tudelft.nl}

\icmlkeywords{Machine Learning, ICML}

\vskip 0.3in
]


\printAffiliations{}


\begin{abstract}
We propose a simplicial complex convolutional neural network (SCCNN) to learn data representations on simplicial complexes. 
It performs convolutions based on the multi-hop simplicial adjacencies via common faces and cofaces independently and captures the inter-simplicial couplings, generalizing state-of-the-art. 
Upon studying symmetries of the simplicial domain and the data space, it is shown to be permutation and orientation equivariant, thus, incorporating such inductive biases. 
Based on the Hodge theory, we perform a spectral analysis to understand how SCCNNs regulate data in different frequencies, showing that the convolutions via faces and cofaces operate in two orthogonal data spaces.
Lastly, we study the stability of SCCNNs to domain deformations and examine the effects of various factors.
Empirical results show the benefits of higher-order convolutions and inter-simplicial couplings in simplex prediction and trajectory prediction.

\end{abstract}

\section{Introduction}\label{sec:introduction}
Graphs are commonly used to represent the support of networked data as nodes and capture their pairwise relations as edges. 
Graph neural networks (GNNs) have emerged as a learning model that leverages this topology information as an inductive bias \citep{battaglia2018relational,bronstein2021geometric}. However, this bias can be erroneous when the topology structure of data involves with polyadic or higher-order relations, which often arises in real-world problems.  
For example, in social networks, people often interact in social groups, not just in pairs \citep{newman2002random}. In gene regularoty networks, a collection of molecular regulators interact with each other \citep{masoomy2021topological}. In coauthorship networks, collaborations form between several authors rather than just two \citep{benson2018simplicial,bick2021higher}.

Moreover, GNNs are often used to learn representations from data defined on nodes  \citep{kipf2017semi, defferrard2017,gilmer2017neural}. However, we also have data defined on higher-order structures in a network. 
For example, water flows in a water distribution system \citep{money2022online} and traffic flow in a road network \citep{jia2019graph}, such flow-type data are naturally supported on edges of a network. In coauthorship networks, data supported on a multi-set of $k$ nodes can be the frequency or citation of the collaboration between $k$ people \citep{benson2018simplicial}.

Capturing the coupling between the data and these higher-order network structures is key to overcome the limitations of GNNs.
As a higher-order network model, simplicial complexes (SCs) support an entity of multiple elements as a simplex and the relations between simplices can be mediated through their common faces and cofaces, referred to as lower and upper (simplicial) adjacencies.
In analogy to graph Laplacians, Hodge Laplacians provide an algebraic representation of an SC to encode such adjacencies. This allows for a principled extension of processing and learning techniques from graphs to SCs. 
For example, \citet{barbarossa2020} proposed a spectral signal processing framework in SCs, followed by simplicial convolutional filters (SCFs) \citep{yang2021finite, yang2022simplicial}. 
Neural networks on SCs include, among others, \citet{ebli2020simplicial,roddenberry2021principled,yang2021simplicial,bodnar2021weisfeiler,bunch2020simplicial}. But they either focus on simplices of the same order, not exploiting the couplings between different orders or apply a message passing scheme based on direct simplicial adjacencies. 

With a comprehensive framework to capture both higher-order simplicial adjacencies and inter-simplicial couplings, we conduct a convolution-based study for learning on SCs: 

1) \emph{Simplicial complex convolutional neural network:} we propose an SCCNN to propagate information across simplices of the same order via lower and upper adjacencies independently in a multi-hop way while leveraging the inter-simplicial couplings. It generalizes state-of-the-art and admits intra- and extended inter-simplicial localities with a linear computational complexity. 

2) \emph{Symmetries:} Based on group theory, we show that there exhibit a permutation symmetry in SCs and an orientation symmetry in the SC data. SCCNNs can be built equivariant to both symmetries to incorporate such inductive biases.

3) \emph{Spectral analysis:} Based on tools from \citet{barbarossa2020,yang2022simplicial}, we study how each component of the SCCNN regulates the data from the spectral perspective. This analysis generalizes to state-of-the-art.  


4) \emph{Stability analysis:} We prove that SCCNNs are stable to domain deformations when the convolutional filters are integral Lipschitz and show how the inter-simplicial couplings propagate the deformations across the SC.

\section{Background} \label{sec:background}
We briefly introduce the SC and its algebraic representation, together with the data defined on SCs.
 
\textbf{Simplicial Complex.} 
Given a finite set of vertices $\ccalV:=\{1,\dots,N_0\}$, a $k$-simplex $s^k$ is a subset of $\ccalV$ with cardinality $k+1$. 
A \emph{face} of $s^k$ is a subset with cardinality $k$. A \emph{coface} of $s^k$ is a $(k+1)$-simplex that has $s^k$ as a face. Nodes, edges and (filled) triangles are geometric realizations of 0-, 1- and 2-simplices. 
An SC $\ccalS$ of order $K$ is a collection of $k$-simplices $s^k$, $k=[K]:=0,\dots,K$, with the inclusion property: $s^{k-1}\in\ccalS$ if $s^{k-1}\subset s^k$ for $s^k\in\ccalS$, e.g., \cref{fig:sc_example}. A graph is also an SC of order one including nodes and edges. 
We collect the $k$-simplices in $\ccalS$ in set $\ccalS^k=\{s_i^k \vert i=1,\dots,N_k \}$ with $N_k=|\ccalS^k|$, therefore $\ccalS = \bigcup_{t=0}^{K} \ccalS^k$. 

To facilitate computations, an orientation of a simplex is chosen as an ordering of its vertices, which is an equivalence class that two orderings are equivalent if they differ by an even permutation; otherwise anti-aligned \citep{munkres2018elements,lim2015hodge}. We fix an orientation for a simplex according to the lexicographical ordering of its vertices, $s^k = [1,\dots,k+1]$, e.g., a triangle $s^2=\{i,j,k\}$ is oriented as $[i,j,k]$ with $i<j<k$ and a node has a trivial orientation.  

\textbf{Simplicial Adjacency.}
For $s_i^k$, we define its \emph{lower (upper) neighborhood} $\ccalN^k_{i,{\rmd}}$ ($\ccalN^k_{i,{\rmu}}$) as the set of $k$-simplices which share a common face (coface) with it. If $s_j^k\in \ccalN_{i,\rmd}^k (\ccalN_{i,\rmu}^k)$, we say $s_j^k$ is \emph{lower (upper) adjacent} to $s_i^k$. In \cref{fig:sc_example}, we have $\ccalN_{1,\rmd}^1=\{e_2,e_3,e_4,e_5\}$ and $\ccalN_{1,\rmu}^1=\{e_2,e_4\}$ for $e_1$. 

\textbf{Algebraic Representations.}
We use incidence matrices $\bbB_k, k=[K]$ to describe the incidence relations in an SC, where $\bbB_1$ and $\bbB_2$ are the node-to-edge and edge-to-triangle incidence matrices, respectively. Note that $\bbB_0$ is not defined. See \cref{app:background} for those of SC in \cref{fig:sc_example}.
By definition, we have $\bbB_k \bbB_{k+1} = \mathbf{0}$ \citep{lim2015hodge}.  

In an SC of order $K$, the Hodge Laplacians are defined as 
\begin{equation*}
    \bbL_k = \bbB_k^\top\bbB_k + \bbB_{k+1}\bbB_{k+1}^\top, k = [K]
\end{equation*}
with the \emph{lower Laplacian} $\bbL_{k,\rmd} = \bbB_k^\top \bbB_k$ and the \emph{upper Laplacian} $\bbL_{k,\rmu} = \bbB_{k+1}\bbB_{k+1}^\top$, and the graph Laplacian $\bbL_0 = \bbB_1\bbB_1^\top$ and $\bbL_K = \bbB_{K}^\top\bbB_K$. Matrices $\bbL_{k,\rmd}$ and $\bbL_{k,\rmu}$ encode the lower and upper adjacencies of $k$-simplices, respectively. In particular, $\bbL_{1,\rmd}$ and $\bbL_{1,\rmu}$ encode the edge-to-edge adjacencies through nodes and triangles, respectively.

\textbf{Simplicial Signals.} 
In an SC, we define $k$\emph{-simplicial signals (or data features)} $\bbx_k=[x_{k,1},\dots,x_{k,{N_k}}]^\top, k = [K]$ by an \emph{alternating} map $f_k:\ccalS^k\to\ccalX^{N_k}$ which assigns a signal $x_{k,i}$ to the $i$th simplex $s^k_i$. The alternating map restricts that if the orientation of the simplex is anti-aligned with the reference orientation, denoted by $\overline{s}_i^k = -s_i^k$, then the sign of the signal value will be changed, $f_k(\overline{s}_i^k) = -f_k(s_i^k)$. If the signal value $x_i^k$ is negative, then the signal is anti-aligned with the reference \citep{lim2015hodge,schaub2021}. An $F$-feature simplicial signal $\bbX_k \in \setR^{N_k \times F}$ can be defined. 

\begin{figure*}[t]
  \centering
  \begin{subfigure}{0.246\linewidth}
    \includegraphics[width=\linewidth]{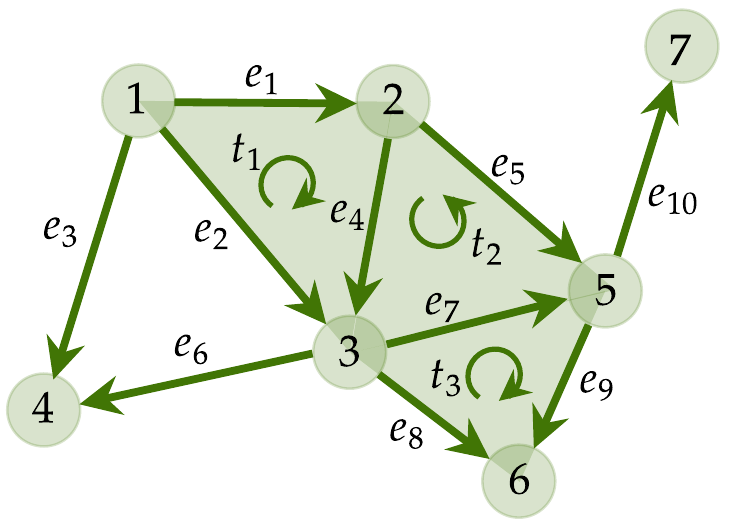}
    \caption{SC example}
    \label{fig:sc_example}
  \end{subfigure}
  \begin{subfigure}{0.246\linewidth}
    \includegraphics[width=\linewidth]{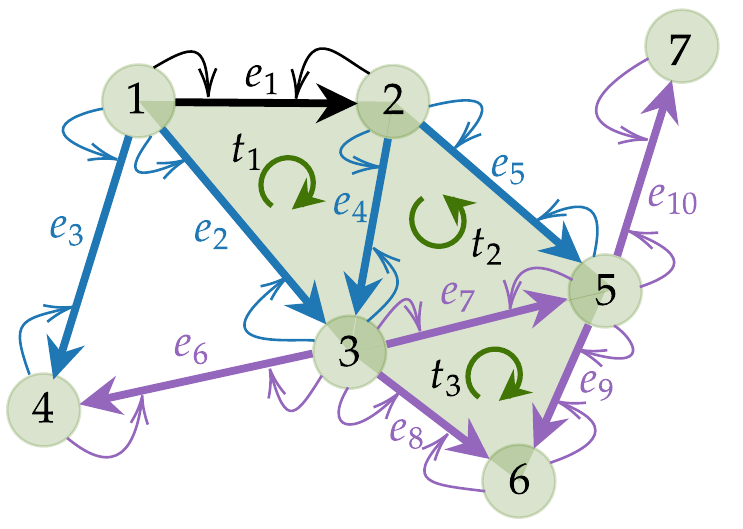}
    \caption{Lower edge convolutions}
    \label{fig:lower_convolution}
  \end{subfigure}
  \begin{subfigure}{0.246\linewidth}
    \includegraphics[width=\linewidth]{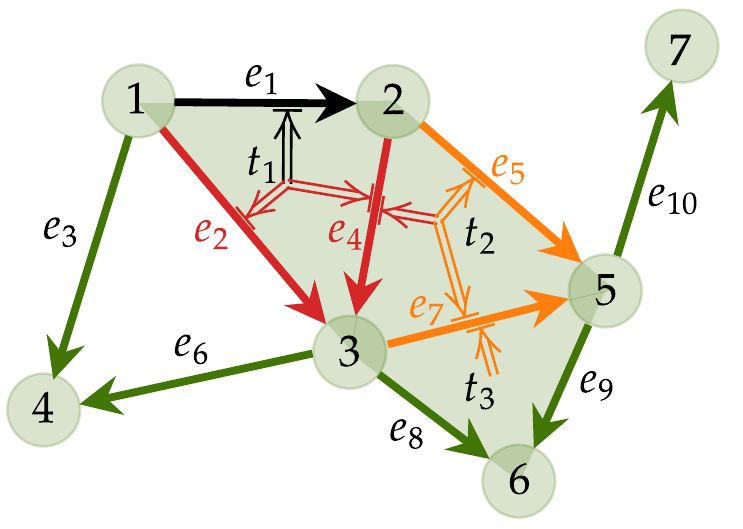}
    \caption{Upper edge convolutions}
    \label{fig:upper_convolution}
  \end{subfigure}
  \begin{subfigure}{0.246\linewidth}
    \includegraphics[width=\linewidth]{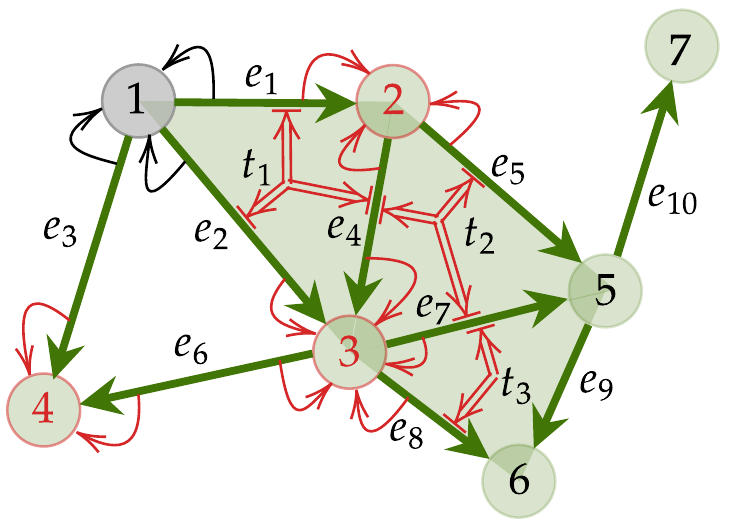}
    \caption{Inter-simplicial locality}
    \label{fig:extended_simplicial_locality}
  \end{subfigure}
  \caption{(a) An SC where arrows indicate the reference orientations of edges and triangles. 2-simplices are (filled) triangles shaded in green and open triangle $\{1,3,4\}$ is not in the SC. (b) Lower convolution via $\bbH_1$ and $\bbH_{1,\rmd}$ on edge $e_1$: SCF $\bbH_1$ aggregates the information from its direct lower neighbors (edges in blue) and two-hop lower neighbors (edges in purple) to $e_1$ (in black) if $T_\rmd=2$; and lower SCF $\bbH_{1,\rmd}$ aggregates the projected information from nodes to edges likewise (denoted by the arrows in blue and purple from nodes to edges). (c) Upper convolution via $\bbH_1$ and $\bbH_{1,\rmu}$ on $e_1$: $\bbH_1$ aggregates the information from direct upper neighbors (edges in red) and two-hop upper neighbors (edges in orange) to $e_1$ (in black); and upper SCF $\bbH_{1,\rmu}$ aggregates the projected information from triangles to edges likewise (denoted by double arrows in red and orange from trianlge centers to edges). (d) Node 1 (in black) contains information from its neighbors $\{2,3,4\}$ (nodes in red), and projected information from edges which contribute to these neighbors (denoted by arrows in red from edges to nodes), and from triangles $\{t_1,t_2,t_3\}$ which contribute to those edges (denoted by double arrows in red from triangle centers to edges). This interaction is the coupling between the intra- and the extended inter-simplicial locality.}
  \vskip -0.1in 
\end{figure*}

\section{SCCNNs}\label{sec:sccnns}
We introduce the SCCNN to learn from data defined on SCs. Then, we discuss its intra- and inter-simplicial localities, followed by its complexity, and related works. 

An $L$-layer SCCNN defined in an SC $\ccalS$ of order $K$ computes the output $\bbx^l_{k}$ at layer $l$ as a nonlinear function of the outputs $\bbx_{k-1}^{l-1}, \bbx^{l-1}_k$ and $\bbx^{l-1}_{k+1}$ at the previous layer $l-1$
\begin{equation*}
    \text{SCCNN}_k^l:\{\bbx_{k-1}^{l-1}, \bbx_k^{l-1}, \bbx_{k+1}^{l-1}\} \to \bbx_k^l,
\end{equation*}
for $k=[K]$ and $l=1,\dots,L$, and admits a detailed form 
\begin{equation}\label{eq.sccnn_layer}
    \bbx_k^{l} = \sigma(\bbH_{k,\rmd}^{l} \bbx_{k,\rmd}^{l-1} + \bbH_{k}^{l}\bbx_k^{l-1} + \bbH_{k,\rmu}^{l} \bbx_{k,\rmu}^{l-1} )
\end{equation}  
with $\bbx_{k,\rmd}^{l-1} = \bbB_k^\top \bbx_{k-1}^{l-1}$ and $\bbx_{k,\rmu}^{l-1} = \bbB_{k+1} \bbx_{k+1}^{l-1}$, which can be understood as follows:

1) The previous output $\bbx_{k}^{l-1}$ is passed through a simplicial convolution filter (SCF) $\bbH^{l}_{k}$ \citep{yang2022simplicial}, given by 
  \begin{equation*} 
       \bbH^{l}_{k} := \bbH^{l}_{k}(\bbL_{k,\rmd},\bbL_{k,\rmu})= \sum_{t=0}^{T_{\rmd}}w^l_{k,\rmd,t}\bbL_{k,\rmd}^t + \sum_{t=0}^{T_\rmu} w^l_{k,\rmu,t} \bbL_{k,\rmu}^t,
  \end{equation*}
which is a sum of two matrix polynomials of $\bbL_{k,\rmd}$ and $\bbL_{k,\rmu}$ with trainable filter coefficients $\{w_{k,\rmd,t}^{l}, w_{k,\rmu,t}^{l}\}$ and filter orders $T_\rmd, T_\rmu$. Operator $\bbH_{k}^{l}$ performs simplicial convolutions relying on the lower and upper adjacencies independently.

2) $\bbx_{k,\rmd}^{l-1} $ and $\bbx_{k,\rmu}^{l-1} $ are the lower and upper projections from the lower and upper adjacent simplices (i.e., faces and cofaces) to $k$-simplices via incidence structures $\bbB_k^\top$ and $\bbB_{k+1}$, respectively. $\bbx_{0,\rmd}^{l-1}$ and $\bbx_{K,\rmu}^{l-1}$ are not defined.
 
3) The lower projection $\bbx_{k,\rmd}^{l-1}$ is passed through another SCF, but it reduces to $\bbH_{k,\rmd}^l:=\sum_{t=0}^{T_{\rmd}}w^{\prime l}_{k,\rmd,t}\bbL_{k,\rmd}^t$ since $\bbL_{k,\rmu} \bbB_{k}^\top = \mathbf{0}$. That is, the lower projection cannot propagate via the upper adjacency. Likewise, the upper projection $\bbx_{k,\rmu}^{l-1}$ is passed through an upper SCF $\bbH_{k,\rmu}^l:=\sum_{t=0}^{T_\rmu} w^{\prime l}_{k,\rmu,t} \bbL_{k,\rmu}^t$, only accounting for the upper adjacency. 

4) The sum of the three SCF outputs is passed by an elementwise nonlinearity $\sigma(\cdot)$. 
A multi-feature variant of SCCNNs can also be defined (see \cref{app:b}).

\textbf{Localities.} Consider the output of an SCF on a $k$-simplicial signal, i.e., $\bbH_k\bbx_k$ (with layer index $l$ omitted). We have \emph{simplicial shiftings} $\bbL_{k,\rmd} \bbx_k $ and $\bbL_{k,\rmu}\bbx_k$ on simplex $s_i^k$ as 
\begin{equation} \label{eq.shifting_entry}
  \begin{aligned}
    [\bbL_{k,\rmd} \bbx_k]_i= \textstyle \sum_{j\in\ccalN^k_{i,\rmd}\cup \{i\}}[\bbL_{k,\rmd}]_{ij} [\bbx_k]_j,  \\
    [\bbL_{k,\rmu} \bbx_k]_i= \textstyle \sum_{j\in\ccalN^k_{i,\rmu} \cup \{i\}}[\bbL_{k,\rmu}]_{ij} [\bbx_k]_j,
  \end{aligned}
\end{equation}
where $s_i^k$ aggregates signals from its lower and upper neighbors in $\ccalN_{i,\rmd}^k$ and $\ccalN_{i,\rmu}^k$ based on the corresponding adjacencies. We can compute the $t$-step shifting recursively as $\bbL_{k,\rmd}^t \bbx_k  = \bbL_{\rmd}(\bbL_{k,\rmd}^{t-1}\bbx_k)$, a one-step shifting of the $(t-1)$-shift result; likewise for $\bbL_{k,\rmu}^t\bbx_k$. An SCF linearly combines such multi-step simplicial shiftings based on lower and upper adjacencies. Thus, the output $\bbH_k\bbx_k$ is localized in $T_{\rmd}$-hop lower and $T_{\rmu}$-hop upper $k$-simplicial neighborhoods \citep{yang2022simplicial}. SCCNNs preserve such \emph{intra-simplicial locality} as the elementwise nonlinearity does not alter the information locality, shown in \cref{fig:lower_convolution,fig:upper_convolution}. 

An SCCNN takes the data on $k$- and $(k\pm 1)$-simplices at layer $l-1$ to compute $\bbx^l_k$, causing interactions between $k$-simplices and their (co)faces when all SCFs are identity. 
In turn, $\bbx_{k-1}^{l-1}$ contains information on $(k-2)$-simplices from layer $l-2$. Likewise for $\bbx_{k+1}^{l-1}$, thus, $\bbx^l_k$ also contains information up to $(k\pm 2)$-simplices if $L\geq 2$, because $\bbB_{k}\sigma(\bbB_{k+1})\neq \mathbf{0}$ (see \cref{app:c}). Accordingly, this \emph{inter-simplicial locality} extends to the whole SC if $L\geq K$, unlike linear filters in an SC where the locality happens up to the adjacent simplices \citep{isufi2022convolutional,schaub2021}. This locality is further coupled with the intra-locality through three SCFs such that a node not only interacts with its cofaces (direct edges) and direct triangles including it, but also edges and triangles further hops away which contribute to the neighboring nodes, as shown in \cref{fig:extended_simplicial_locality}.


 
\textbf{Complexity.} For an SCCNN layer, the parameter complexity is of order $\ccalO(T_\rmd+T_\rmu)$. Denote the maximum of the number of neighbors for $k$-simplices by $M_k:=\max\{|\ccalN^k_{i,\rmd}|, |\ccalN^k_{i,\rmu|} \}_{i=1}^{N_k}$. The computational complexity is of order $\ccalO(k(N_k+N_{k+1}) + N_kM_k(T_\rmd+T_\rmu))$, discussed in \cref{app:complexity}, which is linear to the simplex dimensions.

\subsection{Related Works}
The related works of this paper concern the following.
 
\textbf{Signal Processing on SCs.}
Recent works on processing SC signals started on edge flows, which intrinsically follow properties like divergence-free, curl-free or harmonic \citep{jiang2011statistical,schaub2018flow,jia2019graph}. In \citet{barbarossa2020,schaub2021}, a better understanding of simplicial signals was approached via Hodge theory \citep{lim2015hodge}. \citet{yang2021finite,yang2022simplicial} proposed an SCF, providing a spectral analysis of simplicial signals based on the spectrum of Hodge Laplacians. SCF was further extended to a joint filtering of signals on simplices of different orders by \citet{isufi2022convolutional}. These concepts are key to understand the SCCNN spectrally.

\textbf{NNs on SCs.} \citet{roddenberry2019hodgenet} first used edge-Laplacian $\bbL_{1,\rmd}$ to build NNs where edge convolution only considers the lower adjacency. \citet{ebli2020simplicial} built an SNN based on a convolution via Hodge Laplacians, jointly relying on the lower and upper adjacencies. \citet{yang2021simplicial} discussed the limitations of this strategy and proposed separate simplicial convolutions based on the SCF. A one-step simplicial shifting separately by $\bbL_{k,\rmd}$ and $\bbL_{k,\rmu}$ was proposed by \citet{roddenberry2021principled}. An attention scheme was applied to the previous two by \citet{giusti2022simplicial,goh2022simplicial}. 
Information from simplices of adjacent orders was added by \citet{bunch2020simplicial} and \citet{yang2022efficient}. Instead, \citet{bodnar2021weisfeiler} and \citet{hajij2021simplicial} used a message passing scheme to collect such information, in analogy to the graph case \citep{gilmer2017neural}. \citet{chen2022bscnets} combined graph shifting of node features and simplicial shifting of edge features in link predictions. As listed in \cref{tab:related_works} and further discussed in \cref{app:related_works}, most of these solutions can be subsumed into the SCCNNs. 

\textbf{Graph Neural Networks.}
NNs on SCs return to GNNs when the SC is a graph. Most GNNs vary in terms of the graph convolutions, a shift-and-sum operation via graph shift operators such as graph adjacency and Laplacian matrices, e.g., a one-step graph shifting was performed in \citet{kipf2017semi} in contrast to a general graph convolution in \citep{defferrard2017,gama2019convolutional,gama2020graphs}, which can be obtained as $\bbx_0^l$ without the upper projection $\bbx_{0,\rmu}$. 

\begin{table}[t!]
  \caption{SCCNNs generalize several related works.}
  \label{tab:related_works}
  \vskip -0.1in
  \begin{center}
  \begin{small}
  \resizebox{\columnwidth}{!}{
  \begin{tabular}{lr}
  \toprule
  Methods & Parameters (n.d. denotes ``not defined'') \\
  \midrule
  \citet{ebli2020simplicial}  & $w_{k,\rmd,t}^l = w_{k,\rmu,t}^l,\bbH_{k,\rmd}^l, \bbH_{k,\rmu}^{l}$ n.d. \\ 
  \citet{roddenberry2021principled}  & $T_{\rmd} = T_{\rmu}=1, \bbH_{k,\rmd}^l, \bbH_{k,\rmu}^{l}$ n.d.    \\
  \citet{yang2021simplicial}  &  $\bbH_{k,\rmd}^l, \bbH_{k,\rmu}^{l}$ n.d.   \\
  \citet{bunch2020simplicial}  &  $T_{\rmd} = T_{\rmu}=1, \bbH_{k,\rmd}^l = \bbH_{k,\rmu}^{l} = \bbI$ \\
  \citet{bodnar2021weisfeiler} & $T_{\rmd} = T_{\rmu}=1, \bbH_{k,\rmd}^l = \bbH_{k,\rmu}^{l} = \bbI$  \\
  \bottomrule
  \end{tabular}
  }
  \end{small}
  \end{center}
  \vskip -0.2in
\end{table}

\section{Simplicial Complex Symmetry}\label{sec:sc_symmetry}
Machine learning models rely on \emph{symmetries} of the object domain, which are transformations that keep invariant certain object properties. Leveraging such symmetries of the data and the underlying domain imposes inductive biases, allowing the model to learn effective data representations \citep{bronstein2021geometric}. For example, 
GNNs leverage the permutation symmetry group to learn from graphs \citep{hamilton2020graph,ruiz2021graph}. Here we study the symmetries of the SC domain and the simplicial signal space, and show that SCCNNs preserve such symmetries, ultimately, extending the approach of \citet{bronstein2021geometric} to the SCs. 

\textbf{Permutation Symmetry.}
In an SC $\ccalS$, the labeling of the $k$-simplices in $\ccalS^k= \{s_{\frakp_k(1)}^k,\dots,s_{\frakp_k(N_k)}^k\}$ is a permutation $\frakp_k$ of the indices $\{1,\dots,N_k\}$. 
These permutations form a \emph{permutation group} $\frakP_k$ with $N_k!$ elements based on the group axioms (see   \cref{app:d}): they are associative, every permutation has an identity permutation and an inverse, and every two permutations form another permutation. 
A permutation $\frakp_k\in\frakP_k$ can be represented by an orthogonal permutation matrix $\bbP_k\in\{0,1\}^{N_k\times N_k}$ with entry $[\bbP_k]_{ij} = 1$ if $i=\frakp_k(j)$ and $[\bbP_k]_{ij}=0$ otherwise. 

Thus, the labeling of simplices in $\ccalS$ form a set $\{\frakP_k:k=[K]\}$ whose elements are permutation groups. This set can be represented by a set of matrices, 
$
    \setP = \{\setP_k:k=[K]\} 
$
with $\setP_k=\{\bbP_{k,i} \in \{0,1\}^{N_k\times N_k}: \bbP_{k,i}\bb1 = \bb1, \bbP_{k,i}^\top\bb1 = \bb1, i=1,\dots,N_k!\}$ representing the permutation group $\frakP_k$.

We then study how a permutation (labeling) of simplices affects the domain SC and the simplicial signals. 
\begin{proposition}[Permutation Symmetry]\label{prop:prop_permutation_symmetry}
  Consider an SC $\ccalS$ with $\bbB_k$ and $\bbL_k$ for $k=[K]$. Let $\{\bbP_k:k=[K]\}\in\setP$ represent a sequence of permutations $\{\frakp_k\in\frakP_k:k=[K]\}$. 
  Denote the permuted incidence matrices and Hodge Laplacians by $\overline{\bbB}_k$ and $\overline{\bbL}_k$, for $k=[K]$. 
  Then, we have {\rm{i)}} $\overline{\bbB}_k = \bbP_{k-1} \bbB_k \bbP_{k}^\top$ with entries $[\overline{\bbB}_k]_{i^\prime j^\prime} = [\bbB_k]_{ij}$ for $i^\prime=\frakp_{k-1}(i),j^\prime=\frakp_{k}(j)$; {\rm{ii)}} $\overline{\bbL}_k = \bbP_k \bbL_k \bbP_k^\top$ with entries $[\overline{\bbL}_k]_{i^\prime j^\prime} = [\bbL_k]_{ij}$ for $i^\prime=\frakp_k(i),j^\prime=\frakp_{k}(j)$; and {\rm{iii)}} the spectral property of the SC remains equivariant. 
\end{proposition}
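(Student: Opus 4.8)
The plan is to prove parts (i) and (ii) by direct matrix bookkeeping and then deduce (iii) from the fact that every transformation in play is an orthogonal similarity. The conceptual content is small; the care is in the indices.

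For (i), the key observation is that a permutation $\frakp_k$ only reorders the list of $k$-simplices $\{s_1^k,\dots,s_{N_k}^k\}$; it does not act on the vertex set $\ccalV$. Hence every simplex keeps its identity and its (lexicographically fixed) reference orientation, and the set of face/coface incidences together with their orientation signs is unchanged as combinatorial data — only the row/column indexing of $\bbB_k$ changes, so that the entry formerly at position $(i,j)$ now sits at $(\frakp_{k-1}(i),\frakp_k(j))$. It then remains to check that this pure relabeling is implemented by $\bbB_k\mapsto\bbP_{k-1}\bbB_k\bbP_k^\top$: expanding $[\bbP_{k-1}\bbB_k\bbP_k^\top]_{i'j'}=\sum_{a,b}[\bbP_{k-1}]_{i'a}[\bbB_k]_{ab}[\bbP_k]_{j'b}$ and using $[\bbP_{k-1}]_{i'a}=1\iff a=\frakp_{k-1}^{-1}(i')$ and $[\bbP_k]_{j'b}=1\iff b=\frakp_k^{-1}(j')$ collapses the double sum to the single term $[\bbB_k]_{ij}$ with $i'=\frakp_{k-1}(i)$, $j'=\frakp_k(j)$, which is precisely the claimed entrywise identity. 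This runs for $k=1,\dots,K$ (recall $\bbB_0$ is not defined), with no special treatment needed at the ends.

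For (ii), I would substitute (i) into $\bbL_k=\bbB_k^\top\bbB_k+\bbB_{k+1}\bbB_{k+1}^\top$ and use orthogonality of permutation matrices, $\bbP_j^\top\bbP_j=\bbI$: indeed $(\bbP_{k-1}\bbB_k\bbP_k^\top)^\top(\bbP_{k-1}\bbB_k\bbP_k^\top)=\bbP_k\bbB_k^\top\bbB_k\bbP_k^\top$ and $(\bbP_k\bbB_{k+1}\bbP_{k+1}^\top)(\bbP_k\bbB_{k+1}\bbP_{k+1}^\top)^\top=\bbP_k\bbB_{k+1}\bbB_{k+1}^\top\bbP_k^\top$, so both summands carry the common factors $\bbP_k(\cdot)\bbP_k^\top$ and $\overline{\bbL}_k=\bbP_k(\bbB_k^\top\bbB_k+\bbB_{k+1}\bbB_{k+1}^\top)\bbP_k^\top=\bbP_k\bbL_k\bbP_k^\top$; the entrywise statement follows by the same index collapse as in (i), and $\bbL_0=\bbB_1\bbB_1^\top$, $\bbL_K=\bbB_K^\top\bbB_K$ are the obvious one-term specializations. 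It is worth remarking that the same $\frakp_k$ appears as $\bbP_k^\top$ on the right of $\overline{\bbB}_k$ and as $\bbP_k$ on the left of $\overline{\bbB}_{k+1}$, which is exactly what preserves $\overline{\bbB}_k\overline{\bbB}_{k+1}=\mathbf{0}$, so $\{\overline{\bbB}_k,\overline{\bbL}_k\}$ is a genuine SC representation.

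For (iii), since $\overline{\bbL}_k=\bbP_k\bbL_k\bbP_k^\top$ with $\bbP_k$ orthogonal, this is a similarity transformation: $\overline{\bbL}_k$ and $\bbL_k$ share the same eigenvalues, and each eigenvector $\bbu$ of $\bbL_k$ maps to the eigenvector $\bbP_k\bbu$ of $\overline{\bbL}_k$ with the same eigenvalue, so the simplicial Fourier basis is merely permuted. Applying the same reasoning to $\overline{\bbB}_k^\top$ and $\overline{\bbB}_{k+1}$ shows the gradient, curl and harmonic subspaces of the Hodge decomposition each map to $\bbP_k$ times their originals; hence all spectral quantities — frequencies, Hodge components, spectral filter responses — transform equivariantly under the permutation. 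I expect the only genuinely subtle point, which I would state explicitly rather than gloss over, to be the combinatorial claim underpinning (i): that relabeling simplices (not vertices) leaves every orientation and every incidence sign untouched, so $\overline{\bbB}_k$ is a sign-free row/column permutation of $\bbB_k$. Everything after that is orthogonal-matrix bookkeeping.
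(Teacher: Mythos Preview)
Your proposal is correct and takes essentially the same approach as the paper: establish the permutation-conjugation formulas for $\bbB_k$ and $\bbL_k$ by index bookkeeping, then invoke orthogonal similarity to show eigenvalues are preserved and eigenvectors are merely permuted. The paper additionally spells out the SVD of $\bbB_k$ to make the singular-vector equivariance explicit, whereas you instead phrase part (iii) in terms of the Hodge subspaces; these are equivalent viewpoints and the argument is the same in substance.
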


See proof in  \cref{proof.prop_permutation_symmetry}. This states that an SC is unaffected by the labeling of simplices, as well as its simplicial adjacencies, illustrated in \cref{fig:permutation_symmetry}. Its algebraic representations remain equivariant to permutations, i.e., they are a rearrangement of the rows and columns of the original ones. The spectral property of algebraic representations remain equivariant as well. Furthermore, the permutation of $k$-simplices does not affect $\bbB_j$ for $j\neq k,k+1$, nor $\bbL_j$ for $j\neq k$. Lastly, a $k$-simplicial signal $\bbx^k$ changes into $\overline{\bbx}_k=\bbP_k\bbx_k$ according to the permutation of $k$-simplices.

\begin{figure}[t!]
  \centering
  \begin{subfigure}{0.495\linewidth}
    \includegraphics[width=\linewidth]{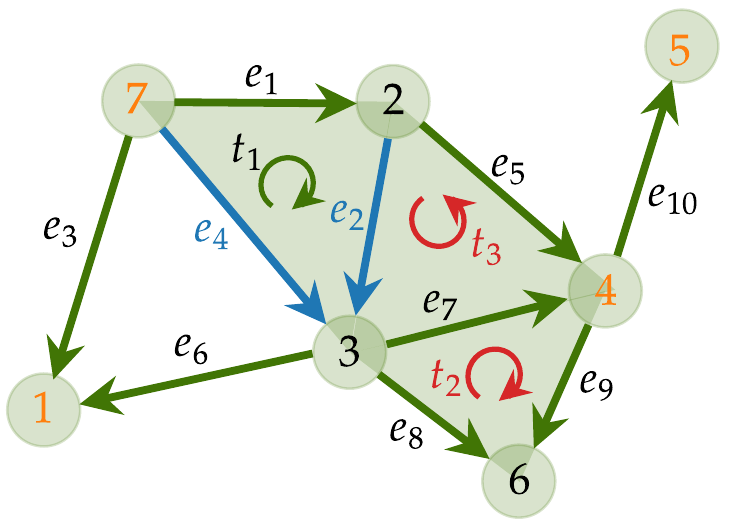}
    \caption{Permutation Symmetry}
    \label{fig:permutation_symmetry}
  \end{subfigure}
  \begin{subfigure}{0.495\linewidth}
    \includegraphics[width=\linewidth]{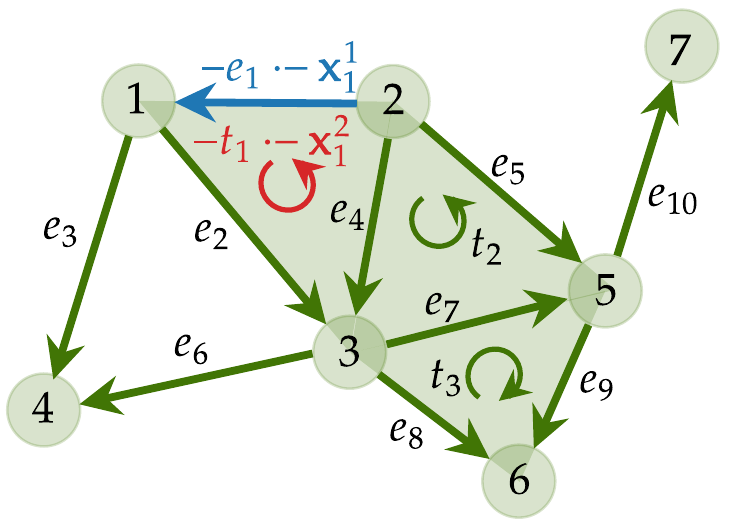}
    \caption{Orientation Symmetry}
    \label{fig:orientation_symmetry}
  \end{subfigure}
  \caption{(a) Simplex relabling (relabled nodes, edges and triangles indicated by orange, blue and red) does not alter the SC in \cref{fig:sc_example}. (b) Reorienting simplices (edges in blue, triangles in red) does not alter the chain (simplicial signal) on them.}
  \vskip -0.1in
\end{figure}

\textbf{Orientation Symmetry.}
In an oriented SC, the equivalence of an orientation defines that a simplex $s^k=[0,\dots,k]$ and its reoriented version $\overline{s}^k=[\pi(0),\dots,\pi(k)]$ have the same orientation if $\pi$ is an even permutation of $\{0,\dots,k\}$; and they are anti-aligned $\overline{s}^k=-s^k$, if $\pi$ is an odd permutation. These two orientations of a $k$-simplex form an \emph{orientation group} $\frakO_{k}=\{\frake,\frako_{-}\}$ with two elements. They can be obtained by a group homomorphism which maps all the even permutations of $\{0,\dots,k\}$ to the identity orientation $\frake$ and all the odd permutations of $[k]$ to  the reverse orientation $\frako_{-}$. Note that $\frakO_0=\{\frake\}$ since a node has one trivial orientation and we assume $k\neq 0$ here.

Thus, in an oriented SC $\ccalS$ we have a set whose elements are orientation groups $\{\frakO_{k,i}:k=[K], i=1,\dots,N_k\}$ with group $\frakO_{k,i}$ for the $i$th $k$-simplex $s_i^k$. The subset $\{\frakO_{k,i}:i=1,\dots,N_k\}$ admits a diagonal matrix representation $\bbD_k\in\{-1,1\}^{N_k\times N_k}$ with $[\bbD_k]_{ii}=1$ if $\frake$ is applied to $s_i^k$, and $[\bbD_k]_{ii}=-1$ if $\frako_-$ is applied. Then, we have a set of orientation matrices for $\ccalS$ as
$
  \setD = \{\bbD_{k}:k=[K]\}.
$  

A simplicial signal $\bbx_k$ by definition remains unchanged w.r.t. the underlying simplices after an orientation change. To see this conveniently, we introduce the \emph{$k$-chain} space $\ccalC_k$ with a chain $c_k=\sum_{i=1}^{N_k}x_{k,i}s_{i}^k$ that is a linear combination of $k$-simplices weighted by the supported signals $x_{k,i}$. With the basis $\{s_i^k:i=1,\dots,N_k\}$, a $k$-chain $c_k$ can be represented by the $k$-simplicial signal vector $\bbx_k = [x_{k,1},\dots,x_{k,N_k}]^\top$. By imposing the alternating property to $\ccalC_k$ that if $s_i^k$ is reversed, the weight $x_{k,i}$ changes its sign, a $k$-chain space $\ccalC_k$ is then isomorphic to the $k$-simplicial signal space $\ccalX_k$ \citep{munkres2018elements}. Unlike permutation groups, orientation groups do not form a symmetry group in an oriented SC but in the chain space as stated by the following proposition. 


\begin{proposition}[Orientation Symmetry]
  Consider an oriented SC $\ccalS$ with $\bbB_k$ and $\bbL_k$ for $k=[K]$.
  Let $\{\bbD_k:k=[K]\}\in\setD$ represent a sequence of orientation changes $\{\frakO^k_i:k=[K],i=1,\dots,N_k\}$ of simplices in $\ccalS$. In the reoriented SC, incidence matrices become $\overline{\bbB}_k= \bbD_k \bbB_k \bbD_{k+1}$ and Hodge Laplacians become $\overline{\bbL}_k= \bbD_k \bbL_k \bbD_k$. Moreover, a $k$-simplicial signal $\bbx_k$ becomes $\overline{\bbx}_k=\bbD_k\bbx_k$ and its underlying $k$-chain remains unchanged.
\end{proposition}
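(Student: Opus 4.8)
The plan is to follow the template of the proof of \cref{prop:prop_permutation_symmetry}: isolate the purely combinatorial content of the incidence structure from the sign bookkeeping that orientations introduce, then push the resulting identity through the Hodge Laplacians and through the chain space. First I would recall the precise definition of the incidence number: $[\bbB_k]_{ij} = \pm 1$ whenever the $(k-1)$-simplex $s_i^{k-1}$ is a face of the $k$-simplex $s_j^k$, with the sign recording whether the orientation that the reference orientation of $s_j^k$ induces on $s_i^{k-1}$ agrees with the chosen reference orientation of $s_i^{k-1}$, and $[\bbB_k]_{ij}=0$ otherwise. A reorientation of simplices changes no incidences, only signs, so it suffices to track one entry at a time.

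The key step is the sign factorization. Applying $\frako_-$ to the single simplex $s_i^{k-1}$ reverses the reference orientation against which \emph{every} incidence in row $i$ is compared, hence multiplies that row of $\bbB_k$ by $-1$; applying $\frako_-$ to the single simplex $s_j^k$ reverses the orientation induced on \emph{each} of its faces, hence multiplies column $j$ of $\bbB_k$ by $-1$. These two effects are independent — one acts on the reference side of the comparison, the other on the induced side — so on the entry $(i,j)$ they compose multiplicatively, giving $[\overline{\bbB}_k]_{ij} = [\bbD_{k-1}]_{ii}[\bbB_k]_{ij}[\bbD_k]_{jj}$, i.e. $\overline{\bbB}_k = \bbD_{k-1}\bbB_k\bbD_k$, the diagonal $\pm 1$ matrices rescaling the rows (faces) and columns ($k$-simplices) of $\bbB_k$. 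Making this factorization rigorous from the equivalence-class definition of orientation — even permutations of the vertices fix, odd permutations reverse — is the only step that requires care; I expect it to be the main obstacle, since everything downstream is $\pm 1$ linear algebra.

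The Laplacian identities then follow mechanically: each $\bbD_k$ is diagonal with entries in $\{-1,1\}$, so $\bbD_k^\top = \bbD_k$ and $\bbD_k^2 = \bbI$, whence $\overline{\bbL}_{k,\rmd} = \overline{\bbB}_k^\top\overline{\bbB}_k = \bbD_k\bbB_k^\top\bbD_{k-1}\bbD_{k-1}\bbB_k\bbD_k = \bbD_k\bbL_{k,\rmd}\bbD_k$ and, likewise, $\overline{\bbL}_{k,\rmu} = \overline{\bbB}_{k+1}\overline{\bbB}_{k+1}^\top = \bbD_k\bbL_{k,\rmu}\bbD_k$; summing gives $\overline{\bbL}_k = \bbD_k\bbL_k\bbD_k$. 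Since $\bbD_k$ is orthogonal this is a similarity transformation, so the Hodge spectrum is preserved as a byproduct. Finally, for the signal I would use the chain-space picture already set up in the text: a $k$-chain $c_k = \sum_i x_{k,i} s_i^k$ is defined independently of the reference orientations, and replacing the basis element $s_i^k$ by $\overline{s}_i^k$ (which equals $s_i^k$ if $\frake$ is applied and $-s_i^k$ if $\frako_-$ is applied) forces the coordinate to transform contravariantly so that $\sum_i \overline{x}_{k,i}\overline{s}_i^k = c_k$, namely $\overline{x}_{k,i} = [\bbD_k]_{ii} x_{k,i}$, i.e. $\overline{\bbx}_k = \bbD_k\bbx_k$ with the underlying chain unchanged; this is precisely the alternating isomorphism $\ccalC_k \cong \ccalX_k$ invoked earlier.
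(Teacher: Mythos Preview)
Your argument is correct and follows the same line as the paper's own proof: track how reorienting a single simplex flips the sign of one row or one column of the relevant incidence matrix, package this as the diagonal conjugation, push it through $\bbL_{k,\rmd}$ and $\bbL_{k,\rmu}$ using $\bbD_k^2=\bbI$, and read off the chain invariance from the alternating isomorphism $\ccalC_k\cong\ccalX_k$. One remark worth making explicit: the formula you derive, $\overline{\bbB}_k=\bbD_{k-1}\bbB_k\bbD_k$, is the one consistent with the paper's convention $\bbB_k\in\setR^{N_{k-1}\times N_k}$ and with how the result is actually used in the proof of \cref{prop:orientation_equivariance}; the indexing $\bbD_k\bbB_k\bbD_{k+1}$ in the proposition statement is a typo.
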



See the proof in \cref{proof.prop_orientation_symmetry}. This states that the incidence relations and the simplicial adjacencies in an oriented SC are altered when the orientations are reversed, whereas the $k$-chain remains invariant to this transformation and the $k$-simplicial signal $\bbx_k$ is equivariant in terms of the basis. 


\textbf{Equivariance of SCCNNs.}
Upon seeing that permutations form a symmetry group in an SC and orientations form a symmetry group in the simplicial signal space, we show that SCCNNs in \eqref{eq.sccnn_layer} are equivariant to the two symmetries.

\begin{proposition}[Permutation Equivariance]\label{prop:permutation_equivariance}
   $\text{SCCNN}_k^l:\{\bbx_{k-1}^{l-1}, \bbx_k^{l-1}, \bbx_{k+1}^{l-1}\} \to \bbx_k^l$ in \eqref{eq.sccnn_layer} is equivalent to permutations. In the $\{\frakP_k:k=[K]\}$-permuted SC, it follows  
  \begin{equation*}
    \{\bbP_{k-1}\bbx_{k-1}^{l-1}, \bbP_{k}\bbx_k^{l-1}, \bbP_{k+1}\bbx_{k+1}^{l-1}\} \to \bbP_{k}\bbx_k^l,
  \end{equation*}
  with matrix representation $\bbP_k$ of $\frakP_k$. Thus, permutations on the SC and the input affect the output in the same way. 
\end{proposition}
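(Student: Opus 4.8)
The plan is to verify the claimed identity term-by-term for a single layer \eqref{eq.sccnn_layer} (the $L$-layer statement then follows by a trivial induction on $l$, with the permuted input as base case). The two ingredients I would use are \cref{prop:prop_permutation_symmetry}, which supplies $\overline{\bbB}_k = \bbP_{k-1}\bbB_k\bbP_k^\top$ and $\overline{\bbL}_k = \bbP_k\bbL_k\bbP_k^\top$, and the orthogonality of each permutation matrix, $\bbP_k^\top\bbP_k = \bbP_k\bbP_k^\top = \bbI$; I would also make explicit that the SCCNN is evaluated in the permuted SC with the \emph{same} filter coefficients $\{w^l_{k,\rmd,t}, w^l_{k,\rmu,t}, w^{\prime l}_{k,\rmd,t}, w^{\prime l}_{k,\rmu,t}\}$, since these are trainable parameters and do not depend on the labeling.

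The first step is to record that the lower and upper Laplacians are \emph{separately} permutation-equivariant. From $\bbL_{k,\rmd} = \bbB_k^\top\bbB_k$ and $\bbL_{k,\rmu} = \bbB_{k+1}\bbB_{k+1}^\top$ together with \cref{prop:prop_permutation_symmetry}, inserting $\bbP_{k-1}^\top\bbP_{k-1}=\bbI$ and $\bbP_{k+1}^\top\bbP_{k+1}=\bbI$ yields $\overline{\bbL}_{k,\rmd} = \bbP_k\bbL_{k,\rmd}\bbP_k^\top$ and $\overline{\bbL}_{k,\rmu} = \bbP_k\bbL_{k,\rmu}\bbP_k^\top$. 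Telescoping with $\bbP_k^\top\bbP_k = \bbI$ then gives $\overline{\bbL}_{k,\rmd}^{\,t} = \bbP_k\bbL_{k,\rmd}^{t}\bbP_k^\top$ and $\overline{\bbL}_{k,\rmu}^{\,t} = \bbP_k\bbL_{k,\rmu}^{t}\bbP_k^\top$ for every $t$, hence $\overline{\bbH}_k^{l} = \bbP_k\bbH_k^{l}\bbP_k^\top$, $\overline{\bbH}_{k,\rmd}^{l} = \bbP_k\bbH_{k,\rmd}^{l}\bbP_k^\top$, and $\overline{\bbH}_{k,\rmu}^{l} = \bbP_k\bbH_{k,\rmu}^{l}\bbP_k^\top$. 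Next I would check the projections: using $\overline{\bbB}_k = \bbP_{k-1}\bbB_k\bbP_k^\top$ and the permuted inputs $\overline{\bbx}_{k-1}^{l-1}=\bbP_{k-1}\bbx_{k-1}^{l-1}$, $\overline{\bbx}_{k+1}^{l-1}=\bbP_{k+1}\bbx_{k+1}^{l-1}$, one obtains $\overline{\bbx}_{k,\rmd}^{l-1} = \overline{\bbB}_k^\top\overline{\bbx}_{k-1}^{l-1} = \bbP_k\bbB_k^\top\bbx_{k-1}^{l-1} = \bbP_k\bbx_{k,\rmd}^{l-1}$ and likewise $\overline{\bbx}_{k,\rmu}^{l-1} = \overline{\bbB}_{k+1}\overline{\bbx}_{k+1}^{l-1} = \bbP_k\bbx_{k,\rmu}^{l-1}$.

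Combining these, each of the three summands inside $\sigma(\cdot)$ transforms as $\overline{\bbH}\,\overline{\bbx} = \bbP_k\bbH\bbP_k^\top\bbP_k\bbx = \bbP_k\bbH\bbx$, the orthogonality collapsing the interior $\bbP_k^\top\bbP_k$; summing, the entire argument of $\sigma$ equals $\bbP_k$ times the original argument. Finally, since $\sigma$ acts entrywise and $\bbP_k$ merely permutes coordinates, $\sigma$ commutes with $\bbP_k$, which gives $\overline{\bbx}_k^{l} = \bbP_k\bbx_k^{l}$. The multi-feature variant is identical after replacing scalar coefficients by feature-block operators, and the $L$-layer network follows by induction on $l$. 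Honestly, there is no real obstacle here — it is essentially bookkeeping around $\bbP_k^\top\bbP_k = \bbI$; the only points deserving a line of care are (i) deriving the \emph{separate} equivariance of $\bbL_{k,\rmd}$ and $\bbL_{k,\rmu}$ and their powers from \cref{prop:prop_permutation_symmetry}, (ii) the boundary orders, where for $k=0$ the lower-projection summand and for $k=K$ the upper-projection summand are simply absent, and (iii) stating explicitly that the filter weights are shared between the original and the permuted domain.
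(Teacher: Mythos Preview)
Your proposal is correct and follows essentially the same route as the paper's proof: both derive $\overline{\bbH}_k^l=\bbP_k\bbH_k^l\bbP_k^\top$ (and likewise for $\bbH_{k,\rmd}^l,\bbH_{k,\rmu}^l$) from orthogonality of $\bbP_k$, collapse the interior $\bbP_k^\top\bbP_k$ in each summand, and then commute $\bbP_k$ past the elementwise nonlinearity. If anything, you are slightly more explicit than the paper in separating out $\overline{\bbL}_{k,\rmd},\overline{\bbL}_{k,\rmu}$, handling the boundary orders $k=0,K$, and noting the induction on $l$, all of which are fine clarifications rather than a different argument.
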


\begin{proposition}[Orientation Equivariance]\label{prop:orientation_equivariance}
  $\text{SCCNN}_k^l:\{\bbx_{k-1}^{l-1}, \bbx_k^{l-1}, \bbx_{k+1}^{l-1}\} \to \bbx_k^l$ in \eqref{eq.sccnn_layer} with an odd nonlinearity $\sigma(\cdot)$ is equivariant to orientations. In a reoriented SC by $\{\frakO_{k,i}:k=[K],i=1,\dots,N_k\}$, it follows 
  \begin{equation*}
    \{\bbD_{k-1}\bbx_{k-1}^{l-1}, \bbD_{k}\bbx_k^{l-1}, \bbD_{k+1}\bbx_{k+1}^{l-1}\} \to \bbD_{k}\bbx_k^l,
  \end{equation*}
  with matrix representation $\bbD_k$ of $\{\frakO_{k,i}:i=1,\dots,N_k\}$, i.e., orientations of the SC affect the output in the same way. 
\end{proposition}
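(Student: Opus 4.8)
The plan is an induction on the layer index $l$: I will show that if the three inputs at layer $l-1$ transform as $\overline{\bbx}_{k-1}^{l-1}=\bbD_{k-1}\bbx_{k-1}^{l-1}$, $\overline{\bbx}_k^{l-1}=\bbD_k\bbx_k^{l-1}$ and $\overline{\bbx}_{k+1}^{l-1}=\bbD_{k+1}\bbx_{k+1}^{l-1}$, then the layer output obeys $\overline{\bbx}_k^{l}=\bbD_k\bbx_k^{l}$; the base case $l=1$ is exactly the transformation rule for a reoriented $k$-simplicial signal from the Orientation Symmetry proposition. The first ingredient is how the Hodge Laplacians entering the filters behave: from the reoriented incidence matrices $\overline{\bbB}_k=\bbD_{k-1}\bbB_k\bbD_k$ (the orientation analogue of $\overline{\bbB}_k=\bbP_{k-1}\bbB_k\bbP_k^\top$ in \cref{prop:prop_permutation_symmetry}; see also the Orientation Symmetry proposition) together with the involution identity $\bbD_k^2=\bbI$, a one-line computation gives $\overline{\bbL}_{k,\rmd}=\overline{\bbB}_k^\top\overline{\bbB}_k=\bbD_k\bbL_{k,\rmd}\bbD_k$ and $\overline{\bbL}_{k,\rmu}=\overline{\bbB}_{k+1}\overline{\bbB}_{k+1}^\top=\bbD_k\bbL_{k,\rmu}\bbD_k$.

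Next I would push $\bbD_k$ through the three filter blocks. Since $(\bbD_k\bbL_{k,\rmd}\bbD_k)^t=\bbD_k\bbL_{k,\rmd}^t\bbD_k$ and likewise for $\bbL_{k,\rmu}$, and since the trainable coefficients $\{w_{k,\rmd,t}^l,w_{k,\rmu,t}^l\}$, $\{w_{k,\rmd,t}^{\prime l},w_{k,\rmu,t}^{\prime l}\}$ belong to the network and are unaffected by the choice of reference orientation, the reoriented filters are plain conjugations: $\overline{\bbH}_k^l=\bbD_k\bbH_k^l\bbD_k$, $\overline{\bbH}_{k,\rmd}^l=\bbD_k\bbH_{k,\rmd}^l\bbD_k$, $\overline{\bbH}_{k,\rmu}^l=\bbD_k\bbH_{k,\rmu}^l\bbD_k$. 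For the two projections I would use the inductive hypothesis on the $(k\pm1)$-level inputs: $\overline{\bbx}_{k,\rmd}^{l-1}=\overline{\bbB}_k^\top\overline{\bbx}_{k-1}^{l-1}=\bbD_k\bbB_k^\top\bbD_{k-1}\bbD_{k-1}\bbx_{k-1}^{l-1}=\bbD_k\bbx_{k,\rmd}^{l-1}$, and symmetrically $\overline{\bbx}_{k,\rmu}^{l-1}=\overline{\bbB}_{k+1}\overline{\bbx}_{k+1}^{l-1}=\bbD_k\bbx_{k,\rmu}^{l-1}$. The cancellation here relies on the interior $\bbD_{k-1}$ (resp.\ $\bbD_{k+1}$) hidden inside $\overline{\bbB}_k$ (resp.\ $\overline{\bbB}_{k+1}$) annihilating the one carried by the neighbouring input, which is precisely why the statement must refer to the whole triple rather than to a single level in isolation.

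Assembling \eqref{eq.sccnn_layer} in the reoriented SC and collapsing every adjacent pair $\bbD_k\bbD_k=\bbI$, the reoriented pre-activation equals $\bbD_k\bigl(\bbH_{k,\rmd}^l\bbx_{k,\rmd}^{l-1}+\bbH_k^l\bbx_k^{l-1}+\bbH_{k,\rmu}^l\bbx_{k,\rmu}^{l-1}\bigr)$. The last step is to move $\bbD_k$ outside $\sigma(\cdot)$: because $\bbD_k$ is diagonal with entries in $\{-1,1\}$ and $\sigma$ is an odd map applied entrywise, $\sigma$ commutes with $\bbD_k$; applying this to the pre-activation yields $\overline{\bbx}_k^l=\bbD_k\bbx_k^l$, closing the induction.

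All the computations above are routine bookkeeping with involutory sign matrices; the single genuinely load-bearing ingredient --- and the place to be careful --- is the commutation of $\sigma$ with $\bbD_k$, which fails for a generic nonlinearity and is exactly why the odd-$\sigma$ hypothesis is imposed. The only other thing to track is which sign matrix $\bbD_j$ sits where across the orders $k-1,k,k+1$ so that all of them cancel in pairs; this is a bookkeeping nuisance rather than a real obstacle, and the same argument carries over verbatim to the multi-feature variant of \eqref{eq.sccnn_layer}, since each $\bbD_j$ acts on the simplex index and commutes with the feature-mixing weights.
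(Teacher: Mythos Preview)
Your proposal is correct and follows essentially the same approach as the paper's own proof: both use the conjugation identities $\overline{\bbL}_{k,\rmd}=\bbD_k\bbL_{k,\rmd}\bbD_k$, $\overline{\bbL}_{k,\rmu}=\bbD_k\bbL_{k,\rmu}\bbD_k$ to push $\bbD_k$ through the polynomial filters, collapse the adjacent $\bbD_{k\pm1}$ pairs arising from the projections, and invoke oddness of $\sigma$ to commute the remaining $\bbD_k$ past the nonlinearity. Your explicit framing as an induction on $l$ is a minor presentational addition, but the single-layer computation is identical to the paper's.
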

\cref{prop:permutation_equivariance,prop:orientation_equivariance} shows that SCCNNs incorporate the inductive biases imposed by the symmetries of the SC and the signal space. If we relabel the SC, the output of an SCCNN on $k$-simplices will be relabled according to the labeling of $k$-simplices and remain unaffected by labeling of $j$-simplices with $j\neq k$. If the orientation of a simplex is reversed, the output of an SCCNN on this simplex changes its sign. The latter however requires an odd nonlinearity, thus, if ReLU or its alternatives are used, the SCCNN will not leverage the orientation symmetry of the data. 

\section{Spectral Analysis}\label{sec:spectral_analysis}
We use the spectrum of Hodge Laplacians and the Hodge decomposition \citep{hodge1989theory} to perform a spectral analysis for the SCCNN, which were also used to analyze SCFs and the NN in \citet{yang2021simplicial}. This analysis also reveals the spectral mechanisms of other NNs in \cref{tab:related_works}.
\begin{theorem}[Hodge Decomposition]\label{thm:hodge decomposition}
  In an SC $\ccalS$ with incidence matrices $\bbB_k$ and Hodge Laplacians $\bbL_k$, we have
  \begin{equation*}
    \ccalX_k = \im(\bbB_k^\top) \oplus \ker(\bbL_k) \oplus \im(\bbB_{k+1})
  \end{equation*}
 for $k=[K]$, where $\oplus$ is direct sum operation. Any $\bbx_k$ can be expressed as a sum of three orthogonal components $\bbx_k = \bbx_{k,\rm{G}} + \bbx_{k,\rm{H}} + \bbx_{k,\rm{C}}$ with $\bbx_{k,\rm{G}}=\bbB_{k}^\top \bbx_{k-1} $ and $\bbx_{k,\rm{C}}= \bbB_{k+1}\bbx_{k+1}$, for some $\bbx_{k-1}$ and $\bbx_{k+1}$, and $\bbL_{k}\bbx_{k,\rm{H}} = \mathbf{0}$.
\end{theorem}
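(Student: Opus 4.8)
The Hodge decomposition is a classical result; the plan is to derive it from the fundamental relation $\bbB_k\bbB_{k+1}=\mathbf{0}$ together with basic linear algebra (the orthogonal decomposition $\setR^{N_k} = \im(\bbM) \oplus \ker(\bbM^\top)$ for any matrix $\bbM$).

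\begin{proof}[Proof sketch]
First I would record the algebraic identities that drive everything: since $\bbB_k\bbB_{k+1}=\mathbf{0}$, we get $\im(\bbB_{k+1}) \subseteq \ker(\bbB_k)$, and taking transposes, $\im(\bbB_k^\top) \subseteq \ker(\bbB_{k+1}^\top)$. I would then invoke the standard rank–nullity orthogonal splitting: for any matrix $\bbM \in \setR^{m\times n}$, $\setR^n = \im(\bbM^\top) \oplus \ker(\bbM)$ with the two summands orthogonal. Applying this to $\bbM = \bbB_{k+1}^\top$ (so $n = N_k$) gives $\ccalX_k = \setR^{N_k} = \im(\bbB_{k+1}) \oplus \ker(\bbB_{k+1}^\top)$. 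The task is then to show $\ker(\bbB_{k+1}^\top) = \im(\bbB_k^\top) \oplus \ker(\bbL_k)$, with these two pieces orthogonal to each other and both orthogonal to $\im(\bbB_{k+1})$.

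The key step is the identification $\ker(\bbL_k) = \ker(\bbB_k) \cap \ker(\bbB_{k+1}^\top)$. For this I would use the positive-semidefiniteness of $\bbL_{k,\rmd} = \bbB_k^\top\bbB_k$ and $\bbL_{k,\rmu} = \bbB_{k+1}\bbB_{k+1}^\top$: for any $\bbx_k$, $\bbx_k^\top \bbL_k \bbx_k = \|\bbB_k\bbx_k\|^2 + \|\bbB_{k+1}^\top\bbx_k\|^2$, so $\bbL_k\bbx_k = \mathbf{0}$ iff both $\bbB_k\bbx_k=\mathbf{0}$ and $\bbB_{k+1}^\top\bbx_k=\mathbf{0}$ (using that $\bbL_k\bbx_k=\mathbf0 \Rightarrow \bbx_k^\top\bbL_k\bbx_k=0$). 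Next, orthogonality: any element of $\im(\bbB_k^\top)$ lies in $\ker(\bbB_{k+1}^\top)$ by the identity above, so it is orthogonal to $\im(\bbB_{k+1})$; and $\ker(\bbL_k)\subseteq\ker(\bbB_k)=\im(\bbB_k^\top)^\perp$, so $\ker(\bbL_k)\perp\im(\bbB_k^\top)$, while $\ker(\bbL_k)\subseteq\ker(\bbB_{k+1}^\top)=\im(\bbB_{k+1})^\perp$ gives the third orthogonality. Finally, to see the three subspaces span all of $\ccalX_k$, I would apply the rank–nullity splitting a second time, now to $\bbM=\bbB_k$ within the space $\ker(\bbB_{k+1}^\top)$: since $\im(\bbB_k^\top)\subseteq\ker(\bbB_{k+1}^\top)$, its orthogonal complement inside $\ker(\bbB_{k+1}^\top)$ is $\ker(\bbB_k)\cap\ker(\bbB_{k+1}^\top)=\ker(\bbL_k)$, giving $\ker(\bbB_{k+1}^\top)=\im(\bbB_k^\top)\oplus\ker(\bbL_k)$. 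Combining with $\ccalX_k = \im(\bbB_{k+1})\oplus\ker(\bbB_{k+1}^\top)$ yields the claimed three-way direct sum.

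The componentwise statement then follows immediately: writing $\bbx_k = \bbx_{k,\rm G} + \bbx_{k,\rm H} + \bbx_{k,\rm C}$ according to this decomposition, $\bbx_{k,\rm G}\in\im(\bbB_k^\top)$ means $\bbx_{k,\rm G}=\bbB_k^\top\bbx_{k-1}$ for some $\bbx_{k-1}$, $\bbx_{k,\rm C}\in\im(\bbB_{k+1})$ means $\bbx_{k,\rm C}=\bbB_{k+1}\bbx_{k+1}$ for some $\bbx_{k+1}$, and $\bbx_{k,\rm H}\in\ker(\bbL_k)$ means $\bbL_k\bbx_{k,\rm H}=\mathbf0$; uniqueness of the components is exactly the directness of the sum. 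I expect no serious obstacle here — the only mildly delicate point is being careful that the decomposition holds for the full range $k=[K]$, including the boundary cases $k=0$ (where $\bbB_0$ is not defined, so the $\im(\bbB_k^\top)$ term is absent and $\ccalX_0 = \ker(\bbL_0)\oplus\im(\bbB_1)$) and $k=K$ (where $\bbB_{K+1}$ is absent, so the $\im(\bbB_{k+1})$ term drops and $\ker(\bbB_{k+1}^\top)$ should be read as all of $\ccalX_K$); I would handle these by the convention that the missing incidence matrices are zero maps, under which every step above goes through verbatim.
\end{proof}
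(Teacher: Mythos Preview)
Your proof is correct and follows the standard linear-algebraic route to the Hodge decomposition. Note, however, that the paper does not actually supply its own proof of this theorem: it is stated as a classical result (with a citation to Hodge) and used as background for the spectral analysis, so there is no paper-proof to compare against --- your argument simply fills in what the paper takes for granted.
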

For $k=1$, $\im(\bbB_1^\top)$ is the \emph{gradient} space collecting edge flows as the gradient of some node signal; $\im(\bbB_2)$ is the \emph{curl} space where flows circulate within triangles; and $\ker(\bbL_1)$ contains \emph{harmonic} flows which are divergence-free (zero netflow at nodes) and curl-free (zero circulation in triangles) \citep{barbarossa2020,schaub2021}. 

This decomposition implies that $\bbx_k$ is a sum of $\bbx_{k,\rm{G}}$ via lower incident relations from some $\bbx_{k-1}$, $\bbx_{k,\rm{C}}$ via upper incident relations from some $\bbx_{k+1}$, and $\bbx_{k,\rm{H}}$ which cannot be diffused to other simplices. This motivates the input $\bbx_{k-1}$ and $\bbx_{k+1}$ of an SCCNN layer as they also contain information that contributes to the $k$-simplicial signal space. 

\begin{definition}[Simplicial Fourier Transform]\label{def:sft}
  The SFT of $\bbx_k$ is $\tilde{\bbx}_k = \bbU_k^\top \bbx_k$ where SFT basis $\bbU_k$ is the eigenbasis of $\bbL_k=\bbU_k\bLambda_k\bbU_k^\top$, and the inverse SFT is $\bbx_k = \bbU_k\tilde{\bbx}_k$ \citep{barbarossa2020}.
\end{definition}

\begin{proposition}[\citet{yang2022simplicial}]\label{prop:sft}
  The SFT basis $\bbU_k$ can be found as $\bbU_k = [\bbU_{k,\rm{H}}\,\,\bbU_{k,\rm{G}}\,\,\bbU_{k,\rm{C}}]$ where \newline
  1) $\bbU_{k,\rm{H}}$, associated with $N_{k,\rm{H}}$ zero eigenvalues of $\bbL_k$, spans $\ker(\bbL_k)$, and $\dim(\ker(\bbL_k))=N_{k,\rm{H}}$; \newline
  2) $\bbU_{k,\rm{G}}$, associated with nonzero eigenvalues $\{\lambda_{k,{\rm{G}},i}\}_{i=1}^{N_{k,\rm{G}}}$ of $\bbL_{k,\rmd}$, referred to as gradient frequencies, spans $\im(\bbB_k^\top)$, and $\dim(\im(\bbB_k^\top)) = N_{k,\rm{G}}$; \newline
  3) $\bbU_{k,\rm{C}}$, associated with nonzero eigenvalues $\{\lambda_{k,{\rm{C}},i}\}_{i=1}^{N_{k,\rm{C}}}$ of $\bbL_{k,\rmu}$, referred to as curl frequencies, spans $\im(\bbB_{k+1})$, and $\dim(\im(\bbB_k^\top)) = N_{k,\rm{C}}$. 
\end{proposition}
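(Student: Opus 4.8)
The plan is to exploit the fact that the two summands $\bbL_{k,\rmd}=\bbB_k^\top\bbB_k$ and $\bbL_{k,\rmu}=\bbB_{k+1}\bbB_{k+1}^\top$ of $\bbL_k$ have orthogonal ranges, so that $\bbL_k$ is block-diagonal with respect to the Hodge decomposition of \cref{thm:hodge decomposition}, and then to diagonalize each block separately. First I would record the consequences of $\bbB_k\bbB_{k+1}=\mathbf{0}$: it gives $\bbL_{k,\rmd}\bbL_{k,\rmu}=\bbB_k^\top(\bbB_k\bbB_{k+1})\bbB_{k+1}^\top=\mathbf{0}$, and by symmetry of both matrices also $\bbL_{k,\rmu}\bbL_{k,\rmd}=\mathbf{0}$; hence $\im(\bbL_{k,\rmu})\subseteq\ker(\bbL_{k,\rmd})$ and $\im(\bbL_{k,\rmd})\subseteq\ker(\bbL_{k,\rmu})$. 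Combined with the standard identities $\im(\bbL_{k,\rmd})=\im(\bbB_k^\top\bbB_k)=\im(\bbB_k^\top)$ and $\ker(\bbL_{k,\rmd})=\ker(\bbB_k)=\im(\bbB_k^\top)^\perp$ (and the analogous ones for $\bbL_{k,\rmu}$ with $\bbB_{k+1}$), together with $\ker(\bbL_k)=\ker(\bbL_{k,\rmd})\cap\ker(\bbL_{k,\rmu})$ since $\bbx^\top\bbL_k\bbx=\|\bbB_k\bbx\|^2+\|\bbB_{k+1}^\top\bbx\|^2$, this shows that the three subspaces $\im(\bbB_k^\top)$, $\ker(\bbL_k)$, $\im(\bbB_{k+1})$ appearing in \cref{thm:hodge decomposition} are each $\bbL_k$-invariant, with $\bbL_k$ acting on them as $\bbL_{k,\rmd}$, as $\mathbf{0}$, and as $\bbL_{k,\rmu}$ respectively.

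Next I would diagonalize $\bbL_k$ on each block. On $\ker(\bbL_k)$ any orthonormal basis works and yields $\bbU_{k,\rm{H}}$, contributing $N_{k,\rm{H}}=\dim\ker(\bbL_k)$ zero eigenvalues. On $\im(\bbB_k^\top)=\im(\bbL_{k,\rmd})$, the restriction of the symmetric positive-semidefinite matrix $\bbL_{k,\rmd}$ is positive definite (a symmetric PSD matrix restricted to its own range is invertible there), so it admits an orthonormal eigenbasis $\bbU_{k,\rm{G}}$ with strictly positive eigenvalues $\{\lambda_{k,{\rm{G}},i}\}_{i=1}^{N_{k,\rm{G}}}$, $N_{k,\rm{G}}=\operatorname{rank}\bbB_k=\dim\im(\bbB_k^\top)$; since $\bbL_{k,\rmu}$ vanishes on this subspace, these are also eigenvalues of $\bbL_k$, namely the gradient frequencies. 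Symmetrically, on $\im(\bbB_{k+1})=\im(\bbL_{k,\rmu})$ one obtains $\bbU_{k,\rm{C}}$ with strictly positive eigenvalues $\{\lambda_{k,{\rm{C}},i}\}_{i=1}^{N_{k,\rm{C}}}$, $N_{k,\rm{C}}=\operatorname{rank}\bbB_{k+1}=\dim\im(\bbB_{k+1})$, the curl frequencies.

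Finally I would assemble the pieces: because the three subspaces are mutually orthogonal and their direct sum is all of $\ccalX_k$ by \cref{thm:hodge decomposition}, concatenating the three orthonormal bases produces an orthonormal eigenbasis $\bbU_k=[\bbU_{k,\rm{H}}\ \bbU_{k,\rm{G}}\ \bbU_{k,\rm{C}}]$ of $\bbL_k$, i.e. a valid SFT basis in the sense of \cref{def:sft}, and the dimension count $N_k=N_{k,\rm{H}}+N_{k,\rm{G}}+N_{k,\rm{C}}$ follows at once. The only delicate point is the claim that $\bbU_{k,\rm{G}}$ (resp. $\bbU_{k,\rm{C}}$) spans all of $\im(\bbB_k^\top)$ (resp. $\im(\bbB_{k+1})$) and that the listed frequencies are exactly the nonzero eigenvalues of $\bbL_{k,\rmd}$ (resp. $\bbL_{k,\rmu}$); this is where $\im(\bbL_{k,\rmd})=\im(\bbB_k^\top)$ together with $\ker(\bbL_{k,\rmd})\perp\im(\bbB_k^\top)$ is used, ensuring every positive-eigenvalue eigenvector of $\bbL_{k,\rmd}$ lies entirely inside $\im(\bbB_k^\top)$, and everything else is bookkeeping. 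An essentially equivalent route is to observe that $\bbL_{k,\rmd}$ and $\bbL_{k,\rmu}$ commute (their product is $\mathbf{0}=\mathbf{0}^\top$), pick a joint orthonormal eigenbasis, and note that each joint eigenvector must have at least one of its two eigenvalues equal to zero, which partitions the basis into the same three groups.
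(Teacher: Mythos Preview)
Your argument is correct and complete. The paper itself does not supply a proof of this proposition; in the appendix it simply states ``The proof can be found in \citet{yang2022simplicial}.'' Your self-contained argument---using $\bbB_k\bbB_{k+1}=\mathbf{0}$ to obtain $\bbL_{k,\rmd}\bbL_{k,\rmu}=\mathbf{0}$, then exploiting the resulting block-diagonal structure of $\bbL_k$ with respect to the Hodge decomposition of \cref{thm:hodge decomposition} and diagonalizing each block separately---is the standard route for this kind of result and is exactly how such statements are typically established. The alternative you sketch at the end (simultaneous diagonalization of the commuting pair $\bbL_{k,\rmd},\bbL_{k,\rmu}$, noting that for any joint eigenvector at least one of the two eigenvalues must vanish since the product of the operators is zero) is equally valid and yields the same three-way partition of the eigenbasis.
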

The eigenvalues of $\bbL_k$ carry two types of simplicial frequencies, which measure the $k$-simplicial signal variations in terms of faces and cofaces. 
For $k=1$, gradient frequencies measure the edge flow ``smoothness'' in terms of nodal variations, i.e., the total divergence. Curl frequencies measure the smoothness in terms of rotational variations, i.e., the total curl. For $k=0$, curl frequencies are the graph frequencies in graph signal processing while gradient frequencies do not exist. We refer to \cref{app:sft} for more details. 

We can now analyze the input of an SCCNN layer in spectral domain. First, the SFT of $\bbx_k$ is given by 
\begin{equation} \label{eq.sft_embedding}
  \tilde{\bbx}_k = [\tilde{\bbx}_{k,\rm{H}}^\top,\tilde{\bbx}_{k,\rm{G}}^\top,\tilde{\bbx}_{k,\rm{C}}^\top ]^\top
\end{equation}
with the \emph{harmonic embedding} $\tilde{\bbx}_{k,\rm{H}} = \bbU_{k,\rm{H}}^\top \bbx_{k} = \bbU_{k,\rm{H}}^\top \bbx_{k,\rm{H}}$ in the zero frequencies, the \emph{gradient embedding} $\tilde{\bbx}_{k,\rm{G}}= \bbU_{k, \rm{G}}^\top \bbx_{k} = \bbU_{k, \rm{G}}^\top \bbx_{k,\rm{G}}$ in the gradient frequencies, and the \emph{curl embedding} $\tilde{\bbx}_{k,\rm{C}} = \bbU_{k,\rm{C}}^\top \bbx_k = \bbU_{k,\rm{C}}^\top \bbx_{k,\rm{C}}$ in the curl frequencies. Second, the lower projection $\bbx_{k,\rmd}\in\im(\bbB_{k}^\top)$ has only a nonzero gradient embedding $\tilde{\bbx}_{k,\rmd} = \bbU_{k,\rm{G}}^\top \bbx_{k,\rmd}$. The upper projection $\bbx_{k,\rmu}\in \im(\bbB_{k+1})$ contains only a nonzero curl embedding $\tilde{\bbx}_{k,\rmu} = \bbU_{k,\rm{C}}^\top \bbx_{k,\rmu}$. 

\begin{corollary}\label{cor:evd_Ld_Lu}
  $\bbL_{k,\rmd}$ and $\bbL_{k,\rmu}$ admit diagonalizations by $\bbU_k$. Thus, the simplicial shifting in \eqref{eq.shifting_entry} can be expressed as  
  \begin{equation}\label{eq.shifting_spectral}
    \begin{aligned}
      \bbL_{k,\rmd}\bbx_k & = \bbU_{k,\rm{G}} (\blambda_{k,\rm{G}} \odot \tilde{\bbx}_{k,\rm{G}}) \in \im(\bbB_k^\top)  \\
      \bbL_{k,\rmu}\bbx_k & = \bbU_{k,\rm{C}} (\blambda_{k,\rm{C}} \odot \tilde{\bbx}_{k,\rm{C}}) \in \im(\bbB_{k+1})
    \end{aligned}
  \end{equation}
  with the Hadamard product $\odot$ and column vectors $\blambda_{k,\rm{G}}$ and $\blambda_{k,\rm{C}}$  collecting gradient and curl frequencies, respectively.
\end{corollary}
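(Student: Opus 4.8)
The plan is to diagonalize $\bbL_{k,\rmd}$ and $\bbL_{k,\rmu}$ simultaneously using the Hodge-adapted eigenbasis $\bbU_k$ of \cref{prop:sft}, and then read \eqref{eq.shifting_spectral} off the block partition of the SFT in \eqref{eq.sft_embedding}. First I would record the algebraic facts that drive everything. From $\bbB_k\bbB_{k+1}=\mathbf{0}$ we get $\bbL_{k,\rmd}\bbL_{k,\rmu}=\bbB_k^\top(\bbB_k\bbB_{k+1})\bbB_{k+1}^\top=\mathbf{0}$ and, symmetrically, $\bbL_{k,\rmu}\bbL_{k,\rmd}=\mathbf{0}$, so the symmetric positive semidefinite matrices $\bbL_{k,\rmd}$ and $\bbL_{k,\rmu}$ (both Gram matrices) commute; the same computation also gives $\im(\bbB_{k+1})\subseteq\ker(\bbL_{k,\rmd})$ and $\im(\bbB_k^\top)\subseteq\ker(\bbL_{k,\rmu})$. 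Since $\bbL_k=\bbL_{k,\rmd}+\bbL_{k,\rmu}$ with both summands positive semidefinite, $\bbx_k^\top\bbL_k\bbx_k=0$ forces $\bbL_{k,\rmd}\bbx_k=\bbL_{k,\rmu}\bbx_k=\mathbf{0}$, whence $\ker(\bbL_k)=\ker(\bbL_{k,\rmd})\cap\ker(\bbL_{k,\rmu})$.

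Next I would go through the three column blocks of $\bbU_k=[\bbU_{k,\rm{H}}\ \bbU_{k,\rm{G}}\ \bbU_{k,\rm{C}}]$ from \cref{prop:sft}. The columns of $\bbU_{k,\rm{H}}$ span $\ker(\bbL_k)=\ker(\bbL_{k,\rmd})\cap\ker(\bbL_{k,\rmu})$, hence are $0$-eigenvectors of both $\bbL_{k,\rmd}$ and $\bbL_{k,\rmu}$. On $\im(\bbB_k^\top)$ the operator $\bbL_{k,\rmu}$ vanishes, so $\bbL_k$ restricts to $\bbL_{k,\rmd}$ there; since the columns of $\bbU_{k,\rm{G}}$ are precisely the eigenvectors of $\bbL_k$ lying in $\im(\bbB_k^\top)$, they are eigenvectors of $\bbL_{k,\rmd}$ with the gradient eigenvalues $\{\lambda_{k,{\rm{G}},i}\}$ and $0$-eigenvectors of $\bbL_{k,\rmu}$. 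By the symmetric argument on $\im(\bbB_{k+1})$, the columns of $\bbU_{k,\rm{C}}$ are eigenvectors of $\bbL_{k,\rmu}$ with the curl eigenvalues $\{\lambda_{k,{\rm{C}},i}\}$ and $0$-eigenvectors of $\bbL_{k,\rmd}$. Stacking the three blocks yields $\bbU_k^\top\bbL_{k,\rmd}\bbU_k=\operatorname{diag}(\mathbf{0},\blambda_{k,\rm{G}},\mathbf{0})$ and $\bbU_k^\top\bbL_{k,\rmu}\bbU_k=\operatorname{diag}(\mathbf{0},\mathbf{0},\blambda_{k,\rm{C}})$, that is, both are diagonalized by $\bbU_k$.

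For the spectral form of the shiftings I would substitute these diagonalizations into $\bbL_{k,\rmd}\bbx_k=\bbU_k(\bbU_k^\top\bbL_{k,\rmd}\bbU_k)\bbU_k^\top\bbx_k=\bbU_k\operatorname{diag}(\mathbf{0},\blambda_{k,\rm{G}},\mathbf{0})\tilde{\bbx}_k$ and use the block partition $\tilde{\bbx}_k=[\tilde{\bbx}_{k,\rm{H}}^\top,\tilde{\bbx}_{k,\rm{G}}^\top,\tilde{\bbx}_{k,\rm{C}}^\top]^\top$ from \eqref{eq.sft_embedding}: only the gradient block survives, giving $\bbL_{k,\rmd}\bbx_k=\bbU_{k,\rm{G}}(\blambda_{k,\rm{G}}\odot\tilde{\bbx}_{k,\rm{G}})$, which lies in $\im(\bbU_{k,\rm{G}})=\im(\bbB_k^\top)$ by \cref{prop:sft}. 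The upper identity follows verbatim with $\rmu$, $\rm{C}$ and $\bbB_{k+1}$ replacing $\rmd$, $\rm{G}$ and $\bbB_k^\top$.

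The only genuine subtlety I would highlight is that a generic orthonormal eigenbasis of $\bbL_k$ need not consist of joint eigenvectors of $\bbL_{k,\rmd}$ and $\bbL_{k,\rmu}$ when $\bbL_k$ has repeated eigenvalues. This is exactly what the Hodge-adapted structure of \cref{prop:sft} resolves: the subspaces $\ker(\bbL_k)$, $\im(\bbB_k^\top)$ and $\im(\bbB_{k+1})$ are mutually orthogonal and $\bbL_k$-invariant, and on each of them one of $\bbL_{k,\rmd}$, $\bbL_{k,\rmu}$ is zero while the other equals $\bbL_k$; hence any eigenbasis of $\bbL_k$ respecting this decomposition — which $\bbU_k$ does by construction — automatically diagonalizes both pieces, with no genericity assumption needed. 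Everything else is a routine block computation.
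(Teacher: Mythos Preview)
Your proposal is correct and follows essentially the same route as the paper: both use the Hodge-adapted basis $\bbU_k=[\bbU_{k,\rm{H}}\ \bbU_{k,\rm{G}}\ \bbU_{k,\rm{C}}]$ from \cref{prop:sft}, record the block-diagonal form $\bbU_k^\top\bbL_{k,\rmd}\bbU_k=\operatorname{diag}(\mathbf{0},\blambda_{k,\rm{G}},\mathbf{0})$ and $\bbU_k^\top\bbL_{k,\rmu}\bbU_k=\operatorname{diag}(\mathbf{0},\mathbf{0},\blambda_{k,\rm{C}})$, and substitute the SFT partition \eqref{eq.sft_embedding}. The paper's proof simply asserts this diagonalization (leaning on \cref{prop:sft}) and proceeds to the substitution; your version is more careful in justifying the cross-block zeros via $\bbB_k\bbB_{k+1}=\mathbf{0}$ and the positive-semidefiniteness argument for $\ker(\bbL_k)$, and your remark on repeated eigenvalues is a welcome clarification that the paper leaves implicit.
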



See \cref{proof:cor:evd_Ld_Lu} for the proof. \eqref{eq.shifting_spectral}  implies that a lower shifting of $\bbx_k$ results a signal living in the gradient space and an upper one results in the curl space. This limits a linear relation between the output and input in terms of the corresponding frequencies as in \citet{roddenberry2021principled}. 

By diagonalizing an SCF $\bbH_k$ with $\bbU_k$, we can further express the simplicial convolution as 
\begin{equation} \label{eq.freq_response_scf}
  \bbH_k \bbx_k = \bbU_k \widetilde{\bbH}_k \bbU_k^\top \bbx_k = \bbU_k (\tilde{\bbh}_k \odot \tilde{\bbx}_k) \\ 
\end{equation}
where $\widetilde{\bbH}_k = \diag(\tilde{\bbh}_k)$. Here, $\tilde{\bbh}_k = [\tilde{\bbh}_{k,\rm{H}}^\top, \tilde{\bbh}_{k,\rm{G}}^\top, \tilde{\bbh}_{k,\rm{C}}^\top]^\top$ is the \emph{filter frequency response}, given by 
\begin{equation*}
  \begin{cases}
    \emph{harmonic response}: \tilde{\bbh}_{k,\rm{H}}  = (w_{k,\rmd,0} + w_{k,\rmu,0})\mathbf{1},\\ 
    \emph{gradient response}: \tilde{\bbh}_{k,\rm{G}}  = \sum_{t=0}^{T_\rmd} w_{k,\rmd,t} \blambda_{k,\rm{G}}^{\odot t} + w_{k,\rmu,0} \mathbf{1}, \\ 
    \emph{curl response}: \tilde{\bbh}_{k,\rm{C}}  = \sum_{t=0}^{T_\rmu} w_{k,\rmu,t} \blambda_{k,\rm{C}}^{\odot t} + w_{k,\rmd,0} \mathbf{1},
  \end{cases}
\end{equation*}
with $(\cdot)^{\odot t}$ the elementwise $t$th power of a vector. Furthermore, we can express $\tilde{\bbh}_k \odot \tilde{\bbx}_k$ as 
\begin{equation} \label{eq.y_output_sft}
   [(\tilde{\bbh}_{k,\rm{H}} \odot \tilde{\bbx}_{k,\rm{H}})^\top,(\tilde{\bbh}_{k,\rm{G}} \odot \tilde{\bbx}_{k,\rm{G}})^\top,(\tilde{\bbh}_{k,\rm{C}} \odot \tilde{\bbx}_{k,\rm{C}})^\top]^\top .
\end{equation}
Therefore, the simplicial convolution corresponds to a pointwise multiplication of the SFT of a simplicial signal by the filter frequency response in the spectral domain. Specifically, the frequency response $\tilde{\bbh}_{k,\rm{H}}$ at the zero frequency is determined by the coefficients of the SCF on the identity matrix. The coefficients $\{w_{k,\rmd,t}\}_{t=1}^{T_\rmd}$ on $\bbL_{k,\rmd}$ and its powers contribute to $\tilde{\bbh}_{k,\rm{G}}$, acting in the gradient frequencies and gradient space, while the coefficients $\{w_{k,\rmu,t}\}_{t=1}^{T_\rmu}$ on $\bbL_{k,\rmu}$ and its powers contribute to $\tilde{\bbh}_{k,\rm{C}}$, acting in the curl frequencies and curl space. This is a direct result of \cref{cor:evd_Ld_Lu}. Unlike \eqref{eq.y_output_sft}, the SCF in \citet{ebli2020simplicial} has the same gradient and curl responses which prohibits different processing in the gradient and curl spaces.

The lower SCF $\bbH_{k,\rmd}$ has $\tilde{\bbh}_{k,\rmd} = \sum_{t=0}^{T_\rmd} w_{k,\rmd,t}^{\prime} \blambda_{k,\rm{G}}^{\odot t}$ as the frequency response that modulates the gradient embedding of $\bbx_{k,\rmd}$ and the upper SCF $\bbH_{k,\rmu}$ has $\tilde{\bbh}_{k,\rmu} = \sum_{t=0}^{T_\rmu}w_{k,\rmu,t}^{\prime} \blambda_{k,\rm{C}}^{\odot t}$ as the frequency response that modulates the curl embedding of $\bbx_{k,\rmu}$. 

Now, consider the output after the linear operation in an SCCNN layer $\bby_k = \bbH_{k,\rmd} \bbx_{k,\rmd} + \bbH_{k}\bbx_k + \bbH_{k,\rmu} \bbx_{k,\rmu}$. Its three spectral embeddings are given by  
\begin{equation}
\begin{cases}
  \tilde{\bby}_{k,\rm{H}} = \tilde{\bbh}_{k,\rm{H}} \odot \tilde{\bbx}_{k,\rm{H}}, \\ 
  \tilde{\bby}_{k,\rm{G}} = \tilde{\bbh}_{k,\rmd} \odot \tilde{\bbx}_{k,\rmd}  + \tilde{\bbh}_{k,\rm{G}} \odot \tilde{\bbx}_{k,\rm{G}}, \\ 
  \tilde{\bby}_{k,\rm{C}} =  \tilde{\bbh}_{k,\rm{C}} \odot \tilde{\bbx}_{k,\rm{C}}  + \tilde{\bbh}_{k,\rmu} \odot \tilde{\bbx}_{k,\rmu}.\\ 
\end{cases}
\end{equation}
This spectral relation shows how SCCNNs regulate the three inputs coming from simplices of different order and enable a flexible processing of inputs in different signal spaces owing to that different coefficients are used in the SCFs. 

The nonlinearity induces an information spillage \citep{gama2020stability} such that one type of spectral embedding could be spread over other types of frequencies. That is, $\sigma(\tilde{\bby}_{k,\rm{G}})$ could contain information in zero or curl frequencies. For example, a gradient flow projected from a node input could have information spillage in curl frequencies after $\sigma(\cdot)$. This spilled information further contributes to a triangle signal via projection $\bbB_2^\top$. Thus, the triangle output of SCCNNs contains information from nodes. This is the spectral perspective of the extended inter-simplicial locality. 

\section{Stability Analysis}\label{sec:stability_analysis}


Characterizing the stability of NNs to domain perturbations is key to understand their learning abilities from data \citep{bruna2013invariant,bronstein2021geometric}. The analysis by \citet{gama2020stability} showed that GNNs could be both stable and selective in contrast to graph convolution filters. We here perform a stability analysis of SCCNNs to understand the effect of various factors on the output of different simplices, with a focus on the roles of lower and upper simplicial adjacencies and inter-simplicial couplings. 

Domain perturbations could occur in a weighted SC as a result of misestimated simplicial weights. Denote as $\bbM_{k}$ a weight matrix of $k$-simplices. A weighted lower Laplacian is defined as $\bbL_{k,\rmd}=f_{k,\rmd}(\bbB_k,\bbM_{k-1},\bbM_{k})$ a function of incidence matrix $\bbB_k$ and weights $\bbM_{k-1},\bbM_k$, and likewise for the upper one $\bbL_{k,\rmu}=f_{k,\rmu}(\bbB_{k+1},\bbM_{k},\bbM_{k+1})$. The projections in SCCNNs are performed by the lower and upper projection matrices in place of $\bbB_k^\top$ and $\bbB_{k+1}$, defined as $\bbR_{k,\rmd}=f^\prime_{k,\rmd}(\bbB_k,\bbM_{k-1},\bbM_{k})$ and $\bbR_{k,\rmu}=f^\prime_{k,\rmu}(\bbB_{k+1},\bbM_{k},\bbM_{k+1})$, functions of incidence matrices and weights. See \cref{app:weighted sccnn} for some explicit forms by \citet{grady2010discrete,schaub2020random}. The misestimations of these weights could be viewed as relative perturbations on Hodge Laplacians and projection matrices. 


\begin{definition}[Relative Perturbation] \label{def:relative_perturbation}
  Consider a weighted SC $\ccalS$ with projection matrices $\bbR_{k,\rmd}$, $\bbR_{k,\rmu}$ and Hodge Laplacians $\bbL_{k,\rmd}$, $\bbL_{k,\rmu}$, $k=[K]$. A relative perturbed SC $\widehat{\ccalS}$ has 
  \begin{equation*}
    \begin{aligned}
      & \widehat{\bbR}_{k,\rmd}=\bbR_{k,\rmd}+\bbJ_{k,\rmd}\bbR_{k,\rmd}, \,\, \widehat{\bbR}_{k,\rmu}=\bbR_{k,\rmu}+\bbJ_{k,\rmu}\bbR_{k,\rmu}, \\
      & \widehat{\bbL}_{k,\rmd} = \bbL_{k,\rmd}  + \bbE_{k,\rmd} \bbL_{k,\rmd} + \bbL_{k,\rmd} \bbE_{k,\rmd} ,  \\
      & \widehat{\bbL}_{k,\rmu} = \bbL_{k,\rmu}  + \bbE_{k,\rmu} \bbL_{k,\rmu} + \bbL_{k,\rmu} \bbE_{k,\rmu} 
    \end{aligned}
  \end{equation*}
  where small perturbation matrices follow that $\lVert \bbE_{k,\rmd} \lVert \leq \epsilon_{k,\rmd}$ and $\lVert \bbJ_{k,\rmd} \lVert \leq \vepsilon_{k,\rmd}$ $\lVert \bbE_{k,\rmu} \lVert \leq \epsilon_{k,\rmu}$ and $\lVert \bbJ_{k,\rmu} \lVert \leq \vepsilon_{k,\rmu}$ with the spectral radius $\lVert\cdot \rVert$.

  

\end{definition} 
This model generalizes the graph perturbation model in \citet{gama2019stability,gama2020stability,parada2022stability} and implies that the same degree of perturbations affect differently stronger and weaker simplicial adjacencies. We further describe an SCF by its integral Lipschitz property. 

\begin{definition}[Integral Lipschitz SCF]\label{def.integral_lip_def_1}
  An SCF $\bbH_k$ is integral Lipschitz with constants $C_{k,\rmd}$ and $C_{k,\rmu}$ if 
  if 
  \begin{equation} \label{eq.integral_lip_def_1}
    |\lambda \tilde{h}'_{k,\rm{G}}(\lambda)| \leq C_{k,\rmd} \text{ and } |\lambda \tilde{h}'_{k,\rm{C}}(\lambda)| \leq C_{k,\rmu},
  \end{equation}
  with $\tilde{h}^\prime_{k,\rm{G}}(\lambda)$ and $\tilde{h}^\prime_{k,\rm{C}}(\lambda)$ the derivatives of the gradient and curl frequency response functions [cf. \eqref{eq.freq_response_scf}], respectively.  
\end{definition}
Integral Lipschitz SCFs can have a large variability in low simplicial frequencies $\lambda\to 0$, thus, a good selectivity with a low stability, while in large frequencies, they tend to be flat with a better stability at the cost of selectivity. This tradeoff holds independently for the gradient and curl frequencies. See \cref{app:perturbation model} for more details. As of the polynomial nature of frequency responses, all SCFs of an SCCNN are integral Lipschitz. We also denote the constant for the lower SCFs $\bbH_{k,\rmd}$ by $C_{k,\rmd}$ and for the upper SCFs $\bbH_{k,\rmu}$ by $C_{k,\rmu}$.  


\begin{assumption}\label{assump:bounded_filters}
  The SCFs $\bbH_k$ of an SCCNN have a normalized bounded frequency response  $|\tilde{h}_{k}(\lambda)|\leq 1$, likewise for $\bbH_{k,\rmd}$ and $\bbH_{k,\rmu}$, where we assume one for simplicity. 
\end{assumption}


\begin{assumption}\label{assump:bounded_proj}
  The lower and upper projections are finite with bounded norms $\lVert\bbR_{k,\rmd}\rVert\leq r_{k,\rmd}$ and $\lVert\bbR_{k,\rmu}\rVert\leq r_{k,\rmu}$. 
\end{assumption}

\begin{assumption}\label{assump:bounded_input}
  The initial input $\bbx_k^0$ is finite a limited energy $\lVert\bbx_k^0\rVert\leq\beta_k, k=[K]$, collected in $\bbeta=[\beta_0,\dots,\beta_K]^\top$. 
\end{assumption}

\begin{assumption}\label{assump:lip_nonlinear}
  The nonlinearity $\sigma(\cdot)$ is $C_\sigma$-Lipschitz, i.e., $|\sigma(b)-\sigma(a)|\leq |b-a|$. 
\end{assumption}

\begin{figure}[t!]
  \vskip 0.1in 
  \centering
  \begin{subfigure}{0.495\linewidth}
    \includegraphics[width=\linewidth]{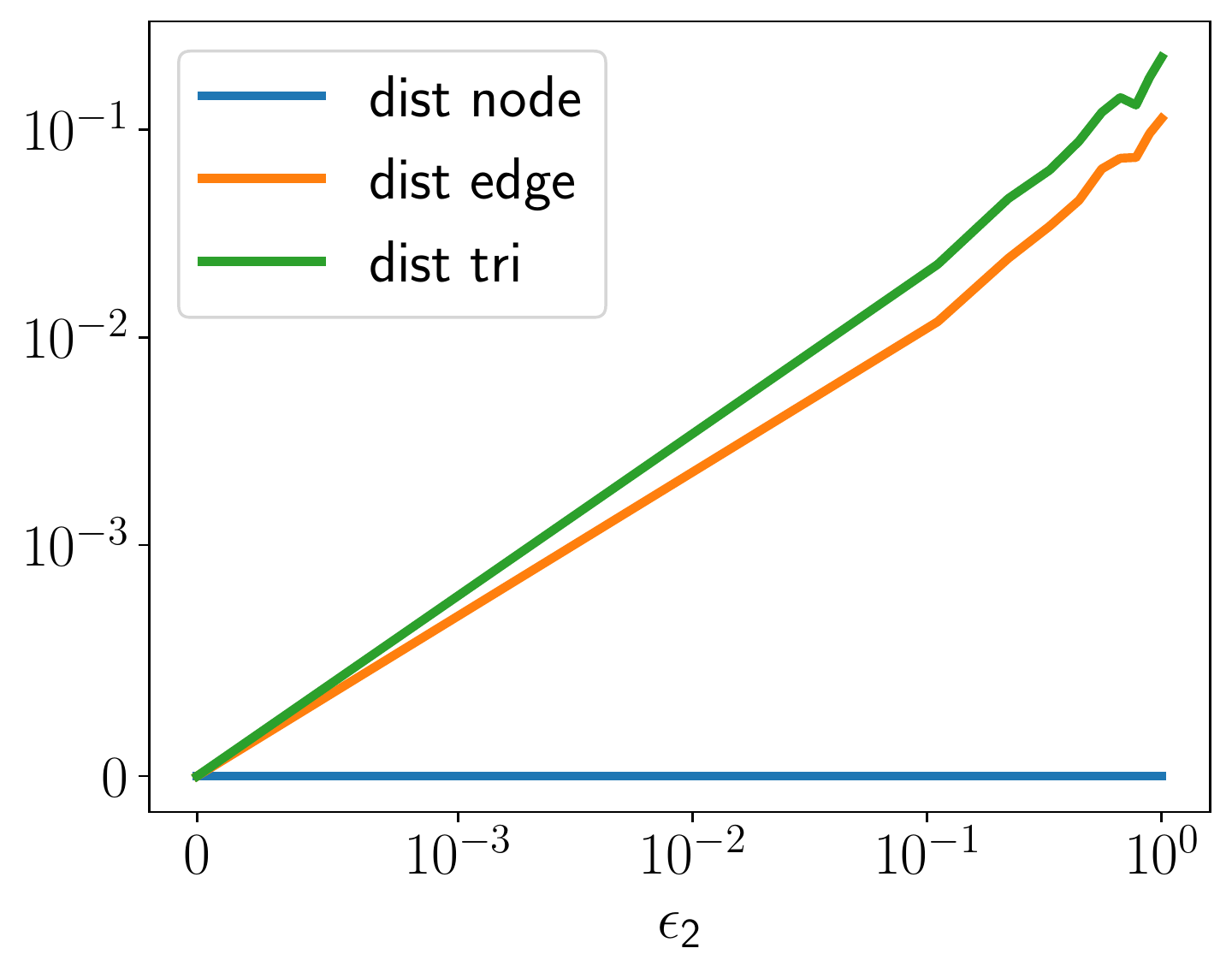}
  \end{subfigure}
  \begin{subfigure}{0.495\linewidth}
    \includegraphics[width=\linewidth]{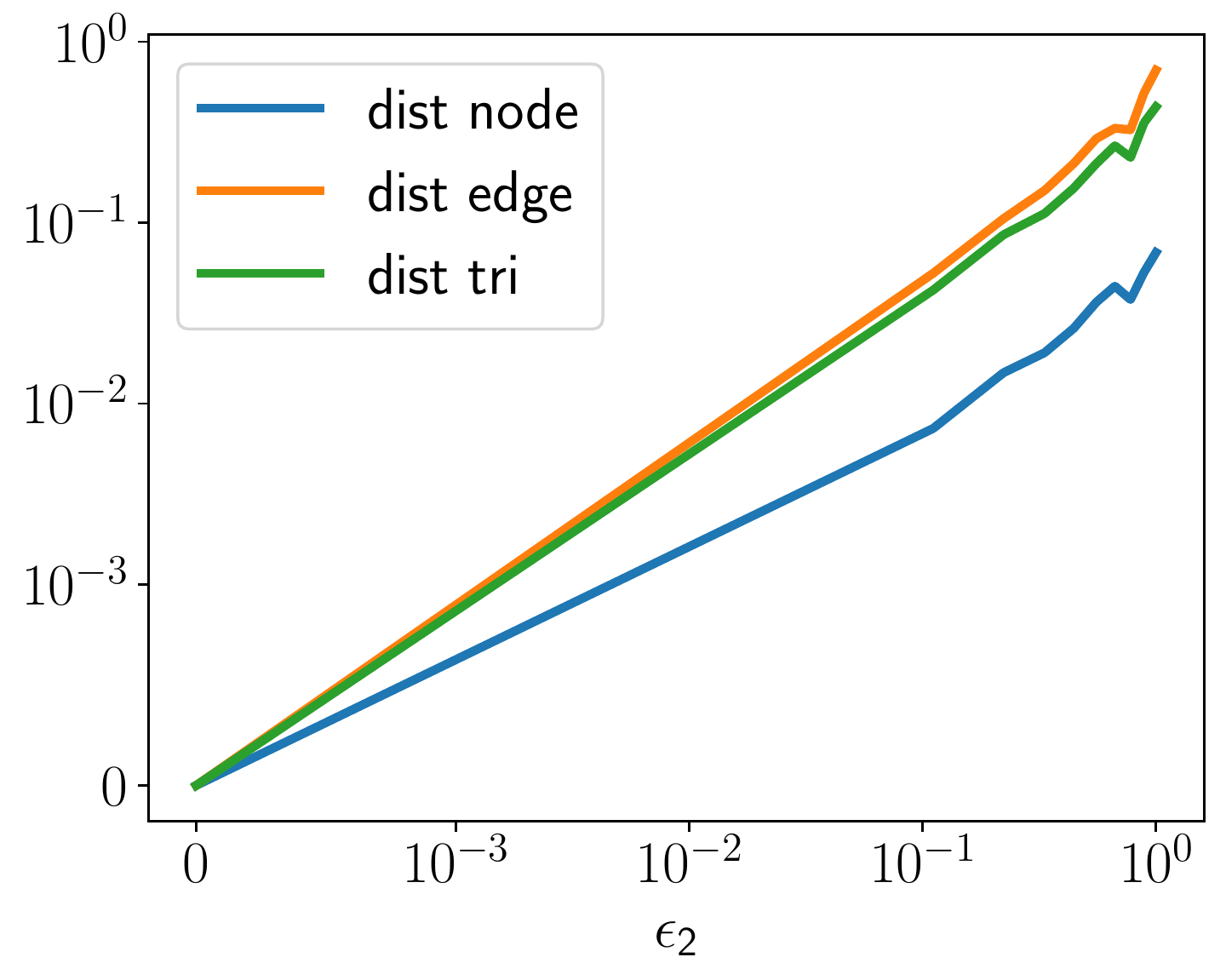}
  \end{subfigure}
  \vskip -0.05in
  \caption{Euclidean distances of node, edge and triangle outputs of an SCCNN with one \emph{(Left)} and two \emph{(Right)} layers under perturbations on triangle weights. Edge output is influenced after one layer, while node output is influenced after two layers.}
  \label{fig:stability_main_content}
  \vskip -0.15in 
\end{figure}

\begin{theorem}[Stability]\label{thm:stability}
Let $\bbx_k^L$ be the output of an $L$-layer SCCNN defined on a weighted SC. Let $\hat{\bbx}_k^L$ be the output of the same SCNN but on a relatively perturbed SC according to \cref{def:relative_perturbation}. Under \cref{assump:bounded_filters,assump:bounded_proj,assump:bounded_input,assump:lip_nonlinear}, the Euclidean distance between the two outputs is finite and upper-bounded by $ \lVert\hat{\bbx}_k^L-\bbx_k^L\rVert\leq [\bbd]_k$ with
\begin{equation} \label{thm.eq.stability}
    \textstyle \bbd = C_\sigma^L \sum_{l=1}^{L}\widehat{\bbZ}^{l-1}\bbT\bbZ^{L-l} \bbeta. 
  \end{equation} 
Here, matrices $\bbT$ and $\bbZ$ are tridiagonal, e.g., for $K = 2$,
  \begin{equation*}
    \bbT = 
    \begin{bmatrix}
      t_0 & t_{0,\rmu} &  \\ 
      t_{1,\rmd} & t_1 & t_{1,\rmu} \\
       & t_{2,\rmd} & t_2 \\ 
    \end{bmatrix} \text{ and }
    \bbZ = 
    \begin{bmatrix}
      1 & r_{0,\rmu} & \\ 
      r_{1,\rmd} & 1 & r_{1,\rmu} \\
        & r_{2,\rmd} & 1 \\ 
    \end{bmatrix},
  \end{equation*}
with constants 
$t_{k,\rmd}=r_{k,\rmd}\vepsilon_{k,\rmd} + C_{k,\rmd}\Delta_{k,\rmd}\epsilon_{k,\rmd}r_{k,\rmd}$, $t_{k,\rmu}=r_{k,\rmu}\vepsilon_{k,\rmu} + C_{k,\rmu}\Delta_{k,\rmu}\epsilon_{k,\rmu}r_{k,\rmu}$ and $t_k=C_{k,\rmd} \Delta_{k,\rmd} \epsilon_{k,\rmd} + C_{k,\rmu}\Delta_{k,\rmu}\epsilon_{k,\rmu}$, where $\Delta_{k,\rmd}$ and $\Delta_{k,\rmu}$ capture the eigenvector misalignment between the respective Hodge Laplacians and their perturbations, scaled by $\sqrt{N_k}$. Matrix $\widehat{\bbZ}$ is defined as ${\bbZ}$ but with off-diagonal entries $\hat{r}_{k,\rmd} = r_{k,\rmd} (1+\vepsilon_{k,\rmd})$ and $\hat{r} _{k,\rmu} = r_{k,\rmu} (1+\vepsilon_{k,\rmu})$.
\end{theorem}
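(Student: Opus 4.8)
The plan is to prove \eqref{thm.eq.stability} by induction on the number of layers $L$, tracking the Euclidean error $\lVert\hat{\bbx}_k^l - \bbx_k^l\rVert$ for every order $k$ simultaneously in a vector $\bbd^l = [\lVert\hat{\bbx}_0^l-\bbx_0^l\rVert,\dots,\lVert\hat{\bbx}_K^l-\bbx_K^l\rVert]^\top$. The tridiagonal structure of $\bbT$ and $\bbZ$ reflects the fact that the layer update of $\bbx_k$ only couples orders $k-1$, $k$ and $k+1$; so each inductive step is really a matrix inequality $\bbd^l \preceq C_\sigma(\bbT\bbq^{l-1} + \widehat{\bbZ}\bbd^{l-1})$ componentwise, where $\bbq^{l-1}$ bounds the (unperturbed) signal norms $\lVert\bbx_k^{l-1}\rVert$ and propagates as $\bbq^{l} \preceq C_\sigma\bbZ\bbq^{l-1}$ with $\bbq^0 = \bbeta$. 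Unrolling these two coupled recursions gives exactly the sum $C_\sigma^L\sum_{l=1}^L \widehat{\bbZ}^{l-1}\bbT\bbZ^{L-l}\bbeta$.

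The key steps, in order: (i) first establish a one-layer, single-order estimate. Write $\hat{\bbx}_k^l - \bbx_k^l$ using \eqref{eq.sccnn_layer} for both the perturbed and unperturbed SC, insert $\pm$ terms to split it into a "filter-on-perturbed-input" part and a "perturbed-filter-on-unperturbed-input" part, and apply \cref{assump:lip_nonlinear} to pull $\sigma$ out with constant $C_\sigma$. (ii) For the first part, use \cref{assump:bounded_filters} and \cref{assump:bounded_proj} so that $\bbH_k$, $\bbH_{k,\rmd}\bbR_{k,\rmd}$, $\bbH_{k,\rmu}\bbR_{k,\rmu}$ contribute factors $1$, $r_{k,\rmd}$, $r_{k,\rmu}$ respectively acting on $\bbd^{l-1}$ — this yields the $\bbZ$-type contribution, except the projection error $\widehat{\bbR}_{k,\rmd} = \bbR_{k,\rmd}+\bbJ_{k,\rmd}\bbR_{k,\rmd}$ inflates $r_{k,\rmd}\mapsto r_{k,\rmd}(1+\vepsilon_{k,\rmd})$, producing $\widehat{\bbZ}$. (iii) For the second part, the filter difference $\widehat{\bbH}_k - \bbH_k$ and the projection difference $\bbJ_{k,\rmd}\bbR_{k,\rmd}$, etc., act on the unperturbed signals, which are bounded by $\bbq^{l-1}$; this is where the integral-Lipschitz constants enter. (iv) Separately, establish the signal-growth recursion $\lVert\bbx_k^l\rVert \le C_\sigma(r_{k,\rmd}\lVert\bbx_{k-1}^{l-1}\rVert + \lVert\bbx_k^{l-1}\rVert + r_{k,\rmu}\lVert\bbx_{k+1}^{l-1}\rVert)$, i.e. $\bbq^l \preceq C_\sigma\bbZ\bbq^{l-1}$, again from \cref{assump:bounded_filters,assump:bounded_proj,assump:lip_nonlinear}. (v) Finally, combine (i)–(iv) into $\bbd^l \preceq C_\sigma\bbT\bbq^{l-1} + C_\sigma\widehat{\bbZ}\bbd^{l-1}$ with $\bbd^0 = \bb0$, and solve the recursion.

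The main obstacle is step (iii) — bounding the effect of the relative perturbation of a Hodge Laplacian on the output of an integral-Lipschitz SCF. One must show that $\lVert(\widehat{\bbH}_k - \bbH_k)\bbx_k\rVert \le (C_{k,\rmd}\epsilon_{k,\rmd} + C_{k,\rmu}\epsilon_{k,\rmu})(1 + \text{misalignment term})\lVert\bbx_k\rVert$. Following the graph template of \citet{gama2020stability}, one expands $\widehat{\bbL}_{k,\rmd}^t - \bbL_{k,\rmd}^t$ to first order in $\bbE_{k,\rmd}$, groups terms so the integral-Lipschitz condition \eqref{eq.integral_lip_def_1} controls the "$\lambda h'(\lambda)$" combination, and collects the higher-order commutator terms into the eigenvector-misalignment quantity $\Delta_{k,\rmd}$ with its $\sqrt{N_k}$ scaling; the fact that $\bbL_{k,\rmd}$ and $\bbL_{k,\rmu}$ share the eigenbasis $\bbU_k$ (\cref{cor:evd_Ld_Lu}) and that the relative model keeps $\im(\bbB_k^\top)$ and $\im(\bbB_{k+1})$ invariant lets the gradient and curl parts be handled independently, giving the additive form of $t_k$. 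Assembling these per-order bounds into the tridiagonal $\bbT$ and then disentangling the doubly-indexed recursion (error propagating through $\widehat{\bbZ}$ on already-perturbed layers and through $\bbZ$ on not-yet-perturbed layers) into the clean $\sum_{l=1}^L \widehat{\bbZ}^{l-1}\bbT\bbZ^{L-l}$ form is the bookkeeping that closes the proof.
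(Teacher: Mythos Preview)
Your proposal is correct and follows essentially the same route as the paper's proof: a per-layer add/subtract splitting into a ``perturbed-filter-on-unperturbed-input'' term (giving $\bbT$) and a ``filter-on-perturbed-input'' term (giving $\widehat{\bbZ}$), a separate signal-growth recursion $\bbq^l\preceq C_\sigma\bbZ\bbq^{l-1}$, and the filter-stability estimate $\lVert\widehat{\bbH}_k-\bbH_k\rVert\le C_{k,\rmd}\Delta_{k,\rmd}\epsilon_{k,\rmd}+C_{k,\rmu}\Delta_{k,\rmu}\epsilon_{k,\rmu}$ obtained by first-order expansion in $\bbE_{k,\rmd},\bbE_{k,\rmu}$, Lemma-style eigenvector-misalignment control, and the integral-Lipschitz condition \eqref{eq.integral_lip_def_1}, all assembled and unrolled exactly as you describe. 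One small remark: the separation of the gradient and curl contributions in $t_k$ does not rely on the perturbed Laplacians preserving $\im(\bbB_k^\top)$ or $\im(\bbB_{k+1})$ (they need not), but simply on the fact that every first-order term carries at least one factor of $\bbL_{k,\rmd}$ (resp.\ $\bbL_{k,\rmu}$), which already annihilates the complementary subspace on one side.
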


The proof can be found in \cref{app:proof_stability}. 
This result shows that SCCNNs are stable to relative domain perturbations, which can be analyzed from two perspectives. First, the stability of $k$-simplicial output depends on not only factors of $k$-simplices, but also simplices of other orders due to the inter-simplicial couplings. When $L = 1$, the node output bound $d_0$ depends on $\beta_0$ via $t_0$, and $\beta_1$ via $t_{0,\rmu}$ where node perturbation $\bbE_{0,\rmu}$ (described by $\Delta_{0,\rmu}$ and $\epsilon_{0,\rmu}$), node SCFs (by $C_{0,\rmu}$) and projections from edges to nodes (by $r_{0,\rmu}$ and $\vepsilon_{0,\rmu}$) play a role. Likewise, $d_1$ depends on $\beta_0,\beta_1$ and $\beta_2$ via factors of perturbations, SCFs and projections in the edge space. When $L=2$, the bound $d_0$ is also affected by $t_{1,\rmd},t_1$ and $t_{1,\rmu}$ containing factors in the edge space. As $L$ increases, factors in the triangle space (and higher-order simplicial space) will appear in $d_0$, as illustrated in \cref{fig:stability_main_content}. Thus, while leveraging information from adjacent simplices may be beneficial, it may severely affect the stability when the SC is perturbed. This can be mitigated by using less layers and higher-order SCFs, imposed by stronger integral Lipschitz properties, to maintain the expressive power.  

Second, the integral Lipschitz property $C_{k,\rmd}$ in gradient frequencies plays no role in the stability against upper perturbations, and vice-versa. Thus, if there are only triangle perturbations SCFs in the edge space need not to be strictly integral Lipschitz in gradient frequencies where SCCNNs could be more selective while preserving stability. This is a direct benefit of using different parameter spaces in the gradient and curl spaces unlike in \citet{ebli2020simplicial}.

\section{Experiments}
\textbf{Simplex Prediction.}
We consider the task of simplex prediction: \emph{ given all the $(k-1)$-simplices in a set of $k+1$ nodes, to predict if this set will be closed to form a $k$-simplex}, which is an extension of link prediction in graphs \citep{zhang2018link}. 
Our approach is to first learn features of lower-order simplices and then use an MLP to identify if a simplex is closed or open.
With coauthorship data in \citet{ammar2018construction} we built an SC as \citet{ebli2020simplicial} where nodes are authors and collaborations between $k$-authors are modeled as $(k-1)$-simplices. The simplicial signals are the number of citations, e.g., $\bbx_1$ and $\bbx_2$ are the citations of dyadic and triadic collaborations.
Thus, 2-simplex predictions in this SC give rise to predict triadic collaborations given the pairwise collaborations in the triads. 


In an SC of order two, for an open triangle, we use an SCCNN to learn features of nodes and edges. Then, an MLP is used to predict if this triangle shall be closed or not based on its three node or edge features.
We also perform a 3-simplex prediction, which amounts to predicting tetradic collaborations. \cref{tab:simplex_prediction} reports the AUC results. We include the experiment details and an ablation study in \cref{app:experiments_simplex_pred}.


We see that SC solutions achieve better results than using only graphs, validating SCs as an inductive model. SCCNN performs best in both tasks, since it exploits both intra- and extended inter-simplicial localities. First, SCCNN leverages the lower and upper adjacencies individually in a multi-hop fashion to perform convolutions such that it performs better than Bunch. For the same reason, SCNN performs better than SNN and PSNN. 
Second, SCCNN considers information from adjacent simplices such that it gives better results than methods without the inter-simplicial locality, such as GNN and SCNN, or CF-SC with a limited locality.

\begin{table}[t!]
  \vskip -0.1in
  \caption{Simplex prediction AUC ($\%$, area under the curve) results for ten runs. The \emph{first} and \emph{second} best results are in \emph{\textcolor{red}{red}} and \emph{\textcolor{blue}{blue}}. }
  \label{tab:simplex_prediction}
  \vskip 0.1in
  \begin{center}
  \begin{small}
  \resizebox{\columnwidth}{!}{
  \begin{tabular}{lcr}
  \toprule
  Methods & 2-Simplex & 3-Simplex\\
  \midrule
  Harm. Mean \citep{benson2018simplicial} \citet{} &62.8$\pm$2.7  &63.6$\pm$1.6\\
  MLP &68.5$\pm$1.6 &69.0$\pm$2.2\\
  GF \citep{sandryhaila2013discrete} &78.7$\pm$1.2 &83.9$\pm$2.3\\
  SCF \citep{yang2022simplicial} & 92.6$\pm$1.8 & 94.9$\pm$1.0\\
  CF-SC \citep{isufi2022convolutional} & 96.9$\pm$0.8 &97.9$\pm$0.7\\
  GNN \citep{gama2020stability} &93.9$\pm$1.0 &96.6$\pm$0.5\\
  SNN \citep{ebli2020simplicial} &92.0$\pm$1.8 &95.1$\pm$1.2\\
  PSNN \citep{roddenberry2021principled} &95.6$\pm$1.3 &98.1$\pm$0.5\\
  SCNN \citep{yang2021simplicial} &96.5$\pm$1.5 &98.3$\pm$0.4\\
  Bunch \citep{bunch2020simplicial} &\textcolor{blue}{98.0$\pm$0.5} & \textcolor{blue}{98.5$\pm$0.5}\\
  \textbf{SCCNN (ours)} &\textcolor{red}{98.4$\pm$0.5} & \textcolor{red}{99.4$\pm$0.3}\\
  \bottomrule
  \end{tabular}
  }
  \end{small}
  \end{center}
  \vskip -0.15in
\end{table}

\begin{table}[t!]
  \vskip -0.1in 
  \caption{Prediction accuracy of the synthetic data in standard, reverse and generalization tasks, and ocean drifters (Last Column). }
  \label{tab:trajectory_prediction_both}
  \vskip 0.1in
  \begin{center}
  \begin{small}
  \resizebox{\columnwidth}{!}{
  \begin{tabular}{lcccr}
  \toprule
  Methods & Standard & Reverse & Gen. & Ocean \\
  \midrule
  PSNN & 63.1$\pm$3.1 & 58.4$\pm$3.9 & 55.3$\pm$2.5 & 49.0$\pm$8.0 \\
  SCNN & \textcolor{red}{67.7$\pm$1.7} & 55.3$\pm$5.3 & 61.2$\pm$3.2 & \textcolor{blue}{53.0$\pm$7.8} \\ 
  SNN & \textcolor{blue}{65.5$\pm$2.4} & 53.6$\pm$6.1 & 59.5$\pm$3.7 & \textbf{52.5}$\pm$\textbf{6.0} \\ 
  Bunch & 62.3$\pm$4.0 & 59.6$\pm$6.1 & 53.9$\pm$3.1 & 46.0$\pm$6.2 \\
  \textbf{SCCNN (ours)} & \textbf{65.2}$\pm$\textbf{4.1} & 58.9$\pm$4.1 & 56.8$\pm$2.4 & \textcolor{red}{54.5$\pm$7.9} \\
  \bottomrule
  \end{tabular}
  }
  \end{small}
  \end{center}
  \vskip -0.15in
\end{table}

\textbf{Trajectory Prediction.}  In the trajectory prediction task, a trajectory is represented as an edge flow and the goal is to predict the next node based on this representation \citep{roddenberry2021principled}. We consider trajectories in a synthetic SC and ocean drifter trajectories localized around Madagascar \citep{schaub2020random}. The experiment details can be found in \cref{app:traj_pred_results}. From the results of different methods in \cref{tab:trajectory_prediction_both}, we see that SCCNN performs better than Bunch due to the use of higher-order convolutions, and likewise, SCNN and SNN give better predictions than PSNN. Also, differentiating the parameter space for lower and upper convolutions improves the performance of SCNN compared to SNN.
As zero inputs on nodes and triangles are applied, SCCNN does not perform better than SCNN. Like other NNs, SCCNNs do not deteriorate in the reverse task owing to the orientation equivariance and they show good generalization ability to the unseen data as well.  

\section{Conclusion}
We proposed an SCCNN for learning on SCs, which performs a simplicial convolution with an intra-simplicial locality and multi-hop information from adjacent simplices with an extended inter-simplicial locality. 
We provide a through theoretical study of the proposed architecture from different viewpoints. 
First, we study the symmetries in an SC and simplicial signal space and show the SCCNN can be built equivariant to permutations and orientations of simplices. 
We then study its spectral behavior and understand how the learned convolutional filters perform in the different simplicial frequencies, i.e., in different simplicial signal spaces. 
Finally, we study the stability of the SCCNN, showing that it is stable to domain perturbations and how the inter-simplicial locality affects the performance. 
We corroborate these results with numerical experiments achieving a comparable performance with state-of-the-art alternatives.

\bibliography{refs}
\bibliographystyle{icml2023}

\newpage
\onecolumn
\appendix\newpage\markboth{Appendix}{Appendix}
\renewcommand{\thesection}{\Alph{section}}
\numberwithin{equation}{section}
\numberwithin{figure}{section}
\numberwithin{table}{section}

\section*{Supplementary Material}
In this supplement, we discuss necessary materials to aid the exposition of this paper. We organize them according to the corresponding sections.

\section{Background} \label{app:background}
In  \cref{sec:background} we give the definition of an abstract SC and think of it in terms of a geometric realization, i.e., nodes, edges, triangle faces and so on. While this is helpful to understand, it might not be trivial. Thus, we discuss the geometric SC and show that every abstract SC has a geometric realization. 
\subsection{Abstract Simplicial Complex and its Geometric Realizations} \label{app:a}
If $\bbv_0,\dots,\bbv_k \in \setR^{m}$ are affinely independent, then \emph{the (affine) simplex} $s^k$ spanned by $\bbv_0,\dots,\bbv_k$ is defined to be their convex hull 
\begin{equation*}
  s_{\rm{g}}^k = {\rm{conv}}(\bbv_0,\dots,\bbv_k) = \Bigg\{ \sum_{i=0}^{k}\lambda_i \bbv_i: \lambda_i\geq 0, \sum_{i=0}^k\lambda_i=1 \Bigg\}.
\end{equation*}
  \cref{fig:geometric_simplex} illustrates some simplices. The prototype of a $k$-dimensional simplex is given by the \emph{standard $k$-simplex} $s^k$ defined by the standard basis vectors $e_1,\dots,e_{k_1}\in\setR^{k+1}$: $s_{\rm{g}}^k={\rm{conv}}(e_1,\dots,e_{k+1})\subseteq\setR^{k+1}$.

\begin{figure}[htp!]
  \centering
  \includegraphics[width=0.5\linewidth]{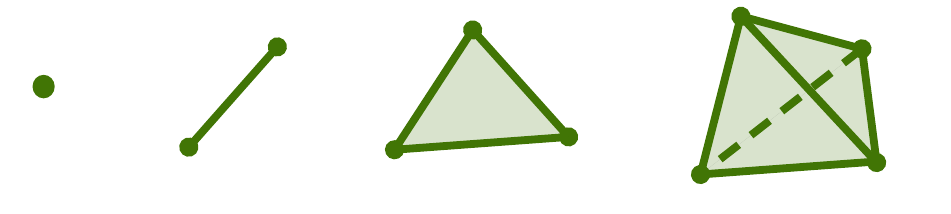}
  \caption{A $0$-simplex is a node (point), a $1$-simplex is a line segment (edge), a $2$-simplex is a (filled) triangle and a $3$-simplex is a (filled) tetrahedron}
  \label{fig:geometric_simplex}
\end{figure}

\begin{definition}
  A \emph{(finite geometric) simplicial complex} $\ccalS_{\rm{g}}$ is a finite set of (affine) simplices in $\setR^{m}$ such that 1) $s\in\ccalS_{\rm{g}}$ and $t\subset s$ implies $t\in\ccalS_{\rm{g}}$ (inclusion property); and 2) if $s_1,s_2\in\ccalS_{\rm{g}}$, then $s_1\cap s_2$ is a face of $s_1$ and a face of $s_2$.

\end{definition}
Any geometric SC $\ccalS_{\rm{g}}$ gives rise to an associated abstract SC. Recall the definition of an abstract SC in  \cref{sec:background}. 
\begin{definition}
  A \emph{simplicial map} of abstract simplicial complexes $\ccalS_1$ and $\ccalS_2$ is defined to be a map between their vertex sets $f:\ccalS_1^0\to \ccalS_2^0$ preserving simplices, i.e., $f(s)\in \ccalS_2$ for each $s\in\ccalS_1$. 
\end{definition} 
\begin{definition}
  A geometric simplicial complex $\ccalS_{\rm{g}}$ is a \emph{geometric realizaiton} of an abstract simplicial complex $\ccalS$ if there exists a bijection $\varphi:\ccalS^0\to\ccalS_{\rm{g}}^0$ such that for any set $\bbv_0,\dots,\bbv_k\in\ccalS^0$ of vertices of $\ccalS$, 
  \begin{equation*}
    \{\bbv_0,\dots,\bbv_k\}\in\ccalS \Longleftrightarrow {\rm{conv}}(\varphi(\bbv_0),\dots,\varphi(\bbv_k))\in\ccalS_{\rm{g}}.
  \end{equation*}
  In other words, $\ccalS_{\rm{g}}$ is a geometric realization of $\ccalS$ if and only if its associated abstract SC is simplicially isomorphic to $\ccalS$. 
\end{definition}
\begin{proposition}
  Any abstract simplicial complex $\ccalS$ of dimension $K$ admits a geometric realization in $\setR^{2K+1}$. 
\end{proposition}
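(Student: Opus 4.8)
The plan is to show that any abstract simplicial complex $\ccalS$ of dimension $K$ with vertex set $\ccalS^0 = \{v_1,\dots,v_n\}$ can be embedded into $\setR^{2K+1}$ by placing the vertices in \emph{general position} and taking convex hulls. The key idea is the moment curve: define $\varphi(v_i) = \gamma(t_i) := (t_i, t_i^2, \dots, t_i^{2K+1}) \in \setR^{2K+1}$ for distinct reals $t_1 < t_2 < \dots < t_n$. I would then verify that this choice of points is generic enough that the geometric simplices ${\rm conv}(\varphi(v_{i_0}),\dots,\varphi(v_{i_k}))$, ranging over all $\{v_{i_0},\dots,v_{i_k}\}\in\ccalS$, form a valid geometric simplicial complex whose associated abstract SC is simplicially isomorphic to $\ccalS$.

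The steps, in order, are: (1) Show the points on the moment curve are in general position, meaning any $m \le 2K+2$ of them are affinely independent. This follows from a Vandermonde determinant computation: $2K+2$ points $\gamma(t_{j})$ span a $(2K+1)$-simplex iff the associated $(2K+2)\times(2K+2)$ matrix with rows $(1, t_j, t_j^2, \dots, t_j^{2K+1})$ is nonsingular, and this determinant is the Vandermonde $\prod_{j<\ell}(t_\ell - t_j) \neq 0$; fewer points are then automatically affinely independent. Consequently each ${\rm conv}(\varphi(v_{i_0}),\dots,\varphi(v_{i_k}))$ with $k \le K$ is a genuine geometric $k$-simplex, since $k+1 \le K+1 \le 2K+2$. (2) Verify the inclusion property is inherited: if $s \in \ccalS_{\rm g}$ and $t \subset s$, then $t$ corresponds to a subset of the vertex set of some member of $\ccalS$, hence to a member of $\ccalS$ by the inclusion property of $\ccalS$, hence $t \in \ccalS_{\rm g}$. (3) Verify the intersection property: for $s_1, s_2 \in \ccalS_{\rm g}$, show $s_1 \cap s_2$ is a common face. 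Write $\sigma_1, \sigma_2$ for the corresponding abstract simplices; the combinatorial intersection $\sigma_1 \cap \sigma_2$ is a face of each in $\ccalS$ (since $\ccalS$ is abstract, this is just set intersection of vertex sets, and it lies in $\ccalS$ by the inclusion property). The claim reduces to: ${\rm conv}(\varphi[\sigma_1]) \cap {\rm conv}(\varphi[\sigma_2]) = {\rm conv}(\varphi[\sigma_1 \cap \sigma_2])$. One inclusion ($\supseteq$) is trivial. For $\subseteq$, a point $x$ in the intersection has two convex-combination representations; subtracting them yields an affine dependence among the $\varphi(v_i)$ for $v_i$ ranging over $\sigma_1 \cup \sigma_2$. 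Since $|\sigma_1 \cup \sigma_2| \le 2(K+1) = 2K+2$, general position forces this dependence to be trivial, which pins the barycentric coordinates to be supported on $\sigma_1 \cap \sigma_2$. (4) Conclude by defining the bijection $\varphi:\ccalS^0 \to \ccalS_{\rm g}^0$ and checking the biconditional $\{v_0,\dots,v_k\}\in\ccalS \Longleftrightarrow {\rm conv}(\varphi(v_0),\dots,\varphi(v_k))\in\ccalS_{\rm g}$, which holds by construction.

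I expect step (3), the intersection property, to be the main obstacle — this is exactly where the dimension bound $2K+1$ is used, because two $K$-simplices together have up to $2K+2$ vertices and we need all of them in affinely general position to rule out spurious geometric intersections. Steps (1), (2), (4) are essentially bookkeeping once the moment-curve construction is fixed. An alternative to the explicit moment curve is a dimension/measure argument: choose the $n$ points one at a time, at each stage avoiding the finitely many affine subspaces of dimension $< 2K+1$ spanned by previously chosen points (a measure-zero set in $\setR^{2K+1}$); this also achieves general position but the moment curve gives a clean closed form. Either way the crux is the same general-position lemma feeding into the intersection check.
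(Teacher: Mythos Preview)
Your proposal is correct and is precisely the classical general-position/moment-curve argument. The paper itself does not give a proof of this proposition but simply refers the reader to \citet[P.~177--178]{de2012course} and \citet{munkres2018elements}; the proof in Munkres is exactly the one you outline (place vertices in general position in $\setR^{2K+1}$, then use that any $\le 2K+2$ such points are affinely independent to verify the intersection property), so your approach matches the cited reference.
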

\begin{proposition}
  Any two geometric simplicial complexes are simplicially homeomorphic if and only if the associated abstract simplicial complexes are simpliciallly isomorphic. 
\end{proposition}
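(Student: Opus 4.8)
Write $\ccalS_{\rm{g}}$ and $\ccalS_{\rm{g}}'$ for the two geometric simplicial complexes, $X=\bigcup_{s\in\ccalS_{\rm{g}}}s$ and $Y=\bigcup_{s\in\ccalS_{\rm{g}}'}s$ for their underlying point sets, and $\ccalS$, $\ccalS'$ for the associated abstract complexes, with vertex sets $\ccalS^0=\ccalS_{\rm{g}}^0$ and $(\ccalS')^0=(\ccalS_{\rm{g}}')^0$. I will read ``$\ccalS_{\rm{g}},\ccalS_{\rm{g}}'$ are simplicially homeomorphic'' as: there is a homeomorphism $h:X\to Y$ mapping each simplex of $\ccalS_{\rm{g}}$ by an affine isomorphism onto a simplex of $\ccalS_{\rm{g}}'$ (so that $h^{-1}$ does likewise; this second clause is in fact automatic, each point lying in the relative interior of a unique simplex), and ``$\ccalS,\ccalS'$ simplicially isomorphic'' as: there is a bijection $\varphi:\ccalS^0\to(\ccalS')^0$ with $\{\bbv_0,\dots,\bbv_q\}\in\ccalS$ exactly when $\{\varphi(\bbv_0),\dots,\varphi(\bbv_q)\}\in\ccalS'$. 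The plan is to prove the two implications separately; only the ``if'' direction carries real content.

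\emph{Only if.} Starting from such an $h$, I would first observe that its restriction to a $q$-simplex $s\in\ccalS_{\rm{g}}$, being an affine isomorphism onto a simplex of $\ccalS_{\rm{g}}'$, matches the extreme points of $s$ bijectively with those of $h(s)$; taking $q=0$ shows $h$ carries vertices to vertices, and combined with the same fact for $h^{-1}$ this gives a bijection $\varphi:\ccalS_{\rm{g}}^0\to(\ccalS_{\rm{g}}')^0$ with $h$ sending the $0$-simplex $\{\bbv\}$ to $\{\varphi(\bbv)\}$. Then for any $\{\bbv_0,\dots,\bbv_q\}$ spanning a simplex $s\in\ccalS_{\rm{g}}$, the simplex $h(s)\in\ccalS_{\rm{g}}'$ has vertex set exactly $\{\varphi(\bbv_0),\dots,\varphi(\bbv_q)\}$, so this set lies in $\ccalS'$; running the argument with $h^{-1}$ and $\varphi^{-1}$ gives the reverse implication, so $\varphi$ is a simplicial isomorphism $\ccalS\to\ccalS'$.

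\emph{If.} Starting from a simplicial isomorphism $\varphi$ of the abstract complexes, I would build $h:X\to Y$ by extending $\varphi$ affinely: for $x\in X$ pick a simplex $s={\rm conv}(\bbv_0,\dots,\bbv_q)\in\ccalS_{\rm{g}}$ with $x=\sum_i\lambda_i\bbv_i$ in barycentric coordinates and set $h(x)=\sum_i\lambda_i\varphi(\bbv_i)$, a point of the simplex ${\rm conv}(\varphi(\bbv_0),\dots,\varphi(\bbv_q))\in\ccalS_{\rm{g}}'$. I would then check: (i) well-definedness, namely if $x$ also lies in $s'\in\ccalS_{\rm{g}}$ then $x\in s\cap s'$, which by the geometric-SC axiom is a common face $t$, and by affine independence the barycentric coordinates of $x$ relative to $s$ and to $s'$ vanish off the vertices of $t$ and agree on them, so the two formulas for $h(x)$ coincide; (ii) continuity, via the gluing lemma, since the finitely many closed simplices cover $X$ and $h$ is affine on each; (iii) that the map $g$ built the same way from $\varphi^{-1}$ satisfies $g\circ h=\mathrm{id}_X$, $h\circ g=\mathrm{id}_Y$ (checked simplex by simplex), so $h$ is a homeomorphism — equivalently, $X$ is compact and $Y$ Hausdorff, so the continuous bijection $h$ is automatically a homeomorphism. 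Since $h$ sends simplices affinely onto simplices by construction, it is a simplicial homeomorphism.

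\textbf{Main obstacle.} The whole argument is essentially bookkeeping, and the single place where the \emph{geometric} hypothesis is genuinely used — hence the one deserving care — is the well-definedness of the affine extension in the ``if'' direction: one must invoke the axiom that two simplices of a geometric simplicial complex meet in a common face to see that $h(x)$ is independent of the simplex chosen to compute barycentric coordinates. A minor secondary point is the dimension-preservation of $h$ in the ``only if'' direction, i.e., that an affine homeomorphism between two simplices identifies their vertex sets, which is what lets $h$ descend to a vertex bijection.
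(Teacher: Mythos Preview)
Your proposal is correct and is essentially the classical argument. The paper itself does not prove this proposition at all: immediately after stating it (together with the preceding realization proposition) the paper writes ``The proofs can be found in \citet[P.~177--178]{de2012course} and \citet{munkres2018elements},'' and moves on. So there is no in-paper proof to compare against; what you have written is precisely the standard barycentric-extension argument one finds in Munkres, including the correct identification of the one nontrivial step (well-definedness of the affine extension via the face-intersection axiom). Your added remark that finiteness makes $X$ compact, so a continuous bijection to Hausdorff $Y$ is automatically a homeomorphism, is a legitimate shortcut here since the paper explicitly works with \emph{finite} geometric complexes.
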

The proofs can be found in \citet[P. 177-178]{de2012course} and \citet{munkres2018elements}. They imply that a geometric realization of an abstract SC is unique up to simplicial homeomorphisms.

\subsection{Algebraic Representations of an SC}
Given an oriented SC $\ccalS$ with vertex set $\ccalV=\{1,\dots,N_0\}$, the entries of $\bbB_1\in\setR^{N_0\times N_1}$ and $\bbB_2\in\setR^{N_1\times N_2}$ are given by  
\begin{equation}
  [\bbB_1]_{ie} = 
  \begin{cases}
    -1, &\text{for } e=[i,\cdot]\\
    1, &\text{for } e=[\cdot,i] \\
    0, &\text{otherwise. }
  \end{cases}
  \quad 
  [\bbB_2]_{et} = 
  \begin{cases}
    1, &\text{for } e=[i,j], t=[i,j,k]\\
    -1, &\text{for } e=[i,k], t=[i,j,k]\\
    1, &\text{for } e=[j,k], t=[i,j,k]\\
    0, &\text{otherwise. }
  \end{cases}
\end{equation}
In the following, we show the two incidence matrices of the SC in   \cref{fig:sc_example}. 
\begin{equation}
  \bbB_1 = \kbordermatrix{
      & e_1 & e_2 & e_3 & e_4 & e_5 & e_6 & e_7 & e_8 & e_9 & e_{10} \\
    1 & -1 & -1 & -1 & 0 & 0 & 0 & 0 & 0 & 0 & 0 \\
    2 & 1 & 0 & 0 & -1 & -1 & 0 & 0 & 0 & 0 & 0 \\
    3 & 0 & 1 & 0 & 1 & 0 & -1 & -1 & -1 & 0 & 0 \\
    4 & 0 & 0 & 1 & 0 & 0 & 1 & 0 & 0 & 0 & 0 \\
    5 & 0 & 0 & 0 & 0 & 1 & 0 & 1 & 0 & -1 & -1 \\
    6 & 0 & 0 & 0 & 0 & 0 & 0 & 0 & 1 & 1 & 0 \\
    7 & 0 & 0 & 0 & 0 & 0 & 0 & 0 & 0 & 0 & 1 \\
  }, \quad 
  \bbB_2 = 
  \kbordermatrix{
      & t_1 & t_2 & t_3 \\
    e_1 & 1 & 0 & 0 \\
    e_2 & -1 & 0 & 0 \\
    e_3 & 0 & 0 & 0 \\
    e_4 & 1 & 1 & 0 \\
    e_5 & 0 & -1 & 0 \\
    e_6 & 0 & 0 & 0 \\
    e_7 & 0 & 1 & 1 \\
    e_8 & 0 & 0 & -1 \\
    e_9 & 0 & 0 & 1 \\
    e_{10} & 0 & 0 & 0 \\
  }
\end{equation}

\section{Simplicial Complex Convolutional Neural Networks} \label{app:sccnns}
\subsection{Multi-Feature SCCNN} \label{app:b}
A multi-feature SCCNN at layer $l$ takes $\{\bbX_{k-1}^{l-1},\bbX_{k}^{l-1},\bbX_{k+1}^{l-1}\}$ as inputs, each of which has $F_{l-1}$ features, and generates an output $\bbX_{k}^l$ with $F_l$ features as
\begin{equation}\label{eq.multi_feature_sccnn}
  \bbX_k^l = \sigma \Bigg( \sum_{t=0}^{T_\rmd} \bbL_{k,\rmd}^t \bbB_k^\top \bbX_{k-1}^{l-1} \bbW_{k,\rmd,t}^{\prime l } + \sum_{t=0}^{T_\rmd} \bbL_{k,\rmd}^t \bbX_{k}^{l-1} \bbW_{k,\rmd,t}^{l} + \sum_{t=0}^{T_\rmu} \bbL_{k,\rmu}^t \bbX_{k}^{l-1} \bbW_{k,\rmu,t}^{l} +  \sum_{t=0}^{T_\rmu} \bbL_{k,\rmu}^t \bbB_{k+1}\bbX_{k+1}^{l-1} \bbW_{k,\rmu,t}^{\prime l}\Bigg)
\end{equation}
where $\bbL^t$ indicates the matrix $t$-power of $\bbL$, while superscript $l$ indicates the layer index. 

\subsection{Complexity}\label{app:complexity}
In an SCCNN layer for computing $\bbx_k^l$, there are $2+T_\rmd+T_\rmu$ filter coefficients for the SCF $\bbH_k^l$, and $1+T_\rmd$ and $1+T_\rmu$ for $\bbH_{k,\rmd}^l$ and $\bbH_{k,\rmu}^l$, respectively, which gives the parameter complexity of order $\ccalO(T_\rmd+T_\rmu)$. This complexity will increase by $F_lF_{l-1}$ fold for the multi-feature case, and likewise for the computational complexity. Given the inputs $\{\bbx_{k-1}^{l-1},\bbx_k^{l-1},\bbx_{k+1}^{l-1}\}$, we discuss the computation complexity of $\bbx_k^l$ in \eqref{eq.sccnn_layer}.

First, consider the SCF operation $\bbH_k^l\bbx_k^{l-1}$. As discussed in the localities, it is a composition of $T_\rmd$-step lower and $T_\rmu$-step upper simplicial shiftings. Each simplicial shifting has a computational complexity of order $\ccalO(N_k M_k)$ dependent on the number of neighbors. Thus, this operation has a complexity of order $\ccalO(N_k M_k (T_\rmd+T_\rmu))$.

Second, consider the lower SCF operation $\bbH_{k,\rmd}^l\bbB_k^\top\bbx_{k-1}^{l-1}$. As incidence matrix $\bbB_k$ is sparse, it has $N_k(k+1)$ nonzero entries as each $k$-simplex has $k+1$ faces. This leads to a complexity of order $O(N_k k)$ for operation $\bbB_k^\top\bbx_{k-1}^{l-1}$. Followed by a lower SCF operation, i.e., a $T_\rmd$-step lower simplicial shifting, thus, a complexity of order $\ccalO(k N_k + N_k M_k T_\rmd)$ is needed. 

Third, consider the upper SCF operation $\bbH_{k,\rmu}^l\bbB_{k+1}\bbx_{k+1}^{l-1}$. Likewise, incidence matrix $\bbB_{k+1}$ has $N_{k+1} (k+2)$ nonzero entries. This leads to a complexity of order $\ccalO(N_{k+1}k)$ for the projection operation $\bbB_{k+1}\bbx_{k+1}^{l-1}$. Followed by an upper SCF operation, i.e., a $T_\rmu$-step upper simplicial shifting, thus, a complexity of order $\ccalO(kN_{k+1} + N_k M_kT_\rmu)$ is needed. 

Finally, we have a computational complexity of order $\ccalO(k(N_k+N_{k+1})+N_kM_k(T_{\rmd}+T_{\rmu}))$ in total.

\begin{remark}
  The lower SCF operation $\bbH_{k,\rmd}^l\bbB_k^\top\bbx_{k-1}^{l-1}$ can be further reduced if $N_{k-1}\ll N_k$.  Note that we have 
  \begin{equation}\label{eq.app.complexity}
    \bbH_{k,\rmd}^l\bbB_k^\top\bbx_{k-1}^{l-1} = \sum_{t=0}^{T_\rmd} w_{k,\rmd,t}^{\prime l} \bbL_{k,\rmd}^t \bbB_k^\top\bbx_{k-1}^{l-1}  = \bbB_{k}^\top \sum_{t=0}^{T_\rmd} w_{k,\rmd,t}^{\prime l} \bbL_{k-1,\rmu}^t \bbx_{k-1}^{l-1} ,
  \end{equation}
  where the second equality comes from that $\bbL_{k,\rmd}\bbB_k^\top = \bbB_k^\top \bbB_k \bbB_k^\top = \bbB_k^\top \bbL_{k-1,\rmu}$, $\bbL_{k,\rmd}^2\bbB_k^\top=(\bbB_k^\top\bbB_k)(\bbB_k^\top\bbB_k)\bbB_k^\top = \bbB_k^\top(\bbB_k\bbB_k^\top)(\bbB_k\bbB_k^\top)=\bbB_k^\top\bbL_{k-1,\rmu}$ and likewise for general $t$. Using the RHS of \eqref{eq.app.complexity} where the simplicial shifting is performed in the $(k-1)$-simplicial space, we have a complexity of order $\ccalO(kN_k + N_{k-1}M_{k-1}T_\rmd)$. 
  Similarly, we have 
  \begin{equation}\label{eq.app.complexity_2}
    \bbH_{k,\rmu}^l\bbB_{k+1}\bbx_{k+1}^{l-1} = \sum_{t=0}^{T_\rmu} w_{k,\rmu,t}^{\prime l} \bbL_{k,\rmu}^t \bbB_{k+1}\bbx_{k+1}^{l-1} = \bbB_{k+1} \sum_{t=0}^{T_\rmu} w_{k,\rmu,t}^{\prime l} \bbL_{k+1,\rmd}^t \bbx_{k+1}^{l-1} 
  \end{equation}
  where the simplicial shifting is performed in the $(k+1)$-simplicial space. If it follows that $N_{k+1}\ll N_k$, we have a smaller complexity of $\ccalO(kN_{k+1} + N_{k+1}M_{k+1}T_\rmu)$ by using the RHS of \eqref{eq.app.complexity_2}. 
\end{remark} 

\subsection{Extended Simplicial Locality for SCCNN of order $K=2$} \label{app:c}
We use an SCCNN of order $K=2$ to illustrate the extended simplicial locality in detail. At layer $l$, we have 
\begin{equation}
  \begin{aligned}
    &\bbx_0^l = \sigma(\bbH_0^l\bbx_0^{l-1} + \bbH_{0,\rmu}^{l}\bbB_1\bbx_1^{l-1}) \\ 
    &\bbx_1^l = \sigma(\bbH_{1,\rmd}^l \bbB_1^\top\bbx_{0}^{l-1} + \bbH_{1}^l \bbx_1^{l-1} + \bbH_{1,\rmu}^l \bbB_2 \bbx_{2}^{l-1})\\ 
    &\bbx_2^l = \sigma(\bbH_{2,\rmd}^l \bbB_2^\top \bbx_1^{l-1} + \bbH_2^l \bbx_2^{l-1})
  \end{aligned}
\end{equation}
from which we can see the inter-simplicial locality that each $k$-simplicial signal contain information from its adjacent $(k\pm 1)$-simplices and itself. To see the extended inter-simplicial locality, we focus on the node output, which admits the following expression 
\begin{equation}
  \bbx_0^l = \sigma(\bbH_0\bbx_0^{l-1} + \bbH_{0,\rmu}^{l} \bbB_1 \sigma(\bbH_{1,\rmd}^{l-1} \bbB_1^\top\bbx_{0}^{l-2} + \bbH_{1}^{l-1} \bbx_1^{l-2} + \bbH_{1,\rmu}^{l-1} \bbB_2 \bbx_{2}^{l-2} ) )
\end{equation} 
where the contribution from triangle output $\bbx_2^{l-2}$ at layer $(l-2)$ is contained in $\bbx_0^l$, owing to the fact that $\bbB_1\sigma(\bbH_{1,\rmu}^{l-1}\bbB_2)\neq \mathbf{0}$. This is also illustrated in   \cref{fig:extended_simplicial_locality} where node 1 contains information not only from its neighbors including nodes $\{2,3,4\}$ and edges (cofaces) $\{e_1,e_2,e_3\}$ that contribute to those neighbors, but also from direct triangle $t_1$ and triangles $\{t_2,t_3\}$ further away, which project information to those edges.

\subsection{Related Works in Detail} \label{app:related_works}
We describe how the SCCNN in \eqref{eq.multi_feature_sccnn} generalize other NNs on graphs and SCs in   \cref{tab:related_works}. For simplicity, we use $\bbY$ and $\bbX$ to denote the output and input, respectively, without the index $l$. Note that for GNNs, $\bbL_{0,\rmd}$ is not defined.

\citet{gama2020stability} proposed to build a GNN layer with the form 
\begin{equation}\label{eq.gnn}
  \bbY_0 = \sigma\Bigg(\sum_{t=0}^{T_\rmu}\bbL_{0}^t\bbX_0\bbW_{0,\rmu,t}\Bigg)
\end{equation}
where the convolution step is performed via a graph filter \citep{sandryhaila2013discrete,sandryhaila2014discrete,gama2019convolutional,gama2020graphs}. This GNN can be easily built as a special SCCNN without contributions from edges. Furthermore, \citet{defferrard2017} considered a fast implementation of this GNN via a Chebyshev polynomial, while \citet{wu2019simplifying} simplified this by setting $\bbW_{0,t,\rmu}$ as zeros for $t<T_\rmu$. \citet{kipf2017semi} further simplified this by setting $T_\rmu=1$, namely, GCN. 

\citet{yang2021simplicial} proposed a simplicial convolutional neural network (SCNN) to learn from $k$-simplicial signals
\begin{equation}\label{eq.scnn_yang}
  \bbY_k = \sigma \Bigg( \sum_{t=0}^{T_\rmd} \bbL_{k,\rmd}^t \bbX_{k} \bbW_{k,\rmd,t} + \sum_{t=0}^{T_\rmu} \bbL_{k,\rmu}^t \bbX_{k} \bbW_{k,\rmu,t} \Bigg)
\end{equation}
where the linear operation is also defined as a simplicial convolution filter in \citet{yang2022simplicial}. This is a special SCCNN with a focus on one simplex level without taking into the lower and upper contributions consideration. The simplicial neural network (SNN) of \citet{ebli2020simplicial} did not differentiate the lower and the upper convolutions with a form of 
$
  \bbY_k = \sigma ( \sum_{t=0}^{T}\bbL_{k}^t\bbX_k\bbW_{k,t} ),
$
which leads to a joint processing in the gradient and curl subspaces as analyzed in  \cref{sec:spectral_analysis}. 

While \citet{roddenberry2021principled} proposed an architecture (referred to as PSNN)of a particular form of \eqref{eq.scnn_yang} with $\bbT_\rmd = \bbT_\rmu=1$, performing only a one-step simplicial shifting \eqref{eq.shifting_entry}. \citet{keros2022dist2cycle} also performs a one-step simplicial shifting but with an inverted Hodge Laplacian to localize the homology group in an SC. An attention mechanism was added to both SCNNs and PSNNs by \citet{giusti2022simplicial} and \citet{goh2022simplicial}, respectively. 

To account for the information from adjacent simplices, \citet{bunch2020simplicial} proposed a simplicial 2-complex CNN (S2CCNN)
\begin{equation}\label{eq.bunch_model}
  \begin{aligned}
    &\bbY_0 = \sigma \big( \bbL_0\bbX_0\bbW_{0,\rmu,1} + \bbB_1\bbX_1\bbW_{0,\rmu,0}^\prime \big) \\
    &\bbY_1 = \sigma \big( \bbB_1^\top\bbX_0\bbW_{1,\rmd,0} + \bbL_1\bbX_1\bbW_{1,1} + \bbB_2\bbX_2\bbW_{1,\rmu,0}^\prime  \big) \\
    &\bbY_2 = \sigma \big( \bbB_2^\top\bbX_1\bbW_{2,\rmd,0} + \bbL_{2,\rmu}\bbX_2\bbW_{2,\rmu,1} \big)
  \end{aligned}
\end{equation}
which is limited to SCs of order two. Note that instead of Hodge Laplacians, simplicial adjacency matrices with self-loops are used in \citet{bunch2020simplicial}, which encode equivalent information as setting all filter orders in SCCNNs as one. It is a particular form of the SCCNN where the SCF is a one-step simplicial shifting operation without differentiating the lower and upper shifting, and the lower and upper contributions are simply added, not convolved or shifted by lower and upper SCFs. That is, \citet{bunch2020simplicial} can be obtained from \eqref{eq.sccnn_layer} by setting lower and upper SCFs as identity, $\bbH_{k,\rmd}=\bbH_{k,\rmu}=\bbI$, and setting $w_{k,\rmd,t}=w_{k,\rmu,t}$ and $T_\rmd=T_{\rmu}=1$ for the SCF $\bbH_k$.

The convolution in \citet[eq. 3]{yang2022efficient} is the same as \citet{bunch2020simplicial} though it was performed in a block matrix fashion. The combination of graph shifting and edge shifting in \citet{chen2022bscnets} can be again seen as a special S2CCNN, where the implementation was performed in a block matrix fashion. 
\citet{bodnar2021weisfeiler} proposed a message passing scheme which collects information from one-hop simplicial neighbors and direct faces and cofaces as \citet{bunch2020simplicial} and \citet{yang2022efficient}, but replacing the one-step shifting and projections from (co)faces by some learnable functions. The same message passing was applied for simplicial representation learning by \citet{hajij2021simplicial}. 

Lastly, there are works on signal processing and NNs on cell complexes. For example, \citet{sardellitti2021topological,roddenberry2022signal} generalized the signal processing techniques from SCs to cell complexes, \citet{bodnar2021weisfeiler2,hajij2020cell} performed message passing on cell complexes as in SCs and \citet{hajij2022higher} added the attention mechanism. Cell complexes are a more general model compared to SCs, where $k$-cells compared to $k$-simplices contain any shapes homeomorphic to a $k$-dimensional closed balls in Euclidean space, e.g., a filled polygon is a 2-cell while only triangles are 2-simplices. We refer to \citet{hansen2019toward} for a more formal definition of cell complexes. Despite cell complexes are more powerful to model real-world higher-order structures, SCCNNs can be easily generalized to cell complexes by considering any $k$-cells instead of only $k$-simplices in the algebraic representations, and the theoretical analysis in this paper can be adapted to cell complexes as well. 

\section{Simplicial Complex Symmetries} \label{app:sc_symmetry}

\subsection{Background on Groups}\label{app:d}
In the following, we briefly introduce some definitions in group theory, following from \citet{bronstein2021geometric}.
\begin{definition}[Groups]
  A group is a set $\frakG$ along with a binary operation $\circ:\frakG \times \frakG \to \frakG$ called \emph{composition}, e.g., addition and multiplication of numbers or matrices, or union and intersection of sets, satisfying the following axioms:

1) Associativity: $(\frakg\circ\frakh)\circ\frakk = \frakg\circ(\frakh\circ\frakk)$ for all $\frakg,\frakh,\frakk\in\frakP$.

2) Identity: there exists a unique identity $\frake\in\frakP$ satisfying $\frake\circ\frakg=\frakg\circ\frake=\frakg$ for all $\frakg\in\frakP_k$.

3) Inverse: for each $\frakg\in\frakP$ there is a unique inverse $\frakg^{-1}\in\frakP_k$ such that $\frakg\circ\frakg^{-1}=\frakg^{-1}\circ\frakg=\frake$.

4) Closure: the group is closed under composition, i.e., for every $\frakg,\frakh\in\frakP$, we have $\frakg\circ\frakh\in\frakP$.   
\end{definition}

Consider a signal space $\ccalX(\Omega)$ supported on some underlying domain $\Omega$. A \emph{group action} of $\frakG$ on set $\Omega$ is defined as a mapping $(\frakg,u)\mapsto \frakg.u$ associating a group element $\frakg\in\frakG$ and a point $u\in\Omega$ with some other point $\frakg.u$ on $\Omega$. Group actions admit associativity. Moreover, if we have a group $\frakG$ acting on $\Omega$, we also obtain an action of $\frakG$ on the signal space $\ccalX(\Omega)$: $(\frakg.x)(u) = x(\frakg^{-1}u)$ respecting associativity as well. This group action is linear, which can be described by inducing a map $\rho:\frakG\to \setR^{n\times n}$ that assigns each group element $\frakg$ an (invertible) matrix $\rho(\frakg)$, satisfying the condition $\rho(\frakg \frakh)=\rho(\frakg)\rho(\frakh)$ for all $\frakg,\frakh\in\frakG$. A representation is called unitary or orthogonal if the matrix $\rho(\frakg)$ is unitary or orthogonal for all $\frakg\in\frakG$.

\begin{definition}[Equivariant Functions]
  A function $f:\ccalX(\Omega)\to\ccalY$ is $\frakG$-equivariant if $f(\rho(\frakg)x) = \rho(\frakg)f(x)$ for all $\frakg\in\frakG$ and $x\in\ccalX(\Omega)$, i.e.,  group action on the input affects the output in the same way. 
\end{definition}

\begin{definition}[Invariant Functions]
  We say $f$ is $\frakG$-invariant if $f(\rho(\frakg)x) = f(x)$ for all $\frakg\in\frakG$ and $x\in\ccalX(\Omega)$, i.e., its output is unaffected by the group action on the input. 
\end{definition}

\begin{definition}[Permutation Equivalence and Similarity] \label{def:matrix_similarity}
  Let $\bbA,\bbB\in\setR^{m \times n}$ be two matrices of dimension $m \times n$. If $\bbA = \bbS \bbB \bbT$ with square and nonsingular matrices $\bbS$ and $\bbT$, we say that $\bbA$ is equivalent to $\bbB$. When $\bbS$ and $\bbT$ are unitary, we say $\bbA$ and $\bbB$ are unitarily equivalent. If $\bbS$ and $\bbT$ are two permutation matrices, we say $\bbA$ and $\bbB$ are permutation equivalent. 
  When $m=n$ and $\bbB = \bbS^{-1} \bbA \bbS$ with a nonsingular $\bbS\in\setR^{n\times n}$, we say $\bbB$ is similar to $\bbA$. When $\bbS$ is unitary (or real orthogonal), we say $\bbB$ is unitarily (or real orthogonally) similar to $\bbA$. We say $\bbB$ is permutation similar to $\bbA$ if there is a permutation matrix $\bbP$ such that $\bbB = \bbP \bbA \bbP^\top$.
\end{definition}


\subsection{Proof of   \cref{prop:prop_permutation_symmetry}}
\begin{proof}\label{proof.prop_permutation_symmetry}
  The permuted incidence matrices and Hodge Laplacians can be found by the definition of the permutations $\frakp_{k}$ and $\frakp_{k\pm 1}$, given by $\overline{\bbB}_k = \bbP_k \bbB_k \bbP_{k+1}^\top$ and $\overline{\bbL}_k = \bbP_k \bbL_k \bbP_k^\top$. We mainly show that the algebraic property encoded in $\bbB_k$ and $\bbL_k$ of the SC is preserved after this sequence of permutations. 
  
  Based on   \cref{def:matrix_similarity}, the permuted incidence matrix $\overline{\bbB}_k$ is permutation equivalent to $\bbB_k$, since all permutation matrices $\bbP_{k}$ are real orthogonal. Consider an SVD of $\bbB_k=\bbU_k\bSigma_k\bbV_k^*$ with unitray matrices $\bbU_k$ and $\bbV_k$ of dimensions $N_k\times N_k$ and $N_{k+1}\times N_{k+1}$. We have then
  $
    \overline{\bbB}_k = \bbP_{k-1} \bbU_k \bSigma_k \bbV_k^* \bbP_{k}^\top 
  $
  which is an SVD of $\overline{\bbB}_k$ with unitary matrices $\bbP_{k-1} \bbU_k $  and $ \bbP_{k} \bbV_k $ collecting the left and the right singular vectors. Note that $\bbP_{k-1} \bbU_k $  and $ \bbP_{k} \bbV_k $ are row-permuted versions of $\bbU_k$ and $\bbV_k$ according to the permutations of $k-1$- and $k$-simplices. That is, the singular vectors and singular values of the incidence matrices of the SC remain equivariant before and after permutations. 

  We then consider the permuted Hodge Laplacian $\overline{\bbL}_k$. Matrix $\overline{\bbL}_k$ has same eigenvalues as $\bbL_k$, as they are permutation similar to each other. Consider an eigen-decomposition $\bbL_k=\bbW_k \bLambda_k \bbW_k^\top$. We have then $\overline{\bbL}_k = \bbP_k \bbW_k \bLambda_k \bbW_k^\top \bbP_k^\top$ where matrix $\bbP_k\bbW_k$ collects the eigenvectors in the permuted SC with the same eigenvalues. These eigenvectors are row-permuted versions of $\bbW_k$ according to the permutations of $k$-simplices. That is, the eigenvectors and eigenvalues of the Hodge Laplacians of the SC remain equivariant before and after permutations. 
  Thus, the spectral properties of the SC remain. 
\end{proof}

\subsection{Proof of   \cref{prop:orientation_equivariance}}
\begin{proof}\label{proof.prop_orientation_symmetry}
  Given the representation of the reoriented incidence matrix $\overline{\bbB}_k$ and the reoriented Hodge Laplacian $\overline{\bbL}_k$, we can easily see that they differ from the original ones, as reversing the orientation of $s_i^k$ results in multiplying the $i$th row of $\bbB_k$ and the $i$th column of $\bbB_{k-1}$ by $-1$, as well as the $i$th row and column of $\bbL_k$ by $-1$. Likewise, the corresponding simplicial signal $x_{k,i}$ becomes $-x_{k,i}$. However, the underlying $k$-chain remains unchanged. That is, we have $\overline{c}_k = c_k = \sum_{i=1}^{N_k} x_{k,i}s_{k,i}$ because for each $\overline{s}_i^k=-s_i^k$, we have $\overline{x}_{k,i}=-x_{k,i}$. This can be understood as the simplicial signal remains unchanged in terms of the underlying simplices.  
\end{proof}

\subsection{Proof of   \cref{prop:permutation_equivariance}}
\begin{proof}
  Prior to the permutations by $\{\frakP_k:k=[K]\}$, we have the output of an SCCNN layer on $k$-simplices as 
  \begin{equation}\label{eq:output_xkl}
    \bbx_k^{l} = \sigma(\bbH_{k,\rmd}^{l} \bbB_{k}^\top \bbx_{k-1}^{l-1} + \bbH_{k}^{l}\bbx_k^{l-1} + \bbH_{k,\rmu}^{l} \bbB_{k+1}\bbx_{k+1}^{l-1} )
\end{equation}
with $\bbH^{l}_{k} := \sum_{t=0}^{T_{\rmd}}w^l_{k,\rmd,t}(\bbL_{k,\rmd})^t + \sum_{t=0}^{T_\rmu} w^l_{k,\rmu,t} (\bbL_{k,\rmu})^t, \bbH_{k,\rmd}^l:=\sum_{t=0}^{T_{\rmd}^\prime}w^l_{k,\rmd,t}(\bbL_{k,\rmd})^t$ and $\bbH_{k,\rmu}^l:=\sum_{t=0}^{T_\rmu} w^l_{k,\rmu,t} (\bbL_{k,\rmu})^t$.

When applying $\{\frakP_k:k=[K]\}$ to the SC, we have the input tuple $\{\bbP_{k-1}\bbx_{k-1}^{l-1}, \bbP_{k}\bbx_k^{l-1}, \bbP_{k+1}\bbx_{k+1}^{l-1}\}$. The incidence matrices become $\overline{\bbB}_k = \bbP_{k-1} \bbB_k \bbP_{k}^\top$ and the Hodge Laplacians become $\overline{\bbL}_k = \bbP_k \bbL_k \bbP_{k}^\top$ for all $k$, likewise for their lower and upper counterparts. The SCF $\overline{\bbH}_{k,\rmd}^{l}$ is given by 
\begin{equation} \label{eq:SCF_permuted}
  \begin{aligned}
    \overline{\bbH}_{k,\rmd}^{l} & = \sum_{t=0}^{T_{\rmd}} w^l_{k,\rmd,t}(\bbP_k\bbL_{k,\rmd}\bbP_k^\top)^t + \sum_{t=0}^{T_\rmu} w^l_{k,\rmu,t} (\bbP_k\bbL_{k,\rmu}\bbP_k^\top)^t \\
    & = \sum_{t=0}^{T_{\rmd}}w^l_{k,\rmd,t} \bbP_k(\bbL_{k,\rmd})^t\bbP_k^\top + \sum_{t=0}^{T_\rmu} w^l_{k,\rmu,t} \bbP_k(\bbL_{k,\rmu})^t\bbP_k^\top  = \bbP_k \bbH_{k,\rmd}^l \bbP_k^\top
  \end{aligned}
\end{equation}
owing to fact that $\bbP_k$ is orthogonal. Following the same procedure, we have $\overline{\bbH}_{k,\rmd}^{l} = \bbP_k \bbH_{k,\rmd}^l \bbP_k^\top $ and $\overline{\bbH}_{k,\rmu}^l = \bbP_k \bbH_{k,\rmu}^l \bbP_k^\top$. We can now express the output on the permuted SC as 
\begin{equation} \label{eq.permutation_equivariant}
  \begin{aligned}
    \overline{\bbx}_{k}^{l} & = \sigma(\overline{\bbH}_{k,\rmd}^{l} \overline{\bbB}_{k}^\top \bbP_{k-1} \bbx_{k-1}^{l-1} + \overline{\bbH}_{k}^{l} \bbP_{k} \bbx_k^{l-1} + \overline{\bbH}_{k,\rmu}^{l} \overline{\bbB}_{k+1} \bbP_{k+1}\bbx_{k+1}^{l-1} ) \\ 
    & = \sigma(\bbP_k \bbH_{k,\rmd}^l \bbP_k^\top \bbP_k \bbB_k^\top \bbP_{k-1}^\top \bbP_{k-1} \bbx_{k-1}^{l-1} +  \bbP_k \bbH_{k,\rmd}^l \bbP_k^\top \bbP_k \bbx_k^l +  \bbP_k \bbH_{k,\rmu}^l \bbP_k^\top \bbP_k \bbB_{k+1}\bbP_{k+1}^\top \bbP_{k+1} \bbx_{k+1}^{l-1}) \\
    & = \sigma(\bbP_k \bbH_{k,\rmd}^l \bbB_k^\top \bbx_{k-1}^{l-1} +  \bbP_k \bbH_{k,\rmd}^l  \bbx_k^l +  \bbP_k \bbH_{k,\rmu}^l \bbB_{k+1} \bbx_{k+1}^{l-1}) \\ 
    & = \bbP_k \sigma(\bbH_{k,\rmd}^{l} \bbB_{k}^\top \bbx_{k-1}^{l-1} + \bbH_{k}^{l}\bbx_k^{l-1} + \bbH_{k,\rmu}^{l} \bbB_{k+1}\bbx_{k+1}^{l-1} ) = \bbP_k \bbx_{k}^{l},
  \end{aligned}
\end{equation}
where we use the fact $\bbP_k$ for all $k$ are orthogonal and the elementwise nonlinearity does no affect permutations.
\end{proof}

\subsection{Proof of \cref{prop:orientation_equivariance}}\label{proof:orientation_equivariance}
\begin{proof}
  In an oriented SC, the output of an SCCNN layer on $k$-simplices is given in \eqref{eq:output_xkl}. If the underlying simplices change their orientations according to $\{\frakO_{k,i}:k=[K],i=1,\dots,N_k\}$, then we have the input tuple become $\{\bbD_{k-1}\bbx_{k-1}^{l-1}, \bbD_{k}\bbx_k^{l-1}, \bbD_{k+1}\bbx_{k+1}^{l-1}\}$. The incidence matrices become $\overline{\bbB}_k = \bbD_{k-1} \bbB_k \bbD_{k}^\top$ and the Hodge Laplacians become $\overline{\bbL}_k = \bbD_k \bbL_k \bbD_{k}^\top$ for all $k$, likewise for their lower and upper counterparts. Since $\bbD_k$ is diagonal and orthogonal, we can follow the similar procedure in \eqref{eq.permutation_equivariant}. Thus, we have $\overline{\bbH}_{k,\rmd}^{l} = \bbD_k \bbH_{k,\rmd}^l \bbD_k^\top $ and $\overline{\bbH}_{k,\rmu}^l = \bbD_k \bbH_{k,\rmu}^l \bbD_k^\top$. We can now express the output on the reoriented SC as 
  \begin{equation}
    \begin{aligned}
      \overline{\bbx}_{k}^{l} & = \sigma(\overline{\bbH}_{k,\rmd}^{l} \overline{\bbB}_{k}^\top \bbD_{k-1} \bbx_{k-1}^{l-1} + \overline{\bbH}_{k}^{l} \bbD_{k} \bbx_k^{l-1} + \overline{\bbH}_{k,\rmu}^{l} \overline{\bbB}_{k+1} \bbD_{k+1}\bbx_{k+1}^{l-1} ) \\ 
      & =  \sigma( \bbD_k \bbH_{k,\rmd}^{l} \bbB_{k}^\top \bbx_{k-1}^{l-1} + \bbD_k \bbH_{k}^{l}\bbx_k^{l-1} + \bbD_k \bbH_{k,\rmu}^{l} \bbB_{k+1}\bbx_{k+1}^{l-1} ) \\
      & = \bbD_k \sigma( \bbH_{k,\rmd}^{l} \bbB_{k}^\top \bbx_{k-1}^{l-1} +  \bbH_{k}^{l}\bbx_k^{l-1} + \bbH_{k,\rmu}^{l} \bbB_{k+1}\bbx_{k+1}^{l-1} ) = \bbD_k \bbx_{k}^{l},
    \end{aligned}
  \end{equation}
  where we use the fact that $\bbD_k$ for all $k$ are diagonal and orthogonal for all $k$ and that $\sigma(\cdot)$ is an odd function, $\sigma(-\bbx) = -(\sigma(\bbx))$, such that $\sigma(\bbD_k\bbx_k^l) = \bbD_{k} \sigma(\bbx_k^l)$.
\end{proof}

\section{Spectral Analysis} \label{app:spectral_analysis}
As tools from simplicial signal processing based on Hodge theory are rather unfamiliar, which have been used to analyze simplicial signals by \citet{barbarossa2020} and SCFs by \citet{yang2022simplicial}, Thus, we append the necessary background to assist the exposition of \cref{sec:spectral_analysis}, together with some illustrations.

\subsection{Divergence, Gradient, and Curl Operations on SCs}
We first show how incidence matrices $\bbB_1$ and $\bbB_2$ relate to the physical divergence and curl operations \citep{barbarossa2020}. Consider an edge flow $\bbx_1$. By applying $\bbB_1$ to $\bbx_1$, we obtain a node signal whose $i$th entry on node $i$ can be expressed as 
\begin{equation} \label{eq.div_def}
  [\bbB_1\bbx_1]_i = \sum_{j<i}[\bbx_1]_{[j,i]} - \sum_{i< k}[\bbx_1]_{[i,k]},
\end{equation}
which is the total inflow minus the total outflow at node $i$, i.e., the netflow at node $i$. This operation $\bbB_1\bbx_1$ is also known as the \emph{divergence} operation, reflecting the \emph{irrotational} property of the edge flow. 

Consider the adjoint $\bbB_1^\top$ applied to a node signal $\bbx_0$. We can obtain an edge flow which is the difference between two adjacent node signals, and specifically, on edge $e=[i,j]$ we have 
\begin{equation} \label{eq.grad_def}
  [\bbB_1^\top \bbx_0]_{[i,j]} = [\bbx_0]_{j} - [\bbx_0]_{i}. 
\end{equation}
This is the \emph{gradient} operation and it also explains that $\im(\bbB_1^\top)$ is named as gradient space since it contains edge flows which can be expressed as a gradient of some node signal. 

Likewise, by applying $\bbB_2^\top$ to an edge flow $\bbx_1$, we compute a resulting triangle signal which circulates around triangles. Specifically, at triangle $t=[i,j,k]$, we have
\begin{equation} \label{eq.curl_def}
  [\bbB_2^\top\bbx_1]_{[i,j,k]} = [\bbx_1]_{[i,j]} + [\bbx_1]_{[j,k]} - [\bbx_1]_{[i,k]},
\end{equation}
which is the \emph{curl} operation, reflecting the \emph{solenoidal} property of the edge flow.

Lastly, by applying $\bbB_2$ to a triangle signal $\bbx_2$, we obtain an edge flow $\bbB_2\bbx_2$, which is in the curl space $\im(\bbB_2)$.  

The above operations hold for general $k$-simplicial signal space by using $\bbB_k$ and $\bbB_{k+1}$. Although the physical interpretations like divergence and curl do no generalize, the operation $\bbB_{k}\bbx_{k}$ can be used to measure how a $k$-simplicial signal varies in terms of the lower adjacent simplices (faces), and the operation $\bbB_{k+1}^\top\bbx_k$ reflects how a $k$-simplicial signal varies in terms of the upper adjacent simplices (cofaces). For $k=0$, we only have $\bbB_1^\top\bbx_0$, which measures how a graph signal varies along the edges \citep{sandryhaila2013discrete,sandryhaila2014discrete,shuman2013emerging}. 

\subsection{Decomposition of an Edge Flow}\label{app:hodge_decomposition_illustration}
Given an edge flow $\bbx_1$, the Hodge decomposition in   \cref{thm:hodge decomposition} gives 
\begin{equation}
  \bbx_1 = \bbx_{1,\rm{G}} + \bbx_{1,\rm{C}} + \bbx_{1,\rm{H}}
\end{equation}
with $\bbx_{1,\rm{G}}=\bbB_1^\top\bbx_0$ for some node signal $\bbx_0$ and $\bbx_{1,\rm{C}}=\bbB_2\bbx_2$ for some triangle signal $\bbx_2$. Some examples of physical edge flows can be electric currents, electromagnetic waves, or water flows. We also refer to \citet{jiang2011statistical,candogan2011flows,jia2019graph} for some edge flows generated from real-world problems.

\begin{figure}[htp]
  \centering
  \begin{subfigure}{0.246\linewidth}
    \includegraphics[width=\linewidth]{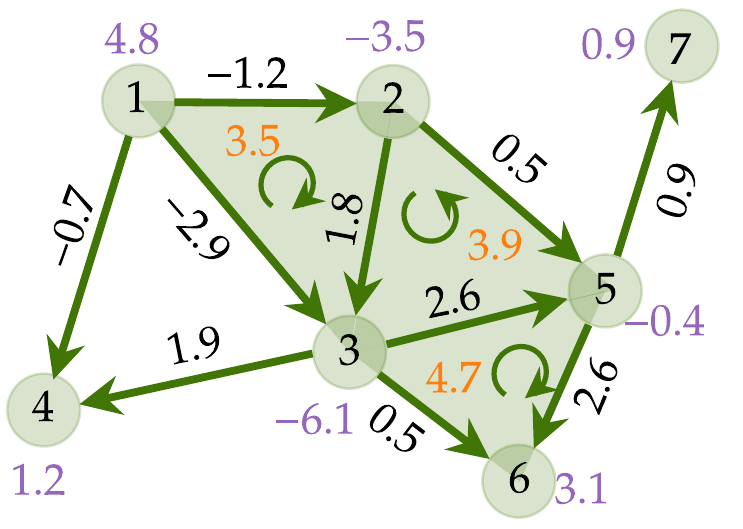}
    \caption{Edge flow}
  \end{subfigure}
  \begin{subfigure}{0.246\linewidth}
    \includegraphics[width=\linewidth]{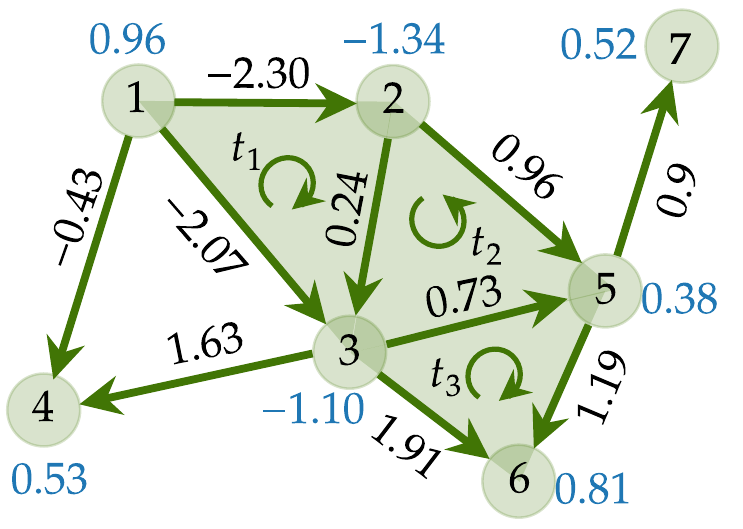}
    \caption{Gradient flow}
    \label{fig:gradient_flow_illustration}
  \end{subfigure}
  \begin{subfigure}{0.246\linewidth}
    \includegraphics[width=\linewidth]{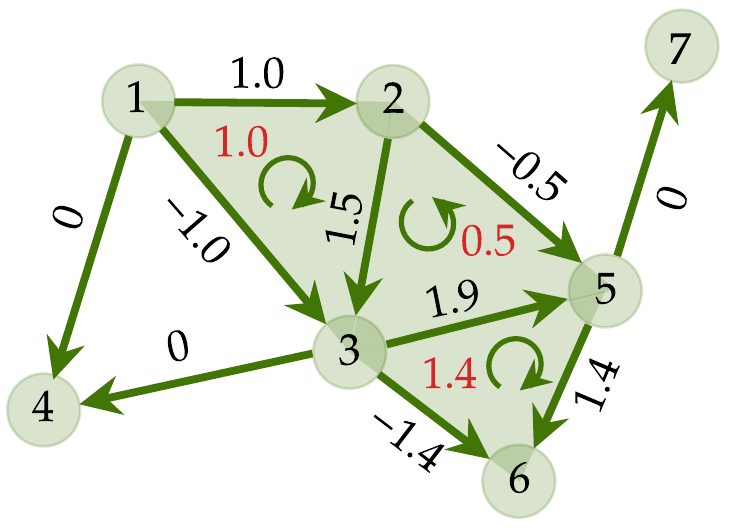}
    \caption{Curl flow}
  \end{subfigure}
  \begin{subfigure}{0.246\linewidth}
    \includegraphics[width=\linewidth]{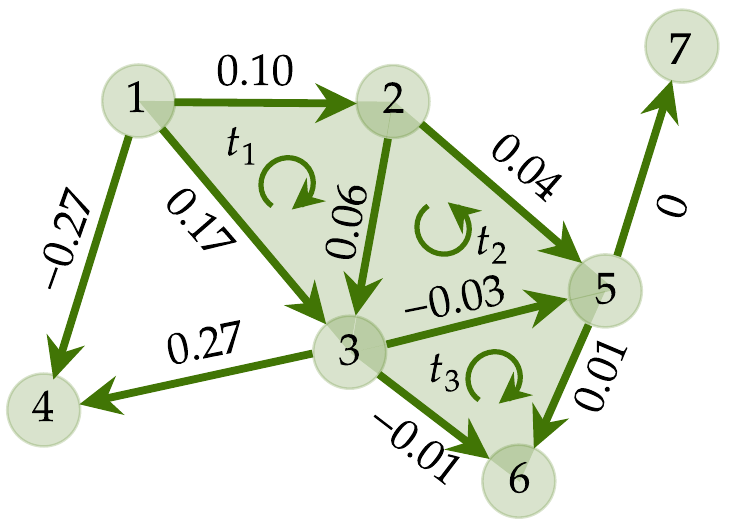}
    \caption{Harmonic flow}
  \end{subfigure}
  \caption{Hodge decomposition of an edge flow. (b)-(d) are the Hodge decomposition of the example edge flow in (a) (we denote its divergence and curl in purple and orange, respectively). The gradient flow is the gradient of some node signal (in blue) and is curl-free. The curl flow can be obtained from some triangle flow (in red), and is divergence-free. The harmonic flow has zero divergence and zero curl, which is circulating around the hole $\{1,3,4\}$. Note that in this figure and   \cref{fig:sft_illustration}, the flow numbers are rounded up to two decimal places. Thus, at some nodes or triangles with zero-divergence or zero-curl, the divergence or curl might not be exactly zero.}
\end{figure}

\subsection{SFT and Simplicial Frequency} \label{app:sft}
\begin{proof}[Proof of   \cref{prop:sft}]
  The proof can be found in \citet{yang2022simplicial}.
\end{proof}
Here we show how the eigenvalues of $\bbL_k$ carry the notion of simplicial frequency. To ease composition, we consider the case of $k=1$. The three sets of eigenvalues follow that: 
\begin{itemize}
  \item \emph{Gradient Frequency:} the nonzero eigenvalues associated with the eigenvectors $\bbU_{1,\rm{G}}$ of $\bbL_{1,\rmd}$, which span the gradient space $\im(\bbB_1^\top)$, admit $\bbL_{1,\rmd}\bbu_{1,\rm{G}} = \lambda_{1,\rm{G}}\bbu_{1,\rm{G}}$ for any eigenpair $\bbu_{1,\rm{G}}$ and $\lambda_{1,\rm{G}}$. From the definition of $\bbL_{1,\rmd}$, we have $\bbB_1^\top\bbB_1 \bbu_{1,\rm{G}} = \lambda_{1,\rm{G}}\bbu_{1,\rm{G}}$, which translates to $\lambda_{1,\rm{G}} = \bbu_{1,\rm{G}}^\top \bbB_1^\top \bbB_1 \bbu_{1,\rm{G}} = \lVert\bbB_1\bbu_{1,\rm{G}}\rVert_2^2$. This is an Euclidean norm of the divergence, i.e., the total nodal variation of $\bbu_{1,\rm{G}}$. If an eigenvector has a larger eigenvalue, it has a larger total divergence. For the SFT of an edge flow, if the gradient embedding $\tilde{\bbx}_{1,\rm{G}}$ has a large weight on such an eigenvector, it contains components with a large divergence, and we say it has a large gradient frequency. Thus, we call such eigenvalues associated with $\bbU_{1,\rm{G}}$ gradient frequencies, measuring the extent of the total divergence of an edge flow. 
  \item \emph{Curl Frequency:} the nonzero eigenvalues associated with the eigenvectors $\bbU_{1,\rm{C}}$ of $\bbL_{1,\rmu}$, which span the curl space $\im(\bbB_2)$, admit  $\bbL_{1,\rmu}\bbu_{1,\rm{C}} = \lambda_{1,\rm{C}}\bbu_{1,\rm{C}}$ for any eigenpair $\bbu_{1,\rm{C}}$ and $\lambda_{1,\rm{C}}$. From the definition of $\bbL_{1,\rmu}$, we have $\bbB_2\bbB_2^\top \bbu_{1,\rm{C}} = \lambda_{1,\rm{C}}\bbu_{1,\rm{C}}$ which translates to $\lambda_{1,\rm{C}} = \bbu_{1,\rm{C}}^\top\bbB_2\bbB_2^\top\bbu_{1,\rm{C}} = \lVert \bbB_2^\top\bbu_{1,\rm{C}} \rVert_2^2$. Eigenvalue $\lambda_{1,\rm{C}}$ then is an Euclidean norm of the curl, i.e., the total rotational variation, of the eigenvector $\bbu_{1,\rm{C}}$. If an eigenvector has a larger eigenvalue, it has a larger total curl. For the SFT of an edge flow, if the curl embedding $\tilde{\bbx}_{1,\rm{C}}$ has a large weight on such an eigenvector, it contains components with a large curl, and we say it has a large curl frequency. Thus. we call such eigenvalues associated with $\bbU_{1,\rm{C}}$ curl frequencies, measuring the extent of the total curl of an edge flow. 
  \item \emph{Harmonic Frequency:} the zero eigenvalues associated with the eigenvectors $\bbU_{1,\rm{H}}$, which span the harmonic space $\ker(\bbL_1)$, admit  $\bbL_1\bbu_{1,\rm{H}} = \mathbf{0}$ for any eigenpair $\bbu_{1,\rm{H}}$ and $\lambda_{1,\rm{H}}=0$. From the definition of $\bbL_1$, we have $\bbB_1\bbu_{1,\rm{H}} = \bbB_2^\top \bbu_{1,\rm{H}}=\mathbf{0}$. That is, the eigenvector $\bbu_{1,\rm{H}}$ has a zero-divergence and a zero-curl, or equivalently, divergence- and curl-free. We also say such an eigenvector has zero signal variation in terms of the nodes and triangles. This resembles the constant graph signal in the node space. We call such zero eigenvalues as harmonic frequencies. 
\end{itemize}

  \cref{fig:sft_illustration} shows the simplicial Fourier basis and the corresponding simplicial frequencies of the SC in  \cref{fig:sc_example}, from which we see how the eigenvalues of $\bbL_1$ can be interpreted as the simplicial frequencies. 

\begin{figure}[htp]
  \centering
  \begin{subfigure}{0.19\linewidth}
    \includegraphics[width=\linewidth]{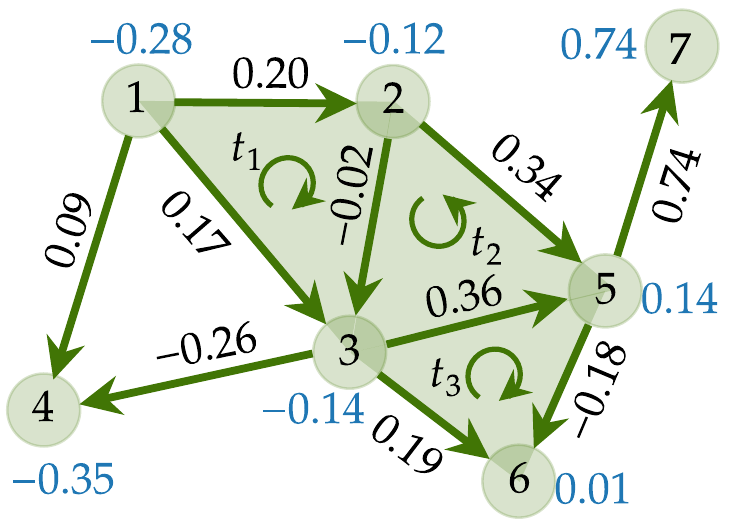}
    \caption{$\bbu_{\rm{G},1},\lambda_{\rm{G},1}=0.80$}
  \end{subfigure}
  \begin{subfigure}{0.19\linewidth}
    \includegraphics[width=\linewidth]{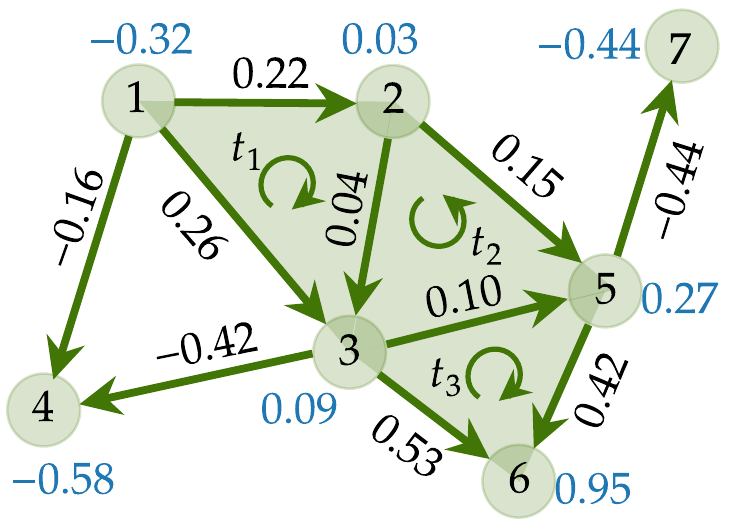}
    \caption{$\bbu_{\rm{G},2},\lambda_{\rm{G},2}=1.61$}
  \end{subfigure}
  \begin{subfigure}{0.19\linewidth}
    \includegraphics[width=\linewidth]{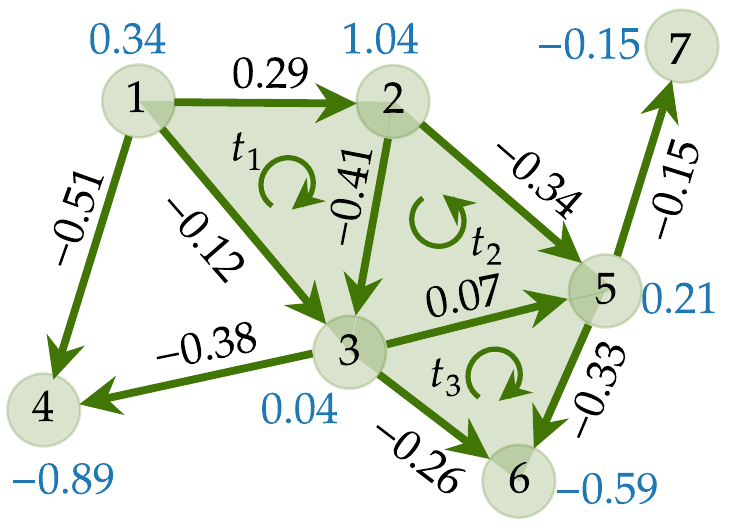}
    \caption{$\bbu_{\rm{G},3},\lambda_{\rm{G},3}=2.43$}
  \end{subfigure}
  \begin{subfigure}{0.19\linewidth}
    \includegraphics[width=\linewidth]{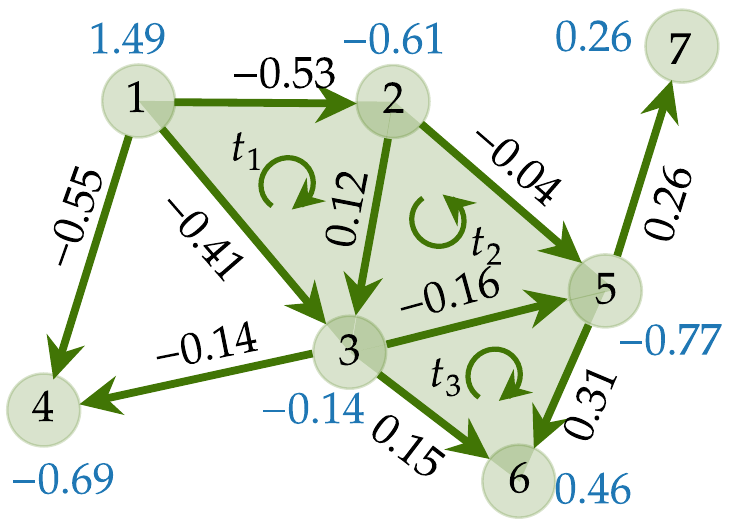}
    \caption{$\bbu_{\rm{G},4},\lambda_{\rm{G},4}=3.96$}
  \end{subfigure}
  \begin{subfigure}{0.19\linewidth}
    \includegraphics[width=\linewidth]{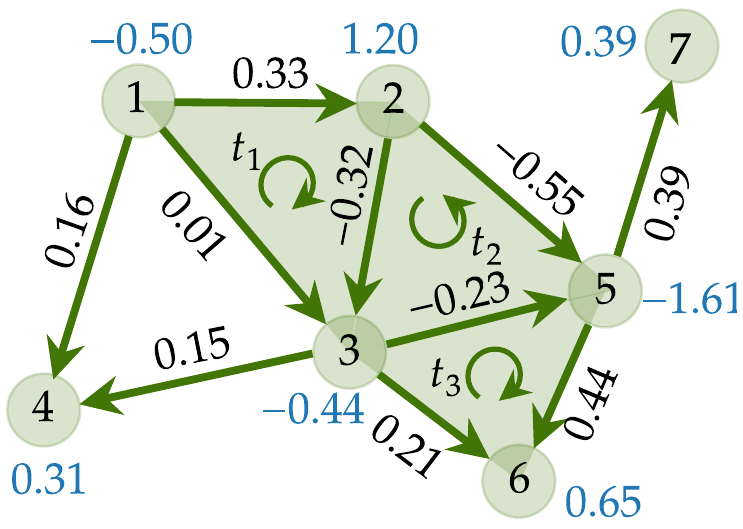}
    \caption{$\bbu_{\rm{G},5},\lambda_{\rm{G},5}=5.12$}
  \end{subfigure}

  \begin{subfigure}{0.19\linewidth}
    \includegraphics[width=\linewidth]{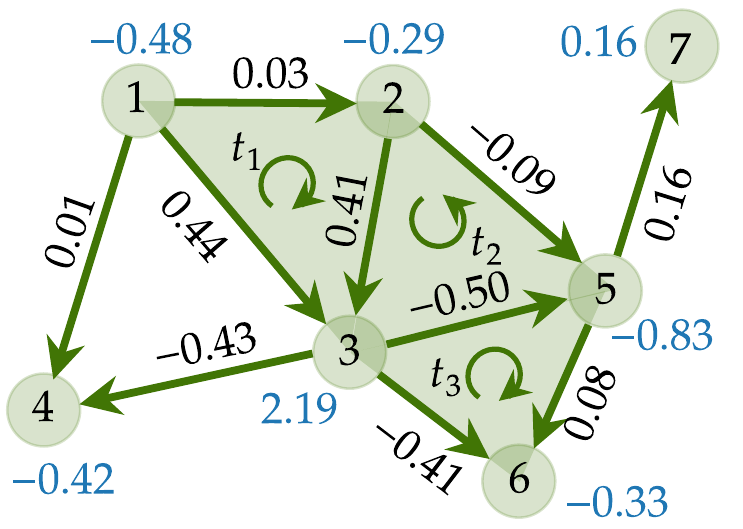}
    \caption{$\bbu_{\rm{G},6},\lambda_{\rm{G},6}=6.08$}
  \end{subfigure}
  \begin{subfigure}{0.19\linewidth}
    \includegraphics[width=\linewidth]{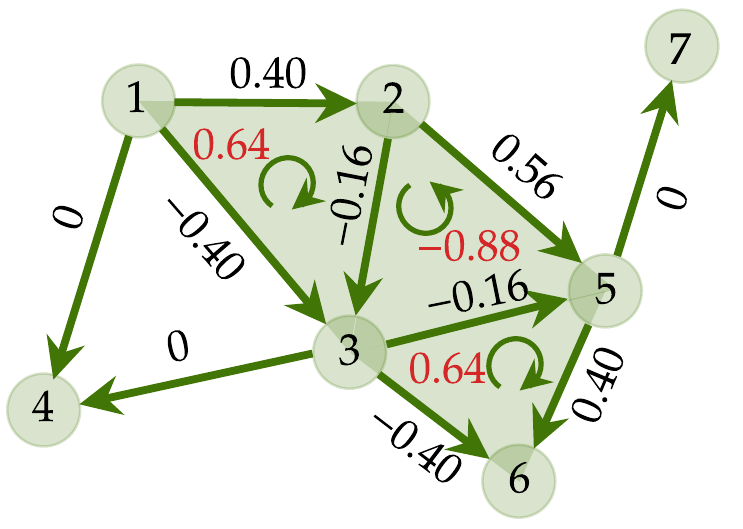}
    \caption{$\bbu_{\rm{C},1},\lambda_{\rm{C},1}=1.59$}
  \end{subfigure}
  \begin{subfigure}{0.19\linewidth}
    \includegraphics[width=\linewidth]{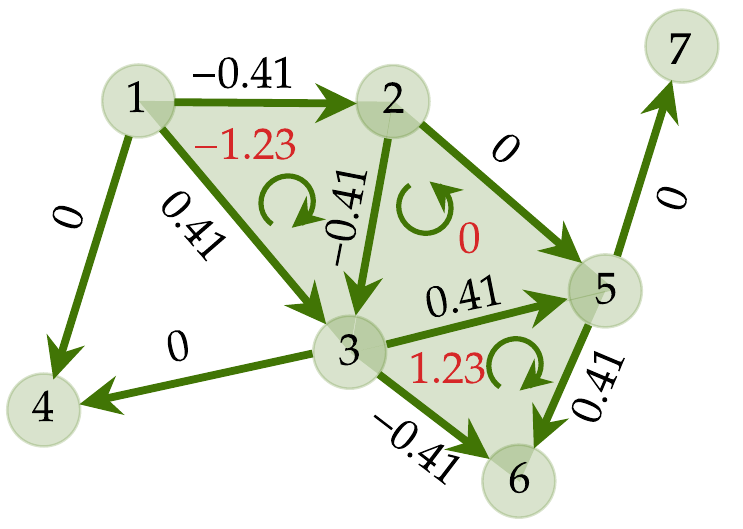}
    \caption{$\bbu_{\rm{C},2},\lambda_{\rm{C},2}=3.00$}
  \end{subfigure}
  \begin{subfigure}{0.19\linewidth}
    \includegraphics[width=\linewidth]{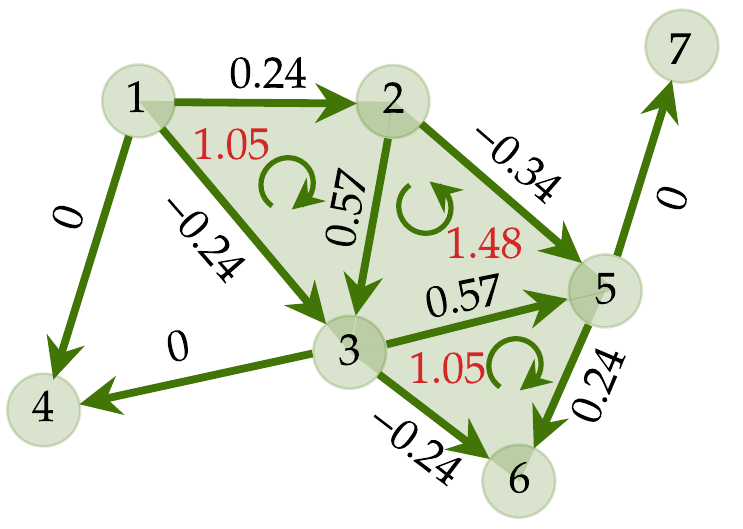}
    \caption{$\bbu_{\rm{C},3},\lambda_{\rm{C},3}=4.41$}
  \end{subfigure}
  \begin{subfigure}{0.19\linewidth}
    \includegraphics[width=\linewidth]{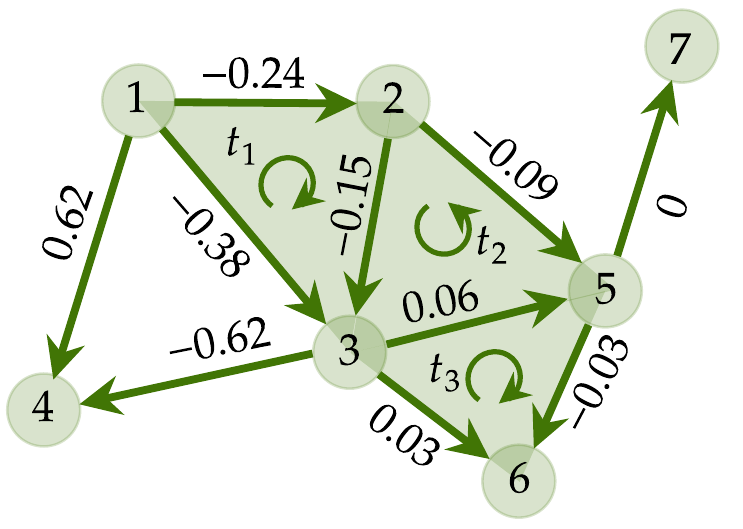}
    \caption{$\bbu_{\rm{H}},\lambda_{\rm{H}}=0$}
  \end{subfigure}
  \caption{Eigenvalues of $\bbL_1$ carry the notion of simplicial frequency. (a)-(f) Six gradient frequencies and the corresponding Fourier basis. We also annotate their divergences, and we see that these eigenvectors with a small eigenvalue have a small magnitude of total divergence, i.e., the edge flow variation in terms of the nodes. Gradient frequencies reflect the nodal variations. (g)-(i) Three curl frequencies and the corresponding Fourier basis. We annotate their curls and we see that these eigenvectors with a small eigenvalue have a small magnitude of total curl, i.e., the edge flow variation in terms of the triangles. Curl frequencies reflect the rotational variations. (j) Harmonic basis with a zero frequency, which has a zero nodal and zero rotational variation. }
  \label{fig:sft_illustration}
\end{figure}

For $k=0$, the eigenvalues of $\bbL_0$ carry the notion of graph frequency, which measures the graph (node) signal smoothness w.r.t. the upper adjacent simplices, i.e., edges. Thus, the curl frequency of $k=0$ coincides with the graph frequency and a constant graph signal has only harmonic frequency component, and there is no divergence frequency. For a more general $k$, there exist these three types of simplicial frequencies, which measure the $k$-simplicial signal total variations in terms of faces and cofaces. 

\subsection{Proof of  \cref{cor:evd_Ld_Lu}}\label{proof:cor:evd_Ld_Lu}

\begin{proof}
Diagonalizing the lower and upper Hodge Laplacians by $\bbU_k = [\bbU_{k,\rm{H}}\,\,\bbU_{k,\rm{G}}\,\,\bbU_{k,\rm{C}}]$, we have 
\begin{equation} \label{eq.decomposition_Ld_Lu}
    \bbL_{k,\rmd} = \bbU_k \diag([\mathbf{0} \,\, \blambda_{k,\rm{G}}^\top \,\, \mathbf{0}]^\top) \bbU_k^\top , \quad  
    \bbL_{k,\rmu} = \bbU_k \diag([\mathbf{0} \,\, \mathbf{0}\,\, \blambda_{k,\rm{C}}^\top ]^\top) \bbU_k^\top 
\end{equation}
where $\diag(\cdot)$ is a diagonal matrix operator of a column vector , $\blambda_{k,\rm{G}}$ and $\blambda_{k,\rm{C}}$ are column vectors collecting the gradient and curl frequencies, i.e., the nonzero eigenvalues, and $\mathbf{0}$ is all-zero vector of appropriate dimension. We can then express the lower and upper simplicial shifting as
\begin{equation*}
  \begin{aligned}
    \bbL_{k,\rmd}\bbx_k & = \bbU_k \diag([\mathbf{0} \,\, \blambda_{k,\rm{G}}^\top \,\, \mathbf{0}]^\top)\bbU_k^\top \bbx_k = \bbU_k \diag([\mathbf{0} \,\, \blambda_{k,\rm{G}}^\top \,\, \mathbf{0}]^\top )\tilde{\bbx}_k,  \\
    \bbL_{k,\rmu}\bbx_k & = \bbU_k \diag([\mathbf{0} \,\, \mathbf{0}\,\, \blambda_{k,\rm{C}}^\top ]^\top)  \bbU_k^\top \bbx_k = \bbU_k \diag([\mathbf{0} \,\, \mathbf{0}\,\, \blambda_{k,\rm{C}}^\top ]^\top)    \tilde{\bbx}_k.
  \end{aligned}
\end{equation*}
By substituting the spectral embedding $\tilde{\bbx}_k = [\tilde{\bbx}_{k,\rm{H}}^\top,\tilde{\bbx}_{k,\rm{G}}^\top,\tilde{\bbx}_{k,\rm{C}}^\top ]^\top $, we have 
\begin{equation*}
  \begin{aligned}
    \bbL_{k,\rmd}\bbx_k & = \bbU_k \diag([\mathbf{0} \,\, \blambda_{k,\rm{G}}^\top \,\, \mathbf{0}]^\top ) [\tilde{\bbx}_{k,\rm{H}}^\top,\tilde{\bbx}_{k,\rm{G}}^\top,\tilde{\bbx}_{k,\rm{C}}^\top ]^\top = \bbU_k \diag(\blambda_{k,\rm{G}}) \tilde{\bbx}_{k,\rm{G}}  = \bbU_{k,\rm{G}} (\blambda_{k,\rm{G}} \odot \tilde{\bbx}_{k,\rm{G}}),  \\
    \bbL_{k,\rmu} \bbx_k & = \bbU_k \diag([\mathbf{0} \,\, \mathbf{0}\,\, \blambda_{k,\rm{C}}^\top ]^\top)[\tilde{\bbx}_{k,\rm{H}}^\top,\tilde{\bbx}_{k,\rm{G}}^\top,\tilde{\bbx}_{k,\rm{C}}^\top ]^\top = \bbU_k \diag(\blambda_{k,\rm{C}}) \tilde{\bbx}_{k,\rm{C}}  = \bbU_{k,\rm{C}} (\blambda_{k,\rm{C}} \odot \tilde{\bbx}_{k,\rm{C}}),
  \end{aligned}
\end{equation*}
where $\odot$ is the Hadamard (elementwise) product.  Combined with  \cref{prop:sft}, the proof is completed. 

\end{proof}

\subsection{Effect of Nonlinearity}\label{app:information_spill_nonlinear}
We use a simple example to illustrate the information spillage effect of a nonlinearity $\sigma(\cdot)$. \cref{fig:information_spill} shows the SFT embeddings of a gradient flow in \cref{fig:gradient_flow_illustration} which contains only nonzero gradient embedding $\tilde{\bbx}_{\rm{G}}$ at the gradient frequencies shown in \cref{fig:sft_illustration}, and harmonic and curl embeddings are zeros at zero and curl frequencies. We pass this flow through a nonlinearity $\rm{tanh}(\cdot)$. We then see that at both zero and curl frequencies contain nonzero embeddings, i.e., the information spillage effect. This allows the information projected to an edge flow from nodes (contained in the gradient embedding) to be propagated to triangles, which is the spectral perspective of the extended inter-simplicial locality.

\begin{figure}[t!]
  \centering
  \includegraphics[width=\linewidth]{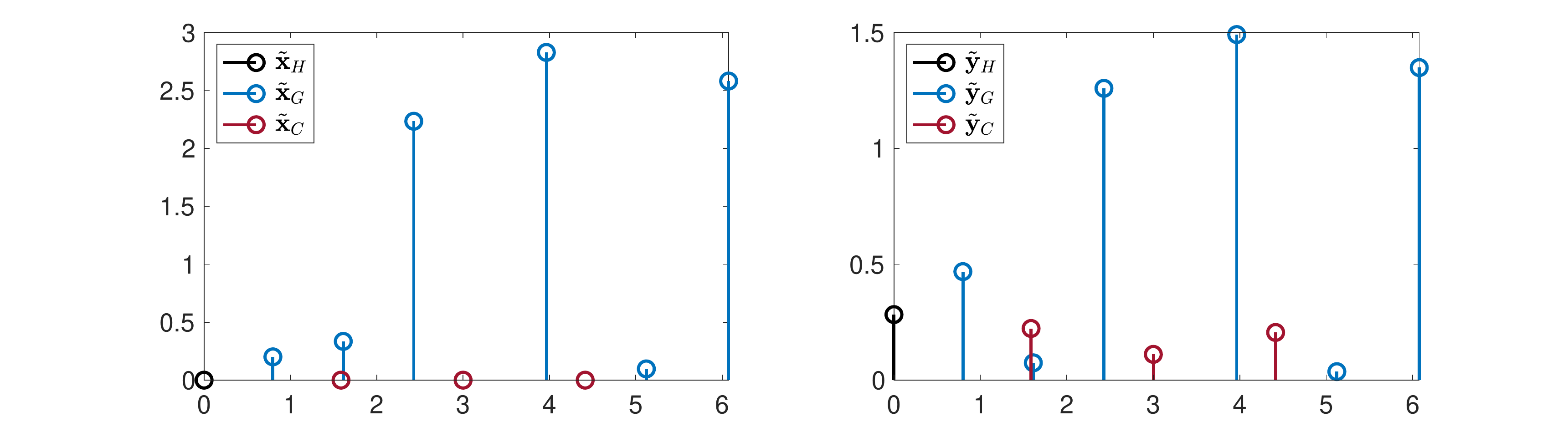}
  \caption{The information spillage effect of a nonlinearity $\sigma(\cdot)$. \emph{(Left)}: the SFT embeddings of the gradient flow ${\bbx}_{\rm{G}}$ in \cref{fig:gradient_flow_illustration}. \emph{(Right)}: the SFT embeddings of $\bby={\rm{tanh}}(\bbx_{\rm{G}})$. We see that information in a gradient flow after passed by a nonlinearity contains information in the harmonic and curl frequencies.}
  \label{fig:information_spill}
\end{figure}

\section{Stability Analysis} \label{app:stability_analysis}
In this section, we discuss the details in  \cref{sec:stability_analysis} to serve the stability of SCCNNs in  \cref{thm:stability}. 

\subsection{SCCNNs in Weighted SCs}\label{app:weighted sccnn}
A weighted SC can be defined through specifying the weights of simplices. We give the definition of a commonly used weighted SC with weighted Hodge Laplacians in \citet{grady2010discrete,horak2013spectra}. 

\begin{definition}[Weighted SC and Hodge Laplacians]\label{def:weighted_laplacian}
  In an oriented and weighted SC, we have diagonal weighting matrices $\bbM_k$ with $[\bbM]_{ii}$ measuring the weight of $i$th $k$-simplex. A weighted $k$th Hodge Laplacian is given by 
  \begin{equation}\label{eq.weighted_laplacian}
    \bbL_k = \bbL_{k,\rmd} + \bbL_{k,\rmu} = \bbM_k \bbB_k^\top \bbM_{k-1}^{-1} \bbB_k +  \bbB_{k+1} \bbM_{k+1} \bbB_{k+1}^\top \bbM_k^{-1}.
  \end{equation}
  where $\bbL_{k,\rmd}$ and $\bbL_{k,\rmu}$ are the weighted lower and upper Laplacians. A symmetric version follows $\bbL_k^s = \bbM_k^{-1/2}\bbL_k\bbM_k^{1/2}$, and likewise, we have $\bbL_{k,\rmd}^s = \bbM_k^{1/2} \bbB_k^\top \bbM_{k-1}^{-1} \bbB_k \bbM_k^{1/2} $ and $\bbL_{k,\rmu}^s = \bbM_k^{-1/2} \bbB_{k+1} \bbM_{k+1} \bbB_{k+1}^\top \bbM_k^{-1/2}$. 
\end{definition}

\textbf{SCCNNs in weighted SC.} The SCCNN layer defined in a weighted SC is of form 
\begin{equation}\label{eq.weighted_sccnn}
  \bbx_k^{l} = \sigma(\bbH_{k,\rmd}^{l} \bbR_{k,\rmd} \bbx_{k-1}^{l-1} + \bbH_{k}^{l}\bbx_k^{l-1} + \bbH_{k,\rmu}^{l} \bbR_{k,\rmu} \bbx_{k+1}^{l-1} )
\end{equation} 
where the three SCFs are defined based on the weighted Laplacians \eqref{eq.weighted_laplacian}, and the lower and upper contributions $\bbx_{k,\rmd}^l$ and $\bbx_{k,\rmu}^l$ are obtained via projection matrices $\bbR_{k,\rmd}\in\setR^{N_k\times N_{k-1}}$ and $\bbR_{k,\rmu}\in\setR^{N_k\times N_{k+1}}$, instead of $\bbB_k^\top$ and $\bbB_{k+1}$.
Projection operator $\bbR_{k,\rmd}$ can be defined via the incidence matrix $\bbB_k^\top$ and weight matrices $\bbM_{k-1}$ and $\bbM_k$. For example, \citet{bunch2020simplicial} considered $\bbR_{1,\rmd} = \bbM_1\bbB_1^\top\bbM_0^{-1}$ and $\bbR_{1,\rmu}=\bbB_2\bbM_2$. 

Although we focus on the unweighted SC to emphasize the mechanism of the SCCNN, the discussions on localities [cf. \cref{sec:sccnns}] and symmetries [cf. \cref{sec:sc_symmetry}] can be extended to a weighted SC, as well as the spectral analysis [cf. \cref{sec:spectral_analysis}] based on a weighted Hodge decomposition 
\begin{equation}
  \ccalX_k = \im(\bbM_k^{1/2}\bbB_k^\top) \oplus \ker(\bbL_k^s) \oplus \im(\bbM_k^{-1/2}\bbB_{k+1}). 
\end{equation}
We refer to \citet{schaub2020random} details. Moreover, the SFT can be likewise defined with the basis given by the eigenvectors of $\bbL_k^s$.   \cref{prop:sft} can be generalized based on the weighted Hodge decomposition and the eigenspace of $\bbL_k^s$. Lastly, the eigenvalues of $\bbL_k^s$ carry the meaning of simplicial frequency, which are the same as those of $\bbL_k$ since they admit a similarity transformation. 

\subsection{Integral Lipschitz SCFs}\label{app:perturbation model}

\begin{definition}[Equivalent Definition of Integral Lipschitz SCF]\label{def.integral_lip_def_2}
  The integral Lipschitz SCF in \cref{def.integral_lip_def_1} is equivalent to that there exist constants $C_{k,\rmd}, C_{k,\rmu}>0$ such that, for $\lambda_1,\lambda_2\geq 0$
  \begin{equation} \label{eq.integral_lip_def_2}
    \begin{aligned}
      |\tilde{h}_{k,\rm{G}}(\lambda_2) - \tilde{h}_{k,\rm{G}}(\lambda_1)|  \leq C_{k,\rmd} 2{|\lambda_2-\lambda_1|}/{|\lambda_2+\lambda_1|}, \quad 
      |\tilde{h}_{k,\rm{C}}(\lambda_2) - \tilde{h}_{k,\rm{C}}(\lambda_1)|  \leq C_{k,\rmu} 2{|\lambda_2-\lambda_1|}/{|\lambda_2+\lambda_1|}.
    \end{aligned}
  \end{equation}
\end{definition}

\begin{figure*}[t!]
  \centering
  \begin{subfigure}{0.45\linewidth}
    \includegraphics[width=\linewidth]{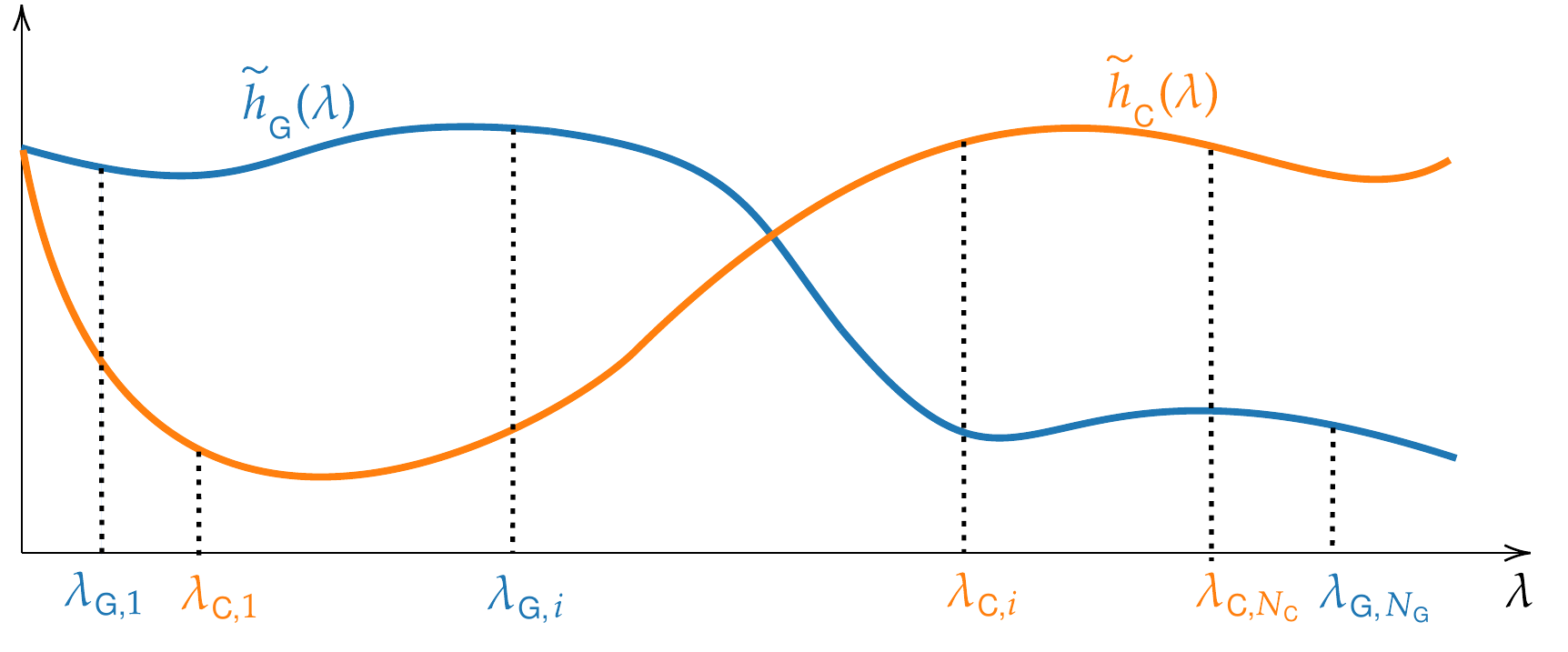}
    \caption{$\bbH$ is an instantiation of functions $\tilde{h}_{\rm{G}}(\lambda)$ and $\tilde{h}_{\rm{C}}(\lambda)$}
    \label{fig:il_illustration1}
  \end{subfigure}
  \begin{subfigure}{0.45\linewidth}
    \includegraphics[width=\linewidth]{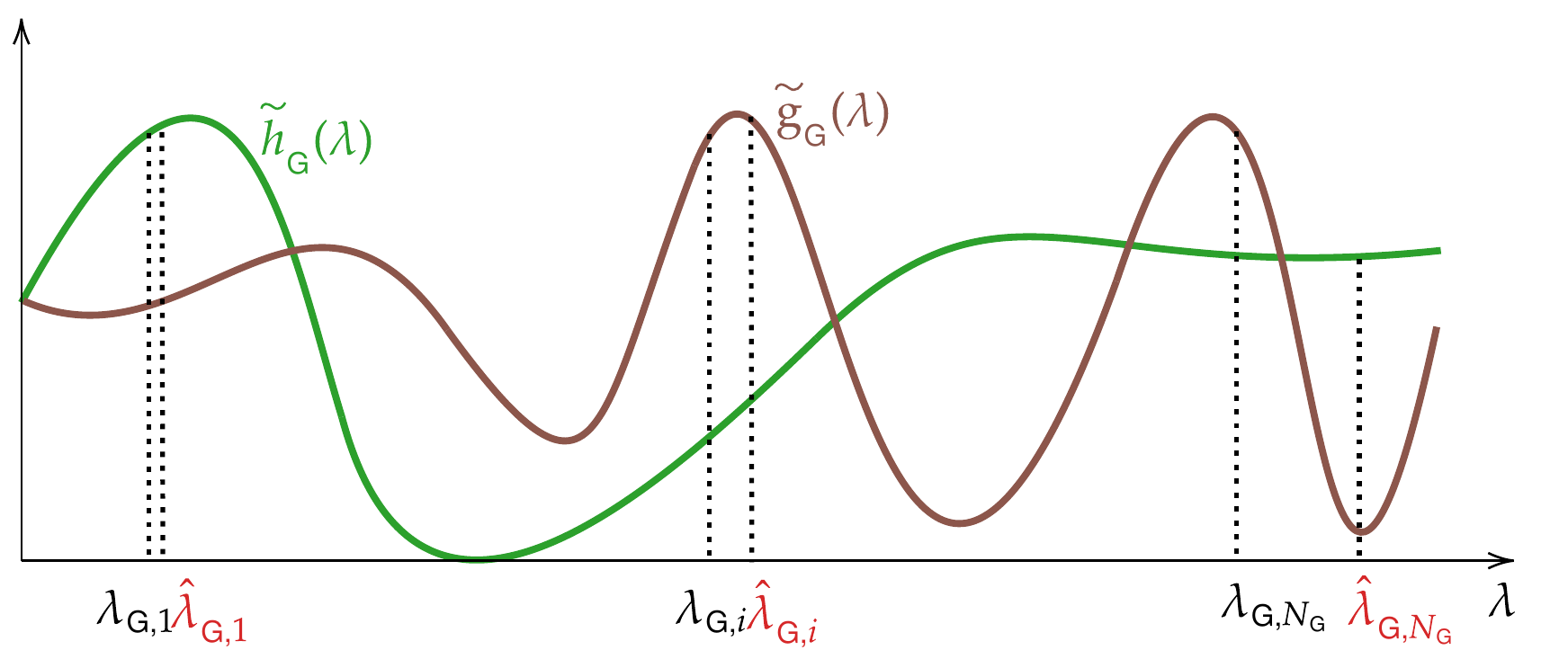}
    \caption{$\tilde{h}_{k,\rm{G}}(\lambda)$ and $\tilde{g}_{k,\rm{C}}(\lambda)$ with different integral Lipschitz}
    \label{fig:il_illustration2}
  \end{subfigure}
  \caption{(a) Given the continuous functions $\tilde{h}_{\rm{G}}(\lambda)$ and $\tilde{h}_{\rm{C}}(\lambda)$, the gradient frequency response $\tilde{\bbh}_{\rm{G}}(\blambda)$ of $\bbH_k$ is an instantiation of $\tilde{h}_{\rm{G}}(\lambda)$ at the discrete gradient frequencies $\blambda=[\lambda_{{\rm{G}},1},\dots,\lambda_{{\rm{G}},N_{\rm{G}}}]$ given by the SC, and likewise for the curl frequency response $\tilde{\bbh}_{\rm{C}}(\blambda)$. (b) Gradient frequency response $\tilde{h}_{\rm{G}}(\lambda)$ has a better integral Lipschitz property than $\tilde{g}_{\rm{G}}(\lambda)$ since the former has a smaller integral Lipschitz constant such that it has smaller variability at larger simplicial frequencies. This results in a better stability of $\tilde{h}_{\rm{G}}(\lambda)$ since $\tilde{h}_{\rm{G}}(\lambda)$ is closer to $\tilde{h}_{\rm{G}}(\hat{\lambda})$ for a relatively perturbed SC.}
\end{figure*}

  \cref{fig:il_illustration1} shows that an SCF $\bbH_k$ is an instantiation of the continuous gradient and curl frequency response functions $\tilde{h}_{k,\rm{G}}(\lambda)$ and $\tilde{h}_{k,\rm{C}}(\lambda)$ where the frequency responses of $\bbH_k$ are instantiated at the discrete simplicial frequencies determined by the SC.   \cref{fig:il_illustration2} shows two gradient frequency response functions with different integral Lipschitz properties, where $\hat{\lambda}_{{\rm{G}},i}=\lambda_{{\rm{G}},i}(1+\epsilon)$ with an error $\epsilon$. For a fixed $\epsilon$, small eigenvalues are less perturbed than large eigenvalues. 
  For an SCF defined by $\tilde{h}_{\rm{G}}(\lambda)$, its frequency response at both small and large eigenvalues does not vary much when eigenvalues are relatively shifted by a small $\epsilon$. However, for an SCF defined by $\tilde{g}_{\rm{G}}(\lambda)$, its frequency response at large eigenvalues changes substantially from $\tilde{g}_{\rm{G}}(\lambda_{{\rm{G}},N_{\rm{G}}})$ to $\tilde{g}_{\rm{G}}(\hat{\lambda}_{{\rm{G},N_{\rm{G}}}})$ as it has a weaker integral Lipschitz property.


\subsection{Proof of Stability of SCCNNs in   \cref{thm:stability}} \label{app:proof_stability}

For an SCCNN in \eqref{eq.weighted_sccnn} in a weighted SC $\ccalS$, we consider its perturbed version in a perturbed SC $\widehat{\ccalS}$ at layer $l$, given by
\begin{equation}
  \hat{\bbx}_k^{l} = \sigma(\widehat{\bbH}_{k,\rmd}^{l} \widehat{\bbR}_{k,\rmd} \hat{\bbx}_{k-1}^{l-1} + \widehat{\bbH}_{k}^{l}\hat{\bbx}_k^{l-1} + \widehat{\bbH}_{k,\rmu}^{l} \widehat{\bbR}_{k,\rmu} \hat{\bbx}_{k+1}^{l-1} )
\end{equation}
which is defined based on perturbed Laplacians with the same set of filter coefficients, and the perturbed projection operators following \cref{def:relative_perturbation}. Given the initial input $\bbx_k^0$ for $k=[K]$, our goal is to upper bound the Euclidean distance between the outputs $\bbx_k^l$ and $\hat{\bbx}_k^l$ for $l=1,\dots,L$,
\begin{equation} \label{proof.eq.distance_xkl}
  \lVert\hat{\bbx}_k^{l}-\bbx_k^l \rVert_2 = \lVert \sigma(\widehat{\bbH}_{k,\rmd}^{l} \widehat{\bbR}_{k,\rmd} \hat{\bbx}_{k-1}^{l-1} - \bbH_{k,\rmd}^{l} \bbR_{k,\rmd} \bbx_{k-1}^{l-1} + \widehat{\bbH}_{k}^{l}\hat{\bbx}_k^{l-1} - \bbH_{k}^{l}\bbx_k^{l-1} + \widehat{\bbH}_{k,\rmu}^{l} \widehat{\bbR}_{k,\rmu} \hat{\bbx}_{k+1}^{l-1} - \bbH_{k,\rmu}^{l} \bbR_{k,\rmu} \bbx_{k+1}^{l-1}  ) \rVert_2.
\end{equation}
We proceed the proof in two steps: first, we analyze the operator norm $\lVert\widehat{\bbH}_k^l-\bbH_k^l\rVert_2$ of an SCF $\bbH_k^l$ and its perturbed version $\widehat{\bbH}_k^l$; then we look for the bound of the output distance for a general $L$-layer SCCNN. 
To ease notations, we omit the subscript such that $\lVert \bbA \rVert = \max_{\lVert\bbx\rVert_2=1} \lVert \bbA \bbx\rVert_2$ is the operator norm (spectral radius) of a matrix $\bbA$, and $\lVert\bbx\rVert$ is the Euclidean norm of a vector $\bbx$.

In the first step we omit the indices $k$ and $l$ for simplicity since they hold for general $k$ and  $l$. We first give a useful lemma. 
\begin{lemma}\label{pf.lemma1}
  Given the $i$th eigenvector $\bbu_i$ of $\bbL = \bbU\bLambda \bbU^\top$, for lower and upper perturbations $\bbE_\rmd$ and $\bbE_\rmu$ in \cref{def:relative_perturbation}, we have  
  \begin{equation}
    \begin{aligned}
      \bbE_{\rmd} \bbu_i = q_{\rmd i}\bbu_i + \bbE_1 \bbu_i, \quad 
      \bbE_{\rmu} \bbu_i = q_{\rmu i}\bbu_i + \bbE_2 \bbu_i 
    \end{aligned}
  \end{equation}
  with eigendecompositions $\bbE_\rmd = \bbV_\rmd \bbQ_\rmd\bbV_\rmd^\top$ and $\bbE_\rmu = \bbV_{\rmu}\bbQ_\rmu\bbV_\rmu^\top$ where $\bbV_\rmd$, $\bbV_\rmu$ collect the eigenvectors and $\bbQ_\rmd$, $\bbQ_\rmu$ the eigenvalues. It holds that  $\lVert\bbE_1 \lVert \leq \epsilon_{\rmd} \delta_{\rmd}$ and $\lVert\bbE_2\lVert \leq \epsilon_{\rmu} \delta_{\rmu}$, with $\delta_{\rmd} = (\lVert\bbV_\rmd -\bbU\lVert+1)^2-1$ and $\delta_{\rmu} = (\lVert\bbV_\rmu - \bbU \lVert+1)^2-1$ measuring the eigenvector misalignments. 
\end{lemma}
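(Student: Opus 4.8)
The plan is to split each error matrix into the part that is already diagonal in the eigenbasis $\bbU$ of $\bbL$ plus a residual whose size is controlled purely by how far the error's eigenbasis lies from $\bbU$. Since $\bbE_{\rmd}$ is symmetric it has an orthogonal eigendecomposition $\bbE_{\rmd}=\bbV_{\rmd}\bbQ_{\rmd}\bbV_{\rmd}^{\top}$ with $\bbV_{\rmd}$ orthogonal and $\bbQ_{\rmd}=\diag(q_{\rmd,1},\dots,q_{\rmd,N_k})$, where we label the eigenvalues consistently with the columns of $\bbU$. Define the residual $\bbE_{1}:=\bbV_{\rmd}\bbQ_{\rmd}\bbV_{\rmd}^{\top}-\bbU\bbQ_{\rmd}\bbU^{\top}$, so that $\bbE_{\rmd}=\bbU\bbQ_{\rmd}\bbU^{\top}+\bbE_{1}$. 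Because $\bbU$ is orthogonal, $\bbU^{\top}\bbu_{i}=\mathbf{e}_{i}$, hence $\bbU\bbQ_{\rmd}\bbU^{\top}\bbu_{i}=q_{\rmd,i}\bbu_{i}$, which gives exactly $\bbE_{\rmd}\bbu_{i}=q_{\rmd,i}\bbu_{i}+\bbE_{1}\bbu_{i}$; the identical construction with $\bbE_{2}:=\bbV_{\rmu}\bbQ_{\rmu}\bbV_{\rmu}^{\top}-\bbU\bbQ_{\rmu}\bbU^{\top}$ yields $\bbE_{\rmu}\bbu_{i}=q_{\rmu,i}\bbu_{i}+\bbE_{2}\bbu_{i}$.

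It then remains to bound $\lVert\bbE_{1}\rVert$. Writing $\bbV_{\rmd}=\bbU+(\bbV_{\rmd}-\bbU)$ and expanding $\bbV_{\rmd}\bbQ_{\rmd}\bbV_{\rmd}^{\top}$, the term $\bbU\bbQ_{\rmd}\bbU^{\top}$ cancels and
\begin{equation*}
  \bbE_{1}=\bbU\bbQ_{\rmd}(\bbV_{\rmd}-\bbU)^{\top}+(\bbV_{\rmd}-\bbU)\bbQ_{\rmd}\bbU^{\top}+(\bbV_{\rmd}-\bbU)\bbQ_{\rmd}(\bbV_{\rmd}-\bbU)^{\top}.
\end{equation*}
Using submultiplicativity and the triangle inequality together with $\lVert\bbU\rVert=1$ and $\lVert\bbQ_{\rmd}\rVert=\rho(\bbE_{\rmd})\le\epsilon_{\rmd}$ (\cref{def:relative_perturbation}),
\begin{equation*}
  \lVert\bbE_{1}\rVert\le\epsilon_{\rmd}\bigl(2\lVert\bbV_{\rmd}-\bbU\rVert+\lVert\bbV_{\rmd}-\bbU\rVert^{2}\bigr)=\epsilon_{\rmd}\bigl((\lVert\bbV_{\rmd}-\bbU\rVert+1)^{2}-1\bigr)=\epsilon_{\rmd}\delta_{\rmd},
\end{equation*}
and the same argument on the upper side gives $\lVert\bbE_{2}\rVert\le\epsilon_{\rmu}\delta_{\rmu}$.

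The computation is routine linear algebra; the only genuinely load-bearing steps are invoking symmetry of $\bbE_{\rmd},\bbE_{\rmu}$ (so the eigenbases $\bbV_{\rmd},\bbV_{\rmu}$ can be taken orthogonal and $\lVert\bbQ_{\rmd}\rVert=\lVert\bbE_{\rmd}\rVert$), and choosing to retain the quadratic cross term $(\bbV_{\rmd}-\bbU)\bbQ_{\rmd}(\bbV_{\rmd}-\bbU)^{\top}$ rather than bounding it trivially --- this is precisely what upgrades the naive estimate $2\epsilon_{\rmd}\lVert\bbV_{\rmd}-\bbU\rVert$ into the clean form $\epsilon_{\rmd}\delta_{\rmd}$ with $\delta_{\rmd}=(\lVert\bbV_{\rmd}-\bbU\rVert+1)^{2}-1$. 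A minor subtlety, not an obstacle, is the ordering freedom in $\bbQ_{\rmd}$: only $\lVert\bbQ_{\rmd}\rVert$ enters the bound, so any consistent labelling works, and one is free to pick the eigenbasis of $\bbE_{\rmd}$ that minimizes $\lVert\bbV_{\rmd}-\bbU\rVert$, making $\delta_{\rmd}$ a bona fide measure of eigenvector misalignment.
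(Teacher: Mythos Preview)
Your proposal is correct and follows essentially the same approach as the paper: define $\bbE_1=\bbE_\rmd-\bbU\bbQ_\rmd\bbU^\top$, expand it via $\bbV_\rmd=\bbU+(\bbV_\rmd-\bbU)$ into the three cross terms, and bound each with the triangle inequality and $\lVert\bbQ_\rmd\rVert\le\epsilon_\rmd$. Your additional remarks on the symmetry assumption and the labelling freedom in $\bbQ_\rmd$ are accurate and not in the paper's version, but they do not change the argument.
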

\begin{proof}
  We first prove that $\bbE_{\rmd} \bbu_i = q_{\rmd i}\bbu_i + \bbE_1 \bbu_i$. The perturbation matrix on the lower Laplacian can be written as $\bbE_{\rmd}=\bbE_{\rmd}^{'}+ \bbE_1$ with $\bbE_{\rmd}^\prime = \bbU\bbQ_\rmd\bbU^\top$ and $\bbE_1 = (\bbV_{\rmd}-\bbU)\bbQ_\rmd(\bbV_{\rmd}-\bbU)^\top + \bbU\bbQ_\rmd(\bbV_{\rmd}-\bbU)^\top + (\bbV_{\rmd}-\bbU)\bbQ_\rmd\bbU^\top$. For the $i$th eigenvector $\bbu_i$, we have that 
  \begin{equation}
    \bbE_{\rmd} \bbu_i = \bbE_{\rmd}^{'} \bbu_i + \bbE_1 \bbu_i = q_{\rmd i} \bbu_i + \bbE_1 \bbu_i
  \end{equation}
  where the second equality follows from $\bbE_{\rmd}^\prime \bbu_i = q_{\rmd i}\bbu_i$. Since $\lVert\bbE_{\rmd}\lVert\leq \epsilon_{\rmd}$, it follows that $\lVert\bbQ_\rmd\lVert\leq\epsilon_{\rmd}$. Then, applying the triangle inequality, we have that 
  \begin{equation}
    \begin{aligned}
      \lVert\bbE_1\lVert  \leq & \lVert (\bbV_{\rmd}-\bbU)\bbQ_\rmd(\bbV_{\rmd}-\bbU)^\top \lVert  + \lVert \bbU\bbQ_\rmd(\bbV_{\rmd}-\bbU)^\top \lVert
      + \lVert (\bbV_{\rmd}-\bbU)\bbQ_\rmd\bbU \lVert \\ 
      \leq & \lVert \bbV_{\rmd} - \bbU \lVert^2 \lVert\bbQ_\rmd\lVert + 2\lVert\bbV_{\rmd}-\bbU\lVert \lVert \bbQ_\rmd\lVert \lVert\bbU\lVert \leq \epsilon_{\rmd}  \lVert \bbV_{\rmd} - \bbU \lVert^2 + 2\epsilon_{\rmd} \lVert\bbV_{\rmd}-\bbU\lVert \\
      = & \epsilon_{\rmd} ( (\lVert\bbV_{\rmd}-\bbU\lVert+1)^2-1 ) = \epsilon_{\rmd} \delta_{\rmd},
    \end{aligned}
  \end{equation}
  which completes the proof for the lower perturbation matrix. Likewise, we can prove  for $\bbE_{\rmu}\bbu_i$.
\end{proof}

\subsubsection{Step I: Stability of the SCF $\bbH_k^l$}\label{app:proof_stability_step_1}

\begin{proof}
  
\textbf{1. Low-order approximation of $\widehat{\bbH}-\bbH$.}
Given an SCF $\bbH = \sum_{t=0}^{T_\rmd}w_{\rmd,t}\bbLd^t + \sum_{t=0}^{T_\rmu} w_{\rmu,t} \bbLu^t$, we denote its perturbed version by 
$
  \widehat{\bbH} = \sum_{t=0}^{T_\rmd}w_{\rmd,t}\hatLd^t + \sum_{t=0}^{T_\rmu} w_{\rmu,t} \hatLu^t,
$
where the filter coefficients are the same.
The difference between $\bbH$ and $\widehat{\bbH}$ can be expressed as 
\begin{equation} \label{pf.eq.diff_filters}
    \widehat{\bbH} - \bbH = \sum_{t=0}^{T_\rmd}w_{\rmd,t}(\hatLd^t - \bbLd^t) + \sum_{t=0}^{T_\rmu} w_{\rmu,t} (\hatLu^t-\bbLu^t),  
\end{equation}
in which we can compute the first-order Taylor expansion of $\hatLd^t$ as  
\begin{equation} \label{pf.eq.hatldk}
    \hatLd^t  = (\bbLd + \bbEd\bbLd + \bbLd \bbEd)^t = \bbLd^t + \bbD_{{\rmd},t} + \bbC_{\rmd}
\end{equation}
with $\bbD_{{\rmd},t}:= \sum_{r=0}^{t-1} (\bbLd^r\bbEd\bbLd^{t-r} + \bbLd^{r+1}\bbEd\bbLd^{t-r-1})$ parameterized by $t$ and $\bbC_{\rmd}$ following $\lVert \bbC_{\rmd} \lVert \leq \sum_{r=2}^t \binom{t}{r} \lVert \bbEd \bbLd + \bbLd \bbEd \lVert^r \lVert\bbLd\lVert^{t-r} $. Likewise, we can expand $\hatLu^t$ as 
\begin{equation} \label{pf.eq.hatluk}
    \hatLu^t  = (\bbLu + \bbEu\bbLd + \bbLd \bbu)^t = \bbLu^t + \bbD_{{\rmu},t} + \bbC_{\rmu}
\end{equation}
with $\bbD_{{\rmu},t}:= \sum_{r=0}^{t-1} (\bbLu^r\bbEu\bbLu^{t-r} + \bbLu^{r+1}\bbEu\bbLu^{t-r-1})$ parameterized by $t$ and $\bbC_{\rmu}$ following $\lVert\bbC_{\rmu} \lVert \leq \sum_{r=2}^t \binom{t}{r} \lVert \bbEu \bbLu + \bbLu \bbEu \lVert^r \lVert \bbLu \lVert^{t-r}$. Then, by substituting \eqref{pf.eq.hatldk} and \eqref{pf.eq.hatluk} into \eqref{pf.eq.diff_filters}, we have 
\begin{equation} \label{pf.eq.approx_diff_filters}
    \widehat{\bbH} - \bbH = \sum_{t=0}^{T_\rmd}w_{\rmd,t} \bbD_{{\rmd},t} + \sum_{t=0}^{T_\rmu}w_{\rmu,t} \bbD_{{\rmu},t} + \bbF_{\rmd} + \bbF_{\rmu}
\end{equation}
with negligible terms $\lVert \bbF_{\rmd} \lVert = \ccalO(\lVert \bbEd \lVert^2)$ and $\lVert \bbF_{\rmu} \lVert = \ccalO(\lVert \bbEu \lVert^2)$ because perturbations are small and the coefficients of higher-order power terms are the derivatives of analytic functions $\tilde{h}_{\rm{G}}(\lambda)$ and $\tilde{h}_{\rm{C}}(\lambda)$, which are bounded [cf. \cref{def.integral_lip_def_1}]. 

\textbf{2. Spectrum of $(\widehat{\bbH}-\bbH)\bbx$.}
Consider a simplicial signal $\bbx$ with an SFT $\tilde{\bbx} = \bbU^\top\bbx = [\tilde{x}_{1},\dots,\tilde{x}_{N}]$, thus, $\bbx = \sum_{i=1}^{N}\tilde{x}_i\bbu_i$. Then, we study the effect of the difference of the SCFs on a simplicial signal from the spectral perspective via
\begin{equation} \label{pf.eq.hat-h_x}
  \begin{aligned}
    (\widehat{\bbH} - \bbH)\bbx =  \sum_{i=1}^{N}\tilde{x}_i\sum_{t=0}^{T_\rmd}w_{\rmd,t}\bbD_{{\rmd},t}^t\bbu_i  + \sum_{i=1}^{N}\tilde{x}_i\sum_{t=0}^{T_\rmd}w_{\rmu,t}\bbD_{{\rmu},t}^t\bbu_i + \bbF_{\rmd} \bbx +  \bbF_{\rmu} \bbx
  \end{aligned}
\end{equation}  
where we have 
\begin{equation}
  \bbD_{{\rmd},t}^t\bbu_i = \sum_{r=0}^{t-1} (\bbLd^r\bbEd\bbLd^{t-r} + \bbLd^{r+1}\bbEd\bbLd^{t-r-1}) \bbu_i, \text{ and } 
  \bbD_{{\rmu},t}^t\bbu_i = \sum_{r=0}^{t-1} (\bbLu^r\bbEu\bbLu^{t-r} + \bbLu^{r+1}\bbEu\bbLu^{t-r-1}) \bbu_i.
\end{equation}
Since the lower and upper Laplacians admit the eigendecompositions [cf. \eqref{eq.decomposition_Ld_Lu}]
\begin{equation}
  \bbLd\bbu_i = \lambda_{\rmd i}\bbu_i, \quad \bbLu\bbu_i = \lambda_{\rmu i} \bbu_i,
\end{equation}
we can express the terms in \eqref{pf.eq.hat-h_x} as 
\begin{equation} \label{pf.eq.ddkd_term1}
  \begin{aligned}
    \bbLd^r\bbEd\bbLd^{t-r}\bbu_i  = \bbLd^r\bbEd\lambda_{\rmd i}^{t-r}\bbu_i = \lambda_{\rmd i}^{t-r} \bbLd^r(q_{\rmd i}\bbu_i + \bbE_1\bbu_i) = q_{\rmd i} \lambda_{\rmd i}^t  \bbu_i +  \lambda_{\rmd _i}^{t-r}\bbLd^r\bbE_1\bbu_i, 
  \end{aligned}
\end{equation}
where the second equality holds from \cref{pf.lemma1}. Thus, we have 
\begin{equation} \label{pf.eq.ddkd_term2}
  \bbLd^{r+1}\bbEd\bbLd^{t-r-1}\bbu_i = q_{\rmd i} \lambda_{\rmd i}^t\bbu_i + \lambda_{\rmd i}^{t-r-1}\bbLd^{r+1}\bbE_1\bbu_i.
\end{equation}
With the results in \eqref{pf.eq.ddkd_term1} and \eqref{pf.eq.ddkd_term2}, we can write the first term in \eqref{pf.eq.hat-h_x} as 
\begin{equation} \label{pf.eq.hat-h_x_term1}
  \begin{aligned}
     \sum_{i=1}^{N}\tilde{x}_i\sum_{t=0}^{T_\rmd}w_{\rmd,t}\bbD_{{\rmd},t}^t\bbu_i =  
     \underbrace{\sum_{i=1}^{N}\tilde{x}_i\sum_{t=0}^{T_\rmd}w_{\rmd,t} \sum_{r=0}^{t-1} 2q_{\rmd i} \lambda_{\rmd i}^t  \bbu_i}_{\textrm{term 1}}
     + 
     \underbrace{\sum_{i=1}^{N}\tilde{x}_i\sum_{t=0}^{T_\rmd}w_{\rmd,t} \sum_{r=0}^{t-1} (\lambda_{\rmd i}^{t-r}\bbLd^r\bbE_1\bbu_i + \lambda_{\rmd i}^{t-r-1}\bbLd^{r+1}\bbE_1\bbu_i)}_{\textrm{term 2}}.
  \end{aligned}
\end{equation}
Term 1 can be further expanded as 
\begin{equation} \label{pf.eq.hat-h_x_term1_term1}
  \begin{aligned}
    \textrm{term 1} = 2 \sum_{i=1}^{N} \tilde{x}_i q_{\rmd i} \sum_{t=0}^{T_\rmd} t w_{\rmd,t} \lambda_{\rmd i}^t \bbu_i 
     = 2\sum_{i=1}^{N} \tilde{x}_i q_{\rmd i} \lambda_{\rmd i} \tilde{h}_{\rm{G}}^\prime(\lambda_{\rmd i})\bbu_i 
  \end{aligned}
\end{equation}
where we used the fact that $\sum_{t=0}^{T_\rmd}tw_{\rmd,t}\lambda_{\rmd i}^t = \lambda_{\rmd i}\tilde{h}_{\rm{G}}^\prime (\lambda_{\rmd i})$. Using $\bbLd=\bbU\bLambda_{\rmd}\bbU^\top$ we can write term 2 in \eqref{pf.eq.hat-h_x_term1} as 
\begin{equation} \label{pf.eq.hat-h_x_term1_term2}
  \begin{aligned}
    \textrm{term 2} = \sum_{i=1}^{N}\tilde{x}_i \bbU \diag(\bbg_{\rmd i}) \bbU^\top \bbE_1 \bbu_i
  \end{aligned}
\end{equation}
where $\bbg_{\rmd i}\in\setR^{N}$ has the $j$th entry
\begin{equation} \label{pf.eq.hat-h_x_term1_term2_detail}
  \begin{aligned}
    [\bbg_{\rmd i}]_j = \sum_{t=0}^{T_\rmd} w_{\rmd,t}  \sum_{r=0}^{t-1} \bigg( \lambda_{\rmd i}^{t-r} [\bLambda_\rmd]^{r}_{j} + \lambda_{\rmd i}^{t-r-1} [\bLambda_{\rmd}]_j^{r+1} \bigg) 
     = 
    \begin{cases}
      2 \lambda_{\rmd i} \tilde{h}_{\rm{G}}^\prime(\lambda_{\rmd i}) \quad & \text{ for } j=i, \\
      \frac{\lambda_{\rmd i}+\lambda_{\rmd j}}{\lambda_{\rmd i} - \lambda_{\rmd j}} (\tilde{h}_{\rm{G}}(\lambda_{\rmd i}) - \tilde{h}_{\rm{G}}(\lambda_{\rmd j})) & \text{ for } j\neq i.
    \end{cases}
  \end{aligned}
\end{equation}
Now, substituting \eqref{pf.eq.hat-h_x_term1_term1} and \eqref{pf.eq.hat-h_x_term1_term2} into \eqref{pf.eq.hat-h_x_term1}, we have 
\begin{equation} \label{pf.eq.hat-h_x_term1_detail}
  \begin{aligned}
    \sum_{i=1}^{N}\tilde{x}_i\sum_{t=0}^{T_\rmd}w_{\rmd,t}\bbD_{{\rmd},t}^t\bbu_i = 2\sum_{i=1}^{N} \tilde{x}_i q_{\rmd i} \lambda_{\rmd i} \tilde{h}_{\rm{G}}^\prime(\lambda_{\rmd i})\bbu_i + \sum_{i=1}^{N}\tilde{x}_i \bbU \diag(\bbg_{\rmd i}) \bbU^\top \bbE_1 \bbu_i.
  \end{aligned}
\end{equation}

By following the same steps as in \eqref{pf.eq.hat-h_x_term1}-\eqref{pf.eq.hat-h_x_term1_term2_detail}, we can express also the second term in \eqref{pf.eq.hat-h_x} as 
\begin{equation}
  \begin{aligned}
   \sum_{i=1}^{N}\tilde{x}_i\sum_{t=0}^{T_\rmd}w_{\rmu,t}\bbD_{{\rmu},t}^t\bbu_i = 2\sum_{i=1}^{N} \tilde{x}_i q_{\rmu i} \lambda_{\rmu i} \tilde{h}_{\rm{C}}^\prime(\lambda_{\rmu i})\bbu_i + \sum_{i=1}^{N}\tilde{x}_i \bbU \diag(\bbg_{\rmu i}) \bbU^\top \bbE_2 \bbu_i
  \end{aligned}
\end{equation}
where $\bbg_{\rmu i}\in\setR^{N}$ is defined as 
\begin{equation}
  \begin{aligned}
    [\bbg_{\rmu i}]_j  = \sum_{t=0}^{T_\rmd} w_{\rmu,t}  \sum_{r=0}^{t-1} \bigg( \lambda_{\rmu i}^{t-r} [\bLambda_\rmu]^{r}_{j} + \lambda_{\rmu i}^{t-r-1} [\bLambda_{\rmu}]_j^{r+1} \bigg) 
     = 
    \begin{cases}
      2 \lambda_{\rmu i} \tilde{h}_{\rm{C}}^\prime(\lambda_{\rmu i}) \quad & \text{ for } j=i, \\
      \frac{\lambda_{\rmu i}+\lambda_{\rmu j}}{\lambda_{\rmu i} - \lambda_{\rmu j}} (\tilde{h}_{\rm{C}}(\lambda_{\rmu i}) - \tilde{h}_{\rm{C}}(\lambda_{\rmu j})) & \text{ for } j\neq i.
    \end{cases}
  \end{aligned}
\end{equation}

\textbf{3. Bound of $\lVert (\widehat{\bbH} - \bbH)\bbx \rVert$.}
Now we are ready to bound $\lVert (\widehat{\bbH} - \bbH)\bbx \rVert$ based on triangle inequality. First, given the small perturbations $\lVert \bbEd \rVert \leq \epsilon_{\rmd}$ and $\lVert \bbEu \rVert \leq \epsilon_{\rmu}$, we have for the last two terms in \eqref{pf.eq.hat-h_x}
\begin{equation} \label{pf.eq.ineq_Fd_Fu}
  \lVert \bbF_\rmd \bbx \rVert \leq \ccalO(\epsilon_\rmd^2)\lVert\bbx\rVert,\text{ and }  \lVert \bbF_\rmu \bbx \rVert \leq \ccalO(\epsilon_\rmu^2) \lVert \bbx\rVert.
\end{equation}
Second, for the first term  $\lVert\sum_{i=1}^{N}\tilde{x}_i\sum_{t=0}^{T_\rmd}w_{\rmd,t}\bbD_{\rmd}^t\bbu_i\rVert$ in \eqref{pf.eq.hat-h_x}, 
we can bound its two terms in \eqref{pf.eq.hat-h_x_term1_term1} and \eqref{pf.eq.hat-h_x_term1_term2} as 
\begin{equation} \label{pf.eq.ineq_term1} 
  \begin{aligned}
    \bigg\lVert \sum_{i=1}^{N}\tilde{x}_i\sum_{t=0}^{T_\rmd}w_{\rmd,t}\bbD_{{\rmd},t}^t\bbu_i \bigg\rVert \leq \bigg\lVert 2\sum_{i=1}^{N} \tilde{x}_i q_{\rmd i} \lambda_{\rmd i} \tilde{h}_{\rm{G}}^\prime(\lambda_{\rmd i})\bbu_i \bigg\rVert + \bigg\lVert \sum_{i=1}^{N}\tilde{x}_i \bbU \diag(\bbg_{\rmd i}) \bbU^\top \bbE_1 \bbu_i \bigg\rVert.
  \end{aligned}
\end{equation}
For the first term on the RHS of \eqref{pf.eq.ineq_term1}, we can write 
\begin{equation} \label{pf.eq.ineq_term1_term1}
  \begin{aligned}
   \bigg\lVert 2\sum_{i=1}^{N} \tilde{x}_i q_{\rmd i} \lambda_{\rmd i} \tilde{h}_{\rm{G}}^\prime(\lambda_{\rmd i})\bbu_i \bigg\rVert^2 \leq 4  \sum_{i=1}^{N} |\tilde{x}_i|^2 |q_{\rmd i}|^2  | \lambda_{\rmd i}  \tilde{h}_{\rm{G}}^\prime(\lambda_{\rmd i})|^2  \leq 4 \epsilon_d^2  C_\rmd^2   \lVert \bbx \rVert^2,
  \end{aligned}
\end{equation}
which results from, first, $|q_{\rmd i}|\leq \epsilon_{\rmd} = \lVert\bbEd\rVert$ since $q_{\rmd i}$ is an eigenvalue of $\bbEd$; second, the integral Lipschitz property of the SCF $|\lambda \tilde{h}_{\rm{G}}^\prime(\lambda)|\leq C_{\rmd}$; and lastly, the fact that $\sum_{i=1}^{N} |\tilde{x}_i|^2 = \lVert\tilde{\bbx}\rVert^2 = \lVert\bbx\rVert^2$ and $\lVert\bbu_i\rVert^2=1$. We then have 
\begin{equation}
  \bigg\lVert 2\sum_{i=1}^{N} \tilde{x}_i q_{\rmd i} \lambda_{\rmd i} \tilde{h}_{\rm{G}}^\prime(\lambda_{\rmd i})\bbu_i \bigg\rVert \leq 2\epsilon_{\rmd}C_\rmd \lVert \bbx \rVert.
\end{equation}
For the second term in RHS of \eqref{pf.eq.ineq_term1}, we have 
\begin{equation}
  \begin{aligned}
    \bigg\lVert \sum_{i=1}^{N}\tilde{x}_i \bbU  \diag(\bbg_{\rmd i}) \bbU^\top \bbE_1 \bbu_i \bigg\rVert  \leq \sum_{i=1}^{N} |\tilde{x}_i| \lVert\bbU \diag(\bbg_{\rmd i})\bbU^\top \rVert \lVert\bbE_1 \rVert \lVert\bbu_i\rVert,
  \end{aligned}
\end{equation}
which stems from the triangle inequality. 
We further have $\lVert\bbU\diag(\bbg_{\rmd i})\bbU^\top\rVert = \lVert\diag(\bbg_{\rmd i})\rVert\leq 2C_\rmd$ resulting from $\lVert\bbU\rVert=1$ and the $C_\rmd$-integral Lipschitz of $\tilde{h}_{\rm{G}}(\lambda)$ [cf. \cref{def.integral_lip_def_1}]. Moreover, it follows that $\lVert\bbE_1\rVert\leq\epsilon_{\rmd}\delta_{\rmd}$ from   \cref{pf.lemma1}, which results in 
\begin{equation}\label{pf.eq.ineq_term1_term2}
  \bigg\lVert \sum_{i=1}^{N}\tilde{x}_i \bbU \diag(\bbg_{\rmd i}) \bbU^\top \bbE_1 \bbu_i \bigg\rVert \leq 2C_\rmd\epsilon_{\rmd}\delta_{\rmd}\sqrt{N} \lVert \bbx \rVert
\end{equation} 
where we use that $\sum_{i=1}^{N}|\tilde{x}_i| = \lVert\tilde{\bbx}\rVert_1\leq\sqrt{N}\lVert\tilde{\bbx}\rVert = \sqrt{N}\lVert\bbx\rVert$. 
By combining \eqref{pf.eq.ineq_term1_term1} and \eqref{pf.eq.ineq_term1_term2}, we have 
\begin{equation} \label{pf.eq.ineq_term1_final}
  \bigg\lVert \sum_{i=1}^{N}\tilde{x}_i\sum_{t=0}^{T_\rmd}w_{\rmd,t}\bbD_{{\rmd},t}^t\bbu_i \bigg\rVert \leq 2\epsilon_\rmd C_\rmd \lVert\bbx\rVert + 2C_\rmd\epsilon_{\rmd}\delta_{\rmd}\sqrt{N} \lVert \bbx \rVert.
\end{equation}
Analogously, we can show that 
\begin{equation} \label{pf.eq.ineq_term3_final}
  \bigg\lVert \sum_{i=1}^{N}\tilde{x}_i\sum_{t=0}^{T_\rmd}w_{\rmu,t}\bbD_{{\rmu},t}^t\bbu_i\bigg\rVert \leq 2\epsilon_\rmu C_\rmu\lVert\bbx\rVert+2C_\rmu\epsilon_\rmu\delta_\rmu\sqrt{N}\lVert\bbx\rVert.
\end{equation}
Now by combining \eqref{pf.eq.ineq_Fd_Fu}, \eqref{pf.eq.ineq_term1_final} and \eqref{pf.eq.ineq_term3_final}, we can bound $\lVert(\widehat{\bbH}  -\bbH)\bbx \rVert$ as 
\begin{equation}
  \begin{aligned}
    \lVert(\widehat{\bbH}  -\bbH)\bbx \rVert \leq  2\epsilon_\rmd C_\rmd \lVert\bbx\rVert + 2C_\rmd\epsilon_{\rmd}\delta_{\rmd}\sqrt{N} \lVert \bbx \rVert + \ccalO(\epsilon_\rmd^2)\lVert\bbx\rVert  
    + 2\epsilon_\rmu C_\rmu\lVert\bbx\rVert+2C_\rmu\epsilon_\rmu\delta_\rmu\sqrt{N}\lVert\bbx\rVert + \ccalO(\epsilon_\rmu^2)\lVert\bbx\rVert.
  \end{aligned}
\end{equation}
By defining $\Delta_{\rmd}=2(1+\delta_{\rmd}\sqrt{N})$  and $\Delta_{\rmu}=2(1+\delta_{\rmu}\sqrt{N})$, we can obtain that 
\begin{equation} \label{pf.eq.filter_distance_bound}
  \lVert\widehat{\bbH}  -\bbH \rVert \leq C_\rmd\Delta_{\rmd}\epsilon_{\rmd} + C_\rmu\Delta_{\rmu}\epsilon_{\rmu}  + \ccalO(\epsilon_{\rmd}^2) + \ccalO(\epsilon_{\rmu}^2).
\end{equation} 
Thus, we have $\lVert \bbH_k^l - \widehat{\bbH}_k^l \rVert \leq C_{k,\rmd} \Delta_{k,\rmd} \epsilon_{k,\rmd} + C_{k,\rmu}\Delta_{k,\rmu}\epsilon_{k,\rmu}$ with $\Delta_{k,\rmd} = 2(1+\delta_{k,\rmd}\sqrt{N_k})$ and $\Delta_{k,\rmu}=2(1+\delta_{k,\rmu}\sqrt{N_k})$ where we ignore the second and higher order terms on $\epsilon_{k,\rmd}$ and $\epsilon_{k,\rmu}$. Likewise, we have $\lVert \bbH_{k,\rmd}^l - \widehat{\bbH}_{k,\rmd}^l \rVert \leq C_{k,\rmd} \Delta_{k,\rmd} \epsilon_{k,\rmd}$ for the lower SCF and $\lVert \bbH_{k,\rmu}^l - \widehat{\bbH}_{k,\rmu}^l \rVert \leq  C_{k,\rmu} \Delta_{k,\rmu}\epsilon_{k,\rmu}$ for the upper SCF. 
\end{proof}

\subsubsection{Step II: Stability of SCCNNs}\label{app:proof_stability_step_2}
\begin{proof}
Given the initial input $\bbx_k^0$, the Euclidean distance between $\bbx_k^l$ and $\hat{\bbx}_k^l$ at layer $l$ can be bounded by using triangle inequality and the $C_\sigma$-Lipschitz property of $\sigma(\cdot)$ [cf. \cref{assump:lip_nonlinear}] as 
\begin{equation}
  \lVert\hat{\bbx}_k^{l}-\bbx_k^l \rVert_2 \leq  C_\sigma (\phi_{k,\rmd}^l + \phi_k^l +  \phi_{k,\rmu}^l),
\end{equation}
with 
\begin{equation}
  \begin{aligned}
    \phi_{k,\rmd}^l:= &\lVert\widehat{\bbH}_{k,\rmd}^{l} \widehat{\bbR}_{k,\rmd} \hat{\bbx}_{k-1}^{l-1} - \bbH_{k,\rmd}^{l} \bbR_{k,\rmd} \bbx_{k-1}^{l-1}\rVert, \\
    \phi_k^l:= &\lVert\widehat{\bbH}_{k}^{l}\hat{\bbx}_k^{l-1} - \bbH_{k}^{l}\bbx_k^{l-1}\rVert,\\
    \phi_{k,\rmu}^l:= & \lVert\widehat{\bbH}_{k,\rmu}^{l} \widehat{\bbR}_{k,\rmu} \hat{\bbx}_{k+1}^{l-1} - \bbH_{k,\rmu}^{l} \bbR_{k,\rmu} \bbx_{k+1}^{l-1}\rVert.
  \end{aligned}
\end{equation}
We now focus on upper bounding each of the terms. 

\textbf{1. Term $\phi_k^l$.} By subtracting and adding $\widehat{\bbH}_k^l\bbx_k^{l-1}$ within the norm, and using the triangle inequality, we obtain 
\begin{equation} \label{proof.eq.phi_kl}
  \begin{aligned}
    \phi_k^l & \leq  \lVert \widehat{\bbH}_{k}^{l} (\hat{\bbx}_k^{l-1} - \bbx_k^{l-1}) \rVert + \lVert (\widehat{\bbH}_k^l - \bbH_{k}^{l}) \bbx_k^{l-1} \rVert  \leq \lVert \hat{\bbx}_k^{l-1} - \bbx_k^{l-1} \rVert + \lVert \widehat{\bbH}_k^l - \bbH_{k}^{l}\rVert  \lVert \bbx_k^{l-1} \rVert \\ 
    & \leq \lVert \hat{\bbx}_k^{l-1} - \bbx_k^{l-1} \rVert  + (C_{k,\rmd} \Delta_{k,\rmd} \epsilon_{k,\rmd} + C_{k,\rmu}\Delta_{k,\rmu}\epsilon_{k,\rmu})  \lVert \bbx_k^{l-1} \rVert
  \end{aligned}
\end{equation}
where we used the SCF stability in \eqref{pf.eq.filter_distance_bound} and that all SCFs have a normalized bounded frequency response in  \cref{assump:bounded_filters}. 
Note that $\widehat{\bbH}_k^l$ is also characterized by $\tilde{h}_{\rm{G}}(\lambda)$ with the same set of filter coefficients as $\bbH_k^l$.  

\textbf{2. Term $\phi_{k,\rmd}^l$ and $\phi_{k,\rmu}^l$.}
By subtracting and adding a term $ \widehat{\bbH}_{k,\rmd}^l\widehat{\bbR}_{k,\rmd}\bbx_{k-1}^{l-1}$ within the norm, we have 
\begin{equation}\label{pf.eq.phi_kdl}
  \begin{aligned}
    \phi_{k,\rmd}^l & \leq \lVert\widehat{\bbH}_{k,\rmd}^{l} \widehat{\bbR}_{k,\rmd} (\hat{\bbx}_{k-1}^{l-1} - \bbx_{k-1}^{l-1}) \rVert + \lVert (\widehat{\bbH}_{k,\rmd}^l\widehat{\bbR}_{k,\rmd} - \bbH_{k,\rmd}^{l} \bbR_{k,\rmd}) \bbx_{k-1}^{l-1}\rVert  \\ 
    & \leq \lVert \widehat{\bbR}_{k,\rmd} \rVert \lVert\hat{\bbx}_{k-1}^{l-1} - \bbx_{k-1}^{l-1}\rVert + \lVert \widehat{\bbH}_{k,\rmd}^l\widehat{\bbR}_{k,\rmd} - \bbH_{k,\rmd}^{l} \bbR_{k,\rmd}\rVert \lVert \bbx_{k-1}^{l-1}\rVert,
  \end{aligned}
\end{equation}
where we used again triangle inequality and $\lVert \widehat{\bbH}_{k,\rmd}^{l}\rVert\leq 1$ from \cref{assump:bounded_filters}. For the term $\lVert\widehat{\bbR}_{k,\rmd}^l\rVert$,  we have $\lVert\widehat{\bbR}_{k,\rmd}^l\rVert\leq \lVert\bbR_{k,\rmd}^l\rVert + \lVert\bbJ_{k,\rmd}\rVert \lVert \bbR_{k,\rmd}^l\rVert \leq r_{k,\rmd} (1+\vepsilon_{k,\rmd})$ where we used $\lVert\bbR_{k,\rmd}^l\rVert\leq r_{k,\rmd}$ in   \cref{assump:bounded_proj} and $\lVert\bbJ_{k,\rmd}^l\rVert\leq \vepsilon_{k,\rmd}$. For the second term of RHS in \eqref{pf.eq.phi_kdl}, by adding and subtracting $\widehat{\bbH}_{k,\rmd}^l \bbR_{k,\rmd}^l$ we have 
\begin{equation} \label{pf.eq.phi_kdl_midterm}
  \begin{aligned}
    \lVert \widehat{\bbH}_{k,\rmd}^l\widehat{\bbR}_{k,\rmd} - \bbH_{k,\rmd}^{l} \bbR_{k,\rmd}\rVert  & = \lVert \widehat{\bbH}_{k,\rmd}^l\widehat{\bbR}_{k,\rmd} - \widehat{\bbH}_{k,\rmd}^l \bbR_{k,\rmd}^l + \widehat{\bbH}_{k,\rmd}^l \bbR_{k,\rmd}^l - \bbH_{k,\rmd}^{l} \bbR_{k,\rmd}\rVert \\ 
    & \leq  \lVert \widehat{\bbH}_{k,\rmd}^l\rVert \lVert\widehat{\bbR}_{k,\rmd} - \bbR_{k,\rmd}\rVert +  \lVert \widehat{\bbH}_{k,\rmd}^l- \bbH_{k,\rmd}^{l}\rVert \lVert\bbR_{k,\rmd}\rVert  \\ & \leq r_{k,\rmd}\vepsilon_{k,\rmd} + C^\prime_{k,\rmd}\Delta_{k,\rmd}\epsilon_{k,\rmd}r_{k,\rmd} 
  \end{aligned}
\end{equation}
where we use the stability result of the lower SCF $\bbH_{k,\rmd}^l$ in \eqref{pf.eq.filter_distance_bound}. By substituting \eqref{pf.eq.phi_kdl_midterm} into \eqref{pf.eq.phi_kdl}, we have 
\begin{equation}
  \phi_{k,\rmd}^l\leq \hat{r}_{k,\rmd}\lVert\hat{\bbx}_{k-1}^{l-1} - \bbx_{k-1}^{l-1}\rVert  + (r_{k,\rmd}\vepsilon_{k,\rmd} + C^\prime_{k,\rmd}\Delta_{k,\rmd}\epsilon_{k,\rmd}r_{k,\rmd}) \lVert \bbx_{k-1}^{l-1}\rVert.
\end{equation}
By following the same procedure [cf. \eqref{pf.eq.phi_kdl} and \eqref{pf.eq.phi_kdl_midterm}], we obtain
\begin{equation}
  \phi_{k,\rmu}^l \leq \hat{r}_{k,\rmu} \lVert\hat{\bbx}_{k+1}^{l-1} - \bbx_{k+1}^{l-1}\rVert  + (r_{k,\rmu}\vepsilon_{k,\rmu} + C^\prime_{k,\rmu}\Delta_{k,\rmu}\epsilon_{k,\rmu}r_{k,\rmu}) \lVert \bbx_{k+1}^{l-1}\rVert.
\end{equation}

\textbf{3. Bound of $\lVert\hat{\bbx}_k^{l}-\bbx_k^l \rVert$.}
Using the notations $t_k,t_{k,\rmd}$ and $t_{k,\rmu}$ in   \cref{thm:stability}, we then have a set of recursions, for $k=[K]$ 
\begin{equation}
  \begin{aligned}
    \lVert\hat{\bbx}_k^{l}-\bbx_k^l \rVert \leq &  C_\sigma   ( \hat{r}_{k,\rmd}\lVert\hat{\bbx}_{k-1}^{l-1} - \bbx_{k-1}^{l-1}\rVert  + t_{k,\rmd} \lVert \bbx_{k-1}^{l-1}\rVert +  \lVert \hat{\bbx}_k^{l-1} - \bbx_k^{l-1} \rVert + t_k \lVert \bbx_k^{l-1} \rVert \\ & + \hat{r}_{k,\rmu}\lVert\hat{\bbx}_{k+1}^{l-1} - \bbx_{k+1}^{l-1}\rVert  + t_{k,\rmu} \lVert\bbx_{k+1}^{l-1}\rVert ).
  \end{aligned}
\end{equation}
Define vector $\bbb^l$ as $[\bbb^l]_k = \lVert\hat{\bbx}_k^{l}-\bbx_k^l \rVert $ with $\bbb^0 = \mathbf{0}$. Let $\bbeta^l$ collect the energy of all outputs at layer $l$, with $[\bbeta^l]_k:=\lVert\bbx_{k}^{l-1}\rVert$.
We can express the Euclidean distances of all $k$-simplicial signal outputs for $k=[K]$, as 
\begin{equation}\label{pf.eq.d_l_recursion_l-1}
  \bbb^l \preceq C_\sigma \widehat{\bbZ}\bbb^{l-1} + C_\sigma \bbT \bbeta^{l-1}
\end{equation}
where $\preceq$ indicates elementwise smaller than or equal, and we have 
\begin{equation}\label{thm.eq.T_Zhat}
  \bbT = 
  \begin{bmatrix}
    t_0 & t_{0,\rmu} &  &  &  \\ 
    t_{1,\rmd} & t_1 & t_{1,\rmu} & & \\
      & \ddots & \ddots & \ddots & \\ 
      & & t_{K-1,\rmd} & t_{K-1} & t_{K-1,\rmu} \\ 
      & & & t_{K,\rmd} & t_K
  \end{bmatrix} \text{ and }
  \widehat{\bbZ} = 
    \begin{bmatrix}
      1 & \hat{r}_{0,\rmu} &  &  &  \\ 
      \hat{r}_{1,\rmd} & 1 & \hat{r}_{1,\rmu} & & \\
        & \ddots & \ddots & \ddots & \\ 
        & & \hat{r}_{K-1,\rmd} & 1 & \hat{r}_{K-1,\rmu} \\ 
        & & &  \hat{r}_{K,\rmd} & 1 \\ 
    \end{bmatrix}.
\end{equation}
We are now interested in building a recursion for \eqref{pf.eq.d_l_recursion_l-1} for all layers $l$. We start with term $\bbx_k^l$.
Based on its expression in \eqref{eq.weighted_sccnn}, we bound it as 
\begin{equation}
  \begin{aligned}
    \lVert \bbx_{k}^{l}\rVert & \leq C_\sigma (\lVert\bbH_{k,\rmd}^l\rVert \lVert\bbR_{k,\rmd}\rVert\lVert\bbx_{k-1}^{l-1}\rVert + \lVert\bbH_k^l\rVert \lVert\bbx_x^{l-1}\rVert + \lVert\bbH_{k,\rmu}^l\rVert \lVert\bbR_{k,\rmu}\rVert \lVert\bbx_{k+1}^{l-1}\rVert) \\
    & \leq C_\sigma (r_{k,\rmd} \lVert\bbx_{k-1}^{l-1}\rVert + \lVert\bbx_x^{l-1}\rVert + r_{k,\rmu}\lVert\bbx_{k+1}^{l-1}\rVert),
  \end{aligned}
\end{equation}
which holds for $k=[K]$. Thus, it can be expressed in the vector form as $ \bbeta^l \preceq C_\sigma \bbZ \bbeta^{l-1}$, with 
\begin{equation}\label{thm.eq.Z}
  \bbZ = 
  \begin{bmatrix}
    1 & r_{0,\rmu} &  &  &  \\ 
    r_{1,\rmd} & 1 & r_{1,\rmu} & & \\
      & \ddots & \ddots & \ddots & \\ 
      & & r_{K-1,\rmd} & 1 & r_{K-1,\rmu} \\ 
      & & &  r_{K,\rmd} & 1 \\ 
  \end{bmatrix}.
\end{equation}
Similarly, we have $\bbeta^{l-1}\preceq C_\sigma \bbZ \bbeta^{l-2}$, leading to $\bbeta^l\preceq C_\sigma^l \bbZ^l \bbeta^0$ with $\bbeta^0 = \bbeta$ [cf. \cref{assump:bounded_input}]. We can then express the bound \eqref{pf.eq.d_l_recursion_l-1} as 
\begin{equation}
  \bbb^l\preceq C_\sigma \widehat{\bbZ}\bbb^{l-1} + C_\sigma^l \bbT \bbZ^{l-1}\bbeta.
\end{equation}
Thus, we have 
\begin{equation}
  \bbb^0 = \mathbf{0},\,\, \bbb^1 \preceq C_\sigma \bbT\bbeta,\,\, \bbb^2 \preceq C_\sigma^2 (\widehat{\bbZ}\bbT\bbeta + \bbT\bbZ\bbeta), \,\, \bbb^3 \preceq C_\sigma^3 (\widehat{\bbZ}^2\bbT\bbeta + \widehat{\bbZ}\bbT\bbZ\bbeta + \bbT\bbZ^2\bbeta), \,\, \bbb^4 \preceq \dots,
\end{equation}
which, inductively, leads to
\begin{equation}
  \bbb^l \preceq C_\sigma^l \sum_{i=1}^l \widehat{\bbZ}^{i-1}\bbT\bbZ^{l-i}\bbeta.
\end{equation}
Bt setting $l=L$, we obtain the bound $\bbb^L\preceq\bbd = C_\sigma^L \sum_{l=1}^{L}\widehat{\bbZ}^{l-1}\bbT\bbZ^{L-l}\bbeta$ in   \cref{thm:stability}. 
\end{proof}

\subsection{Discussions on Stability of SCCNNs in \cref{thm:stability}}\label{app:discussion_stability}
From the bound $\bbd$, we see how factors $\bbT$ and $\bbZ,\widehat{\bbZ}$ affect the stability of an SCCNN, which further reflects the factors of $k$-simplices and other simplices. For $L=1$, we have $\bbd=C_\sigma \bbT\bbeta$, and $d_k=C_\sigma (t_{k,\rmd}\beta_{k-1}+t_k\beta_k + t_{k+1}\beta_{k+1})$. That is, the stability $d_k$ of $k$-simplicial output depends on $k$-simplicial input energy $\beta_k$ via constant $t_k$ and $k\pm 1$-simplicial input energies via constants $t_{k,\rmd}$ and $t_{k,\rmu}$. The common factors $\Delta_{k,\rmd}$, $\epsilon_{k,\rmd}$ in $t_{k,\rmd}$ and $t_k$ measure the perturbations on lower simplicial adjacency. Specifically, $\Delta_{k,\rmd}$ is a degree of topology misalignment between $\bbL_{k,\rmd}$ and $\widehat{\bbL}_{k,\rmd}$, and $\epsilon_{k,\rmd}$ is a measure of the perturbation magnitude. The factor $C_{k,\rmd}$ is the integral Lipschitz property of the SCFs operating in the gradient frequencies. The factor $r_{k,\rmd}$ in $t_{k,\rmd}$ measures the degree of projections $\bbR_{k,\rmd}$ from $(k-1)$-simplices to $k$-simplices. Discussions can be made on other factors in $t_{k,\rmu}$ $t_k$ likewise. Thus, the stability of $k$-simplicial output is dependent on perturbations on $k$-simplicial adjacencies, integral Lipschitz properties of $k$-SCFs, and projection degrees from $(k\pm 1)$- to $k$-simplices, given fixed inputs after a one-layer SCCNN. 

For $L=2$, we have $\bbd=C_\sigma(\widehat{\bbZ}\bbT\bbeta + \bbT\bbZ\bbeta)$. In an SC order $K=2$, we can find the entries of $\bbT\bbZ\bbeta$ as 
\begin{equation}
  \bbT\bbZ\bbeta = 
  \begin{bmatrix}
    t_0(\beta_0+r_{0,\rmu}\beta_1) + t_{0,\rmu}(r_{1,\rmd}\beta_0+\beta_1+r_{1,\rmu}\beta_2) \\
    t_{1,\rmd}(p_0+r_{0,\rmu}\beta_1) + t_1(r_{1,\rmd}\beta_0+\beta_1+r_{1,\rmu}\beta_2) + t_{1,\rmu}(r_{2,\rmd}\beta_1+\beta_2) \\ 
    t_{2,\rmd}(r_{1,\rmd}\beta_0+\beta_1+r_{1,\rmu}\beta_2) + t_2(r_{2,\rmd}\beta_1+\beta_2) 
  \end{bmatrix}
\end{equation}
from which, we see that the stability of node output depends also on the triangle input energy $\beta_2$,which is scaled by the projection degree $r_{1,\rmu}$ from triangles to edges, then by $t_{0,\rmu}$. Moreover, for the other term $\widehat{\bbZ}\bbT\bbeta$, we have its entries as 
\begin{equation}
  \widehat{\bbZ}\bbT\bbeta = 
  \begin{bmatrix}
    (t_0\beta_0 + t_{0,\rmu}\beta_1) + \hat{r}_{0,\rmu}(t_{1,\rmd}\beta_0+t_1\beta_1 +t_{1,\rmu}\beta_2 )\\ 
    \hat{r}_{1,\rmd}(t_0\beta_0+t_{0,\rmu}\beta_1) + (t_{1,\rmd}\beta_0 +t_1\beta_1 + t_{1,\rmu}\beta_1) + \hat{r}_{1,\rmu}(t_{2,\rmd}\beta_1+t_2\beta_2) \\ 
    \hat{r}_{2,\rmd}(t_{1,\rmd}\beta_0+t_1\beta_1+t_{1,\rmu}\beta_2) + (t_{2,\rmd}\beta_1 +t_2\beta_2)    
  \end{bmatrix}.
\end{equation}
We see that the dependence of node output stability on $\beta_2$ is also mediated through the factors $t_{1,\rmd},t_1$ and $t_{1,\rmu}$, which describe instead perturbations on edge adjacencies, integral Lipschitz properties of edge-SCFs and projection degrees from nodes and triangles to edges. In turn, these induce instability of node outputs through projection perturbations from edges to nodes. Thus, the stability $d_k$ is in general dependent on perturbations, integral Lipschitz properties of SCFs and projections in the $k$-simplicial space (constants $t_{k,\rmd},t_k$ and $t_{k,\rmu}$), as well as those in the $(k\pm 1)$-simplicial spaces through projection perturbations $\hat{r}_{k,\rmd}$ and $\hat{r}_{k,\rmu}$.

When $L=3$, we have $\bbd = C_\sigma^3 (\widehat{\bbZ}^2\bbT\bbeta + \widehat{\bbZ}\bbT\bbZ\bbeta + \bbT\bbZ^2\bbeta)$. We would observe that factors of perturbations on triangle adjacencies and properties of triangle SCFs, measured by $t_{2,\rmd}$ and $t_2$, will further appear in the stability $d_0$ of node output, controlled by the degrees $\hat{r}_{1,\rmu}$ and $\hat{r}_{0,\rmu}$ of upper perturbations on edge and node. 

As the SCCNN becomes deeper, for $k$-simplicial output, the dependence of its stability on perturbations, properties of SCFs and projections in $k$-simplicial space remains, and the dependence on those factors in simplicial spaces of different orders will extend, up to the whole SC. Specifically, $d_k$ has a dependence on factors in (up to) $(k\pm l)$-simplicial space when $L=l+1$.
This mutual stability dependence of different simplicial outputs is the result of the extended inter-simplicial locality of SCCNNs.

\begin{remark}
  The stability analysis of SCCNNs adapts to other NNs on SCs. For the NNs in \citet{yang2021simplicial}, matrices $\bbT,\bbZ$ and $\widehat{\bbZ}$ will contain only the diagonal entries. We can focus on the stability bound $[\bbd]_k$ of $k$-simplicial output, given by $[\bbd]_k\leq C_\sigma^L L t_k [\bbeta]_k$ with $t_k=C_{k,\rmd}\Delta_{k,\rmd}\epsilon_{k,\rmd} + C_{k,\rmu}\Delta_{k,\rmu}\epsilon_{k,\rmu}$. Since this method focuses only on simplices of the same order, it is unaffected by the perturbations from simplices of other orders. The same holds true for the NNs in \citet{ebli2020simplicial} but with $t_k=C_k\Delta_k\epsilon_k$ where we have a relative perturbation model on $\bbL_k$ with a perturbation $\lVert\bbE_k\rVert\leq\epsilon$ and $\Delta_k$ measures the eigenvector misalignment between $\bbL_k$ and $\bbE_k$. Setting $k=0$, we then obtain the stability bound for GNNs in \citet{defferrard2017,gama2020stability}. For the simplicial NNs performing one-step simplicial shifting, e.g., \citet{bunch2020simplicial}, a stability bound can be obtained based on the procedure in \cref{app:proof_stability_step_2} which will be related to the degree of the simplicial shiftings, measured by spectral radii of Hodge Laplacians. This is another limitation of one-step shifting as the filters have only a linear shape affecting their selectivity \cref{cor:evd_Ld_Lu}, compared to higher-order convolutions in SCCNNs and \citet{ebli2020simplicial,yang2021simplicial}. 
\end{remark}


\section{Experiments}
In this section, we provide more details about the two experiments.\footnote{Code available at \url{https://github.com/cookbook-ms/Learning_on_SCs}.} All of the experiments were run on a single NVIDIA A40 GPU with 48 GB of memory using CUDA 11.5.

\subsection{Simplex Prediction} \label{app:experiments_simplex_pred}

Link prediction on graphs is to predict whether two nodes in a network are likely to form a link. Methods based on GNNs have shown better performance compared to the classical heuristic and latent feature based approaches \citep{kipf2016variational,zhang2018link}. 
In higher-order network models, there is a growing need for the task of higher-order link prediction. This task is relevant in, e.g., predicting new groups of friends in social networks, inferring new relationships between genes and diseases, or recommending online thread participation for a group of users \citep{benson2018simplicial}. 
Here we consider the task of simplex prediction:\emph{ given all the $(k-1)$-simplices in a set of $k+1$ nodes, to predict if this set will be closed to form a $k$-simplex}. 
We can view this set of $k+1$ nodes as an open $k$-simplex. For example, in 2-simplex (triangle) prediction, we predict if an open triangle $t=[i,j,k]$ will form a 2-simplex (a closed triangle), given its three edges $[i,j],[j,k]$ and $[i,k]$ present in the SC. 
Although \citet{benson2018simplicial} extended traditional methods for link prediction to SCs for a similar task, there is limited research on learning-based approaches, especially in SCs. Therefore, we aim to address 2- and 3-simplex predictions based on SCCNNs in this experiment. 

\subsubsection{Method}
We propose the following method for simplex prediction, which is generalized from link prediction based on GNNs by \citet{zhang2018link}: For $k$-simplex prediction, we use an SCCNN in an SC of order $k$ to first learn the features of lower-order simplices up to order $k-1$. Then, we concatenate these embedded lower-order simplicial features and input them to a multilayer perceptron (MLP) which predicts if a $k$-simplex is positive (closed, shall be included in the SC) or negative (open, not included in the SC). 

Specifically, in 2-simplex prediction, consider an SC of order two, which is built based on nodes, edges and (existing positive) triangles. Given the initial inputs on nodes $\bbx_0$ and on edges $\bbx_1$ and zero inputs on triangles $\bbx_2=\mathbf{0}$ since we assume no prior knowledge on triangles, for an open triangle $t=[i,j,k]$, 
an SCCNN is used to learn features on nodes and edges (denoted by $\bby$). Then, we input the concatenation of the features on three nodes or three edges to an MLP, i.e., ${\rm{MLP_{node}}}([\bby_0]_i\Vert[\bby_0]_j\Vert[\bby_0]_k)$ or ${\rm{MLP}_{edge}}([\bby]_{[i,j]}\Vert [\bby]_{[j,k]} \Vert [\bby]_{[i,k]})$, to predict if triangle $t$ is positive or negative. An MLP taking both node and edge features is possible, but we keep it on one simplex level for complexity purposes. Similarly, we consider an SCCNN in an SC of order three for 3-simplex prediction, which is followed by an MLP operating on either nodes, edges or triangles.



\subsubsection{Data Preprocessing}
We consider the data from the Semantic Scholar Open Research Corpus \cite{ammar2018construction} to construct a coauthorship complex where nodes are authors and collaborations between $k$-author are represented by $(k-1)$-simplices. Following the preprocessing in \cite{ebli2020simplicial}, we obtain 352 nodes, 1472 edges, 3285 triangles, 5019 tetrahedrons (3-simplices) and a number of other higher-order simplices. 
The node signal $\bbx_0$, edge flow $\bbx_1$ and triangle flow $\bbx_2$ are the numbers of citations of single author papers and the collaborations of two and three authors, respectively. 

For the 2-simplex prediction, we use the collaboration impact (the number of citations) to split the total set of triangles into the positive set $\ccalT_P = \{ t|[\bbx_2]_t > 7 \}$ containing 1482 closed triangles and the negative set $\ccalT_N = \{ t|[\bbx_2]_t \leq 7 \} $ containing 1803 open triangles such that we have balanced positive and negative samples.  We further split the $80\%$ of the positive triangle set for training,  $10\%$ for validation and $10\%$ for testing; likewise for the negative triangle set. Note that in the construction of the SC, i.e., the incidence matrix $\bbB_2$, Hodge Laplacians $\bbL_{1,\rmu}$ and $\bbL_{2,\rmd}$, we ought to remove negative triangles in the training set and all triangles in the test set. That is, for 2-simplex prediction, we only make use of the training set of the positive triangles since the negative ones are not in the SC. 

Similarly, we prepare the dataset for 3-simplex (tetrahedron) prediction, amounting to the tetradic collaboration prediction. We obtain balanced positive and negative tetrahedron sets based on the citation signal $\bbx_3$. In the construction of $\bbB_3$, $\bbL_{2,\rmu}$ and $\bbL_{3,\rmd}$, we again only use the tetrahedrons in the positive training set. 

\subsubsection{Models}
For comparison, we first use heuristic methods proposed in \citet{benson2018simplicial} as baselines to determine if a triangle $t=[i,j,k]$ is closed, namely,

1) Harmonic mean: $s_t = 3/([\bbx_1]^{-1}_{[i,j]} + [\bbx_1]^{-1}_{[j,k]} + [\bbx_1]^{-1}_{[i,k]})$, 

2) Geometric mean: $ s_t = \lim_{p\to 0}[([\bbx_1]^p_{[i,j]} + [\bbx_1]^p_{[j,k]} + [\bbx_1]^p_{[i,k]})]^{1/p}$, and 

3) Arithmetic mean: $s_t = ([\bbx_1]_{[i,j]} + [\bbx_1]_{[j,k]} + [\bbx_1]_{[i,k]})/3$,

which compute the triangle weight based on its three faces. Similarly, we generalized these mean methods to compute the weight of a 3-simplex $[i,j,k,m]$ based on the four triangle faces in 3-simplex prediction. 

We also consider different learning methods. Specifically,

1) Simplicial 2-Complex NN (``Bunch'') [cf. \eqref{eq.bunch_model}] by \citet{bunch2020simplicial} (we also generalized this model to 3-dimension for 3-simplex prediction), which provides a baseline for SCCNNs to see the effect of using the higher-order convolutions.

2) Convolutional filters on SC (``CF-SC'') by \citet{isufi2022convolutional}, which is a linear version of SCCNN and has a limited extended simplicial locality. The comparison to it reveals the effect of the extended simplicial locality. 

3) Principled SNN (``PSNN'') by \citet{roddenberry2021principled}, which performs a one-step simplicial shifting, providing as a baseline to observe the benefit of higher-order convolutions and inter-simplicial couplings;  

4) SNN by \citet{ebli2020simplicial}, which performs higher-order convolutions but without separating the lower and upper adjacencies. We can see the benefit of allowing different processing in the gradient and curl spaces by comparing to it. 

5) Simplicial Convolutional NN (``SCNN'') [cf. \eqref{eq.scnn_yang}] by \citet{yang2021finite}, which does not include information from adjacent simplices. The comparison to it shows the effect of inter-simplicial couplings.  

6) SCF on edges by \citet{yang2022simplicial}, which is a linear version of SCNN.

7) GNN [cf. \eqref{eq.gnn}] by \citet{defferrard2017} and \citet{gama2020stability}, which performs a higher-order graph convolution without using the SC model. The comparison to it is necessary to observe the benefit of SC models in simplex prediction. 

8) Linear graph filters on nodes (``GF'') by \citet{sandryhaila2013discrete}, which is a linear version of GNNs; 

9) MLP: $\bbY_k = \sigma(\bbX_k\bbW_k)$, providing as a baseline for the effect of using inductive models.

For MLP, Bunch, CF-SC and our SCCNN, we consider the outputs in the node and edge spaces, respectively, for 2-simplex prediction, which are denoted by a suffix ``-Node'' or ``-Edge''. For 3-simplex prediction, the output in the triangle space can be used as well, denoted by a suffix ``-Tri.'', where we also build SCNNs in both edge and triangle spaces. 

\subsubsection{Experimental Setup and Hyperparameters}

\textbf{2-Simplex Prediction.}
We consider normalized Hodge Laplacians and incidence matrices. Specifically, we use the symmetric version of the normalized random walk Hodge Laplacians in the edge space, proposed by \citet{schaub2020random}, given by $\bbL_{1}^s = \bbM_{11}^{-1/2}\bbL_1\bbM_{11}^{1/2}$ where
\begin{equation}\label{eq.normalized_l1}
  \bbL_{1,\rmd} = \bbM_{11}\bbB_1^\top\bbM_{01}^{-1}\bbB_1, \quad \bbL_{1,\rmu} = \bbB_2 \bbM_{21} \bbB_2^\top \bbM_{11}^{-1}
\end{equation}
with $\bbM_{11} = \max(\diag(|\bbB_2|\mathbf{1}),\bbI)$, $\bbM_{01} =2\diag(|\bbB_1|\bbM_{11}\mathbf{1})$ and $\bbM_{21} = 1/3\bbI$. In the node space, we use the symmetric random walk graph Laplacian, given by $\bbL_0^s = \bbM_{00}^{-1/2}\bbL_0\bbM_{00}^{1/2}$, where we have 
\begin{equation}\label{eq.normalized_l0}
  \bbL_{0} = \bbB_1\bbM_{10}\bbB_1^\top\bbM_{00}^{-1}
\end{equation}
with $\bbM_{00} = \max(\diag(|\bbB_1|\mathbf{1}),\bbI)$ and $\bbM_{10}=\bbI$. In the triangle space, we have 
\begin{equation}\label{eq.normalized_l2d}
  \bbL_{2,\rmd} = \bbM_{22}\bbB_2^\top\bbM_{12}^{-1}\bbB_2
\end{equation}
with $\bbM_{12} = \diag(|\bbB_2|\mathbf{1})$ and $\bbM_{22}=\bbI$, and the symmetric version is $\bbL_{2,\rmd}^{s} = \bbM_{22}^{-1/2}\bbL_{2,\rmd} \bbM_{22}^{1/2}$. 
In the projection steps, we have $\bbR_{0,\rmu}=\bbM_{01}^{-1}\bbB_1$, $\bbR_{1,\rmd}=\bbM_{11}\bbB_1^\top\bbM_{01}^{-1}$, $\bbR_{1,\rmu}=\bbB_2\bbM_{21}$, and $\bbR_{2,\rmd}= \bbM_{22}\bbB_2^\top\bbM_{12}^{-1}$. The above normalizations also respect the definitions in \citet{bunch2020simplicial}.    

\textbf{3-Simplex Prediction.} 
Keeping the same for $k=0,1$, for $k=2$, we have 
\begin{equation}
  \bbL_{2,\rmd} = \bbM_{22}\bbB_2^\top\bbM_{12}^{-1}\bbB_2, \quad \bbL_{2,\rmu}= \bbB_3\bbM_{32}\bbB_3^\top\bbM_{22}^{-1}
\end{equation}
with $\bbM_{22} = \max(\diag(|\bbB_3|\mathbf{1}),\bbI)$, $\bbM_{12}=3\diag(|\bbB_2|\bbM_{22}\mathbf{1})$ and $\bbM_{32} = 1/4\bbI$. This definition respects the way of constructing random walk Laplacians on simplices \citep{schaub2020random}, also used by \citet{chen2022time}. In the 3-simplex space, we have 
\begin{equation}
  \bbL_{3,\rmd} = \bbM_{33}\bbB_3^\top\bbM_{23}^{-1}\bbB_3
\end{equation}
with $\bbM_{33}=\bbI$ and $\bbM_{23}=\diag(|\bbB_3|\mathbf{1})$. For the projection matrices, we have $\bbR_{1,\rmu} =\bbM_{12}^{-1}\bbB_2$,  $\bbR_{2,\rmd} = \bbM_{22}\bbB_2^\top\bbM_{12}^{-1}$, $\bbR_{2,\rmu} = \bbB_3\bbM_{32}$ and $ \bbR_{3,\rmd} = \bbM_{33}\bbB_3^\top\bbM_{23}^{-1}$. 

\textbf{Hyperparameters.}
The hyperparameters are set as:

1) the number of layers: $L\in\{1,2,3,4,5\}$;

2) the number of intermediate and output features to be the same as $F\in\{16,32\}$;
 
3) the convolution orders for SCCNNs are set to be the same, i.e., $T_\rmd'=T_\rmd=T_\rmu=T_\rmu=T\in\{1,2,3,4,5\}$. We do so to avoid the exponential growth of the parameter search space. For GFs \citep{sandryhaila2014discrete}, GNNs \citep{defferrard2017} and SNNs \citep{ebli2020simplicial}, we set the convolution orders to be $T\in\{1,2,3,4,5\}$ while for SCNNs \citep{yang2021simplicial} and SCFs \citep{yang2022simplicial}, we allow the lower and upper convolutions to have different orders with $T_\rmd,T_\rmu\in\{1,2,3,4,5\}$;

4) the nonlinearity in the feature learning phase: LeakyReLU with a negative slope $0.01$;

5) the MLP in the prediction phase: two layers with a sigmoid nonlinearity. For 2-simplex prediction, the number of the input features for the node features is $3F$, and for the edge features is $3F$. For 3-simplex prediction, the number of the input features for the node features is $4F$, for the edge features is $6F$ and for the triangle features is $4F$ since a 3-simplex has four nodes, six edges and four triangles. The number of the intermediate features is the same as the input features, and that of the output features is one;

6) the binary cross entropy loss and the adam optimizer with a learning rate of $0.001$ are used; the number of the epochs is 1000 where an early stopping is used. We compute the AUC to compare the performance and run the same experiments for ten times with random data splitting. 

\begin{table}[t!]
  \caption{2- \emph{(Left)} and 3-Simplex \emph{(Right)} prediction AUC ($\%$) results, where the \emph{first} and \emph{second} best are highlighted in \emph{\textcolor{red}{red}} and \emph{\textcolor{blue}{blue}}. }
  \label{tab:simlex_prediction_full_results}
  \vskip 0.15in
  \begin{center}
  \begin{small}
  \begin{sc}
    \resizebox{0.495\columnwidth}{!}{
  \begin{tabular}{lcr}
  \toprule
  Methods & AUC & Parameters \\
  \midrule
  Harm. Mean &62.8$\pm$2.7 & --- \\
  Arith. Mean &60.8$\pm$3.2 & --- \\
  Geom. Mean &61.7$\pm$3.1 & --- \\
  MLP-Node &68.5$\pm$1.6& $L=1,F=32 $\\
  MLP-Edge &65.5$\pm$4.5& $L=4,F=16 $\\
  \midrule
  GF &78.7$\pm$1.2 &$L=2,F=32,T=2 $\\
  SCF & 92.6$\pm$1.8 &$L=1,F=32,T_\rmd =1,T_\rmu=5 $\\
  CF-SC-Node & 96.9$\pm$0.8 & $L=5,F=32,T =4 $\\
  CF-SC-Edge & 93.2$\pm$1.6 & $L=2,F=32,T =4  $\\
  \midrule
  GNN &93.9$\pm$1.0& $L=5,F=32,T=2 $\\
  SNN &92.0$\pm$1.8& $L=5,F=32,T=5 $\\
  PSNN &95.6$\pm$1.3& $L=5, F=32 $\\
  SCNN &96.5$\pm$1.5 & $L=5,F=32,T_\rmd=5,T_\rmu=2 $\\
  Bunch-Node &\textcolor{blue}{98.0$\pm$0.5}& $L=4,F=32 $\\
  Bunch-Edge &94.6$\pm$1.2& $L=4,F=16 $\\ 
  \midrule
  SCCNN-Node &\textcolor{red}{98.4$\pm$0.5}& $L=2,F=32,T=2 $\\
  SCCNN-Edge &95.9$\pm$1.0&$L=5,F=32,T=3 $\\
  \bottomrule
  \end{tabular}
    }
    \resizebox{0.495\columnwidth}{!}{
  \begin{tabular}{lcr}
    \toprule
    Methods & AUC & Parameters \\
    \midrule
    Harm. Mean &63.6$\pm$1.6 & --- \\
    Arith. Mean &62.2$\pm$1.4 & --- \\
    Geom. Mean &63.1$\pm$1.4 & --- \\
    MLP-Node &68.9$\pm$1.8& $L=1,F=32 $\\
    MLP-Tri. &69.0$\pm$2.2& $L=3,F=32 $\\
    \midrule
    GF &83.9$\pm$2.3 &$L=1,F=32,T=2 $\\
    SCF-Edge &77.6$\pm$3.8 &$L=5,F=32,T_\rmd=0,T_\rmu=4$ \\
    SCF-Tri. &94.9$\pm$1.0 &$L=2,F=32,T_\rmd=T_\rmu=3 $\\
    CF-SC-Node &95.8$\pm$1.0 &$L=3,F=32,T=2$ \\
    CF-SC-Edge &97.9$\pm$0.7 &$L=5,F=32,T=3$ \\ 
    CF-SC-Tri. &96.7$\pm$0.5 &$L=2,F=32,T=5$ \\
    \midrule
    GNN &96.6$\pm$0.5& $L=5,F=32,T=5 $\\
    SNN &95.1$\pm$1.2& $L=5,F=32,T=5 $\\
    PSNN &98.1$\pm$0.5& $L=5, F=32  $\\
    SCNN-Edge &98.3$\pm$0.4&$L=4,F=32,T_\rmd=T_\rmu=3$\\
    SCNN-Tri. &98.3$\pm$0.4 & $L=5,F=32,T_\rmd=2,T_\rmu=1 $\\
    Bunch-Edge &98.5$\pm$0.5& $L=4,F=16 $\\
    Bunch-Tri. &96.6$\pm$1.0& $L=2,F=32 $\\
    \midrule
    \textcolor{blue}{SCCNN-Node} &\textcolor{blue}{99.4$\pm$0.3}&$L=3,F=32,T=3 $\\
    \textcolor{red}{SCCNN-Edge} &\textcolor{red}{99.0$\pm$1.0}&$L=5,F=32,T=5 $\\
    SCCNN-Tri. &97.4$\pm$0.9&$L=4,F=32,T=4 $\\
    \bottomrule
    \end{tabular}
    }
  \end{sc}
  \end{small}
  \end{center}
  \vskip -0.1in
\end{table}

\subsubsection{Results}
In \cref{tab:simlex_prediction_full_results}, we report the best results of each method with the corresponding hyperparameters. Different hyperparameters can lead to similar results, but we report the ones with the \emph{least} complexity. We make the following observations:

1) methods on graphs and SCs perform much better than heuristic methods and MLPs, because of the inductive bias proposed by the graph and SC models;

2) both NNs and filtering methods on SCs perform better than their counterparts on graphs in general, which validates the SC model in simplex predictions, showing the importance of higher-order structures;

3) jointly performing the simplicial convolution based on $\bbL_k$ limits the expressive power as shown in \cref{sec:spectral_analysis} and \cref{fig:frequency_response_sccnns} from the spectral perspective, which can be seen from that SCNN performs better than SNN; 

4) higher-order simplicial convolutions improve the performance. In 2-simplex prediction, SCNN is better than PSNN or Bunch-Edge which only performs a simplicial shifting. Moreover, using either node or edge features of SCCNN gives better results than the counterpart of Bunch, which also requires a deeper NN, inducing stronger mutual dependence between the stability of different simplicial outputs as studied in \cref{sec:stability_analysis}.  
In 3-simplex prediction, the same observations can be made; 

5) taking into account the information from simplices of different orders further improves the performance, which corroborates our architecture.

\subsection{Frequency Response}
To further see the effects of using higher-order convolutions and differentiating the two adjacencies, we illustrate the frequency responses of SCFs $\bbH_1$, $\bbH_{1,\rmd}$ and $\bbH_{1,\rmu}$ at first layer for several feature indices in \cref{fig:frequency_response_sccnns}. First, we see that the SCF $\bbH_1$ has different gradient and curl frequency responses, which allows separate processing in the gradient and curl frequencies. This is the result of differentiating the lower and upper adjacencies by using different parameter spaces, in contrast to \citet{ebli2020simplicial}. Second, we see that using higher-order convolution enables a more versatile relation rather than linear or simple scaling, which increases the expressive power of SCCNNs. As a counterexample, \citet{bunch2020simplicial} and \citet{roddenberry2021principled} perform a one-step simplicial shifting by $\bbL_1$, the frequency response of which is simply a straight line in terms of the simplicial frequencies. Also, merely summing the projected information leads to a simple scaling.  This limited expressive power of Bunch requires more layers to have a comparable performance than SCCNN-Node.

\begin{figure}[ht!]
  \centering
  \begin{subfigure}{0.19\linewidth}
    \includegraphics[width=\linewidth]{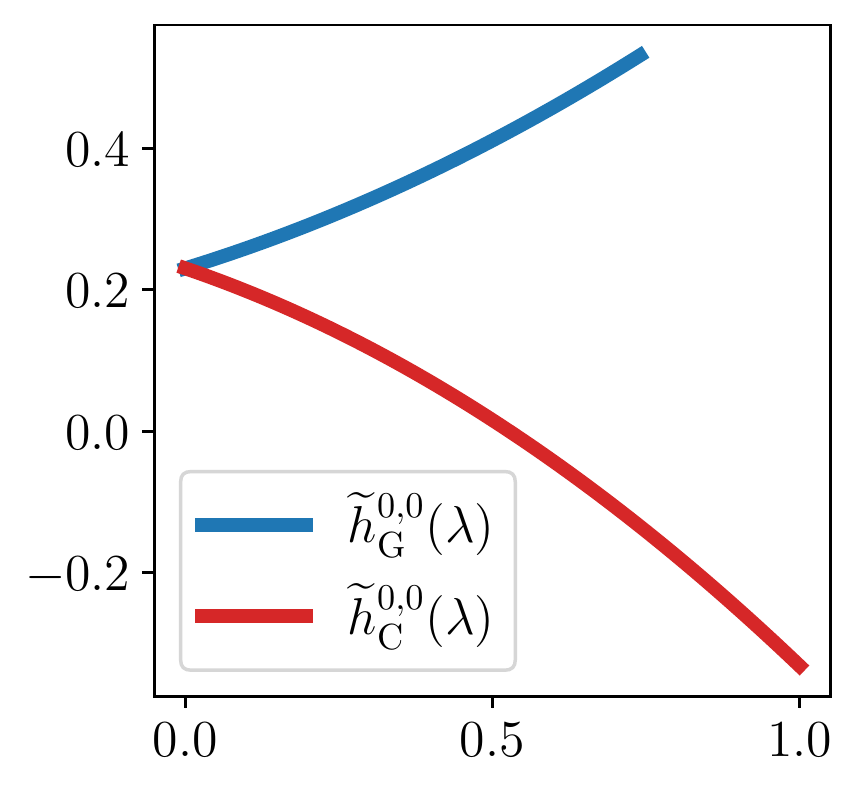}
    \caption{$\bbH_1$@0}
  \end{subfigure}
  \begin{subfigure}{0.19\linewidth}
    \includegraphics[width=\linewidth]{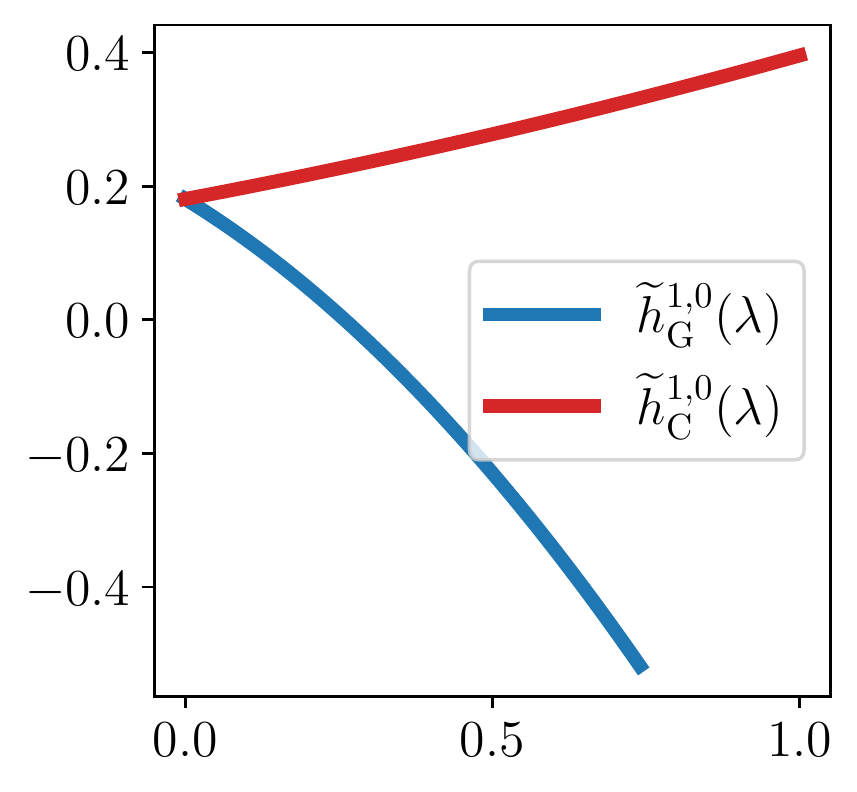}
    \caption{$\bbH_1$@1}
  \end{subfigure}
  \begin{subfigure}{0.19\linewidth}
    \includegraphics[width=\linewidth]{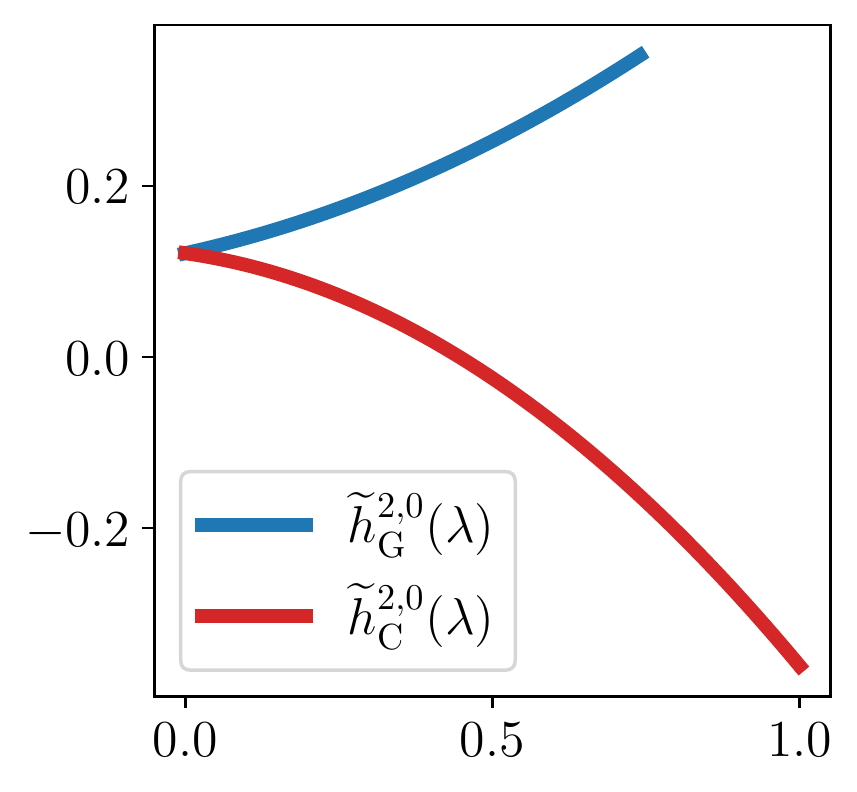}
    \caption{$\bbH_1$@2}
  \end{subfigure}
  \begin{subfigure}{0.19\linewidth}
    \includegraphics[width=\linewidth]{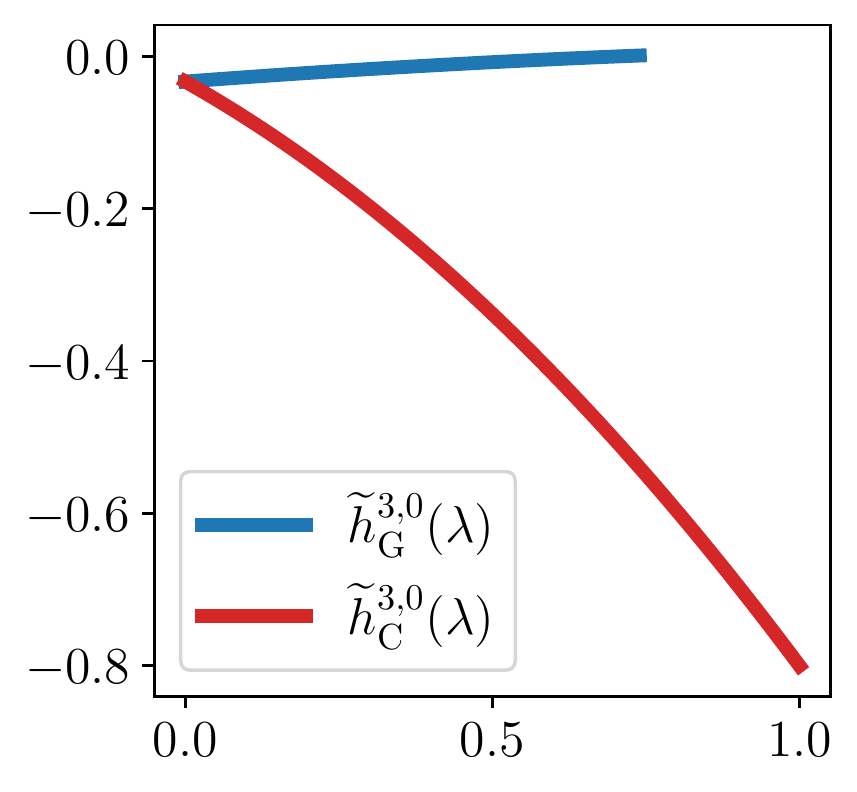}
    \caption{$\bbH_1$@3}
  \end{subfigure}
  \begin{subfigure}{0.19\linewidth}
    \includegraphics[width=\linewidth]{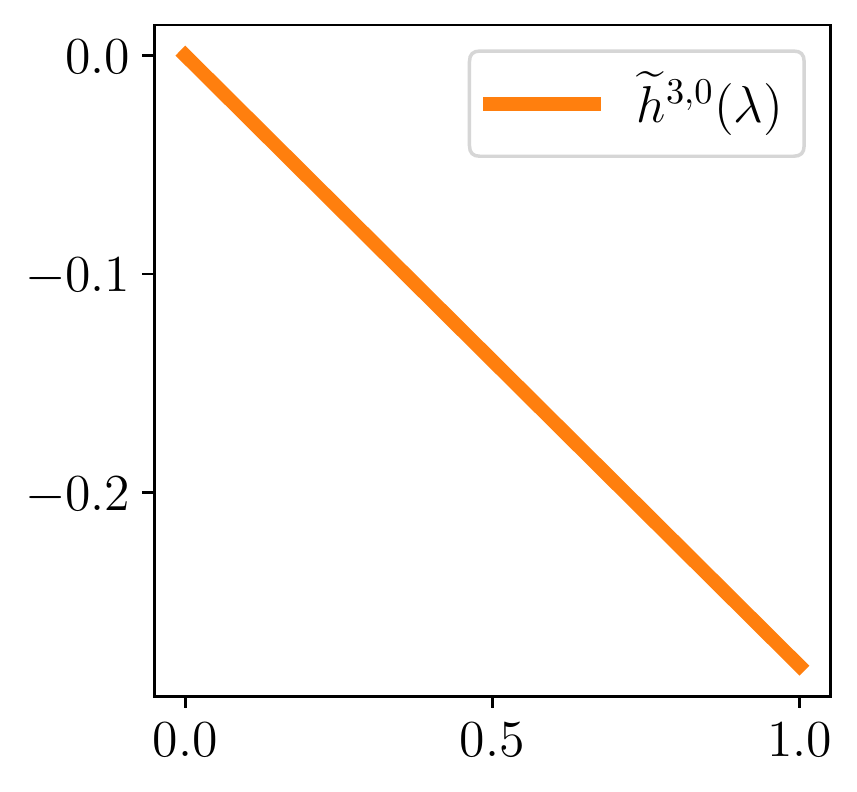}
    \caption{$\bbH_1$,Bunch@3}
  \end{subfigure}
  \begin{subfigure}{0.19\linewidth}
    \includegraphics[width=\linewidth]{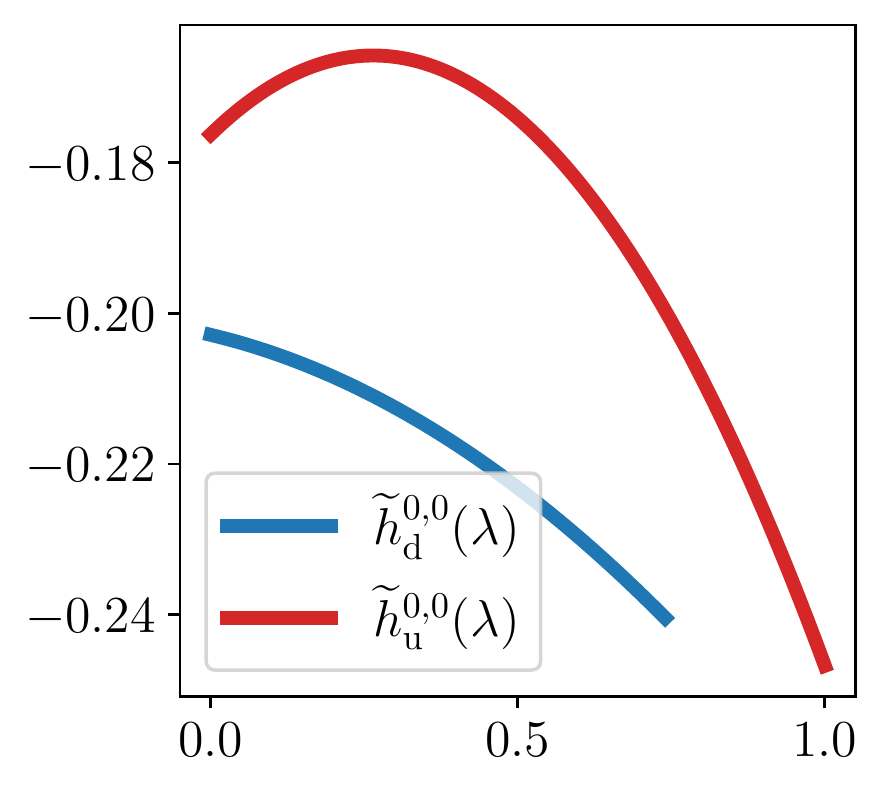}
    \caption{$\bbH_{1,\rmd},\bbH_{1,\rmu}$@0}
  \end{subfigure}
  \begin{subfigure}{0.19\linewidth}
    \includegraphics[width=\linewidth]{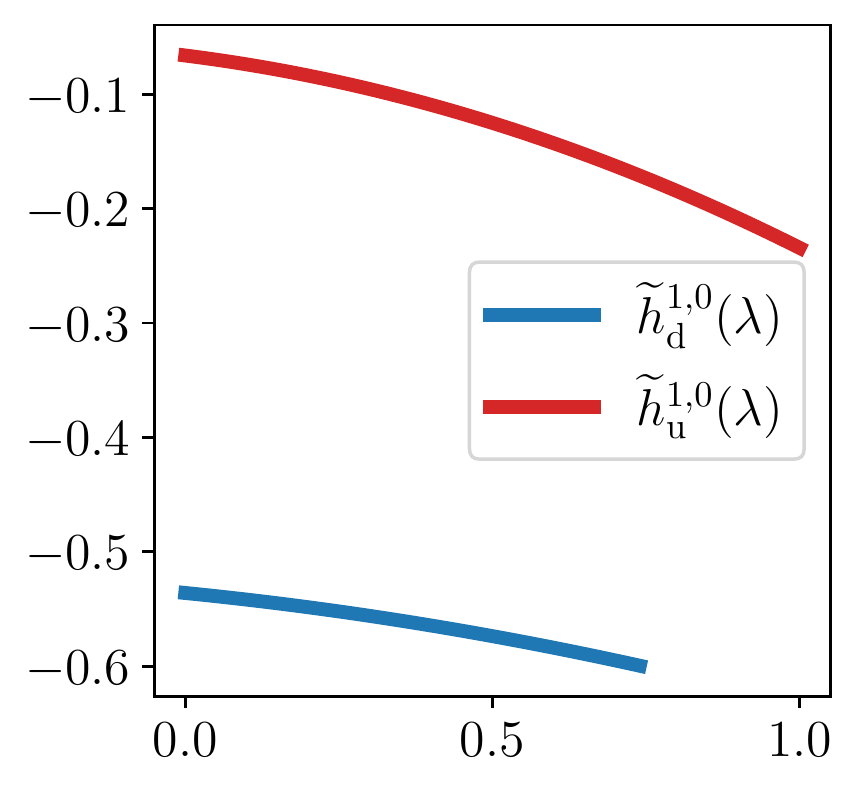}
    \caption{$\bbH_{1,\rmd},\bbH_{1,\rmu}$@1}
  \end{subfigure}
  \begin{subfigure}{0.19\linewidth}
    \includegraphics[width=\linewidth]{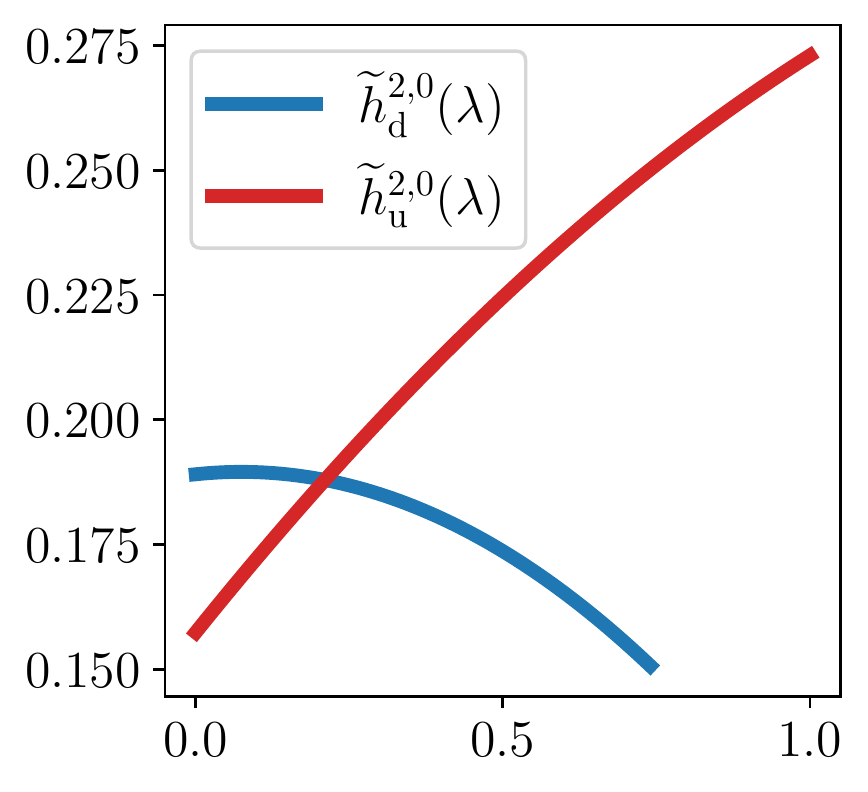}
    \caption{$\bbH_{1,\rmd},\bbH_{1,\rmu}$@2}
  \end{subfigure}
  \begin{subfigure}{0.19\linewidth}
    \includegraphics[width=\linewidth]{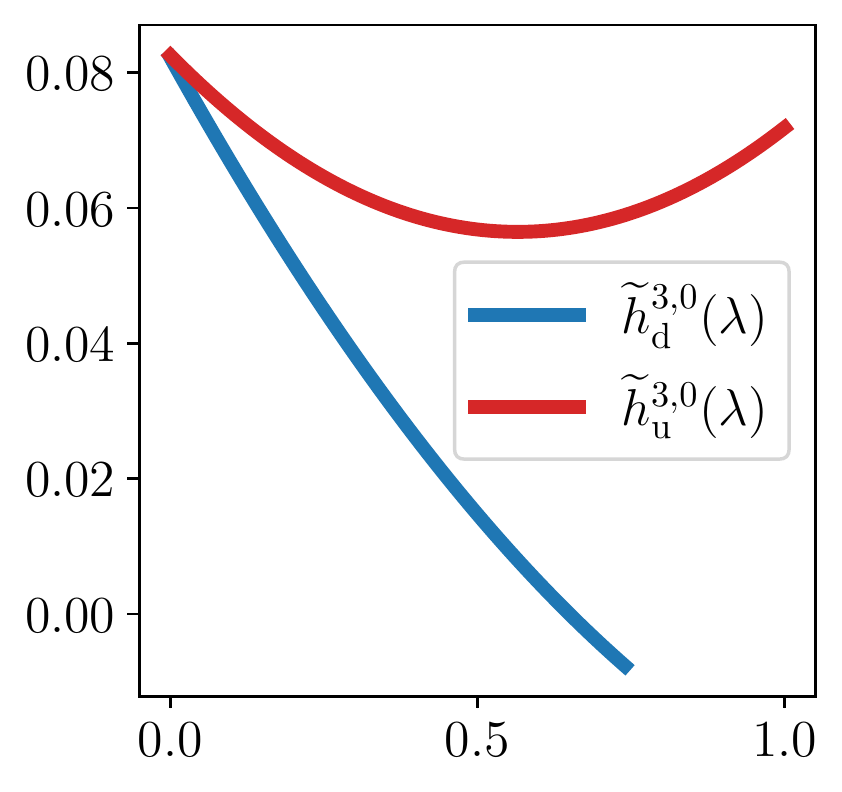}
    \caption{$\bbH_{1,\rmd},\bbH_{1,\rmu}$@3}
  \end{subfigure}
  \begin{subfigure}{0.19\linewidth}
    \includegraphics[width=\linewidth]{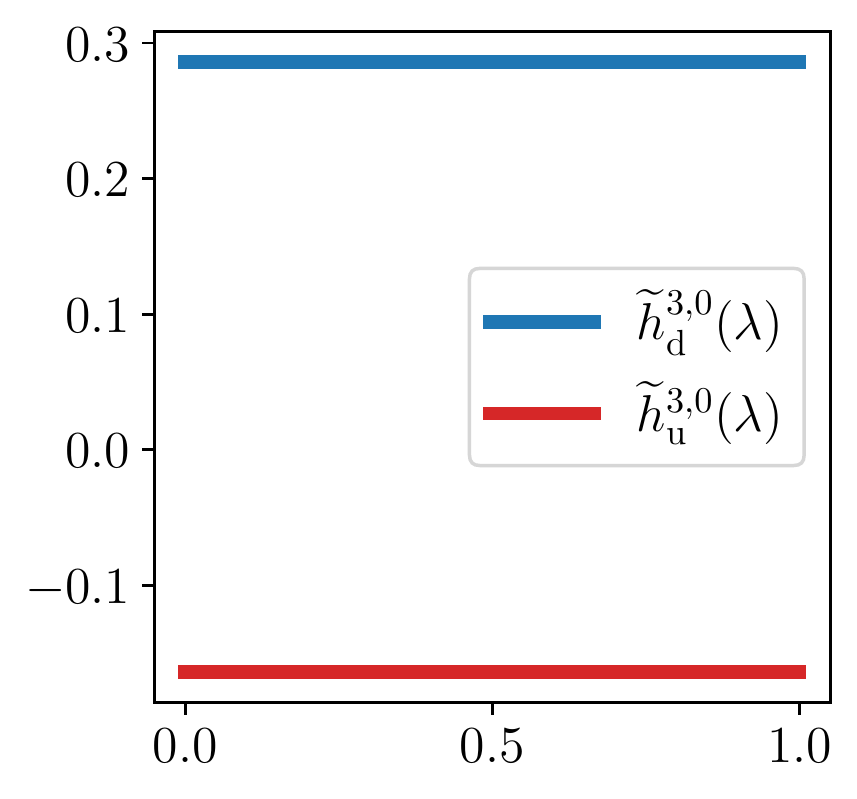}
    \caption{$\bbH_{1,\rmd},\bbH_{1,\rmu}$,Bunch@3}
  \end{subfigure}
  \caption{The frequency responses of three SCFs of SCCNN-Node with $L=2,T=2,F=32$ at first layer for different output-input feature pairs $(f_1,f_0)$ with $f_1=0,1,2,3,4$ and $f_0=0$. \emph{(Top Row)} (a-d):SCFs $\bbH_1$@$f_1$  where gradient and curl frequency responses are in blue and red; and (e):$\bbH_1=\bbL_1$ for Bunch (in orange since it has one frequency response for all frequencies). \emph{(Bottem Row)} (f-i): lower and upper SCFs $\bbH_{1,\rmd}$ and $\bbH_{1,\rmu}$@$f_l$ in blue and red, resepctively; and (j): $\bbH_{1,\rmd}=\bbH_{1,\rmu}=\bbI$ for Bunch.}
  \label{fig:frequency_response_sccnns}
\end{figure}

\subsubsection{Ablation Study}
We perform an ablation study to observe the roles of different components in SCCNNs. 

\textbf{Limited Input.} We study the influence of limited input data for model SCCNN-Node of order two. Specifically, we consider a missing rate of $\{10\%,40\%,70\%,100\% \}$ for either node input or edge input. From \cref{tab:missing_input}, we see that the prediction performance does not deteriorate at a cost of the model complexity (higher convolution orders) when a certain part of the input missing except with full zeros as input. This ability of learning from limited data shows the robustness of SCCNNs. 

\begin{table}[ht!]
  \caption{2-Simplex Prediction results of SCCNN-Node with missing input data.}
  \label{tab:missing_input}
  \vskip 0.15in
  \begin{center}
  \begin{small}
  \begin{sc}
    \resizebox{0.495\columnwidth}{!}{
  \begin{tabular}{lccr}
  \toprule
  Missing Rate & AUC & Parameters \\
  \midrule
  Node-0.1, Edge-0 &98.4$\pm$0.5&$L=2,F=32,T=2 $\\
  Node-0.4, Edge-0 &98.6$\pm$0.5&$L=2,F=32,T=5 $\\
  Node-0.7, Edge-0 &98.6$\pm$0.5&$L=2,F=32,T=3 $\\
  Node-1.0, Edge-0 &98.2$\pm$0.5&$L=2,F=32,T=4 $\\
  \midrule 
  Node-0, Edge-0.1 &98.4$\pm$0.4&$L=2,F=32,T=1 $\\
  Node-0, Edge-0.4 &98.5$\pm$0.6&$L=2,F=32,T=4 $\\
  Node-0, Edge-0.7 &98.6$\pm$0.3&$L=2,F=32,T=3 $\\
  Node-0, Edge-1.0 &98.1$\pm$0.4&$L=2,F=32,T=3 $\\
  \midrule 
  Node-0.1, Edge-0.1 &98.6$\pm$0.4&$L=2,F=32,T=1 $\\
  Node-0.4, Edge-0.4 &98.6$\pm$0.7&$L=2,F=32,T=4 $\\
  Node-0.7, Edge-0.7 &98.3$\pm$0.5&$L=2,F=32,T=4 $\\
  Node-1.0, Edge-1.0 &50.0$\pm$0.0& --- \\
  \bottomrule
  \end{tabular}
    }
  \end{sc}
  \end{small}
  \end{center}
  \vskip -0.1in
\end{table}

\textbf{SC Order $K$.}  We then investigate the influence of the SC order $K$. \cref{tab:sccnn_results_lower_order} reports the 2-simplex prediction results for $K=\{1,2\}$ and the 3-simplex prediction results for $K=\{1,2,3\}$.  We observe that for $k$-simplex prediction, it does not necessarily guarantee a better prediction with a higher-order SC, which further indicates that a positive simplex could be well encoded by both its faces and other lower-order subsets. For example, in 2-simplex prediction, SC of order one gives better results than SC of order two (similar for Bunch), showing that in this coauthorship complex, triadic collaborations are better encoded by features on nodes than pairwise collaborations. In 3-simplex prediction, SCs of different orders give similar results, showing that tetradic collaborations can be encoded by nodes, as well as by pairwise and triadic collaborations.  

\begin{table}[ht!]
  \caption{Prediction results of SCCNNs with lower order $K$. }
  \label{tab:sccnn_results_lower_order}
  \vskip 0.15in
  \begin{center}
  \begin{small}
  \begin{sc}
    \resizebox{0.495\columnwidth}{!}{
  \begin{tabular}{lcr} 
  \toprule 
  Method & 2-Simplex & Parameters \\
  \midrule  
  SCCNN-Node &98.7$\pm$0.5&$K=1,L=2,F=32,T=2 $\\
  Bunch-Node &98.3$\pm$0.4&$K=1,L=4,F=32$ \\
  SCCNN-Node &98.4$\pm$0.5&$K=2,L=2,F=32,T=2 $\\
  Bunch-Node &98.0$\pm$0.4&$K=2,L=4,F=32$ \\
  \midrule
  SCCNN-Edge &97.9$\pm$0.9&$K=1,L=3,F=32,T=5 $\\ 
  Bunch-Edge &97.3$\pm$1.1&$K=1,L=4,F=32$ \\
  SCCNN-Edge &95.9$\pm$1.0&$K=2,L=5,F=32,T=3 $\\
  Bunch-Edge &94.6$\pm$1.2&$K=2,L=4,F=32$ \\
  \bottomrule
\end{tabular}
    }
  \resizebox{0.495\columnwidth}{!}{
  \begin{tabular}{lcr} 
  \toprule 
  Method & 3-Simplex & Parameters \\
  \midrule
  SCCNN-Node &99.3$\pm$0.3&$K=1,L=2,F=32,T=1 $\\
  SCCNN-Node &99.3$\pm$0.2&$K=2,L=2,F=32,T=5 $\\
  SCCNN-Node &99.4$\pm$0.3&$K=3,L=3,F=32,T=3 $\\
  \midrule
  SCCNN-Edge &98.9$\pm$0.5&$K=1,L=3,F=32,T=5 $\\
  SCCNN-Edge &99.2$\pm$0.4&$K=2,L=5,F=32,T=5 $\\
  SCCNN-Edge &99.0$\pm$1.0&$K=3,L=5,F=32,T=5 $\\
  \midrule
  SCCNN-Tri. &97.9$\pm$0.7&$K=2,L=4,F=32,T=4 $\\
  SCCNN-Tri. &97.4$\pm$0.9&$K=3,L=4,F=32,T=4 $\\
  \bottomrule
  \end{tabular}
  }
  \end{sc}
  \end{small}
  \end{center}
  \vskip -0.1in
\end{table}

\textbf{Missing Components in SCCNNs.} With a focus on 2-simplex prediction with SCCNN-Node of order one, to avoid overcrowded settings, we study how each component of an SCCNN influences the prediction. We consider the following settings without: 
1) ``Edge-to-Node'', where the projection $\bbx_{0,\rmu}$ from edge to node is not included, equivalent to GNN;
2) ``Node-to-Node'', where for node output, we have $\bbx_0^l = \sigma(\bbH_{0,\rmu}^l\bbR_{1,\rmu}\bbx_1^{l-1})$; 
3) ``Node-to-Edge'', where the projection $\bbx_{1,\rmd}$ from node to edge is not included, i.e., we have $\bbx_1^l = \sigma(\bbH_1^l\bbx_1^{l-1})$;
and 4) ``Edge-to-Edge'', where for edge output, we have $\bbx_1^l = \sigma(\bbH_{1,\rmd}^l\bbR_{1,\rmd}\bbx_0^{l-1})$. 

\begin{table}[ht!]
  \caption{(Left): 2-Simplex prediction results with SCCNN-Node without certain components. (Right): Running time in ms per epoch.}
  \label{tab:sccnn_ablation_results}
  \vskip 0.15in
  \begin{center}
  \begin{small}
  \begin{sc}
  \begin{tabular}{lcr}
  \toprule 
  Missing Component  & 2-Simplex & Parameters \\
  \midrule
  --- &98.7$\pm$0.5&$K=1,L=2,F=32,T=2 $\\
  Edge-to-Node &93.9$\pm$0.8&$K=1,L=5,F=32,T=2 $\\
  Node-to-Node &98.7$\pm$0.4&$K=1,L=4,F=32,T=2 $\\
  Edge-to-Edge &98.5$\pm$1.0&$K=1,L=3,F=32,T=3 $\\
  Node-to-Edge &98.8$\pm$0.3& $K=1,L=4,F=32,T=3 $\\
  \bottomrule
  \end{tabular}
  \hspace{0.3in}
  \end{sc}
  \end{small}
  \end{center}
  \vskip -0.1in
\end{table}

From the results in \cref{tab:sccnn_ablation_results} (Left), we see that ``No Edge-to-Node'', i.e., GNN, gives much worse results as it leverages no information on edges with limited expressive power. For cases with other components missing, a similar performance can be achieved, however, at a cost of the model complexity, with either a higher convolution order or a larger number of layers $L$, while the latter in turn degrades the stability of the SCCNNs, as discussed in \cref{sec:stability_analysis}. As studied by \citet[Thm. 6]{bodnar2021weisfeiler}, SCCNNs with certain inter-simplicial couplings pruned/missing can be powerful as well, but it comes with a cost of complexity which might degrade the model stability if more number layers are required.

\subsubsection{Stability Analysis}\label{app:stability_analysis_experiment}

We then perform a stability analysis of SCCNNs. We artificially add perturbations to the normalization matrices, which resemble the weights of simplices. Specifically, normalization matrices $\bbM_{00}$ and $\bbM_{01}$ resemble the node weights, matrices $\bbM_{10}$, $\bbM_{11}$ and $\bbM_{12}$ the edge weights, and $\bbM_{21}$ and $\bbM_{22}$ the triangle weights. 

We consider small perturbations $\bbE_0$ on node weights which is a diagonal matrix following that $\lVert\bbE_0\rVert\leq {\epsilon_0}/{2}$. We generate its diagonal entries from a uniform distribution $[-{\epsilon_0}/{2},{\epsilon_0}/{2})$ with $\epsilon_0\in[0,1]$, which represents a one degree of deviation of the node weigths from the true ones. Normalization matrices $\bbM_{0\cdot}$ are accordingly deviated from the true ones, based which perturbed projection matrices and Hodge Laplacians are defined. Similarly, perturbations on edge weights and triangle weights are applied to study the stability. 

With an SCCNN-Node for 2-simplex prediction, we measure the distance between the node outputs with and without perturbations, i.e., $\lVert \bbx_0^L-\hat{\bbx}_0^L \rVert / \lVert \bbx_0^L\rVert$; and likewise the distance between the edge outputs $\lVert \bbx_1^L-\hat{\bbx}_1^L \rVert / \lVert \bbx_1^L\rVert$ and the distance between triangle outputs $\lVert \bbx_2^L-\hat{\bbx}_2^L \rVert / \lVert \bbx_2^L\rVert$. 
We take as the baseline the setting $L=2,F=32,T=2$ of the best result, studying the influence of the following three factors on the stability. 

\textbf{Stability Dependence.}
We first show the stability dependence between different simplices in \cref{fig:mutual_dependence}. We see that under perturbation on node weights, triangle output is not influenced until the number of layers becomes two; likewise, node output is not influenced by perturbations on triangle weights with a one-layer SCCNN. Also, a one-layer SCCNN under perturbations on edge weights will cause outputs on nodes, edges, triangles perturbed. Lastly, we observe that the same degree of perturbations added to different simplices causes different degrees of instability, owing to the number $N_k$ of $k$-simplices in the stability bound. Since $N_0<N_1<N_2$, the perturbations on node weights cause less instability than those on edge and triangle weights.  

\begin{figure}[ht!]
  \centering
  \begin{subfigure}{0.19\linewidth}
    \includegraphics[width=\linewidth]{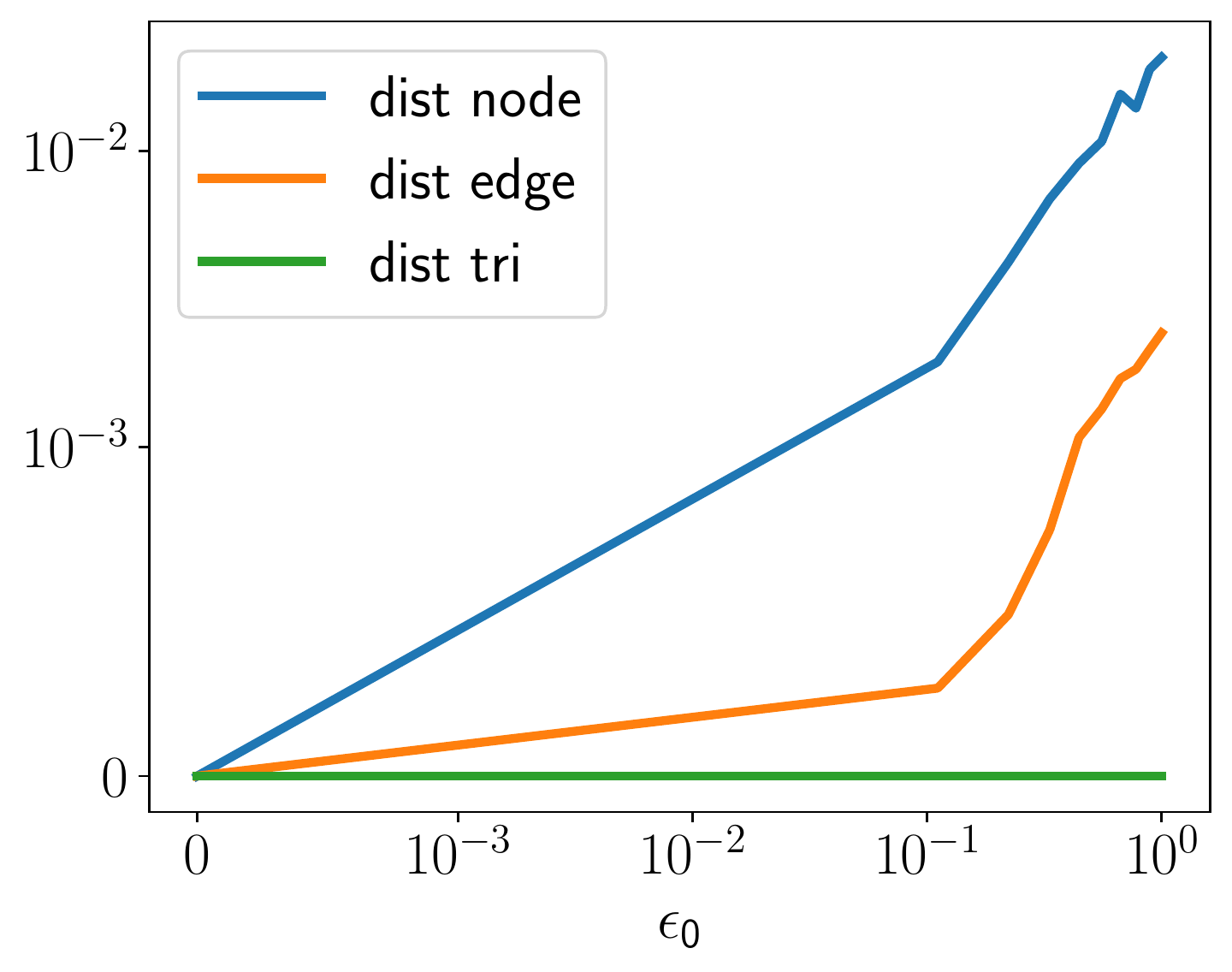}
    \caption{Node pert. $\bbE_0, L=1$}
  \end{subfigure}
  \begin{subfigure}{0.19\linewidth}
    \includegraphics[width=\linewidth]{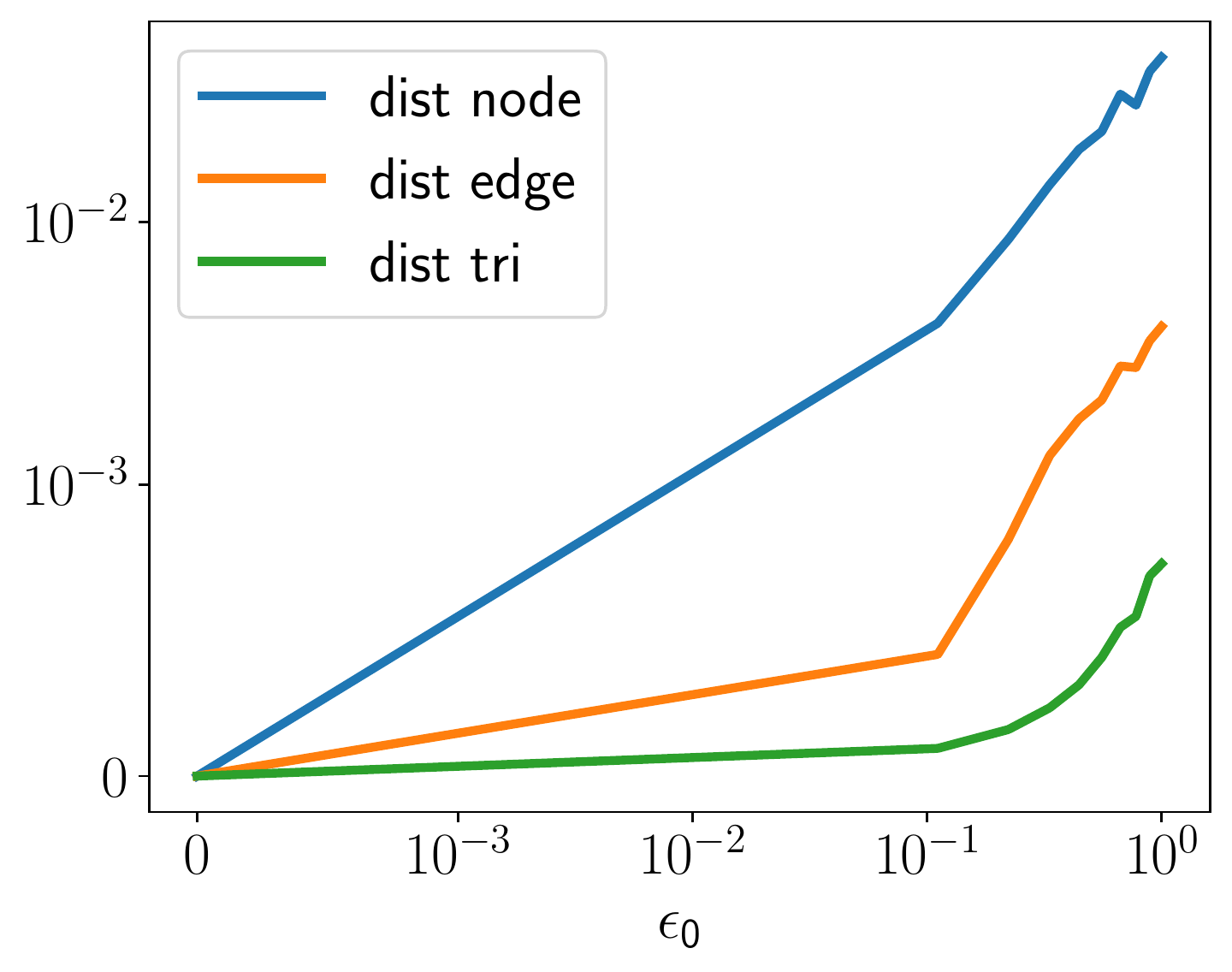}
    \caption{Node pert. $\bbE_0, L=2$}
  \end{subfigure}
  \begin{subfigure}{0.19\linewidth}
    \includegraphics[width=\linewidth]{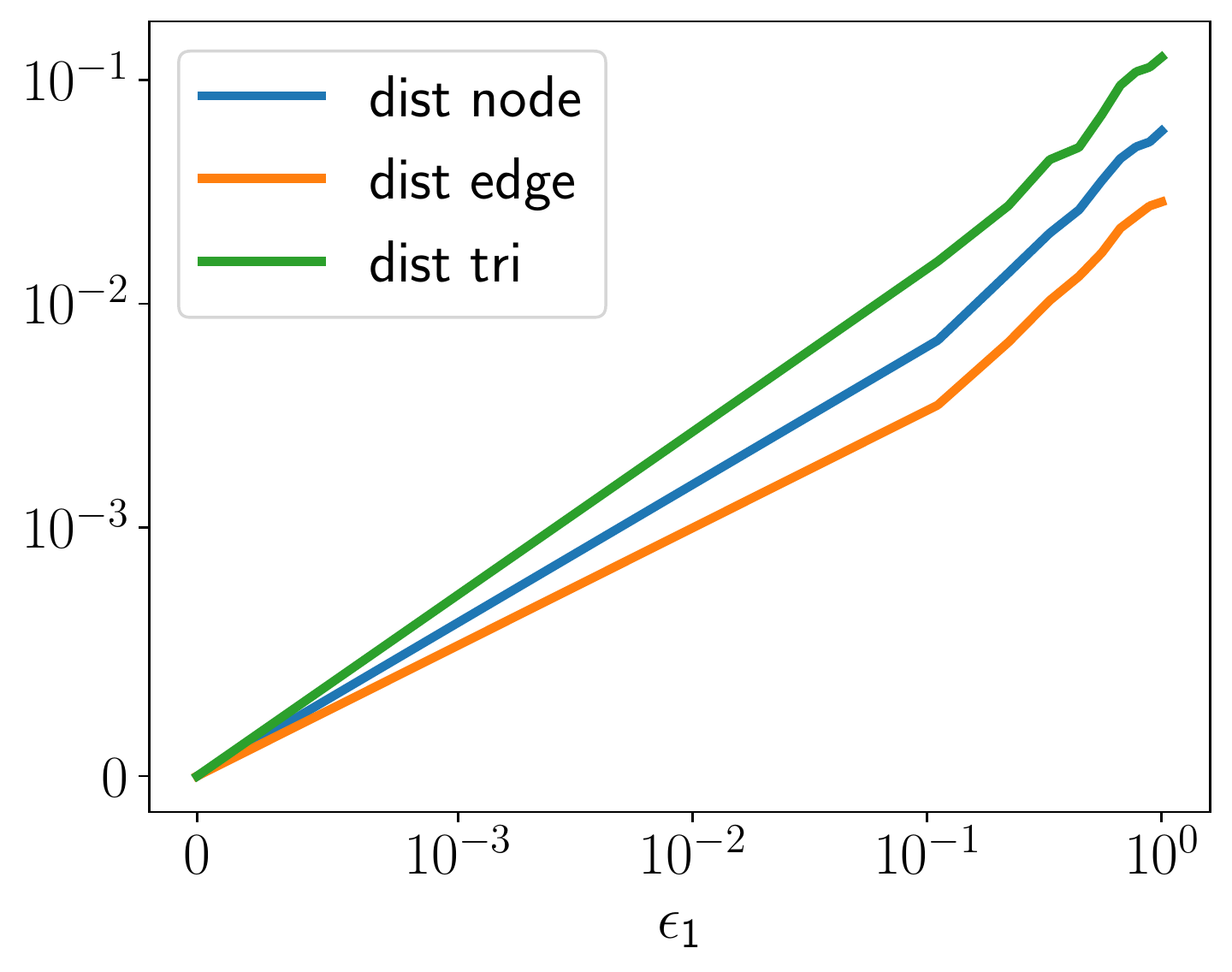}
    \caption{Edge pert. $\bbE_1, L=1$}
  \end{subfigure}
  \begin{subfigure}{0.19\linewidth}
    \includegraphics[width=\linewidth]{figures/pert_2/0layers_2orders_32features.pdf}
    \caption{Tri. pert. $\bbE_2, L=1$}
  \end{subfigure}
  \begin{subfigure}{0.19\linewidth}
    \includegraphics[width=\linewidth]{figures/pert_2/1layers_2orders_32features.pdf}
    \caption{Tri. pert. $\bbE_2, L=2$}
  \end{subfigure}
  \caption{The stabilities of different simplicial outputs are dependent on each other.}
  \label{fig:mutual_dependence}
\end{figure}

\textbf{Number of Layers.} \cref{fig:stability_layers} shows that the stability of SCCNNs degrades as the number of layers increases as studied in \cref{thm:stability}. 
\begin{figure}[ht!]
  \centering
  \begin{subfigure}{0.19\linewidth}
    \includegraphics[width=\linewidth]{figures/pert_1/0layers_2orders_32features.pdf}
    \caption{$\bbE_1, L=1$}
  \end{subfigure}
  \begin{subfigure}{0.19\linewidth}
    \includegraphics[width=\linewidth]{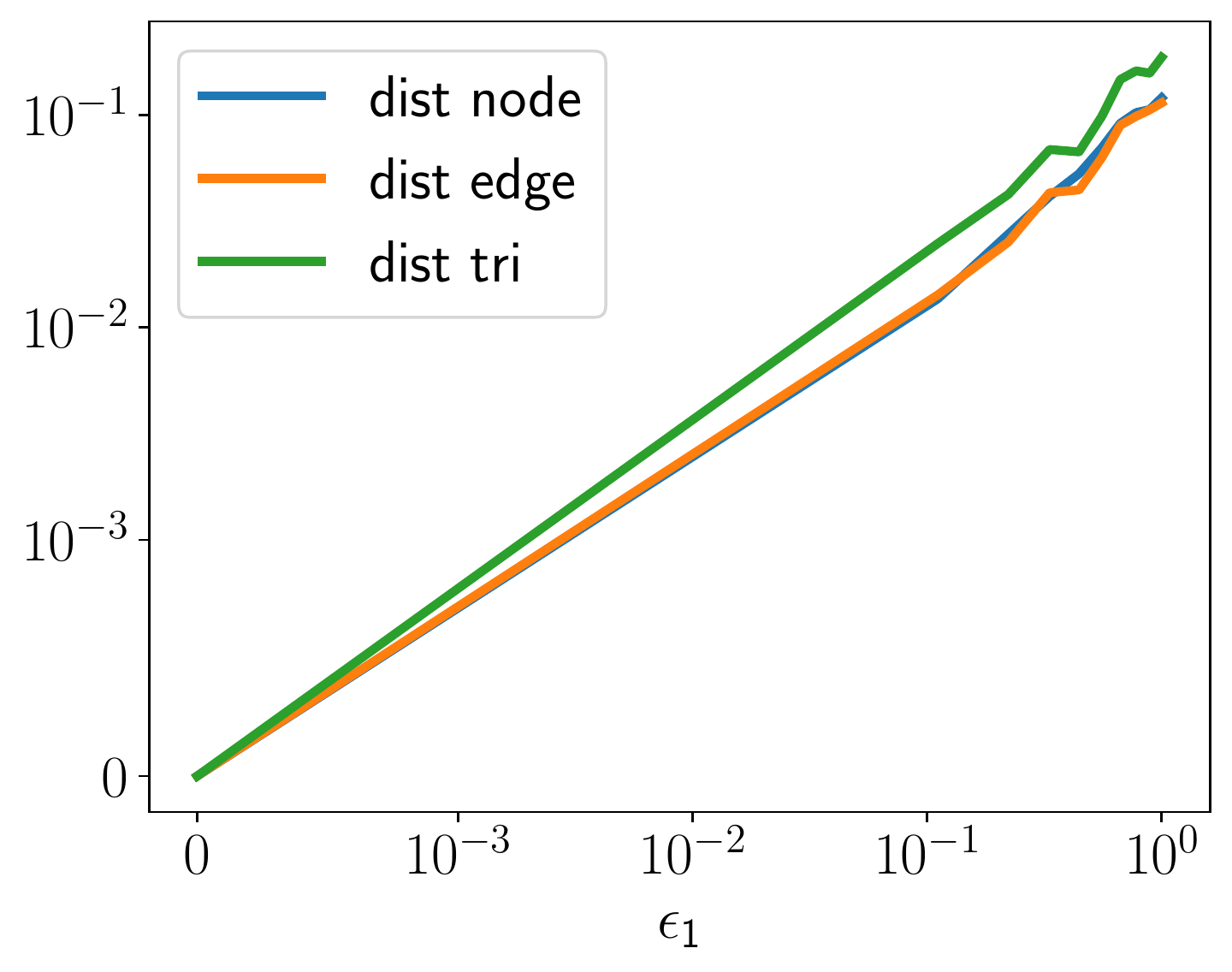}
    \caption{$\bbE_1, L=2$}
  \end{subfigure}
  \begin{subfigure}{0.19\linewidth}
    \includegraphics[width=\linewidth]{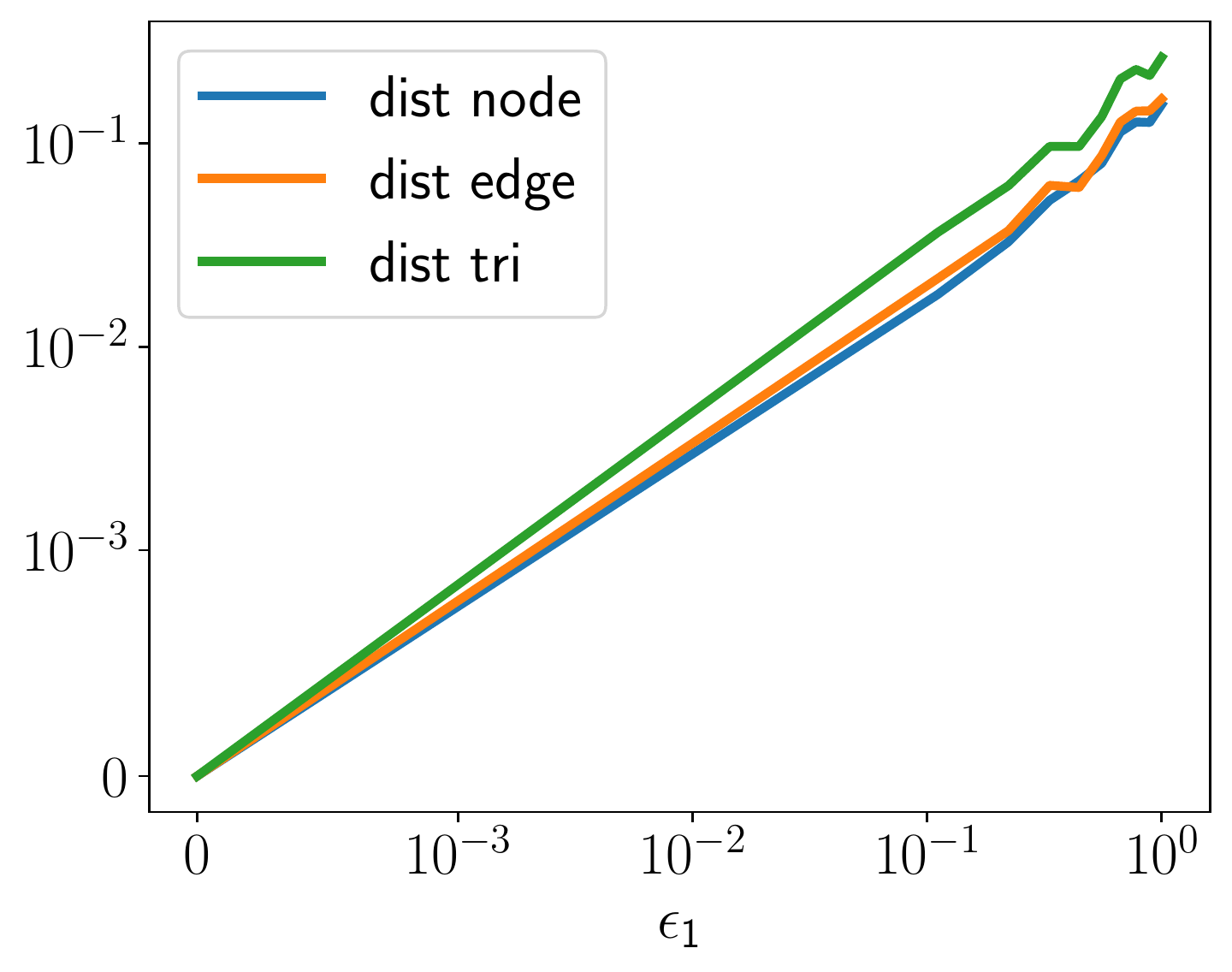}
    \caption{$\bbE_1, L=3$}
  \end{subfigure}
  \begin{subfigure}{0.19\linewidth}
    \includegraphics[width=\linewidth]{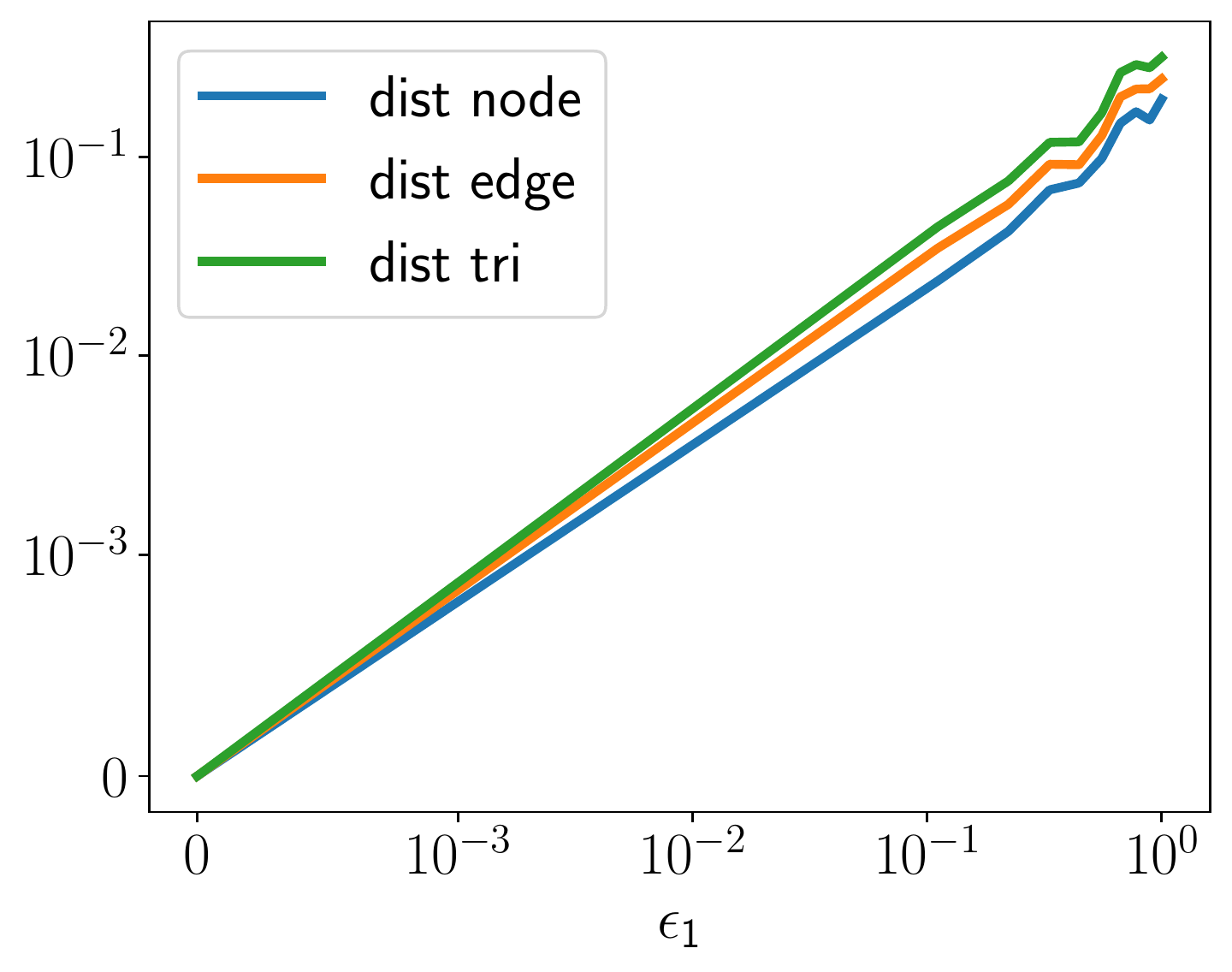}
    \caption{$\bbE_1, L=4$}
  \end{subfigure}
  \begin{subfigure}{0.19\linewidth}
    \includegraphics[width=\linewidth]{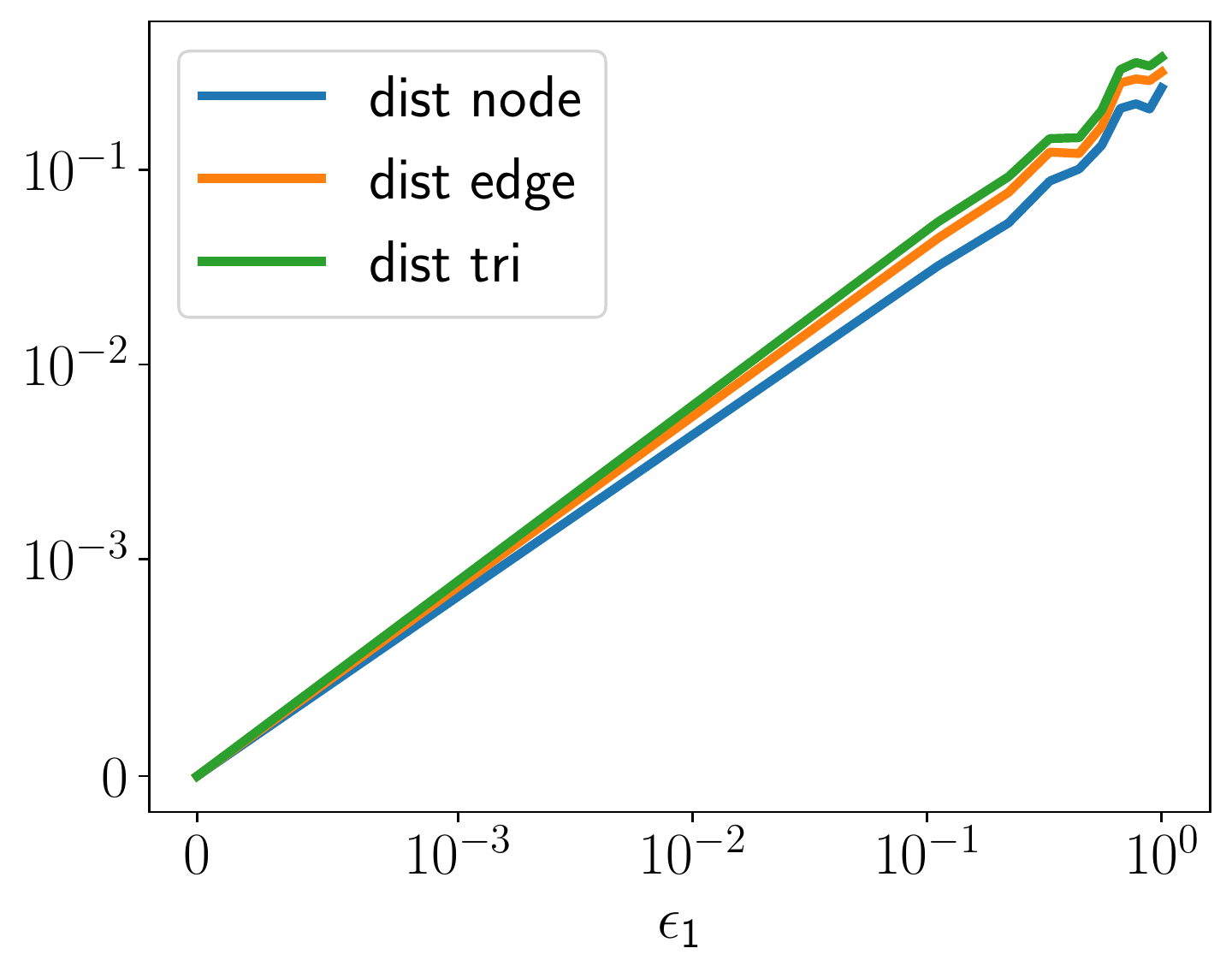}
    \caption{$\bbE_1, L=5$}
  \end{subfigure}
  \caption{The stability of SCCNNs in terms of different numbers of layers. We consider perturbations on edge weights.}
  \label{fig:stability_layers}
\end{figure}

\textbf{Convolution Order $T$.} As the order of SCFs becomes larger, the SCFs become more selective at a cost of stability as shown in \cref{fig:il_illustration2}. We see that the stability does not deteriorate when $T$ becomes larger. This is owing to the information spillage of the nonlinearity in SCCNNs such that information at higher frequencies where the SCFs cannot be more selective is spread over the lower frequency where the SCFs can be made arbitrarily selective, as studied by \citet{gama2019stability}. 

\begin{figure}[ht!]
  \centering
  \begin{subfigure}{0.19\linewidth}
    \includegraphics[width=\linewidth]{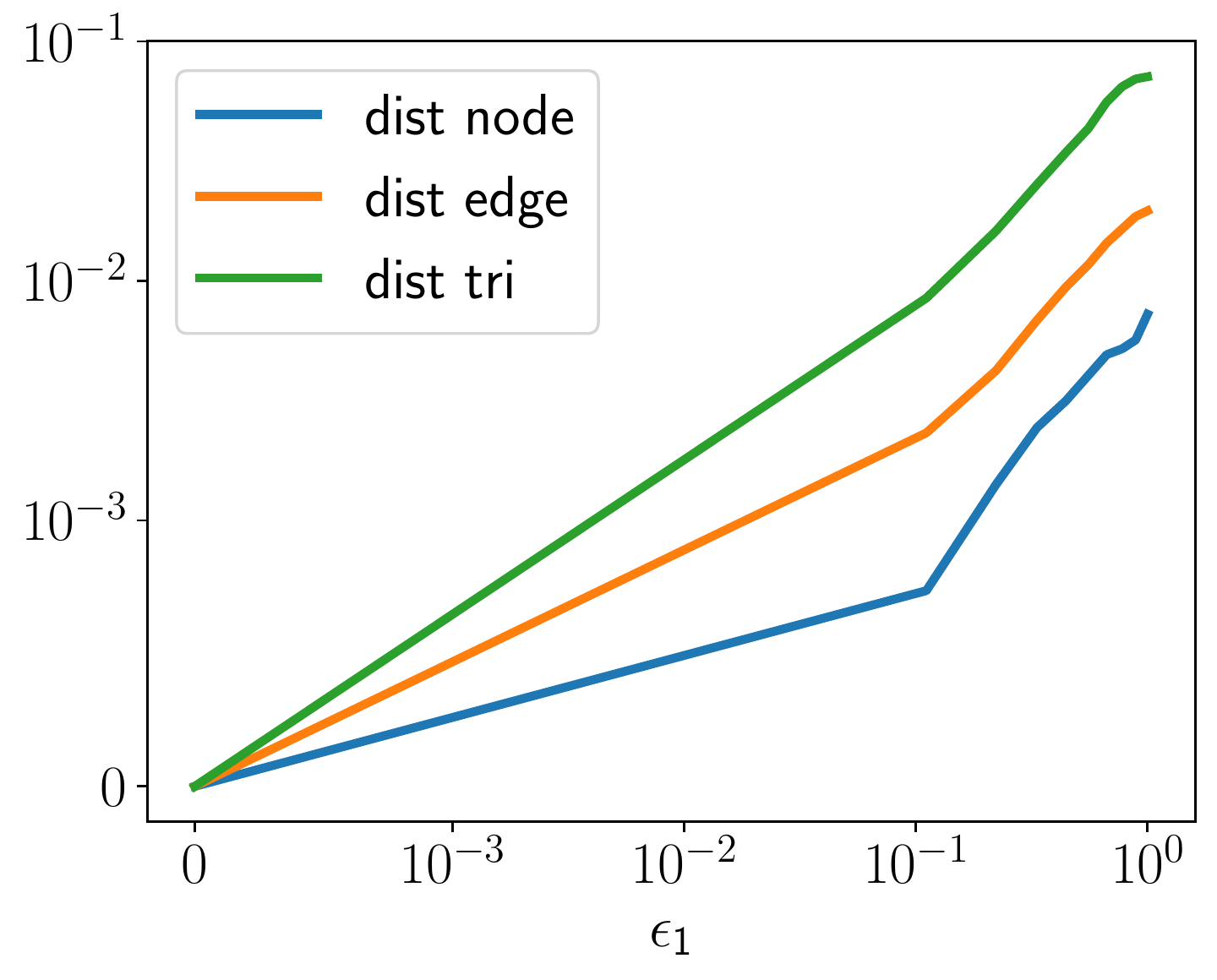}
    \caption{$\bbE_1, T=0$}
  \end{subfigure}
  \begin{subfigure}{0.19\linewidth}
    \includegraphics[width=\linewidth]{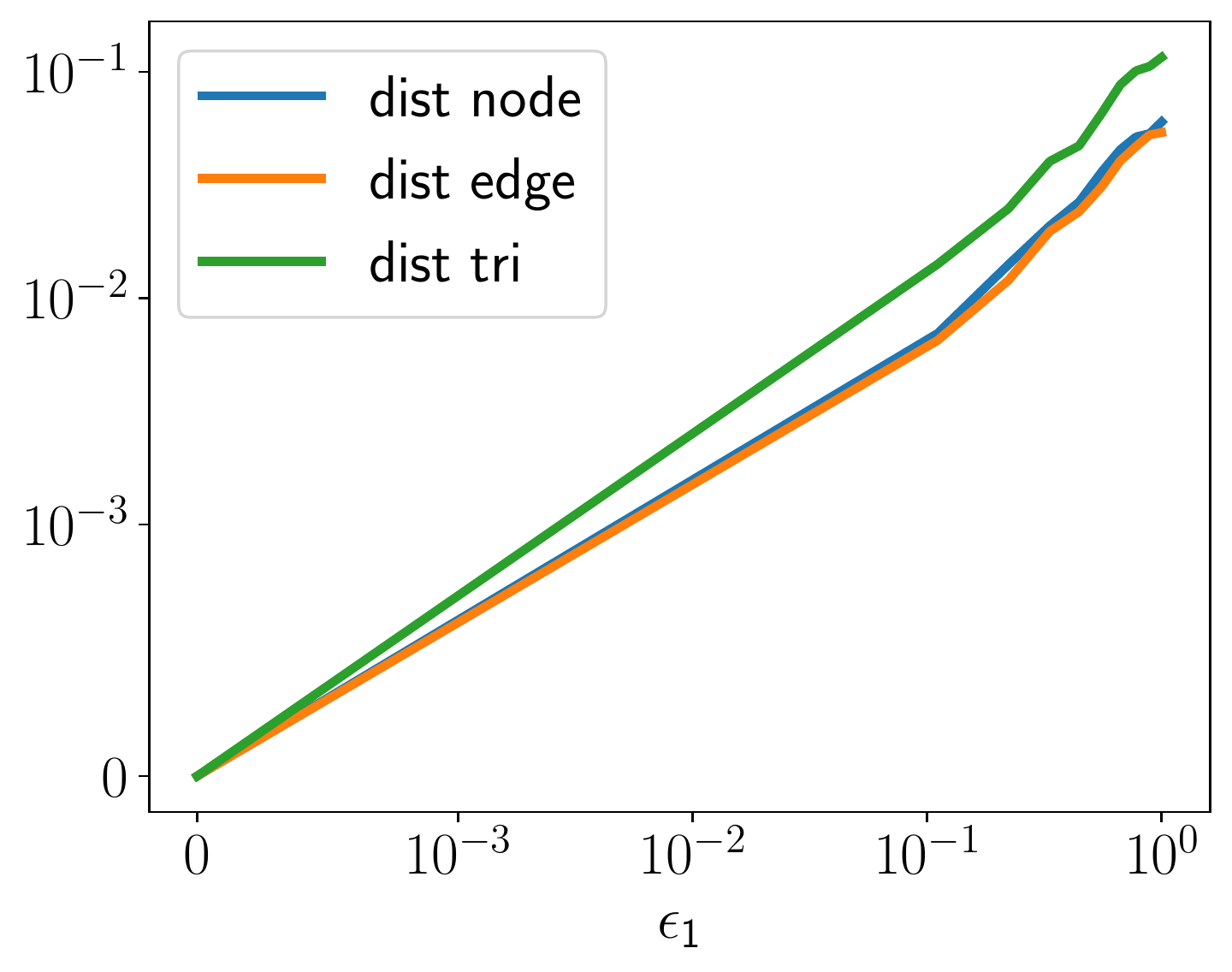}
    \caption{$\bbE_1, T=1$}
  \end{subfigure}
  \begin{subfigure}{0.19\linewidth}
    \includegraphics[width=\linewidth]{figures/pert_1/1layers_2orders_32features.pdf}
    \caption{$\bbE_1, T=2$}
  \end{subfigure}
  \begin{subfigure}{0.19\linewidth}
    \includegraphics[width=\linewidth]{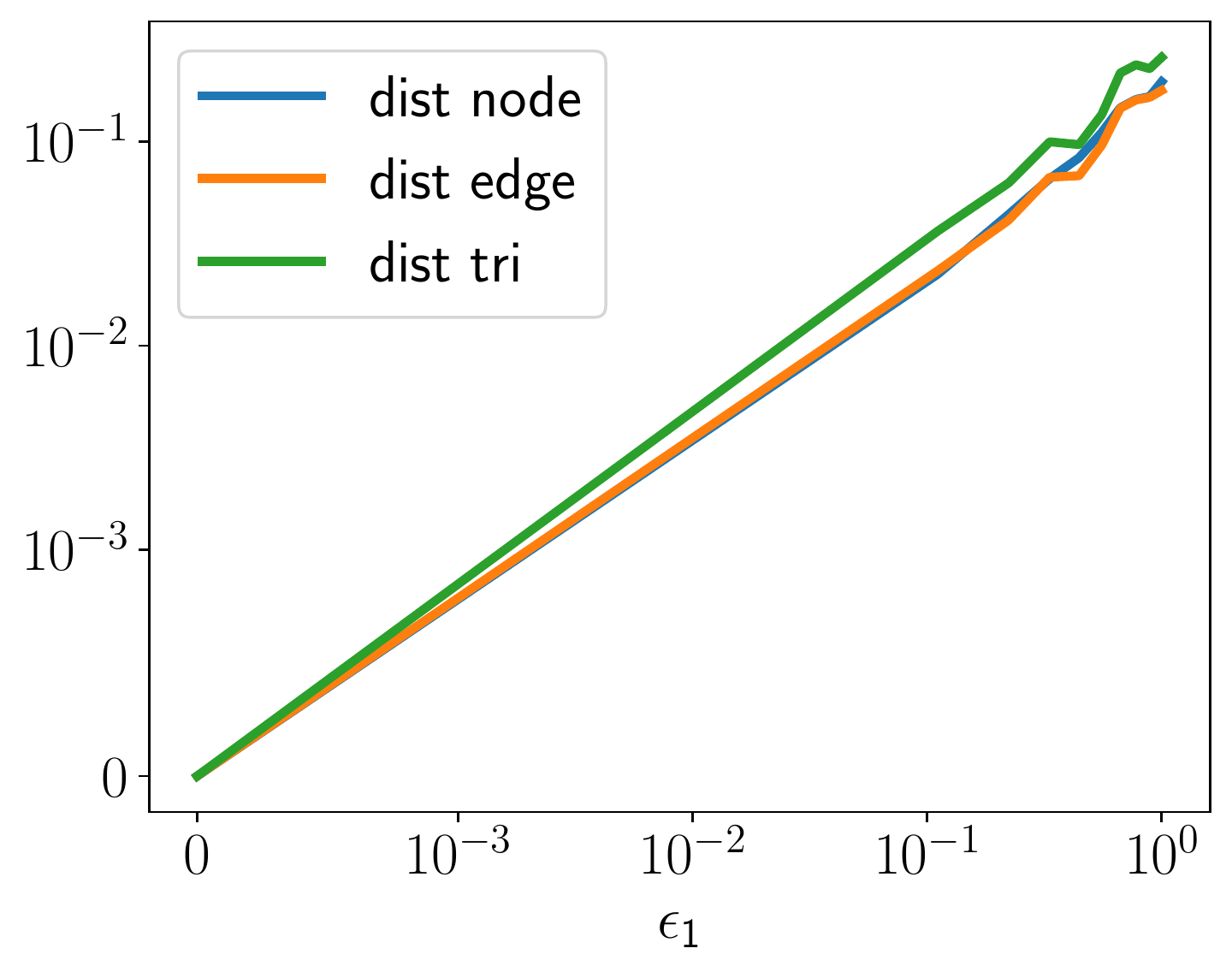}
    \caption{$\bbE_1, T=3$}
  \end{subfigure}
  \begin{subfigure}{0.19\linewidth}
    \includegraphics[width=\linewidth]{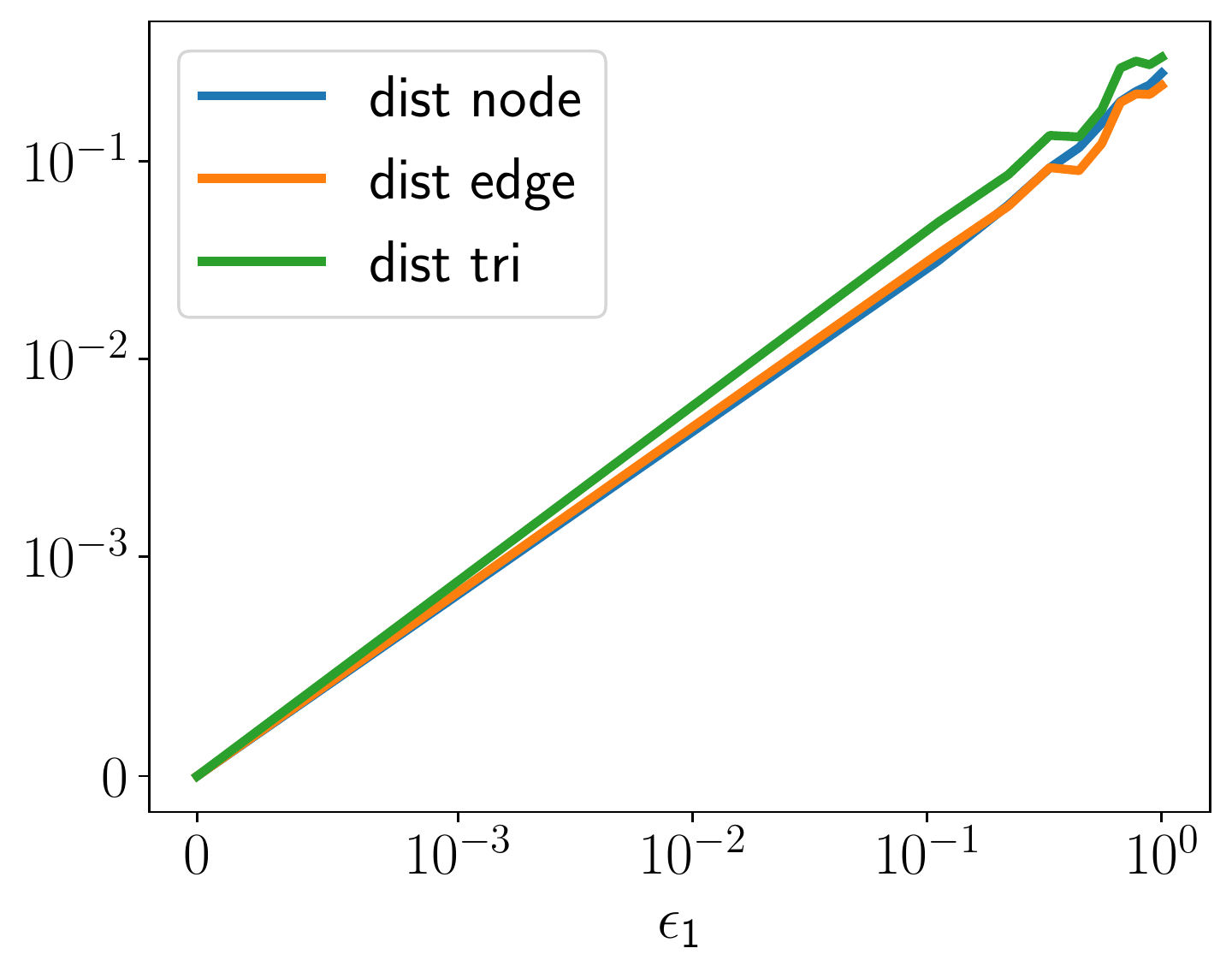}
    \caption{$\bbE_1, T=4$}
  \end{subfigure}
  \caption{The stability of SCCNNs in terms of the convolution orders. We consider perturbations on edge weights.}
  \label{Fig:stability_conv_orders}
\end{figure}

From this study, we see the advantage of using higher-order simplicial convolutions in the SCCNNs, which improves the expressive power of SCCNNs without sacrificing the stability much. In this case, it requires a small number of layers such that the stability is guaranteed.

\subsection{Trajectory Prediction}\label{app:traj_pred_results}
Machine learning based methods are commonly used in trajectory prediction, e.g., CNN, RNN, and GNN \citep{nikhil2018convolutional, rudenko2020human, wu2017modeling, cordonnier2019extrapolating}. While trajectories can be naturally viewed as edge flows \citep{ghosh2018topological, schaub2020random}, PSNN by \citet{roddenberry2021principled}, as an instance of the SCCNN, was the first attempt to use the SC model. Here, we first formulate the trajectory prediction problem proposed by \citet{roddenberry2019hodgenet}, then we evaluate the performance. 

\subsubsection{Problem Formulation}
A trajectory of length $m$ can be modeled as a sequence of nodes $[v_0,v_1,\dots,v_{m-1}]$ in an SC. The task is to predict the next node $v_m$ from the neighbors of $v_{m-1}$, $\ccalN_{v_{m-1}}$. The algorithm in \citet{roddenberry2021principled} first represents the trajectory equivalently as a sequence of oriented edges $[[v_0,v_1],[v_1,v_2],\dots,[v_{m-2},v_{m-1}]]$. Then, an edge flow $\bbx_1$ is defined, whose value on an edge $e$ is $[\bbx_1]_e=1$ if edge $e$ is traversed by the trajectory in a forward direction, $[\bbx_1]_e=-1$ if edge $e$ is traversed in a backward direction by the trajectory, and $[\bbx_1]_e=0$, otherwise. 

With the trajectory flow $\bbx_1$ as the input, together with zero inputs on the nodes and triangles, an SCCNN of order two is used to generate a representation $\bbx_1^L$ of the trajectory, which is the output on edges. This is followed by a projection step 
  $\bbx_{0,\rmu}^L= \bbB_1\bbW\bbx_1^L $,
where the output is first passed through a linear transformation via $\bbW$, then projected into the node space via $\bbB_1$. Lastly, a distribution over the candidate nodes $\ccalN_{v_{m-1}}$ is computed via a softmax operation, $\bbn_j=\text{softmax}([\bbx_{0,\rmu}^L]_j),j\in\ccalN_{v_{m-1}}$. The best candidate is selected as $v_m = \text{argmax}_{j} \bbn_j$. We refer to \citet[Alg. S-2]{roddenberry2019hodgenet} for more details. 

Given that an SCCNN of order two generates outputs also on nodes, we can directly apply the node feature output $\bbx_0^L$ to compute a distribution over the candidate nodes $\ccalN_{v_{m-1}}$ without the projection step. We refer to this as SCCNN-Node, and the method of using the edge features with the projection step as SCCNN-Edge. 

\subsubsection{Model}
In this experiment, we consider the following methods: 1) PSNN by \citet{roddenberry2021principled}; 2) SNN by \citet{ebli2020simplicial}; 3) SCNN by \citet{yang2021simplicial} where we consider different lower and upper convolution orders $T_\rmd,T_\rmu$; and 4) S2CCNN (Bunch) by \citet{bunch2020simplicial} where we consider both the node features and edge features, namely, Bunch-Node and Bunch-Edge, as SCCNN. 

\subsubsection{Data and Experimental Setup}
\begin{wrapfigure}{r}{0.5\columnwidth}
  \vskip -0.1in 
  \centering
  \begin{subfigure}{0.245\columnwidth}
    \includegraphics[width=\linewidth]{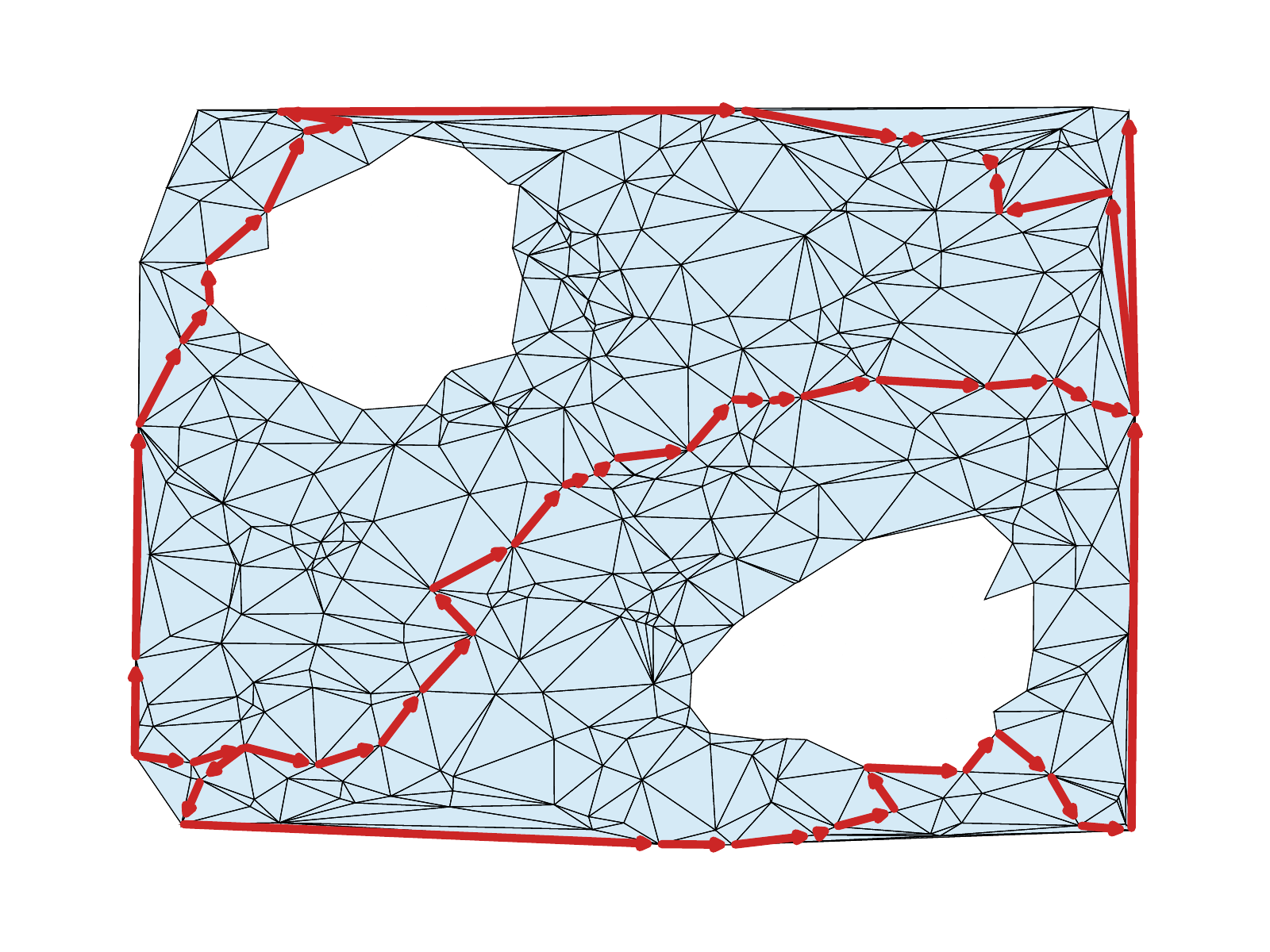}
    \label{fig:synthetic_trajectories}
  \end{subfigure}
  \begin{subfigure}{0.245\columnwidth}
    \includegraphics[width=\linewidth]{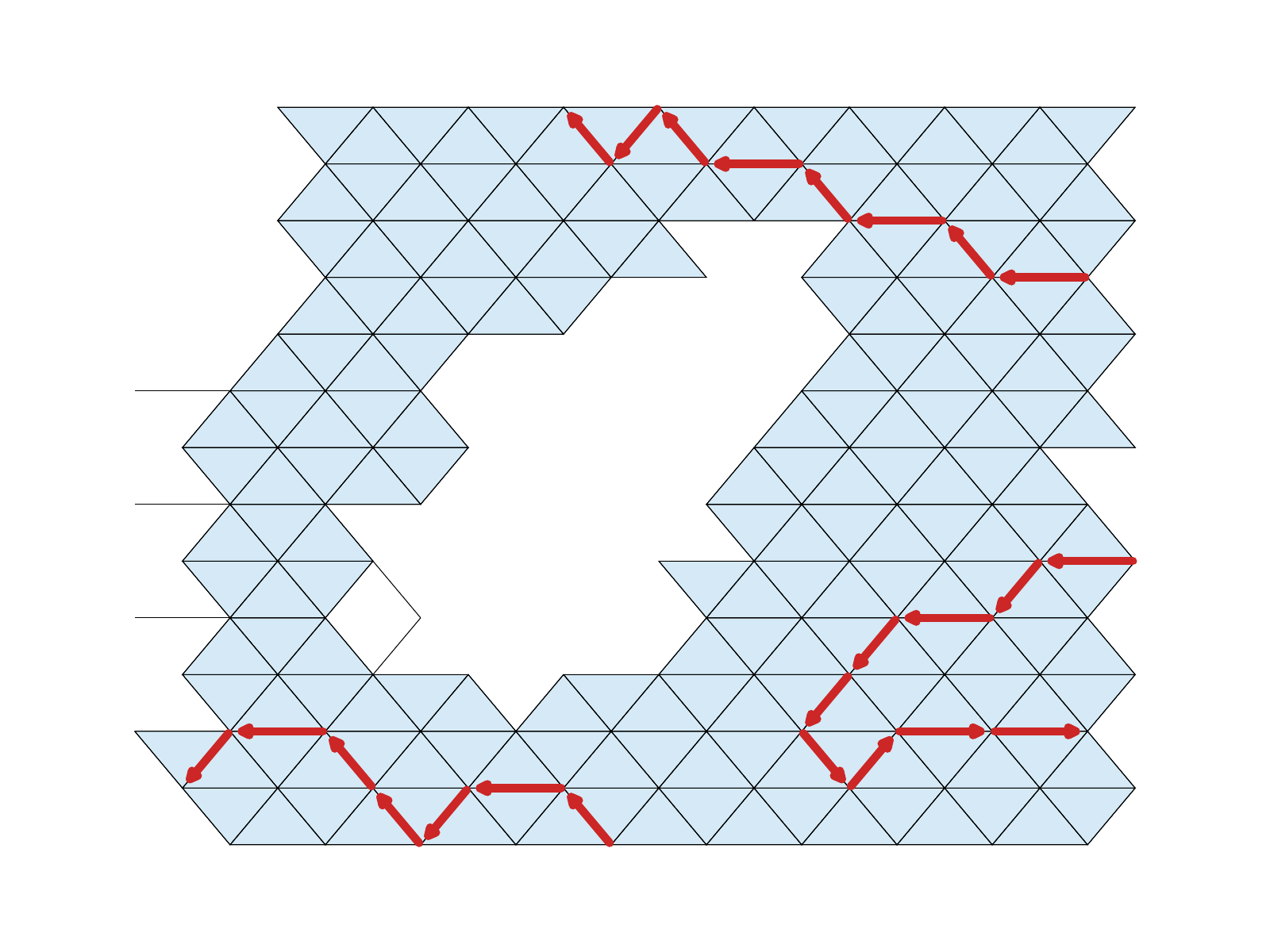}
    \label{fig:real_trajectories}
   \end{subfigure}
   \vskip -0.2in 
  \caption{Synthetic and real trajectories illustrations.}
 \end{wrapfigure}
\textbf{Synthetic Data.} Following the procedure in \citet{schaub2020random}, we generate 1000 trajectories as follows. First, we create an SC with two ``holes'' by uniformly drawing 400 random points in the unit square, and then a Delaunay triangulation is applied to obtain a mesh, followed by the removal of nodes and edges in two regions, as depicted in \cref{fig:synthetic_trajectories}. To generate a trajectory, we consider a starting point at random in the lower-left corner, and then connect it via a shortest path to a random point in the upper left, center, or lower-right region, which is connected to another random point in the upper-right corner via a shortest path. 

In this experiment, we consider the random walk Hodge Laplacians $\bbL_{0}, \bbL_{1,\rmd}, \bbL_{1,\rmu}$ and $\bbL_{2,\rmd}$ defined in \eqref{eq.normalized_l0}, \eqref{eq.normalized_l1} and \eqref{eq.normalized_l2d}. For Bunch method, we set the shifting matrices as the simplicial adjacency matrices defined in \citet{bunch2020simplicial}. We consider different NNs with three intermediate layers where each layer contains $F=16$ intermediate features. The $\text{tanh}$ nonlinearity is used such that the orientation equivariance holds. The final projection $\bbn$ generates a node feature of dimension one. 
In the 1000-epoch training, we use the cross-entropy loss function between the output $\bbd$ and the true candidate and we consider an adam optimizer with a learning rate of $0.001$ and a batch size 100. To avoid overfitting, we apply a weight decay of $5\cdot 10^{-6}$ and an early stopping.

As done in \citet{roddenberry2021principled}, besides the standard trajectory prediction task, we also perform a reverse task where the training set remains the same but the direction of the trajectories in the test set is reversed and a generalization task where the training set contains trajectories running along the upper left region and the test set contains trajectories around the other region. We evaluate the correct prediction ratio by averaging the performance over 10 different data generations. 

\textbf{Real Data.} We also consider the Global Drifter Program dataset,\footnote{Data available at \url{http://www.aoml.noaa.gov/envids/gld/}.} localized around Madagascar. It consists of ocean drifters whose coordinates are logged every 12 hours. The ocean is meshed with hexagons, as shown in   \cref{fig:real_trajectories}, and we consider the trajectories of drifters based on their presence in the hexagonal meshes following the procedure in \citet{schaub2020random}. An SC can then be created by treating each mesh as a node, connecting adjacent meshes via an edge and filling the triangles, where the ``hole'' is yielded by the island. Following the process in \citet{roddenberry2021principled}, it results in 200 trajectories and we use 180 of them for training. In the training, a batch size of 10 is used and no weight decay is used. The rest experiment setup remains the same as the synthetic case.

\subsubsection{Results}

We report the prediction accuracy of different tasks for both datasets in \cref{tab:synthetic_trajectory_prediction}. We first investigate the effects of applying higher-order SCFs in the simplicial convolution and accounting for the lower and upper contributions. From the standard accuracy for both datasets, we observe that increasing the convolution orders improves the prediction accuracy, e.g., SCNNs become better as the orders $T_\rmd,T_\rmu$ increase and perform always better than PSNN, and SCCNNs better than Bunch. Also, differentiating the lower and upper convolutions does help improve the performance as SCNN of orders $T_\rmd=T_\rmu=3$ performs better than SNN of $T=3$.

However, accounting for the node and triangle contributions in the SCCNN does not help the prediction compared to the SCNNs, likewise for Bunch compared to PSNN. This is due to the zero node and triangle inputs because there are no available node and triangle features. Similarly, the prediction directly via the node output features is not accurate compared to projection from edge features. 

Moreover, we also observe that the performance of SCCNNs that are trained with the same data does not deteriorate in the reverse task because the orientation equivariance ensures SCCNNs to be unaffected by the orientations of the simplicial data. Lastly, we see that, like other NNs on SCs, SCCNNs have good transferability to the unseen data. 

\begin{table}[ht!]
  \caption{Trajectory Prediction Accuracy. \emph{(Left)}: Synthetic trajectory in the standard, reverse and generalization tasks. \emph{(Right)}: Ocean drifter trajectories. For SCCNNs, we set the lower and upper convolution orders $T_\rmd,T_\rmu$ to be the same as $T$. }
  \label{tab:synthetic_trajectory_prediction}
  \vskip 0.15in
  \begin{center}
  \begin{small}
  \begin{sc}
  \begin{tabular}{lcccr}
  \toprule
  Methods & Standard & Reverse & Generalization & Parameters \\
  \midrule
  PSNN & 63.1$\pm$3.1 & 58.4$\pm$3.9 & 55.3$\pm$2.5 &  --- \\
  SCNN & 65.6$\pm$3.4 & 56.6$\pm$6.0 & 56.1$\pm$3.6 & $T_\rmd=T_\rmu=2$ \\ 
  SCNN & 66.5$\pm$5.8 & 57.7$\pm$5.4 & 60.6$\pm$4.0 & $T_\rmd=T_\rmu=3$ \\
  SCNN & 67.3$\pm$2.3 & 56.9$\pm$4.8 & 59.4$\pm$4.2 & $T_\rmd=T_\rmu=4$ \\ 
  SCNN & 67.7$\pm$1.7 & 55.3$\pm$5.3 & 61.2$\pm$3.2 & $T_\rmd=T_\rmu=5$ \\ 
  SNN  & 65.5$\pm$2.4 & 53.6$\pm$6.1 & 59.5$\pm$3.7 & $T=3$ \\ 
  \midrule
  Bunch-Node & 35.4$\pm$3.4 & 38.1$\pm$4.6 & 29.0$\pm$3.0 & --- \\
  Bunch-Edge & 62.3$\pm$4.0 & 59.6$\pm$6.1 & 53.9$\pm$3.1 & --- \\
  SCCNN-Node & 46.8$\pm$7.3 & 44.5$\pm$8.2 & 31.9$\pm$5.0 & $T=1$ \\
  SCCNN-Edge & 64.6$\pm$3.9 & 57.2$\pm$6.3 & 54.0$\pm$3.0 & $T=1$ \\
  SCCNN-Node & 43.5$\pm$9.6 & 44.4$\pm$7.6 & 32.8$\pm$2.6 & $T=2$ \\
  SCCNN-Edge & 65.2$\pm$4.1 & 58.9$\pm$4.1 & 56.8$\pm$2.4 & $T=2$ \\
  \bottomrule
  \end{tabular}
  \hspace{0.1in}
  \begin{tabular}{cr}
    \toprule
     Standard  & Parameters \\
    \midrule
    49.0$\pm$8.0 &  --- \\
    52.5$\pm$9.8 & $T_\rmd=T_\rmu=2$ \\ 
    52.5$\pm$7.2 & $T_\rmd=T_\rmu=3$ \\
    52.5$\pm$8.7 & $T_\rmd=T_\rmu=4$ \\ 
    53.0$\pm$7.8 & $T_\rmd=T_\rmu=5$ \\ 
    52.5$\pm$6.0 & $T=3$ \\ 
    \midrule
    35.0$\pm$5.9 & --- \\
    46.0$\pm$6.2 & --- \\
    40.5$\pm$4.7 & $T=1$ \\
    52.5$\pm$7.2 & $T=1$ \\
    45.5$\pm$4.7 & $T=2$ \\
    54.5$\pm$7.9 & $T=2$ \\
    \bottomrule
    \end{tabular}
  \end{sc}
  \end{small}
  \end{center}
  \vskip -0.1in
\end{table}


\subsubsection{Integral Lipschitz Property}
We investigate the effect of the integral Lipschitz property of the SCFs in an NN on SC. To do so, given an NN on SCs with an SCF $\bbH_k$ for $k$-simplicial signals, we add the following integral Lipschitz regularizer to the loss function during training so to promote the integral Lipschitz property
\begin{equation}
  \begin{aligned}
    r_{\rm{IL}}   = \lVert \lambda_{k,\rm{G}} \tilde{h}_{k,\rm{G}}^\prime(\lambda_{k,\rm{G}})  \rVert + \lVert \lambda_{k,\rm{C}} \tilde{h}_{k,\rm{C}}^\prime (\lambda_{k,\rm{C}}) \rVert   =
     \Bigg\lVert  \sum_{t=0}^{T_\rmd} t w_{k,\rmd,t} \lambda_{k,\rm{G}}^t \Bigg \rVert + \Bigg \lVert  \sum_{t=0}^{T_\rmu} t w_{k,\rmu,t} \lambda_{k,\rm{C}}^t \Bigg\rVert
  \end{aligned}
\end{equation}
for $\lambda_{k,\rm{G}}\in\{\lambda_{k,{\rm{G}},i}\}_{i=1}^{N_{k,\rm{G}}}$ and $\lambda_{k,\rm{C}}\in\{\lambda_{k,{\rm{C}},i}\}_{i=1}^{N_{k,\rm{C}}}$, which are the gradient and curl frequencies. To avoid computing the eigendecomposition of the Hodge Laplacian, we can approximate the true frequencies by sampling certain number of points in the frequency band $(0,\lambda_{k,{\rm{G,m}}}]$ and $(0,\lambda_{k,\rm{C,m}}]$ where the maximal gradient and curl frequencies can be computed by efficient algorithms, e.g., power iteration \citep{watkins2007matrix,sleijpen2000jacobi}. 

Here, to illustrate that the integral Lipschitz property of the SCFs helps the stability of NNs on SCs, we consider the effect of regularizer $r_{\rm{IL}}$ against perturbations in PSNNs and SCNNs in \eqref{eq.scnn_yang} with different $T_\rmd$ and $T_\rmu$ for the standard synthetic trajectory prediction. The regularization weight on $r_{\rm{IL}}$ is set as $5\cdot 10^{-4}$ and the number of samples to approximate the frequencies is set such that the sampling interval is 0.01.  

\cref{fig:effect_il_regularizer} shows the prediction accuracy and the relative distance between the edge outputs of the NNs trained with and without the integral Lipschitz regularizer in terms of different levels of perturbations. We see that the integral Lipschitz regularizer helps the stability of the NNs, especially for large SCF orders, where the edge output is less influenced by the perturbations compared to without the regularizer. Meanwhile, SCNN with higher-order SCFs, e.g., $T_\rmd=T_\rmu=5$, achieves better prediction than PSNN (with one-step simplicial shifting), while maintaining a good stability with its output not influenced by perturbations drastically.

We also measure the lower and upper integral Lipschitz constants of the trained NNs across different layers and features, given by $\max_{\lambda_{k,\rm{G}}} |\lambda_{k,\rm{G}} \tilde{h}_{k,\rm{G}}(\lambda_{k,\rm{G}})| $ and $\max_{\lambda_{k,\rm{C}}} |\lambda_{k,\rm{C}} \tilde{h}_{k,\rm{C}}(\lambda_{k,\rm{C}})| $, shown in  
\cref{fig:il_constants_trajectory_prediction}. We see that the SCNN trained with $r_{\rm{IL}}$ indeed has smaller integral Lipschitz constants than the one trained without the regularizer, thus, a better stability, especially for NNs with higher-order SCFs.

\begin{figure*}[htp!]
  \centering
  \begin{subfigure}{0.246\linewidth}
    \includegraphics[width=\linewidth]{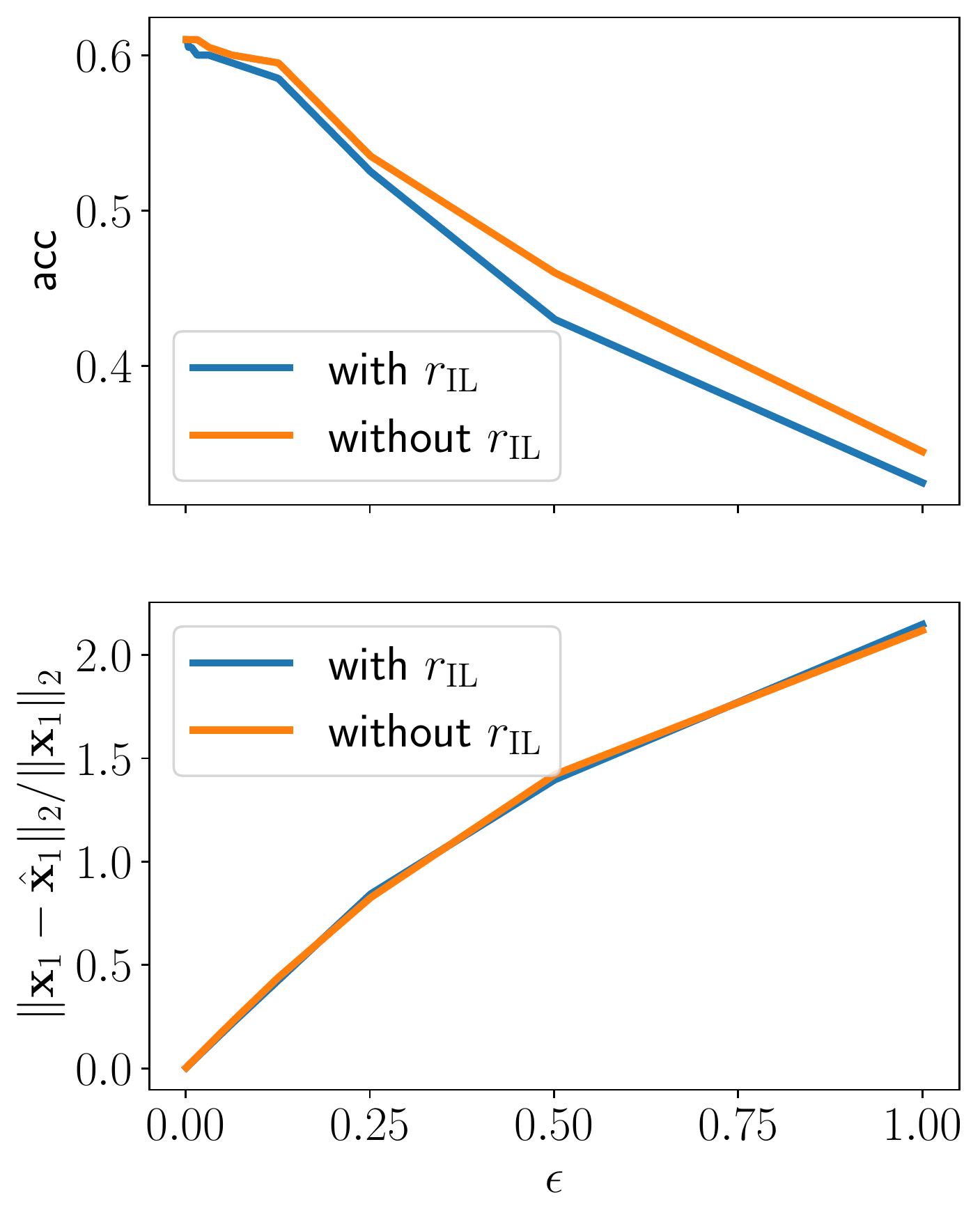}
    \caption{PSNN}
    \label{fig:pert_acc_dist_psnn}
  \end{subfigure}
  \begin{subfigure}{0.246\linewidth}
    \includegraphics[width=\linewidth]{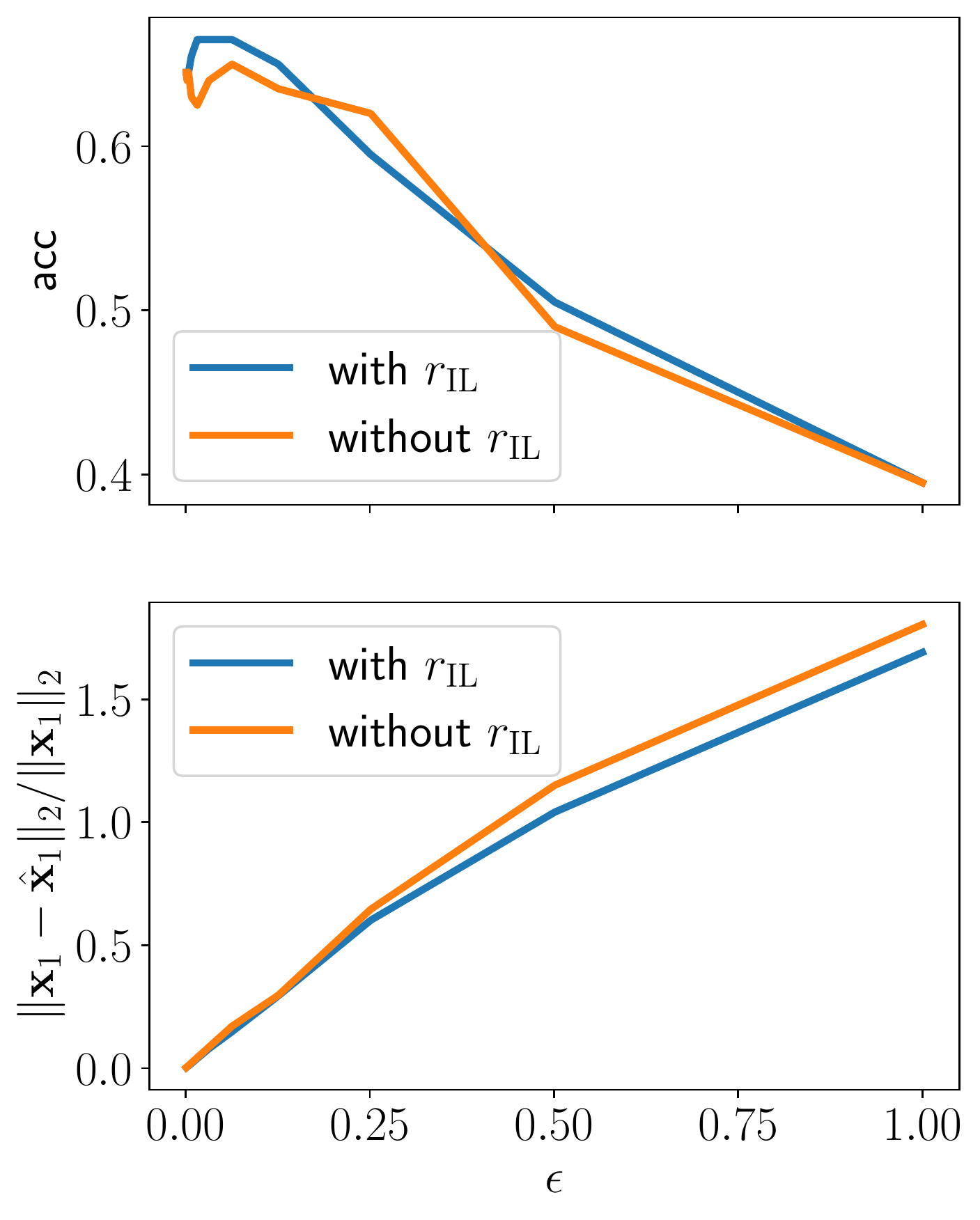}
    \caption{SCNN13}
    \label{fig:pert_acc_dist_scnn13}
  \end{subfigure}
  \begin{subfigure}{0.246\linewidth}
    \includegraphics[width=\linewidth]{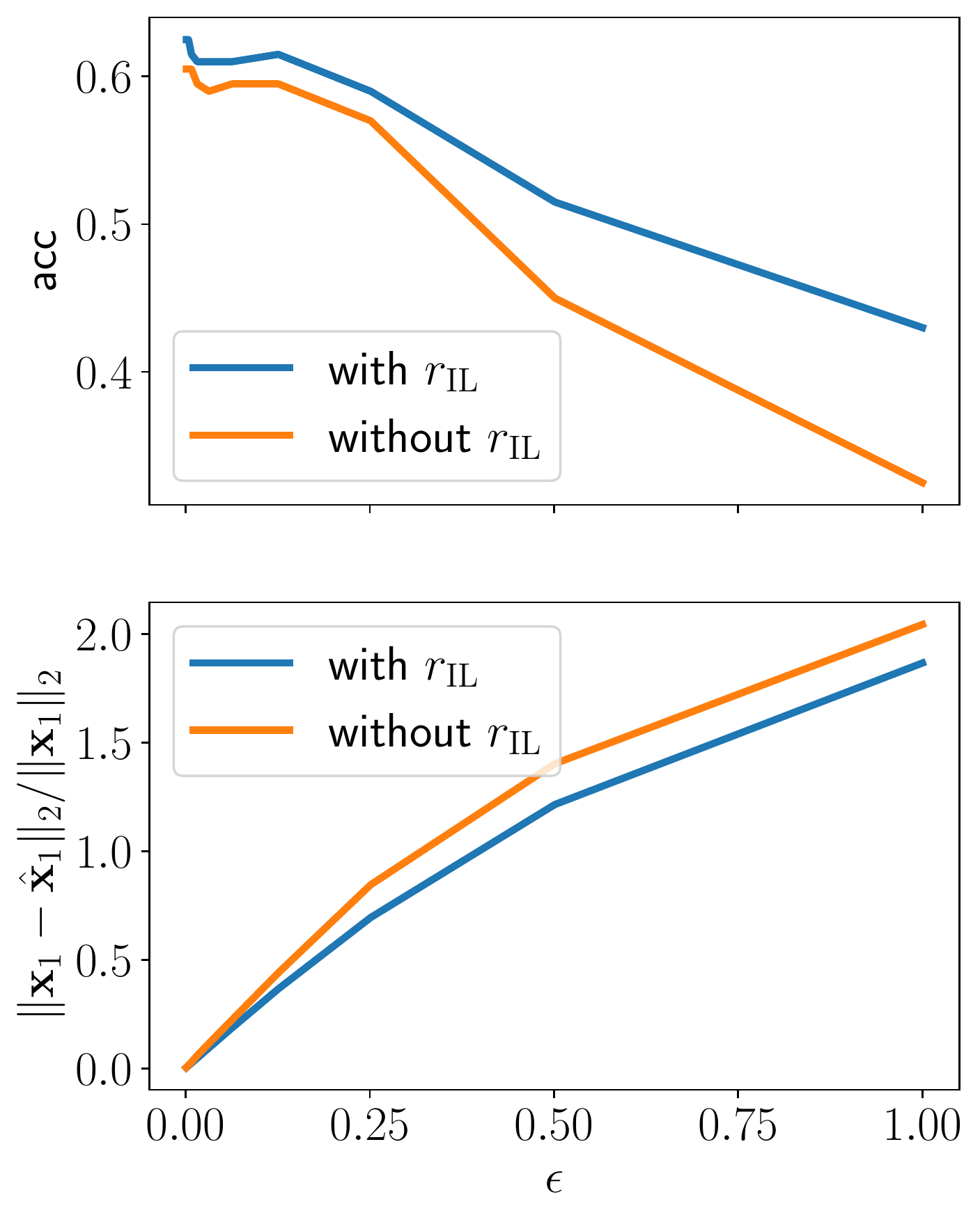}
    \caption{SCNN31}
    \label{fig:pert_acc_dist_scnn31}
  \end{subfigure}
  \begin{subfigure}{0.246\linewidth}
    \includegraphics[width=\linewidth]{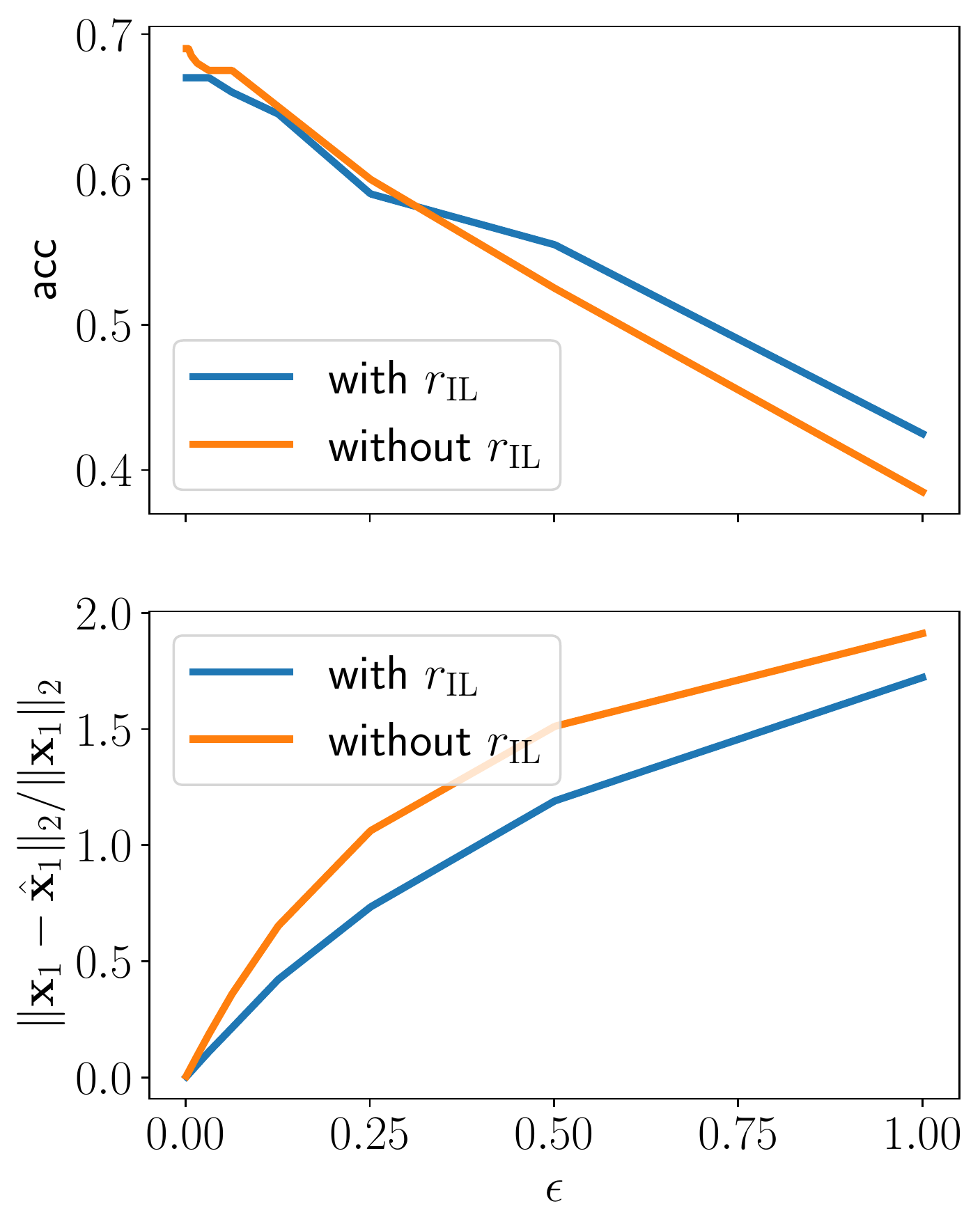}
    \caption{SCNN55}
    \label{fig:pert_acc_dist_scnn55}
  \end{subfigure}
  \caption{Effect of the integral Lipschitz regularizer $r_{\rm{IL}}$ in the task of synthetic trajectory prediction against different levels $\epsilon$ of random perturbations on $\bbL_{1,\rmd}$ and $\bbL_{1,\rmu}$. We show the accuracy (Top row) and the relative distance between the edge output (Bottom row) for different NNs on SCs with and without $r_{\rm{IL}}$. SCNN13 is the SCNN with $T_\rmd=1$ and $T_\rmu=3$.}
  \label{fig:effect_il_regularizer}
\end{figure*}

\begin{figure}[htp!]
  \centering
  \begin{subfigure}{0.246\linewidth}
    \includegraphics[width=\linewidth]{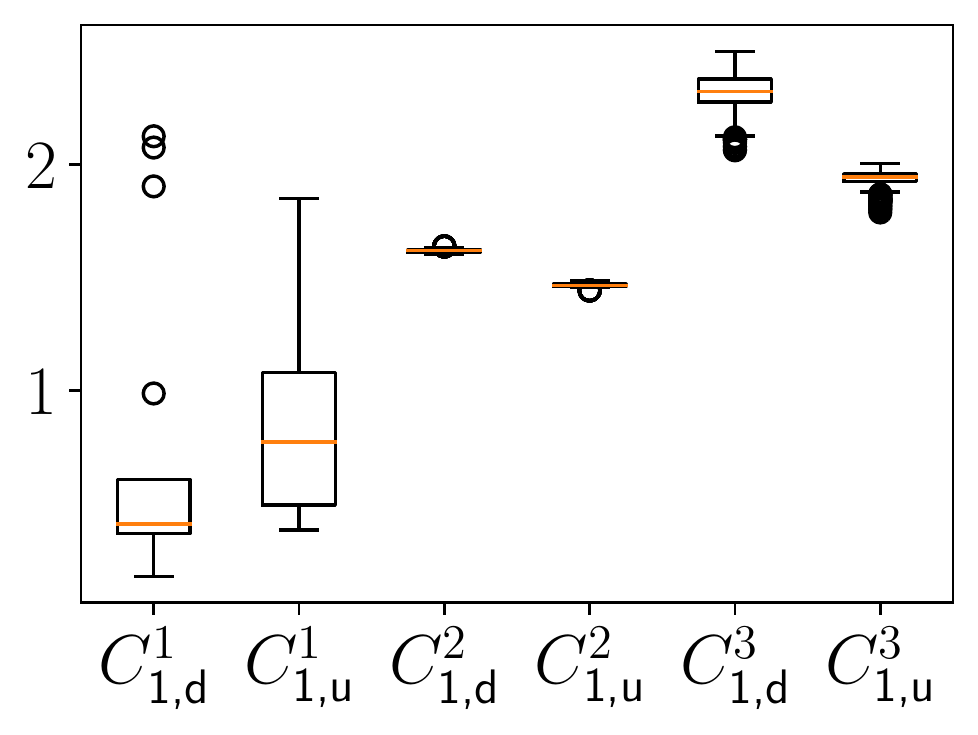}
    \caption{SCNN31, without $r_{\rm{IL}}$}
    \label{fig:il_constant_scnn31_no_il}
  \end{subfigure}
  \begin{subfigure}{0.246\linewidth}
    \includegraphics[width=\linewidth]{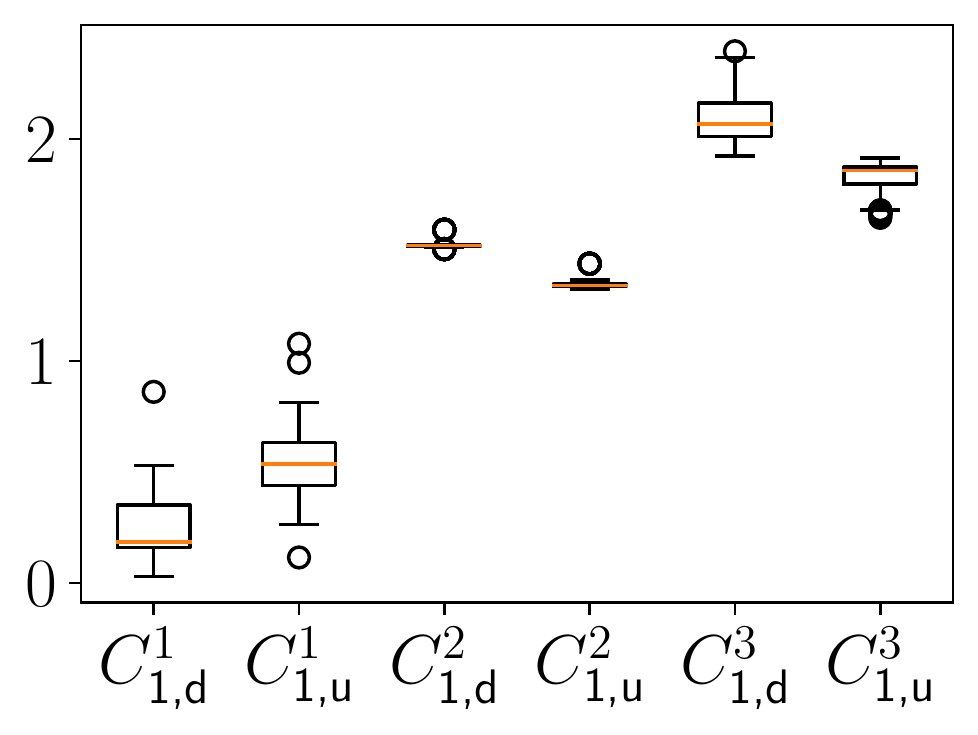}
    \caption{SCNN31, with $r_{\rm{IL}}$}
    \label{fig:il_constant_scnn31_il}
  \end{subfigure}
  \begin{subfigure}{0.246\linewidth}
    \includegraphics[width=\linewidth]{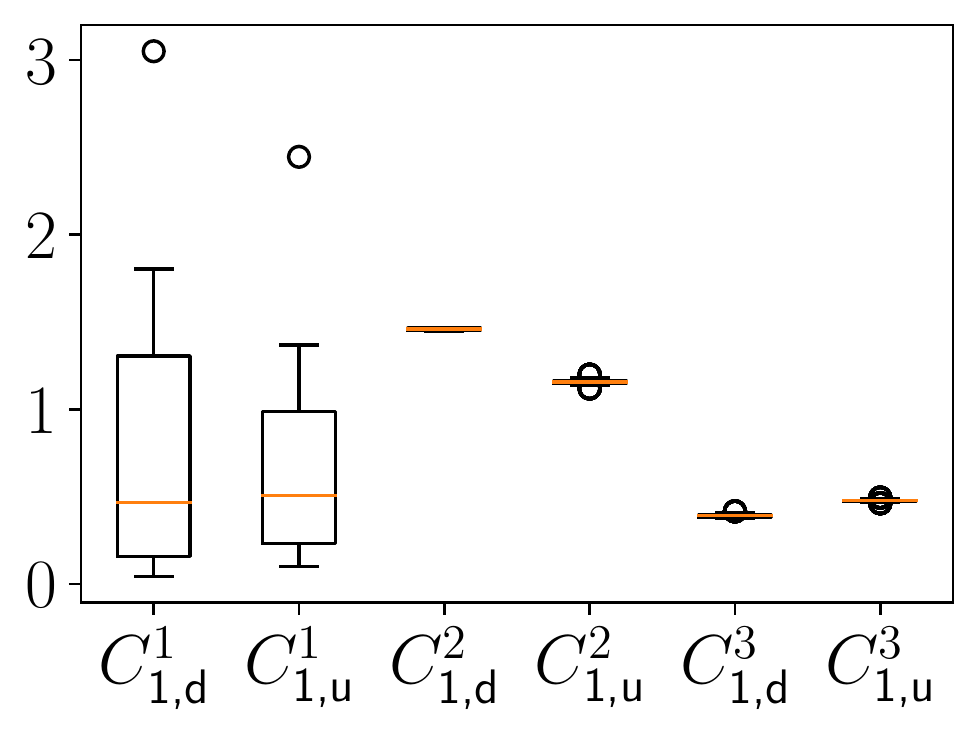}
    \caption{SCNN55, without $r_{\rm{IL}}$}
    \label{fig:il_constant_scnn5_no_il}
  \end{subfigure}
  \begin{subfigure}{0.246\linewidth}
    \includegraphics[width=\linewidth]{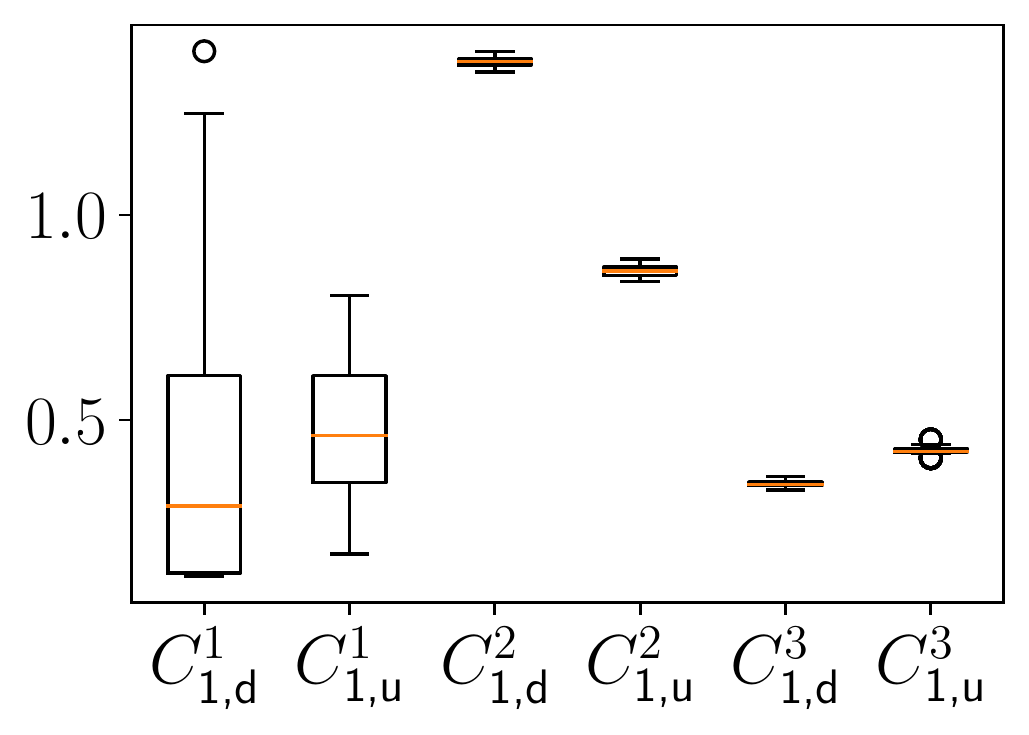}
    \caption{SCNN55, with $r_{\rm{IL}}$}
    \label{fig:il_constant_scnn5_il}
  \end{subfigure}
  \caption{The integral Lipschitz constants of SCFs at each layer of the trained SCNNs with and without the integral Lipschitz regularizer $r_{\rm{IL}}$. We use symbols $C_{k,\rmd}^l$ and $C_{k,\rmu}^l$ to denote the lower and upper integral Lipschitz constants at layer $l$. Regularizer $r_{\rm{IL}}$ promotes the integral Lipschitz property, thus, the stability, especially for NNs with large SCF orders.}
  \label{fig:il_constants_trajectory_prediction}
\end{figure}



\end{document}